\def\1{\bm{1}}
\DeclareMathAlphabet{\mathsfit}{\encodingdefault}{\sfdefault}{m}{sl}
\SetMathAlphabet{\mathsfit}{bold}{\encodingdefault}{\sfdefault}{bx}{n}
\newcommand{\cmark}{{\color{PineGreen}\ding{51}}}%
\newcommand{\xmark}{{\color{BrickRed}\ding{55}}}%
\definecolor{niceblue}{rgb}{0.0,0.19,0.56}
\newcommand{\algname}[1]{{\sc #1}\xspace}
\newcommand{\R}{\mathbb{R}}
\def\<#1,#2>{\left\langle #1,#2\right\rangle}
\newcommand{\indicator}{\mathbbm{1}}
\newtheorem{lemma}{Lemma}[section]
\newtheorem{theorem}{Theorem}[section]
\newtheorem{definition}{Definition}[section]
\newtheorem{assumption}{Assumption}[section]
\newcommand{\argmin}{\mathop{\arg\!\min}}
\newcommand{\cB}{{\cal B}}
\newcommand{\cC}{{\cal C}}
\newcommand{\cD}{{\cal D}}
\newcommand{\cG}{{\cal G}}
\newcommand{\cO}{{\cal O}}
\newcommand{\cU}{{\cal U}}
\newcommand{\EE}{\mathbb{E}}
\newcommand{\PP}{\mathbb{P}}
\newcommand*{\inlineequation}[2][]{%
  \begingroup
    \refstepcounter{equation}%
    \ifx\\#1\\%
    \else
      \label{#1}%
    \fi
    \relpenalty=10000 %
    \binoppenalty=10000 %
    \ensuremath{%
      #2%
    }%
    ~\@eqnnum
  \endgroup
}
\newcommand{\newstuff}[1]{#1}
\icmltitlerunning{Secure Distributed Training at Scale}
\author{%
    Anonymous
}
\begin{document}

\twocolumn[
\icmltitle{Secure Distributed Training at Scale}



\icmlsetsymbol{equal}{*}

\begin{icmlauthorlist}
\icmlauthor{Eduard Gorbunov}{equal,mipt,mila,yandex}
\icmlauthor{Alexander Borzunov}{equal,hse,yandex}
\icmlauthor{Michael Diskin}{hse,yandex}
\icmlauthor{Max Ryabinin}{hse,yandex}
\end{icmlauthorlist}

\icmlaffiliation{mipt}{MIPT}
\icmlaffiliation{mila}{Mila -- Quebec AI Institute}
\icmlaffiliation{hse}{HSE University}
\icmlaffiliation{yandex}{Yandex}

\icmlcorrespondingauthor{Eduard Gorbunov}{eduard.gorbunov@phystech.edu}
\icmlcorrespondingauthor{Alexander Borzunov}{borzunov.alexander@gmail.com}

\icmlkeywords{Machine Learning, ICML}

\vskip 0.3in
]

\printAffiliationsAndNotice{\icmlEqualContribution} 

\begin{abstract}
Many areas of deep learning benefit from using increasingly larger neural networks trained on public data, as is the case for pre-trained models for NLP and computer vision. Training such models requires a lot of computational resources (e.g., HPC clusters) that are not available to small research groups and independent researchers. One way to address it is for several smaller groups to pool their computational resources together and train a model that benefits all participants. Unfortunately, in this case, any participant can jeopardize the entire training run by sending incorrect updates, deliberately or by mistake. Training in presence of such peers requires specialized distributed training algorithms with Byzantine tolerance. These algorithms often sacrifice efficiency by introducing redundant communication or passing all updates through a trusted server, making it infeasible to apply them to large-scale deep learning, where models can have billions of parameters. In this work, we propose a novel protocol for secure (Byzantine-tolerant) decentralized training that emphasizes communication efficiency.

\end{abstract}
\section{Introduction}\label{sect:intro}

Many hard scientific problems were solved through collaboration between many nations, groups and individuals. This is especially evident in natural sciences, where researchers formed multinational collaborations to run large-scale experiments and share compute infrastructure~\citep{Collaboration2012ObservationOA,ISS,Collaboration2016ObservationOG}. Projects like Folding@home~\citep{foldingathome} and BOINC~\citep{anderson2004boinc} push this trend even further by recruiting volunteers that donate their compute to collectively run computational experiments at an unprecedented scale~\citep{folding_exaflop_2}.

Recently, similar techniques were proposed for deep learning. They aim to solve the challenges caused by the sheer computational complexity of many machine learning tasks, such as pretraining transformers for NLP~\citep{bert,gpt3,roberta} or learning on huge datasets in vision~\citep{jft300m,Kolesnikov2020BigT,vissl}. Recent works~\citep{volunteer_dl_async,hivemind_dmoe,atre2021distributed,dedloc} propose several systems sharing the computation across many volunteers that donate the idle time of their computers \newstuff{to train large models on public datasets.}



Despite their strengths, volunteer computing systems have so far seen limited practical applications~\citep{volunteer_dl_async}. A major roadblock towards the global adoption of these techniques is trust in reliability of each participant. For distributed training, all progress made by the collaboration can be undermined if a single peer sends incorrect outputs due to an error in computation~\citep{setihomeamdfail} or malicious intent~\citep{tolpegin2020data}.

Prior art in decentralized optimization proposed several optimization algorithms that are resistant to such ``Byzantine'' faults. However, most Byzantine-tolerant training protocols require either passing all updates through a trusted central server or exchanging additional messages that increase the network load by several times~\citep{chen2018draco,rajput2019detox}. This is a major problem for large-scale distributed deep learning, where hundreds of peers must exchange updates for millions of parameters at regular intervals~\citep{pytorch_distributed,horovod,shoeybi2019megatron}.
Thus, in many practical scenarios, the computation and communication overhead of Byzantine-tolerant algorithms outweighs the benefits of collaborating with others.

In this work, we set out to solve this problem by proposing a novel Byzantine-tolerant distributed training protocol designed for large-scale deep learning workloads.
Our approach combines the scalability and communication efficiency of modern distributed training techniques such as All-Reduce SGD~\citep{horovod} with resilience against Byzantine and Sybil attackers. To achieve this, we leverage cryptographic techniques to verify the integrity of training with minimal overhead that does not depend on the model size. Our protocol does not require any specific peers to be trusted. 

\pagebreak
Our contributions can be summarized as follows:
\begin{itemize}[leftmargin=*]
    \vspace{-0.5em}
    \vspace{-1px}\item We propose a novel protocol for decentralized Byzantine-tolerant training \newstuff{on data available to all participants,} where the extra communication cost does not depend on the number of parameters.
    \vspace{-1px}\item We rigorously analyze this protocol and prove convergence bounds for convex and non-convex losses with Byzantine attackers. Furthermore, we derive accelerated convergence rates for the same task under realistic assumptions about model gradients.
    \vspace{-1px}\item We propose a heuristic for resisting Sybil attacks from computationally constrained attackers, allowing to accept new untrusted peers joining midway through training.
    \vspace{-1px}\item We verify the effectiveness of our algorithm in controlled experiments\footnote{Source code for the experiments is available at \href{https://github.com/yandex-research/btard}{\texttt{https://github.com/yandex-research/btard}}} and actual large-scale training runs. Specifically, we start with ResNet-18 for CIFAR-10 classification and follow up with pretraining ALBERT-large in a setup where almost a half of all peers are malicious.
\end{itemize}

\section{Related work}\label{sect:related}

\subsection{Distributed deep learning}\label{sect:related_distributed}





Training modern neural networks often requires the amount of computation that is infeasible to achieve on any single machine. 
One has to train such models on multiple machines using methods for distributed training. Most of these methods fall into two groups: in \textit{data-parallel} training, each worker trains the entire model by sampling batches from the training data~\citep{horovod,goyal2017accurate}; in contrast, \textit{model-parallel} training allocates parts of the model on different workers~\citep{gpipe,pipedream,shoeybi2019megatron}. 
In this study, we consider only the first group; notably, most model-parallel systems still rely on data parallelism between nodes at the same stage~\citep{zero,megatron2}.


Usually, data-parallel training consists of two phases: first, each worker computes the gradients over its data; then, all workers aggregate the gradients and run an \algname{SGD} step.
The simplest aggregation strategy is known as Parameter Servers (PS)~\citep{parameter_server_first,sharded_ps_first,recht2011hogwild}:
one of the servers stores and updates the model parameters, while all others iteratively compute the gradients, send them to the PS, and download the updated parameters.
This strategy can be quite efficient with a small number of workers; as it increases, the parameter server eventually becomes unable to handle the load.
While gradient compression~\citep{seide20141,deepgradientcompression, mishchenko2019distributed,koloskova2020decentralized,gorbunov2020linearly,gorbunov2021marina} and local updates~\citep{localsgd_first} partially address this issue, it still remains a bottleneck of such methods.

\begin{figure}[tb]
    \vskip 0.05in
    \centering
    \includegraphics[width=1.0\linewidth]{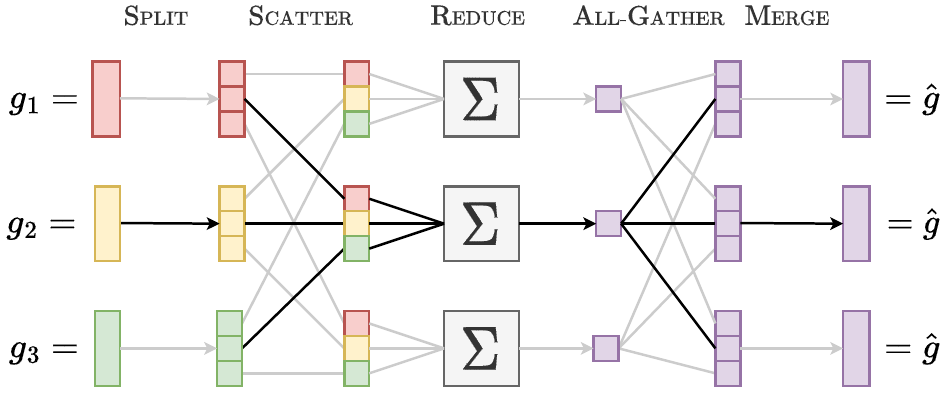}
    \vskip -0.05in
    \caption{A scheme of Butterfly All-Reduce~\citep{butterfly_arsgd}. Each peer transfers only $O(d)$ data when averaging a vector of size $d$.}
    \label{fig:butterfly}
    \vskip -0.15in
\end{figure}

In practice, most distributed training systems leverage All-Reduce (AR)~\citep{goyal2017accurate,mikami2019massively,lamb} --- a family of collective communication protocols that allow servers to average their data and receive the result on each machine. 
The resulting method, named All-Reduce SGD (\algname{AR-SGD}), runs AR on local gradients of each peer to compute the global average.
Usually, AR-SGD uses bandwidth-optimal versions of All-Reduce~\citep{horovod,bandwidth_optimal_allreduce}, such as Butterfly All-Reduce (see Figure~\ref{fig:butterfly}). Depending on the exact algorithm, they require \textit{each} peer to transfer only $O(d)$ or $O(d\log n)$ data when averaging a vector of size $d$ across $n$ peers (unlike the PS-based approaches, where PS transfers $O(d n)$ data).




\subsection{Byzantine-tolerant optimization}\label{sect:related_byzantine}

Standard distributed training methods are not robust against Byzantine attacks. In the vanilla parallel SGD, one malicious worker can break the convergence of the whole method by shifting the mean of the resulting vector in an arbitrary way. Therefore, the research community invented special algorithms that can train models even in this setup.

\textbf{Parameter-server (PS) based approaches.} Most of the algorithms designed to be Byzantine-resilient rely on the existence of a trusted parameter server. In such approaches, the standard mean estimator, e.g., the one used in parallel SGD, is typically replaced with a more robust aggregation rule \citep{blanchard2017machine, yin2018byzantine, damaskinos2019aggregathor, mhamdi2018hidden, pillutla2019robust}. However, recent works show that it is not enough by proposing special types of Byzantine attacks~\citep{baruch2019little,xie2020fall} and showing that permutation-invariant algorithms cannot converge to any predefined accuracy of the solution \citep{karimireddy2020learning}.

Although there are several approaches aiming to circumvent this issue, most of them have significant limitations such as no convergence analysis \citep{chen2018draco,rajput2019detox,rodriguez2020dynamic, xu2020towards}, too restrictive assumptions in the analysis \citep{alistarh2018byzantine, allen2020byzantine, regatti2020bygars}, or the usage of variance-reduced estimators \citep{wu2020federated}, which are known to converge slowly in deep learning applications~\citep{defazio2018ineffectiveness}. The only paper without such limitations is \citet{karimireddy2020learning}: it proposes a new aggregation rule called \algname{CenteredClip}, applies it to SGD with client momentum, and proves convergence results for the obtained method in the non-convex case under reasonable assumptions. We provide more details on Byzantine-tolerant PS-based approaches in Appendix~\ref{appendix:extra_related_ps}.

\textbf{Decentralized approaches} for Byzantine-tolerant optimization are studied only in a few papers. Unfortunately, the known approaches are not well-suited for distributed deep learning since they either rely on full gradient computations \citep{yang2019bridge,yang2019byrdie}, \newstuff{or use redundant communications with multiple servers \citep{el2020genuinely},} or require peer-to-peer communication of full vectors at each step \citep{gupta2021byzantine, gupta2021byzantine2}, which is not scalable, or provide the convergence guarantees that are inferior to non-parallel \algname{SGD} \citep{peng2021byzantine}, which has prohibitively slow convergence on modern deep learning tasks. We defer further details to Appendix~\ref{appendix:extra_related_decentralized}.

\subsection{Security in distributed systems}\label{sect:related_mpc}

\textbf{Message propagation protocols.} In this work, we consider distributed systems relying exclusively on peer-to-peer connections (e.g., the ones working over the Internet). Several key stages of our algorithm require peers to \textit{broadcast} small messages to all other peers. For the sake of consistency, if at least one honest peer receives a message, we expect all other honest peers to eventually receive it as well.

A naive solution would be for all peers to relay each previously unseen message to all other peers. In this case, for $n$ peers and a $b$-bit message, one all-to-all broadcast would require \textit{each} peer to transfer $O(n^2 b)$ data. To improve efficiency, we use GossipSub~\citep{vyzovitis2020gossipsub} that reduces this cost to $O(n b)$ data per peer by relaying each message to only $D$ carefully chosen neighbors, where $D$ is a constant chosen based on latency requirements.

\textbf{Digital signatures.} Our approach relies on the fact that an attacker cannot impersonate an honest peer or change messages an honest peer broadcasts. To achieve that, we require all peers to declare their public keys and sign all their messages with digital signatures~\citep{rivest1978method}.


\textbf{Multi-party random number generator.} To ensure that peers compute gradients honestly, our approach verifies a random subset of all computed gradients. Thus, we need to choose who is going to be checked in such a way that the attackers can neither predict nor influence the random draw. This can be done with a multi-party random number generator (MPRNG) based on the coin tossing protocol from \citet{blum1983coin}. We explain the full algorithm in Appendix~\ref{appendix:mprng}. The algorithm requires each peer to only broadcast 3 scalars, so its communication cost is $O(n)$ data per peer.

\section{Method}\label{sect:method}

We consider secure distributed training on public datasets, where each peer can access the entire training data \newstuff{and communicate with any other peer}. In this scenario, multiple parties cooperate by combining their computational resources for a single large-scale training run. Specifically, we consider a data-parallel training setup with All-Reduce SGD (as described in Section~\ref{sect:related_distributed}), where peers aggregate their gradient vectors of size $d$.

We describe our strategy in several stages:
\begin{itemize}[leftmargin=*]
 \vspace{-0.5em}
 \vspace{-1px}\item Section~\ref{sect:method_aggregation} outlines our approach for \textbf{B}yzantine-\textbf{T}olerant \vspace{-1px}\textbf{A}ll-\textbf{R}e\textbf{d}uce (\algname{BTARD}).
 \vspace{-1px}\item In Section~\ref{sect:method_analysis}, we formulate the underlying optimization problem and derive its convergence bounds.
 \vspace{-1px}\item In Section~\ref{sect:method_reputation}, we propose a heuristic for resisting Sybil attacks, allowing our system to accept new untrusted peers midway through training.
\end{itemize}

\subsection{Byzantine-Tolerant All-Reduce}\label{sect:method_aggregation}

\begin{figure*}[t]
    \vskip 0.05in
    \centering
    \includegraphics[width=0.95\textwidth]{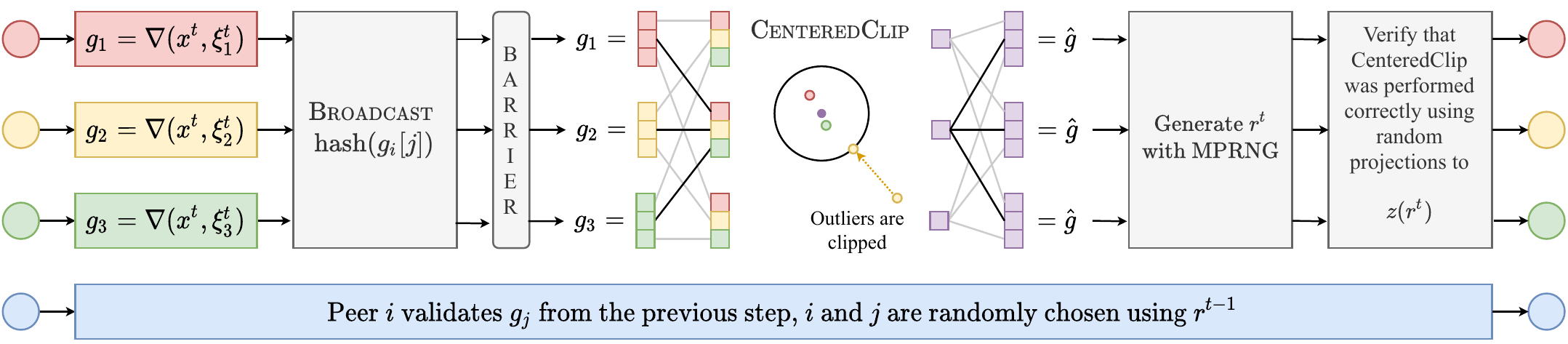}
    \vskip -0.05in
    \caption{A scheme illustrating one step of Byzantine-Tolerant All-Reduce~--- a part of Algorithm~\ref{alg:btarsgd_outline} executed between the consecutive SGD steps. Here, $t$ is the step number, $x^t$ is the model weights, and $\xi^t_i$ is a publicly known random seed for sampling a minibatch.}
    \label{fig:btard_scheme}
    \vskip -0.15in
\end{figure*}

We assume that some workers can be malicious, i.e., they can \newstuff{arbitrarily} deviate from our algorithm: \newstuff{for instance,} send arbitrary vectors instead of stochastic gradients or violate the communication  protocol. Such workers are called \textit{Byzantine nodes} or just \textit{Byzantines}. We assume them to be omniscient \citep{karimireddy2020learning} (except for the honest nodes' private keys and the internals of MPRNG) \newstuff{and able to collude with each other}. We denote the set of all ``good'' workers as $\cG$ and the set of Byzantine workers as $\cB$. We further assume that $\cB$ is fixed throughout the optimization process, and less than a half of the nodes are Byzantine: $|\cB| \le \delta n$, where $\delta \in [0, \nicefrac{1}{2})$. Finally, we assume that all workers have access to the data defining the objective function, sampling minibatches from the full dataset.\footnote{\citet{he2020byzantine} show that it is impossible to achieve any predefined accuracy of the solution without this assumption, i.e., in the heterogeneous case (see discussion in Appendix~\ref{sec:imposssibility_hetero}).}

\newstuff{We design our algorithm in such a way that all types of Byzantine faults have limited effect and chance of being discovered.} 
To limit the damage over a single \algname{SGD} step, we modify Butterfly All-Reduce\footnote{We choose Butterfly All-Reduce so that peers aggregate non-overlapping parts of the gradient vector. This helps to identify the attacker if the gradients are aggregated incorrectly. \citet{byteps} report that Butterfly All-Reduce is near-optimal for distributed training over high-latency networks such as the Internet.}~(see Figure~\ref{fig:butterfly}) with a robust aggregation technique known as \algname{CenteredClip} \citep{karimireddy2020learning}. We apply \algname{CenteredClip} to each partition of the gradient vector instead of naive averaging. We denote this procedure as \algname{ButterflyClip} (see Algorithm~\ref{alg:butterfly_clip} for its formal description).

However, Byzantine peers can circumvent this limit by attacking over many iterations. To protect against this, \algname{BTARD} periodically chooses random peers to serve as \textit{validators}. The validators must recalculate the gradients of other peers and report any discrepancies \newstuff{instead of computing their own gradients.} Since such tests are effective only if the attackers cannot predict whether they will be validated, we use a multi-party random number generator (as described in Section~\ref{sect:related_mpc}) to choose the validated peers.

After each training step, peers use MPRNG to choose \newstuff{$m$ validators and $m$ peers to validate (each validator checks one peer).} The Byzantines cannot predict ``safe'' iterations before they commit to an attack. Thus, more frequent attacks (with greater total damage) are more likely to be detected by an honest validator.

Since validators can also be malicious, \algname{BTARD} uses a special \textsc{Accuse} procedure to detect false reports. Before each averaging round, peers broadcast hashes of their gradients using GossipSub\footnote{We assume that peers declare their public key when joining and sign all broadcasted messages with the respective private key. Any peer broadcasting contradicting messages (e.g., different gradient hashes) should be banned, since it could break the eventual consistency (other peers may receive them in a different order).} (line~2 of Algorithm~\ref{alg:butterfly_clip}). Then, if validator $i$ accuses peer $j$ of modifying gradients, all other peers will be able to recalculate $j$'s gradients and compare their hash against the broadcasted one. If the peers find that $j$'s gradients are correct, peer $i$ is banned instead~\citep{hammurabi}. This procedure is described in Algorithm~\ref{alg:accuse}.

The resulting algorithm is resilient to attacks made through incorrect gradients. However, malicious peers may also harm training by violating the \algname{CenteredClip} procedure for the portion of gradients they are aggregating. Fortunately, we can design a test through which peers can verify that a vector they received is indeed the output of \algname{CenteredClip}. We need to view \algname{CenteredClip} as a fixed-point iteration for the equation (see details in Appendix~\ref{appendix:centered_clip}):
\begin{equation}
    \sum_{i=1}^n(\vec g_i - \vec x)\min\left\{1, \frac{\tau}{\|\vec g_i - \vec x\|}\right\} = 0
    \label{eq:fixed_point_iteration}
\end{equation}
The workers are not able to test whether (\ref{eq:fixed_point_iteration}) holds directly, since collecting $\vec g_i$ would require sending $O(dn)$ data per peer, defeating the purpose of our algorithm.

Instead, workers should use the MPRNG output to sample a random direction $\vec z$ in the space of model gradients. Then, each peer computes and broadcasts the inner product (\ref{eq:inner_product}):
\begin{equation}
    s_i {=} \left\langle \vec z, (\vec g_i - \vec x)\min\left\{1, \frac{\tau}{\|\vec g_i - \vec x\|}\right\}\right\rangle
    \label{eq:inner_product}
\end{equation}
Finally, all peers can verify that $\sum_{i=1}^n s_i = 0$. Similarly to our previous use of MPRNG, all aggregators must broadcast the hashes of their aggregation results (line~6 of Alg.~\ref{alg:butterfly_clip}) before they learn $\vec z$. This ensures that a malicious aggregator cannot modify the results in such a way that the difference would be orthogonal to $\vec z$ (this and more complex attack vectors are analyzed in Appendices~\ref{appendix:attack_types}~and~\ref{appendix:detecting_protocol_violations}).

We combine all these procedures in Algorithm~\ref{alg:btarsgd_outline} (see its scheme in Figure~\ref{fig:btard_scheme} and its detailed version in Alg.~\ref{alg:detailed_bytar}--\ref{alg:BTARD_SGD}). Crucially, one step of Algorithm~\ref{alg:btarsgd_outline} requires each peer to \textbf{(a)}~receive and send $n$ partitions of the gradient vector (exactly as in Butterfly All-Reduce), \textbf{(b)}~broadcast $O(n)$ scalar values (all hashes, the inner products $s_i^j$, and the necessary accusations), and \textbf{(c)}~run MPRNG once. According to Sections~\ref{sect:related_distributed} and \ref{sect:related_mpc}, the total communication cost of these procedures is $O(d + n^2)$ data per peer. This is close to $O(d)$ cost of the bandwidth-optimal versions of All-Reduce: the $O(n^2)$ extra cost is usually much smaller than $O(d)$ for models that benefit from distributed training.

The algorithm's synchronization points and computational overhead are reviewed in Appendix~\ref{appendix:complexity_overhead}.


\begin{figure}[tb]
\let\oldtextsc\textsc
\renewcommand{\textsc}[1]{\oldtextsc{\scalefont{0.9}#1}}

\vskip -0.15in

\begin{algorithm}[H]
  \caption{\algname{BTARD-SGD} for peer $i$ (informal)}
  \label{alg:btarsgd_outline}
\begin{algorithmic}[1]
  \Require rank $i$, model $x^0$, seed $\xi_i^0$, step~count~$T$, peer~count~$n$
  \For{$t \in 0, \dots, T {-} 1$}
    \State $g_i = \textsc{ComputeGradients}(x^t, \xi_i^t)$
    \State $\hat g = \textsc{ButterflyClip}(i, g_i)$
    \State $r^t = \textsc{MPRNG}()$ 
    \State $z = \textsc{GetRandomVector}(r^t)$ 
    \For{$j \in 1, \dots, n$}
     \State // $\hat g[j]$ is the aggregated part from peer $j$
     \State $\Delta_i^j {=} (g_i[j] - \hat g[j])\min\left\{1, \frac{\tau}{\|g_i[j] - \hat g[j] \|_2}\right\}$
     \State \textbf{broadcast} $s^j_i = \langle z[j], \Delta_i^j \rangle$
    \EndFor
    \For{$j \in 1, \dots, n$}
      \State // We know $\Delta_j^i$ from \textsc{CenteredClip}
      \If{$ s_j^i \neq \langle z[j], \Delta_j^i \rangle$}
        \State \textbf{broadcast} $s_j^i$ is wrong \hspace{0.5em} // Invokes Alg.~\ref{alg:accuse}
      \EndIf
      \If{$\sum_t^n s_t^j \neq 0$}
         \State // Peer $j$ lied that all $s_{\cdot}^j$ are correct
         \State \textbf{broadcast} $\hat g[j]$ is wrong \hspace{0.5em} // Invokes Alg.~\ref{alg:accuse}
      \EndIf
    \EndFor
    \State $x^{t+1} = \textsc{SGDStep}(x^t, \hat g)$
    \State $\xi_i^{t+1} = \text{hash}(r^t || i)$
    \If{$i \in \textsc{ChooseValidators}(r^t)$}
     \State $j = \textsc{ChooseTarget}(r^t, i)$
     \State $\textsc{ValidatePeer}(j, x^t, \xi_j^t, c_j, h_j^\text{*}, s_j^\text{*})$ 
     \State \newstuff{// ... instead of computing gradients for step $t {+} 1$}
    \EndIf
  \EndFor
  \State\Return $x^T$
\end{algorithmic}
\end{algorithm}
\vskip -0.35in
\end{figure}

\begin{figure}[tb]
\let\oldtextsc\textsc
\renewcommand{\textsc}[1]{\oldtextsc{\scalefont{0.9}#1}}

\vskip -0.15in

\begin{algorithm}[H]
  \caption{\algname{ButterflyClip} for peer $i$}
  \label{alg:butterfly_clip}
\begin{algorithmic}[1]
  \Require rank $i$, gradients $g_i \in \mathbb{R}^d$ 
  \State $g_{i}[1], ..., g_{i}[n] = \textsc{Split}(g_i, n)$
  \State \textbf{broadcast} $\forall j, \enskip h_i^j = \text{hash}(g_i[j])$
  \State \textbf{send} $\forall j, \enskip g_{i}[j] \rightarrow \text{peer}_j$
  \State \textbf{receive} $\forall j, \enskip g_{j}[i] \leftarrow \text{peer}_j$
    \hspace{0.5em} // and verify against $h_j^i$
  \State $\hat g_i = \textsc{CenteredClip}(g_{1}[i], ..., g_{n}[i])$
  \State \textbf{broadcast} $\enskip \hat h_i = \text{hash}(\hat g_i)$
  \State \textbf{send} $\forall j, \enskip \hat g_i \rightarrow \text{peer}_j$
  \State \textbf{receive} $\forall j, \enskip \hat g_j \leftarrow \text{peer}_j$
    \hspace{0.5em} // and verify against $\hat h_j$
  \State\Return $\textsc{Merge}(\hat g_{1}, ..., \hat g_{n})$
\end{algorithmic}
\end{algorithm}
\vskip -0.2in

\begin{algorithm}[H]
  \caption{\algname{Accuse}(i, j), invoked on all peers}
  \label{alg:accuse}
\begin{algorithmic}[1]
  \Require accuser $i$, target $j$ 
  \State $g_j = \textsc{ComputeGradients}(x^t, \xi_j^t)$
  \If{$\exists k : ( \text{hash}(g_j[k]) \neq h_j^k$ \\ $\hspace{12px} \textbf{ or } s_j^k {\neq}   \langle z[k], \Delta_j^k \rangle ) \textbf{ or } \sum_{k=1}^n s_k^j {\neq} 0$}
    \State \textsc{Ban}($\text{peer}_j$) \hspace{0.5em} // and everyone who covered it up
  \Else
    \State \textsc{Ban}($\text{peer}_i$)
  \EndIf
\end{algorithmic}
\end{algorithm}

\vskip -0.35in
\end{figure}

\subsection{Convergence analysis}\label{sect:method_analysis}

From the perspective of the optimization theory, our task is the expectation minimization problem:
\begin{equation}
    \min_{x\in Q\subseteq \R^d}\left\{f(x) := \EE_{\xi \sim \cD}\left[f(x,\xi)\right]\right\} \label{eq:main_problem}
\end{equation}
Here, the objective function $f$ is smooth and uniformly lower bounded, $Q\subseteq \R^d$ is a closed convex set of admissible parameters and $\xi$ is the source of stochasticity, such as minibatch indices. We assume that the problem~(\ref{eq:main_problem}) can be solved in a distributed manner, i.e., one can use $n$ workers calculating (mini-batched) stochastic gradients in parallel and communicating according to some protocol. We denote the set of workers as $[n] := \{1,2,\ldots,n\} = \cG \sqcup \cB$.

There are many ways for Byzantines to affect the training. We can classify all of them into four categories:
\textbf{(a)~gradient attacks}, where Byzantines modify their $g_i^k$, but otherwise behave normally; \textbf{(b)~aggregation attacks}, where a malicious aggregator returns wrong $\hat g_i$ and relies on others to cover it up by misreporting $s_i$; \textbf{(c)~reputation attacks}, such as slander via false $\textsc{Accuse}(i,j,\cdot)$; and \textbf{(d)~protocol violations}, that is, any other deviations from the steps of Algorithm~\ref{alg:btarsgd_outline} (e.g., refusing to send data within a predefined timeout). We elaborate on each attack type in Appendix~\ref{appendix:attack_types}.

For the purpose of this analysis, the latter two attacks can be repelled with an extra policy that allows an active worker to \textsc{Eliminate} any other worker at the cost of also being banned. If peer $i$ encounters a protocol violation from peer $j$, it broadcasts a message asking to remove both peers $i$ and $j$ from training. The design of this policy ensures that every such message, whether sent by honest or Byzantine peers, eliminates at least 1 Byzantine peer and at most 1 honest peer (see details in Appendix~\ref{appendix:accuse_and_eliminate}). Thus, if a Byzantine minority uses this against honest peers, it will only decrease their relative numbers: $(\delta n - 1) / (n - 2) < \delta $. This leaves us only with the attacks targeting the aggregated gradients. 

{\renewcommand{\arraystretch}{1.2}
\begin{table*}[tb]
    \vskip -0.1in
    \centering
    \caption{Summary of complexity bounds for \algname{BTARD-SGD} in different scenarios. By complexity we mean the number of iterations sufficient to find such point $\widehat{x}$ that $\EE[\|\nabla f(\widehat{x})\|^2] \le \varepsilon^2$ for non-convex problems and $\EE[f(\widehat{x}) - f(x^*)] \le \varepsilon$ for convex and $\mu$-strongly convex problems (see Def.~\ref{def:mu_strong_convexity}) with $x^*$ being the solution. Notation: ``known $|\cB_k^a|$'' = the exact number of attacking Byzantine workers at iteration $k$ is known to each participant, $L$ = smoothness constant (see Def.~\ref{def:L_smoothness}), $\Delta_0 = f(x^0) - f_*$, $f_*$ = uniform lower bound for $f$, $\sigma^2$ = variance parameter from As.~\ref{as:bounded_var}, $n$ = the initial number of peers, $b$ = the initial number of Byzantine workers, $\delta = \nicefrac{b}{n}$, $m$ = number of peers checked at each iteration, $R_0 = \|x^0 - x^*\|$.}
    \label{tab:results_summary}
    \vskip 0.1in
    \begin{tabular}{lccc}
    \toprule
    \multirow{2}{*}{Assumptions}& \multicolumn{3}{c}{Convexity of $f$}\\
    \cmidrule(lr){2-4}
    & Non-convex & Convex & Strongly convex \\
    \midrule
    
    As.~\ref{as:bounded_var}+ As.~\ref{as:quadratically_bounded_tails} & \multirow{2}{*}{
    $\frac{L\Delta_0}{\varepsilon^2}\! +\! \frac{L\Delta_0\sigma^2}{n\varepsilon^4}\! + \!\frac{n\delta\sigma^2 }{m\varepsilon^2}$
    }&
    \multirow{2}{*}{
    $\frac{LR_0^2}{\varepsilon}\! +\! \frac{\sigma^2R_0^2}{n\varepsilon^2}\! +\! \frac{n\sqrt{\delta}\sigma R_0}{m\varepsilon}$
    }& 
    \multirow{2}{*}{
    $
    \frac{L}{\mu}\log\frac{\mu R_0^2}{\varepsilon}\! +\! \frac{\sigma^2}{n\mu\varepsilon} \!+\! \frac{n\sqrt{\delta}\sigma}{m\sqrt{\mu\varepsilon}}
    $}\\
     + known $|\cB_k^{a}|$ & & & \\
    
    \midrule
    As.~\ref{as:bounded_var} + As.~\ref{as:quadratically_bounded_tails} & $\frac{L\Delta_0}{\varepsilon^2}\! +\! \frac{L\Delta_0\sigma^2}{n\varepsilon^4}\! +\! \frac{n^2\delta\sigma^2 }{m\varepsilon^2}$
    &
    $\frac{LR_0^2}{\varepsilon}\! +\! \frac{\sigma^2R_0^2}{n\varepsilon^2}\! +\! \frac{n^2\delta\sigma R_0}{m\varepsilon}$
    & 
    $\frac{L}{\mu}\log\frac{\mu R_0^2}{\varepsilon} \!+\! \frac{\sigma^2}{n\mu\varepsilon} \!+\! \frac{n^2\delta\sigma}{m\sqrt{\mu\varepsilon}}$\\
    \bottomrule

    \end{tabular}
    \vskip -0.15in
\end{table*}
}


We provide convergence guarantees for variants of \algname{BTARD-SGD} with $Q = \R^d$ under different sets of assumptions about the function $f$ and its stochastic gradients. Our first two setups assume that:

\begin{assumption}\label{as:bounded_var}
    There exist such constant $\sigma \ge 0$, $s_0 \in [d]$ that for any set of indices $S = (i_1,\ldots,i_s)$, $1\le i_1< i_2 <\ldots < i_s \le d$, $s \ge s_0$ stochastic gradient $\nabla f(x,\xi)$ satisfy
    \begin{equation*}
        \EE[\nabla f(x,\xi)] = \nabla f(x),
    \end{equation*}
    \begin{equation}
        \EE\left[\left\|\nabla_{[S]} f(x,\xi) - \nabla_{[S]} f(x)\right\|^2\right] \le \frac{s\sigma^2}{d}, \label{eq:uniformly_bounded_var}
    \end{equation}
where $\nabla_{[S]} f(x,\xi) = (\nabla_{i_1} f(x,\xi),\ldots,\nabla_{i_s} f(x,\xi))^\top$, $\nabla_{[S]} f(x) = (\nabla_{i_1} f(x),\ldots,\nabla_{i_s} f(x))^\top$, and $\nabla f_j(x,\xi), \nabla_j f(x)$ are $j$-th components of $\nabla f(x,\xi)$ and $f(x)$ respectively.
\end{assumption}
Here, \eqref{eq:uniformly_bounded_var} is an extension of the classical uniformly bounded variance (UBV) assumption \citep{nemirovski2009robust,ghadimi2012optimal,ghadimi2013stochastic} ensuring that the noise in all subvectors of large enough dimension has the variance dependent on the ratio between the dimension of the subvector $s$ and the dimension of the full vector $d$. For example, it holds when the noise is isotropic. Moreover, one can relax this assumption to the standard UBV assumption, if blocks for aggregation in \algname{BTARD} are chosen uniformly at random (see Appendix~\ref{sec:on_assumptions}). In order to further reduce overhead from \textbf{Verification 3} in the full Algorithm~\ref{alg:detailed_bytar}, we also assume that the stochastic gradient distributions have sub-quadratically decreasing tails (see Appendix~\ref{sec:on_assumptions}).

\begin{assumption}\label{as:quadratically_bounded_tails}
    There exist such constant $\sigma \ge 0$, $s_0 \in [d]$ that for any set of indices $S = (i_1,\ldots,i_s)$, $1\le i_1< i_2 <\ldots < i_s \le d$, $s \ge s_0$ and any $t > 0$ stochastic gradient $\nabla f(x,\xi)$ satisfy\vspace{-2px}
    \begin{equation*}
        \PP\left\{\left\|\frac{1}{k}\sum\limits_{i=1}^k\nabla_{[S]}f(x,\xi_i) - \nabla_{[S]} f(x)\right\|^2 > \frac{ts\sigma^2}{kd}\right\} < \frac{1}{t^2},
    \end{equation*}
    where $\xi_1,\ldots,\xi_k$ are i.i.d.\ samples from $\cD$, and $\nabla_{[S]} f(x,\xi)$, $\nabla_{[S]} f(x)$ are defined in As.~\ref{as:bounded_var}.
\end{assumption}

Under these assumptions, we derive the following convergence bounds for strongly convex, generally convex, and non-convex objectives (see Table~\ref{tab:results_summary}). The respective proofs and further details are deferred to Appendix~\ref{sec:BTARD_SGD_appendix}.

\textbf{Discussion of the convergence bounds.} Let us briefly discuss the main properties of the derived results. When $\delta = 0$ (there are no Byzantine peers), we recover the tightest known rates for parallel \algname{SGD} for strongly convex, generally convex, and non-convex objectives with both sets of assumptions. Next, we notice that in all complexity bounds in the known $|\cB_k^a|$ case, the term depending on the ratio of Byzantine workers $\delta$ (the third one in all bounds) has better dependence on the accuracy of the solution $\varepsilon$ than the classical variance term (the second one in all bounds). Therefore, for sufficiently small $\varepsilon$, the derived complexity bounds are the same as in the case when there are no Byzantine workers and parallel \algname{SGD} is used. However, these bounds are obtained under the assumption that all participants know the exact number of attacking Byzantine workers at each iteration, which is not realistic \newstuff{but helps to better adjust clipping parameter $\tau$ in \algname{CenteredClip}}.

As for the more general case, the third term is much worse than the corresponding term in the previous setup. Nevertheless, the term that depends on the ratio of Byzantine workers $\delta$ has the same dependence on $\varepsilon$ as in the known $|\cB_k^a|$ case. This implies that for sufficiently small $\varepsilon$ the derived complexity bounds are the same as in the case when there are no Byzantine workers and parallel \algname{SGD} is used. We provide the complete formulations and proofs in Appendix~\ref{sec:BTARD_SGD_appendix}.

Finally, the derived convergence results are superior to the previous state-of-the-art ones \emph{even in the PS setup} if $\varepsilon$ is sufficiently small. For example, in the non-convex case, \citet{karimireddy2020learning} show the $\cO(\nicefrac{1}{\varepsilon^2} + \nicefrac{\sigma^2}{n\varepsilon^4} + \nicefrac{\delta\sigma^2}{\varepsilon^4})$\footnote{For simplicity we omit numerical factors, logarithmic terms depending on the parameters of the problem, and factors, quantifying suboptimality of the starting point, i.e., $R_0 = \|x^0 -x^*\|$ and $f(x^0) - \inf_{x\in \R^d} f(x)$.} complexity bound for a version of \algname{Momentum-SGD} that uses \algname{CenteredClip} aggregation rule when PS is available. When there is at least one Byzantine peer, the leading term in the above bound is $\cO(\nicefrac{\delta\sigma^2}{\varepsilon^4})$ since $\delta \geq \nicefrac{1}{n}$. In contrast, when $\varepsilon$ is sufficiently small, i.e., $\varepsilon \leq \cO(\sqrt{\nicefrac{L\Delta_0 m}{n^3\delta \sigma^2}})$ (see Table~\ref{tab:results_summary}), the leading term in our bound is $\cO(\nicefrac{\sigma^2}{n\varepsilon^4})$, which is better than $\cO(\nicefrac{\delta\sigma^2}{\varepsilon^4})$. However, it is worth mentioning the differences between the setups. Although we do not assume the existence of PS, our algorithm and theoretical analysis rely on the usage of part of the workers to check the computations of some other workers and we allow bans of the participants. This is the key feature of our algorithm allowing us to obtain the improvement. See the detailed comparison with other works in Appendix~\ref{appendix:Byz_tol_opt_details}.

\textbf{Intuition behind the proofs.} First, we show that all possible violations of our protocol either lead to the instant ban of a Byzantine peer or (with some positive probability) to the ban during the checks of computations following each iteration of the algorithm (Appendix~\ref{appendix:detecting_protocol_violations}). Next, we upper-bound the (expected squared) shifts that Byzantines can create at each iteration (Lemmas~\ref{lem:quality_of_agg_known_num_byz} and \ref{lem:quality_of_agg_unknown_num_byz}). Therefore, in expectation, Byzantines can deviate from the protocol only a finite number of times, and ``the power'' of their attacks is limited at each particular iteration. Using these results, we analyze \algname{BTARD-SGD} as parallel \algname{SGD} with shifted updates, where the shifts are bounded and exist during the finite number of steps (see Appendices~\ref{appendix:non_convex_analysis_BTARD_SGD}--\ref{appendix:str_convex_analysis_BTARD_SGD}).

\textbf{Results for heavy-tailed problems.} So far, all our convergence results rely on As.~\ref{as:quadratically_bounded_tails}, i.e., that the stochastic gradients have not too heavy tails. This assumption holds for many real-world neural networks. However, there are important NLP tasks such as BERT training \citep{zhang2020why}, where the noise in the stochastic gradient has so heavy tails that As.~\ref{as:quadratically_bounded_tails} becomes unrealistic. The third and final setup in our analysis aims to address such heavy-tailed problems with \algname{BTARD-Clipped-SGD} (Algorithm~\ref{alg:BTARD_Clipped_SGD} in Appendix~\ref{appendix:BTARD_Clipped_SGD_appendix}). We analyze the method under the assumption that $\alpha$-th moments of the stochastic gradients are uniformly upper-bounded for some $\alpha\in(1, 2]$. We notice that for $\alpha < 2$ this assumption allows the variance of the stochastic gradient to be unbounded. In this setting, we prove that \algname{BTARD-Clipped-SGD} finds an $\varepsilon$-solution of the convex problem after $\cO\left(\varepsilon^{-\nicefrac{\alpha}{(\alpha-1)}}\left(1+\left(\nicefrac{n\sqrt{\delta}}{m}\right)^{\nicefrac{\alpha}{(\alpha-1)}}\right)\right)$ iterations when the number of attacking Byzantine peers is known at each iteration and $\cO\left(\varepsilon^{-\nicefrac{\alpha}{(\alpha-1)}}\left(1+\left(\nicefrac{n^2\delta^2}{m}\right)^{\nicefrac{\alpha}{(\alpha-1)}}\right)\right)$ iterations otherwise. One can find the full statements and complete proofs of our results in Appendix~\ref{appendix:extra_analysis}.

\subsection{Resisting Sybil attacks}\label{sect:method_reputation}

\begin{figure*}[tb]
    \vskip 0.05in
    \centering
    \includegraphics[width=0.925\textwidth]{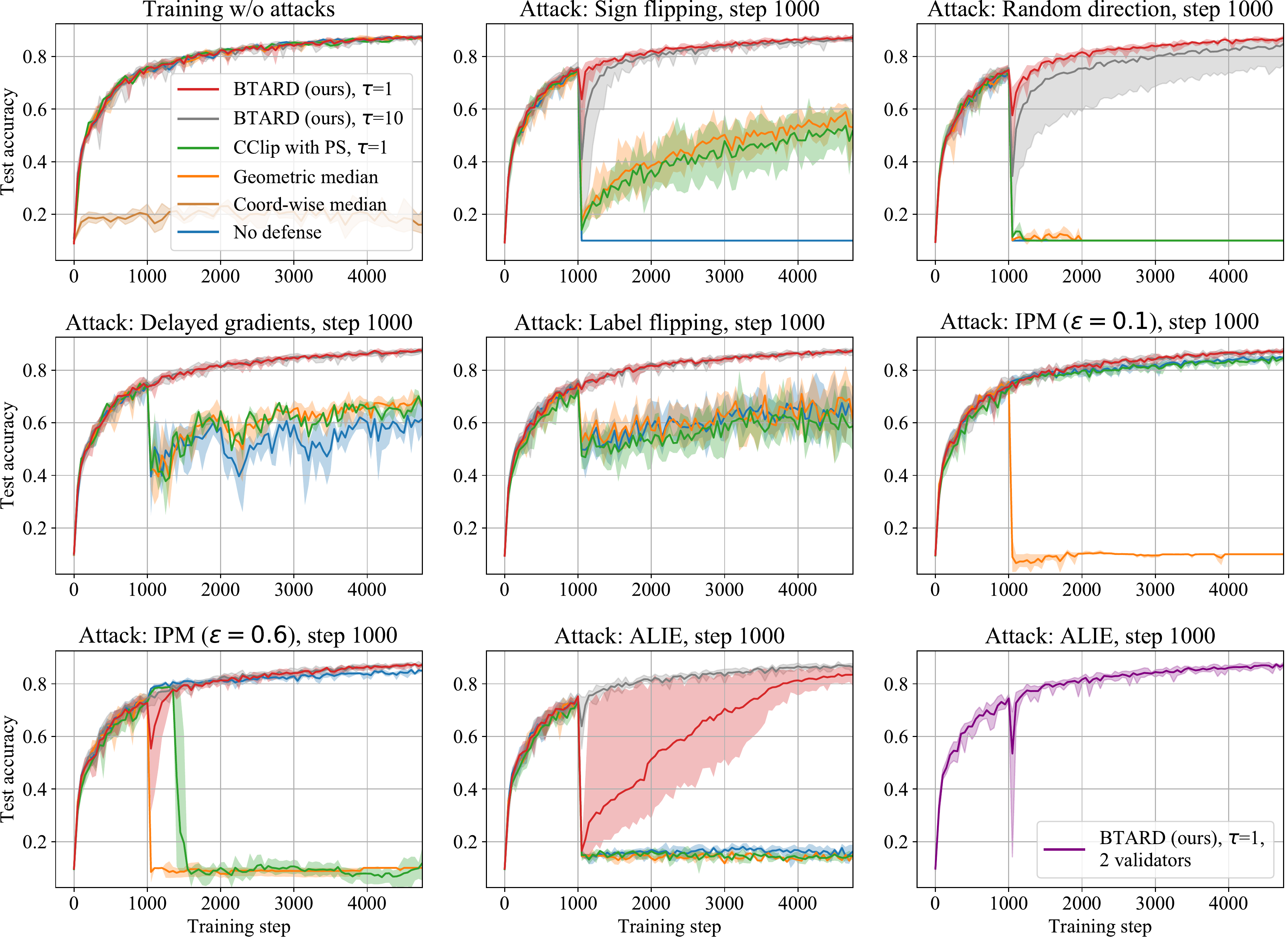}
    \vskip -0.1in
    \caption{The ResNet-18 test accuracy in the case of various attacks and robust aggregation techniques.}
    \label{fig:resnet}
    \vskip -0.15in
\end{figure*}

The algorithm described in Section~\ref{sect:method_aggregation} operates with a predefined list of peers that can only decrease in size. However, many real-world scenarios would benefit from new peers joining midway through training. Unfortunately, this exposes the system to Sybil attacks~\citep{sybil}, when a single computationally constrained attacker adopts multiple pseudonymous identities in order to establish a dishonest majority and break the algorithm.

To handle this, one may augment BTARD with a heuristic protocol that dictates how new peers can join the experiment. A new participant must prove that it has honestly computed enough gradients over multiple continuous iterations before it is allowed to actually contribute to the training. This ensures that the influence of Sybil attackers is proportional to their computing power (see details in Appendix~\ref{appendix:reputation}).




\section{Experiments}\label{sect:experiments}

\begin{figure*}[tb]
    \vskip 0.05in
    \centering
    \includegraphics[width=0.925\textwidth]{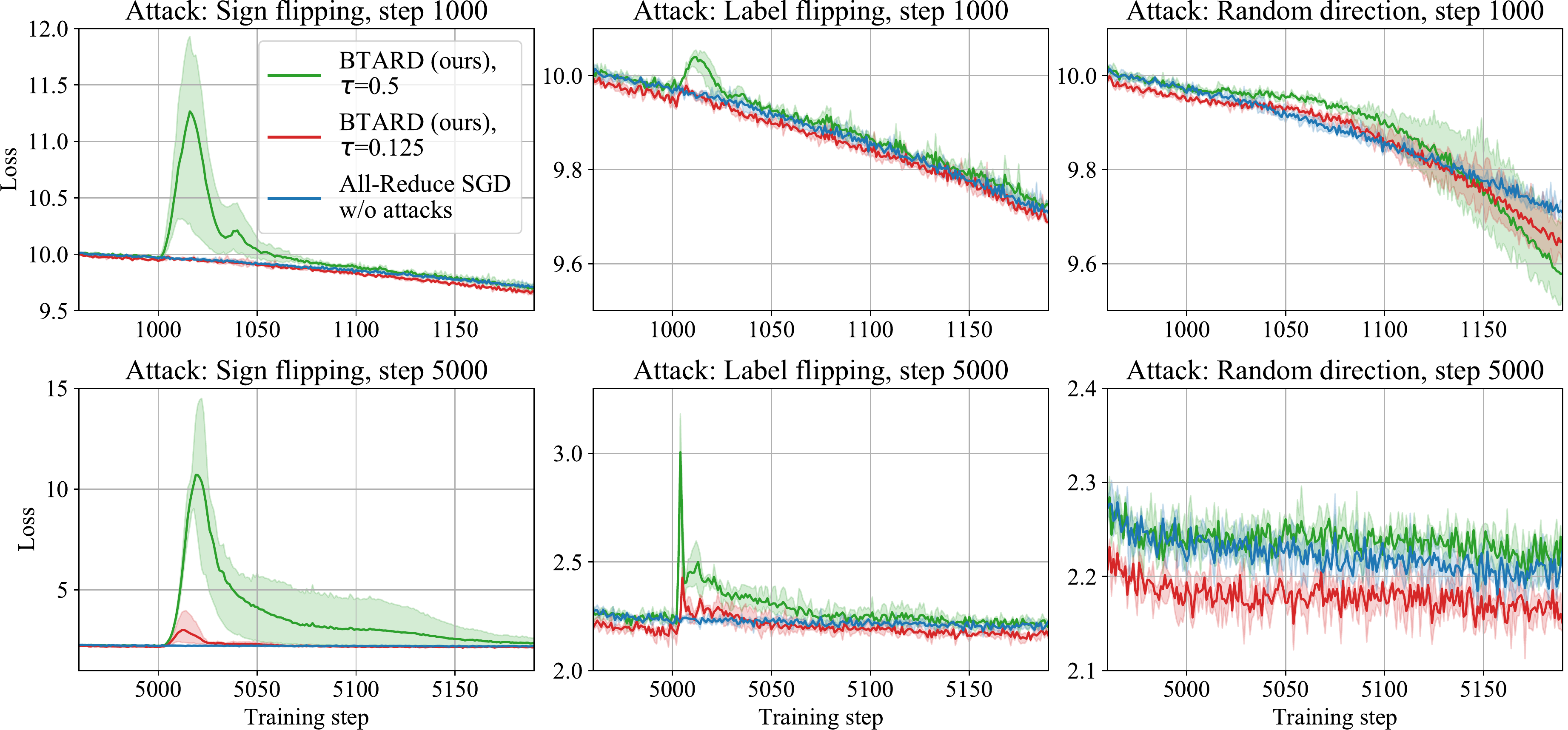}
    \vskip -0.1in
    \caption{The ALBERT-large training objective in the case of \algname{BTARD-Clipped-SGD} (in presence of various attacks) and the standard All-Reduce SGD (without attacks).}
    \label{fig:albert}
    \vskip -0.15in
\end{figure*}

\subsection{CIFAR10 classification}\label{sect:experiments_resnet}

First, we evaluate our approach in controlled conditions.
Our setup is a ResNet-18~\citep{resnet} model trained to solve the CIFAR10 classification task~\citep{cifar}. We train the model on 16 peers (each peer processes 8 samples per batch) using \algname{SGD} with \citet{nesterov} momentum and the cosine annealing learning rate~\citep{loshchilov2016sgdr}. We use a tuned setup achieving 93.5\% test accuracy.

Our method has a hyperparameter $\tau$ responsible for clipping strength in \algname{CenteredClip}. We experiment with $\tau = 10$ (weaker clipping) and $\tau = 1$ (stronger clipping). These values were chosen based on the maximal standard deviation of the gradient parts averaged by the workers during normal training, so that almost no vectors are clipped for the weaker clipping and almost half of the vectors are clipped for the stronger clipping scenario.
We begin with using only 1 validator on each step. If a validator happens to be Byzantine, it never accuses its peers.

We compare our method to the regular All-Reduce without clipping \newstuff{and the baselines that use a trusted parameter server: the original variant of \algname{CenteredClip}~\citep{karimireddy2020learning}, the coordinate-wise and geometric medians}. Some other popular robust aggregation techniques are omitted because they were shown to be inferior in ~\citet{karimireddy2020learning}. We run all iterative algorithms (such as \algname{CenteredClip}) to convergence with $\epsilon=10^{-6}$, as we have found that limiting the number of iterations can significantly decrease the final model quality (see Figure~\ref{fig:resnet_cclip_steps} in Appendix~\ref{appendix:resnet18_extra}).

In addition to measuring training convergence, we evaluate our setup in presence of malicious peers. To test pessimistic conditions, we pick a setting where 7 of 16 peers are Byzantine (see Appendix~\ref{appendix:resnet18_extra} for a setup with 3 Byzantines). We experiment with the following attack types:
\begin{itemize}[leftmargin=*]
    \vspace{-1px}\item
    \textsc{Sign flipping:} each attacker sends the opposite of its true gradient.
    \vspace{-1px}\item
    \textsc{Random direction:} all attackers send large vectors pointed in a common random direction.
    \vspace{-1px}\item
    \textsc{Label flipping:} each attacker computes its gradient based on the cross-entropy loss with flipped labels. For CIFAR-10, we replace label $l \in \{0, ..., 9\}$ with $9 - l$.
    \vspace{-1px}\item
    \textsc{Delayed gradient:} 
    attackers send their real gradients delayed by $1000$ steps.
    \vspace{-1px}\item
    \textsc{Inner product manipulation (IPM):} 
    attackers send the average of all honest peers' gradients multiplied by $- \epsilon$.
    We test $\epsilon = 0.1$ (\citet{xie2020fall} demonstrate its efficiency against the coordinate-wise median and Krum) and $\epsilon = 0.6$ (\citet{allen2020byzantine} report it as the most efficient attack against their \algname{SafeguardSGD}).
    \vspace{-1px}\item
    \textsc{``A little is enough'' (ALIE):} 
    attackers collude to move the coordinate-wise median while still sending values inside the population variance. \citet{baruch2019little} show that this attack is effective against TrimmedMean \citep{yin2018byzantine} and Krum \citep{blanchard2017machine}.
\end{itemize}

We further amplify the Byzantine gradients from the first two attacks by a large coefficient $\lambda = 1000$ so they would dominate the aggregated gradient if no clipping is used. While in practice such attacks can be identified right away by the large gradient norms, we deliberately avoid doing that to test our clipping approach.

Here, we make Byzantines behave honestly prior to step $s = 1000$, then simultaneously attack on each step until they are banned (i.e., attacks start at the early stage of training). In Appendix~\ref{appendix:resnet18_extra}, we have also evaluated our method in the case of $s = 10{,}000$ (i.e., attacks start closer to the convergence stage) and in the case of Byzantines attacking periodically, but found no significant differences in its behavior.

We repeat each experiment 5 times and report the mean and range (between the minimum and maximum) of the test accuracy during at least 2000 steps after all Byzantines are banned. In our experiments, this usually happened within 150 steps after $s$.

We observe that our method does not significantly worsen the speed of convergence compared to the All-Reduce baseline (see Figure~\ref{fig:resnet}, upper-left). On average, the final test accuracy after 25,000 steps is only 0.6\% worse for $\tau = 1$ and even 0.1\% better for $\tau = 10$. The other tested defenses have a similar effect, except for the coordinate-wise median, which does not converge even without attacks.

Next, we find that BTARD with the stronger clipping and only 1 validator protects from all tested attack types except the ALIE attack, where we need 2 validators to guarantee recovery (see Figure~\ref{fig:resnet}). In general, the weaker clipping is most sensitive to the attacks with large magnitudes (sign flipping and random direction), while the stronger clipping is sensitive to the low-magnitude ALIE attack\footnote{This observation coincides with \citet{baruch2019little} demonstrating that the \algname{ALIE} attack is more harmful against median-based and clipping approaches than to the mean aggregation without any defenses. Indeed, the stronger clipping makes \algname{CenteredClip} closer to the geometric median, and the weaker clipping makes it closer to the mean (as explained in Appendix~\ref{appendix:centered_clip}).}. The other defenses fail to protect training from most attack types.

We conclude that BTARD with the stronger clipping and 2 validators (i.e., $1/8$ of the compute dedicated for validation) allows to quickly recover the pre-attack accuracy after all tested attack types even in the extreme case with 7 out of 16 peers being Byzantine.


\subsection{Pre-training transformers}\label{sect:experiments_albert}

In this section, we choose a more compute-intensive and hyperparameter-sensitive model with an adaptive optimizer to demonstrate that our approach may be applied to the models commonly used in distributed training scenarios. Our setup is pre-training ALBERT-large~\citep{albert} on the WikiText-103 dataset~\citep{wikitext103} using the LAMB optimizer~\citep{lamb} (see details in Appendix~\ref{appendix:config_albert}). Since the original ALBERT setup uses gradient clipping, we use \algname{BTARD-Clipped-SGD} (see Algorithm~\ref{alg:BTARD_Clipped_SGD} in Appendix~\ref{appendix:BTARD_Clipped_SGD_appendix}). We train the model on 16 machines that jointly accumulate 4096 samples for every batch.

We evaluate the All-Reduce baseline without attacks, as well as BTARD with weaker and stronger clipping (larger and smaller $\tau$ respectively) in presence of attackers. In the last case, we make 7 workers malicious, use 1 validator, and test two attack regions: $s = 1000$ and $s = 5000$. We omit reporting of the delayed gradient attack (due to its inefficiency), as well as ALIE and IPM attacks (they require Byzantines to make an extra All-Reduce round on each step, which is hard to do efficiently in the real multi-host setup).

As in the previous section, we observe that without attacks both $\tau$ values have no significant effect on the training progress, \newstuff{reaching only 1.3\% larger loss in the worst case}. However, the stronger clipping shows faster recovery from the tested attacks (see Figure~\ref{fig:albert}). Crucially, while some attacks significantly increase the loss function, the model recovers much faster than it takes to reach the pre-attack loss when training from scratch.

We also report the computation overhead of \textsc{BTARD-SGD} in this setup in Appendix~\ref{appendix:compute_overhead_eval} and conduct experiments with 64 machines and most efficient attacks in Appendix~\ref{appendix:eval_at_scale}, confirming that \algname{BTARD} remains efficient at a larger scale.

\section{Conclusion}\label{sect:discussion}

In this work, we formulated \algname{BTARD-SGD}~--- a Byzantine-tolerant training strategy for large neural networks. We verified its robustness and efficiency through theoretical analysis and large-scale distributed training experiments.

Our research opens new opportunities in many deep learning applications, making it possible to train large neural networks in a cooperative manner.
Small research groups can host open cooperative training projects where the training hardware is crowdsourced by volunteers around the world, or a group of small companies can compete with larger corporations by combining their compute clusters.  While these applications also require engineering effort to become practical, our algorithm ensures that they can run securely without the need to screen every potential participant.


\vspace{-5px}
\paragraph{Acknowledgments.} This work was partially supported by a grant for research centers in the field of artificial intelligence, provided by the Analytical Center for the Government of the Russian Federation in accordance with the subsidy agreement (agreement identifier 000000D730321P5Q0002) and the agreement with the Moscow Institute of Physics and Technology dated November 1, 2021 No. 70-2021-00138.

We thank Sai Praneeth Karimireddy for useful discussions and suggestions, Lie He for providing the code with \algname{CenteredClip}, William Cappelletti for pointing out several relevant papers, Gennady Pekhimenko for his technical expertise and infrastructure for the distributed training experiments, Dmitrii Emelianenko for helpful discussions, and Nazarii Tupitsa for spotting inaccuracies in the proofs of Lemmas \ref{lem:BTARD_for_bad_peer_known_num_byz} and \ref{lem:quality_of_agg_unknown_num_byz}.


\nocite{paszke2019pytorch}
\bibliography{main}
\bibliographystyle{icml2022}

\clearpage

\appendix
\onecolumn
\section*{\scalefont{1.2}Appendix}

\vspace{0.5em}
\renewcommand{\contentsname}{Table of contents}
{\small\setlength{\parskip}{0.025em}\tableofcontents}

\section{Additional related work}

\subsection{Byzantine-tolerant optimization: additional details}\label{appendix:Byz_tol_opt_details}

In this section, we provide extra details on the related work discussed in Section~\ref{sect:related_byzantine}. The summary of complexity results is presented in Table~\ref{tab:comparison_of_rates}.

\subsubsection{Parameter-server (PS) based approaches}\label{appendix:extra_related_ps}

There is a quite large number of papers on Byzantine-tolerant optimization that aim to robustify parallel \algname{SGD} in the case when a trusted parameter-server (PS) is available. Since in the classical parallel \algname{SGD} even one Byzantine worker can break the convergence of the whole method by shifting the mean of the resulting vector in an arbitrary way, it is natural to substitute averaging of the vectors received from the workers by a more robust aggregation rule, e.g., Krum \citep{blanchard2017machine}, coordinate-wise median, trimmed median \citep{yin2018byzantine}, Multi-Krum \citep{damaskinos2019aggregathor}, Bulyan \citep{mhamdi2018hidden}, geometric median \citep{pillutla2019robust}. However, all these methods were shown to be brittle and not robust to special types of Byzantine attacks~\citep{baruch2019little,xie2020fall,karimireddy2020learning}. Moreover, \citet{karimireddy2020learning} show that all permutation-invariant algorithms cannot converge to any predefined accuracy of the solution, meaning that simple application of some aggregation rules on top of \algname{SGD} does not lead to Byzantine tolerance.

There are several approaches to circumvent this issue. \citet{alistarh2018byzantine} propose \algname{ByzantineSGD} and prove the convergence results for convex problems. \citet{allen2020byzantine} extend this approach to handle non-convex problems as well. In both papers, the key idea is based on applying the concentration properties of the sums depending on the stochastic gradients as well as iterative removing of Byzantine peers. However, theoretical guarantees from \citet{alistarh2018byzantine,allen2020byzantine} rely on the restrictive assumption that the noise in the stochastic gradients is uniformly bounded with probability $1$. \citet{bulusu2020distributed} propose similar approach to the one from \citep{allen2020byzantine} but analyze their method under more restrictive assumptions (boundedness of the gradient). Next, \citet{wu2020federated} propose a Byzantine-tolerant version of parallel \algname{SAGA} \citep{defazio2014saga}, i.e., variance-reduced version of \algname{SGD}, with geometric median as an aggregation rule --- \algname{Byrd-SAGA} -- and prove its convergence for strongly convex objectives. However, the authors do not establish the convergence of \algname{Byrd-SAGA} to any predefined accuracy of the solution. Moreover, variance-reduced methods are known to converge slowly in deep learning applications~\citep{defazio2018ineffectiveness}, which limits the practical utility of \algname{Byrd-SAGA}. Finally, \citet{karimireddy2020learning} propose a new aggregation rule called \algname{CenteredClip}, apply it to \algname{SGD} with client momentum, and prove convergence results for the obtained method in the non-convex case under reasonable assumptions. Alternative lines of work achieve Byzantine-tolerant optimization through redundant computations \citep{chen2018draco,rajput2019detox} or reputation-based approaches \citep{rodriguez2020dynamic, regatti2020bygars, xu2020towards}. Unfortunately, these papers either do not contain theoretical (non-asymptotic) convergence results for the proposed methods or rely on too restrictive assumptions in the analysis. See more references in the recent survey by \citet{lyu2020privacy}.

\subsubsection{Decentralized approaches}\label{appendix:extra_related_decentralized}

Byzantine-tolerant optimization methods for decentralized communication architectures are studied only in a couple of papers. \citet{yang2019bridge,yang2019byrdie} consider a specific scenario when workers compute full gradients, local loss functions on peers are heterogeneous, and the trimmed coordinate-wise median is used as an aggregation rule. In this setup, the authors prove convergence results in the strongly convex case to some accuracy depending on the heterogeneity level of local loss functions, which is natural in the presence of Byzantine peers. However, these results are not applicable to a wide range of practically important problems where stochastic gradients have to be used. This issue was partially resolved in \citet{peng2021byzantine}, where the authors propose a version of \algname{Gossip SGD} applied to the equivalent reformulation of the original problem based on TV-regularization \citep{ben2013robust}. However, the established convergence results in the strongly convex case do not show any benefits of using communications with other workers in the homogeneous data regime that appears in large-batch training of deep learning models. \citet{li2019rsa} use the same idea for a parameter-server architecture. \newstuff{Next}, there are approaches requiring peer-to-peer communications of full vectors at each step \citep{gupta2021byzantine, gupta2021byzantine2}, which is not scalable.

\begin{table}[H]
    \vskip -0.1in
    \centering
    \caption{\small Summary of the complexity results for Parameter-Server (PS) based and distributed Byzantine-tolerant optimization. The columns ``Non-convex'', ``Convex'', and ``Strongly convex'' contain the complexity bounds for $L$-smooth non-convex, convex, and $\mu$-strongly convex problems respectively. By complexity we mean the number of iterations sufficient to find such point $\widehat{x}$ that $\EE[\|\nabla f(\widehat{x})\|^2] \le \varepsilon^2$ for non-convex problems and $\EE[f(\widehat{x}) - f(x^*)] \le \varepsilon$ for convex and $\mu$-strongly convex problems (see Def.~\ref{def:mu_strong_convexity}) with $x^*$ being the solution. For simplicity, we omit numerical factors, logarithmic terms depending on the parameters of the problem, and factors, quantifying suboptimality of the starting point, i.e., $R_0 = \|x^0 -x^*\|$ and $f(x^0) - \inf_{x\in \R^d} f(x)$. Notation: $\delta = \nicefrac{|\cB|}{n}$, $m$ = number of peers checked at each iteration. The results from \citet{yang2019bridge,yang2019byrdie} are not included since they rely on full-gradient computations.}
    \label{tab:comparison_of_rates}
    \vskip 0.1in
    {\small
    \begin{threeparttable}
    \begin{tabular}{|c|c|c c c|}
        \hline
        Non-PS? & Work & Non-convex & Convex & Strongly convex \\
        \hline\hline
        \multirow{4.5}{*}{\xmark} & \citep{alistarh2018byzantine}\tnote{\color{red}(1),(2)}~~~~~  & \xmark & $\frac{1}{\varepsilon} + \frac{\sigma^2}{n\varepsilon^2} + \frac{\delta^2\sigma^2}{\varepsilon^2}$ & $\frac{1}{\mu} + \frac{\sigma^2}{n\mu\varepsilon} + \frac{\delta^2\sigma^2}{\mu\varepsilon}$ \\
        & \citep{allen2020byzantine}\tnote{\color{red}(1),(3)}~~~~~  & $\frac{1}{n\varepsilon^4} + \frac{\delta^2}{\varepsilon^4}$ & \xmark & \xmark \\
        & \citep{wu2020federated}\tnote{\color{red}(4)}  & \xmark & \xmark & $\frac{L^2}{\mu^2}$\tnote{\color{red}(5)} \\
        & \citep{karimireddy2020learning}\tnote{\color{red}(6)}~  & $\frac{1}{\varepsilon^2} + \frac{\sigma^2}{n\varepsilon^4} + \frac{\delta\sigma^2}{\varepsilon^4}$ & \xmark & \xmark \\
        \hline\hline
         \multirow{7}{*}{\cmark} & \citep{peng2021byzantine}\tnote{\color{red}(6),(7)} & \xmark & \xmark & $\frac{1}{\mu\varepsilon} + \frac{n\sigma^2}{\mu^2\varepsilon} + \frac{\lambda^2 d \overline{N}^2}{\mu^2\varepsilon}$\\
        & \textbf{This work}\tnote{\color{red}(8)} & $\frac{1}{\varepsilon^2} + \frac{\sigma^2}{n\varepsilon^4} + \frac{n\delta\sigma^2 }{m\varepsilon^2}$
    &    $\frac{1}{\varepsilon} + \frac{\sigma^2}{n\varepsilon^2} + \frac{n\sqrt{\delta}\sigma}{m\varepsilon}$ & $
    \frac{1}{\mu} + \frac{\sigma^2}{n\mu\varepsilon} + \frac{n\sqrt{\delta}\sigma}{m\sqrt{\mu\varepsilon}}
    $\\
        & \textbf{This work}\tnote{\color{red}(9)} & $\frac{1}{\varepsilon^2} + \frac{\sigma^2}{n\varepsilon^4} + \frac{n^2\delta\sigma^2 }{m\varepsilon^2}$& $\frac{1}{\varepsilon} + \frac{\sigma^2}{n\varepsilon^2} + \frac{n^2\delta\sigma}{m\varepsilon}$& $\frac{1}{\mu} + \frac{\sigma^2}{n\mu\varepsilon} + \frac{n^2\delta\sigma}{m\sqrt{\mu\varepsilon}}$\\
        & \textbf{This work}\tnote{\color{red}(10)} & \xmark & $\left(\frac{G\Lambda_1}{\varepsilon}\right)^{\frac{\alpha}{\alpha-1}}$ & $\left(\frac{G^2\Lambda_1}{\mu\varepsilon}\right)^{\frac{\alpha}{2(\alpha-1)}}$ \\
        & \textbf{This work}\tnote{\color{red}(11)} & \xmark & $\left(\frac{G\Lambda_2}{\varepsilon}\right)^{\frac{\alpha}{\alpha-1}}$ & $\left(\frac{G^2\Lambda_2}{\mu\varepsilon}\right)^{\frac{\alpha}{2(\alpha-1)}}$\\
        \hline
    \end{tabular}
    \vskip 0.1in
    \begin{tablenotes}
        {\small
        \item [{\color{red}(1)}] The results are proven under uniformly bounded noise assumption: $\|\nabla f(x,\xi) - \nabla f(x)\| \le \sigma$ for all $x$ and $\xi$. High-probability guarantees are established, i.e., it is proven that with probability at least $1 - \beta$ algorithms from \citep{alistarh2018byzantine} find $\hat x$ such that $f(\hat x) - f(x^*) \le \varepsilon$ and algorithms from \citep{allen2020byzantine} find $\hat x$ such that $\|\nabla f(\hat x)\| \le \varepsilon$.
        \item [{\color{red}(2)}] Dependencies on $\beta$ are logarithmic and, therefore, omitted. The optimization problems are assumed to be defined on a bounded set, the rates depend on the diameter of this set.
        \item [{\color{red}(3)}] The results are derived for the case $\sigma = 1$. \citet{allen2020byzantine} also derive convergence guarantees for finding second-order stationary points.
        \item [{\color{red}(4)}] \citet{wu2020federated} consider finite-sum case of \eqref{eq:main_problem}, i.e., $f(x) = \frac{1}{N}\sum_{j=1}^N f(x,j)$. The results are derived under the uniformly bounded variance (UBV) assumption: $\EE_j[\|\nabla f(x,j) - \nabla f(x)\|^2] \le \sigma^2$ for all $x\in \R^d$, where $j$ is sampled uniformly at random from $\{1,\ldots,N\}$. \citet{wu2020federated} also derive convergence guarantees under $\zeta$-bounded dissimilarity assumption, i.e., when $f(x) = \frac{1}{|\cG|}\sum_{i\in \cG} f_i(x)$, $f_i(x) = \frac{1}{N}\sum_{j=1}^N f_i(x,j)$ for all $i\in \cG$, and $\frac{1}{|\cG|}\sum_{i\in \cG}\|\nabla f_i(x) - \nabla  f(x)\|^2 \leq \zeta^2$.
        \item [{\color{red}(5)}] This result is obtained the main result of  \citep{wu2020federated} and states that the method from \citep{wu2020federated} finds $\hat x$ such that $f(\hat x) - f(x^*) \le \varepsilon$ only for $\varepsilon \ge \nicefrac{\sigma^2}{\mu^2\left(\frac{1}{2}-\delta\right)^2}$, which can be large.
        \item [{\color{red}(6)}] The result is derived under the UBV assumption, i.e., $\EE_{\xi\sim \cD}[\|\nabla f(x,\xi) - \nabla f(x)\|^2] \leq \sigma^2$ for all $x \in \R^d$.
        \item [{\color{red}(7)}]  \citet{peng2021byzantine} consider the case, when peers are allowed to communicate with their neighbors that are defined via some communication graph. The result establishes the total number of iterations/communication rounds needed to find $\hat x$ such that $\EE\|\hat x - x^*\|^2 \le \varepsilon$ for $\varepsilon \ge \frac{\lambda^2 d}{\mu^2}\sum_{i \in \cG} |\cB_i|^2$, where $\lambda \geq 0$ is any non-negative number and $\cB_i$ is the set of Byzantine peers neighboring with the $i$-th peer. In the complexity result, we use the notation $\overline{N}^2 = \sum_{i\in \cG}(|\cG_i|^2 + |\cB_i|^2)$, where $\cG_i$ is the set of good neighbors of the $i$-th peer. When $\lambda = 0$, the workers do not communicate at all. Moreover, \citet{peng2021byzantine} analyze the case of heterogeneous local functions, composite optimization problems and time-varying setup but in that case $\lambda$ is lower bounded by a strictly positive quantity depending on the heterogeneity level and minimal non-zero singular value of the node-edge incidence matrix, i.e., any predefined accuracy cannot be achieved.
        \item [{\color{red}(8)}] The results are derived for \algname{BTARD-SGD} (in the strongly convex case, for \algname{Restarted-BTARD-SGD}) under Assumptions~\ref{as:bounded_var}~and~\ref{as:quadratically_bounded_tails} in the case when the exact number of attacking Byzantine workers at iteration $k$ is known to each participant. See Theorems~\ref{thm:BTARD_SGD_known_byz_non_cvx},~\ref{thm:BTARD_SGD_known_byz_cvx},~and~\ref{thm:BTARD_SGD_known_byz_str_cvx}.
        \item [{\color{red}(9)}] The results are derived for \algname{BTARD-SGD} (in the strongly convex case, for \algname{Restarted-BTARD-SGD}) under Assumptions~\ref{as:bounded_var}~and~\ref{as:quadratically_bounded_tails}. See Theorems~\ref{thm:BTARD_SGD_unknown_byz_non_cvx},~\ref{thm:BTARD_SGD_unknown_byz_cvx},~and~\ref{thm:BTARD_SGD_unknown_byz_str_cvx}.
        \item [\color{red}(10)] The results are derived for \algname{BTARD-Clipped-SGD} (in the strongly convex case, for \algname{Restarted-BTARD-Clipped-SGD}) under Assumption~\ref{as:bounded_alpha_moment} without any additional assumptions on the tails of the distribution. Moreover, it is assumed that the exact number of attacking Byzantine workers at iteration $k$ is known to each participant. See Theorems~\ref{thm:BTARD_Clipped_SGD_known_byz_cvx}~and~\ref{thm:BTARD_Clipped_SGD_known_byz_str_cvx}. In the complexity results, we use the notation $\Lambda_1 = 1 + \frac{n\sqrt{\delta}}{m}$.
        \item [\color{red}(11)] The results are derived for \algname{BTARD-Clipped-SGD} (in the strongly convex case, for \algname{Restarted-BTARD-Clipped-SGD}) under Assumption~\ref{as:bounded_alpha_moment} without any additional assumptions on the tails of the distribution. See Theorems~\ref{thm:BTARD_Clipped_SGD_unknown_byz_cvx}~and~\ref{thm:BTARD_Clipped_SGD_unknown_byz_str_cvx}. In the complexity results, we use the notation $\Lambda_2 = 1 + \frac{n^2\delta}{m}$.
        }
    \end{tablenotes}
    \end{threeparttable}}
\end{table}

\newstuff{Finally, \citet{el2020genuinely} propose an algorithm based on the usage of multiple servers. The authors assume that both workers and servers can be Byzantines, which is a realistic scenario. However, their approach requires the workers to send their gradients to all servers at each iteration and receive parameters from all servers as well. This leads to a significant communication overhead in practice. Moreover, \citet{el2020genuinely} do not provide non-asymptotic convergence rates, making it problematic to provide an in-depth comparison with existing works and with our results as well. Therefore, it is unclear whether the usage of multiple servers speeds up training or it just leads to overhead in the communications and computations.}

In contrast, our results do benefit from the communications between workers. First of all, as one can see from Table~\ref{tab:comparison_of_rates}, the terms depending on the fraction $\delta$ of Byzantine peers in our complexity bounds for \algname{BTARD-SGD} and \algname{Restarted-BTARD-SGD} (the third terms) have better dependence on the target accuracy $\varepsilon$ than the corresponding terms in the complexity bounds from \textit{all} previous works (even from those relying on the existence of a PS). Moreover, for sufficiently small $\varepsilon$ these terms in our complexity results are smaller than the second terms, which correspond to the main term in the complexity of parallel \algname{SGD}. That is, \algname{BTARD-SGD}/\algname{Restarted-BTARD-SGD} applied to the problem with Byzantine peers has convergence guarantees that are not worse than the corresponding guarantees for parallel \algname{SGD} applied to the problem without any Byzantine workers. \newstuff{In such regimes, our theoretical convergence results outperform even ones derived for PS-based algorithms.}

We notice that Assumptions~\ref{as:bounded_var}~and~\ref{as:quadratically_bounded_tails} used in the analysis of \algname{BTARD-SGD}/\algname{Restarted-BTARD-SGD} are slightly stronger than uniformly bounded variance assumption used in \citep{wu2020federated, karimireddy2020learning, peng2021byzantine}. However, as we explain in Appendix~\ref{sec:on_assumptions}, our analysis allows to relax Assumptions~\ref{as:bounded_var} to uniformly bounded variance assumption, and Assumption~\ref{as:quadratically_bounded_tails} is reasonable for many practically important problems. Finally, we also propose and analyze \algname{BTARD-Clipped-SGD} and \algname{Restarted-BTARD-Clipped-SGD} under Assumption~\ref{as:bounded_alpha_moment} that may hold even in the case of \textit{unbounded} variance of the stochastic gradient. To the best of our knowledge, this is the first time in the literature on the Byzantine-tolerant optimization when the complexity results are obtained without assuming boundedness of the stochastic gradient's variance.

\subsection{Multi-party random number generators: additional details}
\label{appendix:mprng}

Many distributed systems may benefit from the multi-party random number generators (MPRNG) where a group of malicious peers would have little influence (bias) on the generator output. MPRNGs are usually based on multi-party coin tossing protocols, such as the protocol from \citet{blum1983coin}. As an example, MPRNG allows to choose a participant winning a lottery or choose a peer whose calculations are going to be validated by other peers to detect possible cheating.

\newstuff{While \citet{blum1983coin} formally introduces a protocol for one bit and two parties, its generalization to multiple bits and parties (as necessary for MPRNG) is trivial assuming the presence of the broadcast channel. This modification is widely known in literature, e.g., described in \citet{zhang2019efficient}. According to this generalization, peers should execute the following protocol to obtain $k$ random bits (see the intuitive scheme in Figure~\ref{fig:mprng}):
\begin{enumerate}[leftmargin=*]
    \item Each peer generates its own random string $x_i$ made of $k$ bits.
    \item Each peer broadcasts \textit{commitment} $h_i = h(i || x_i || s_i)$, where $||$ denotes concatenation, $h(x)$ is a common cryptographic hash function, $i$ is the peer's unique identifier (known by other peers), and $s_i$ is a large random string.
    \item Peers wait until all of them finish broadcasting the commitments. After that, no peer can alter its $x_i$ to influence the protocol output (otherwise, peers would notice that the new value $x_i'$ does not match the commitment).
    \item Each peer \textit{reveals} their random string by broadcasting its $x_i$ and $s_i$.
    \item Each peer verifies that all other peers revealed values $x_j$ and $s_j$ that match their commitments $h_j = h(j || x_j || s_j)$.
    \item If a peer detects that peer $j$ aborted the procedure or its commitment does not match its revealed values, it concludes that we cannot trust peer $j$. Since other peers read the same broadcast channel, all of them can make the same conclusion. In this case, the system repeats the protocol.
    \item If peers do not detect any mismatches, they calculate the protocol output $x = x_1 \oplus ... \oplus x_n$, where $\oplus$ denotes the bitwise XOR operation.
\end{enumerate}

In this protocol, the commitments include the peer identifier $i$ to protect from \textit{replay attacks} (when an attacker repeats someone else's message) and the large random string $s_i$ to resist \textit{dictionary attacks} (when an attacker reverses the hash function using a large dictionary of its values).}

While there are MPRNGs \citep{rabin1989verifiable} with a negligible bias for the case when more than a half parties are honest (assuming the presence of the broadcast channel), \citet{cleve1986limits} proves that it is impossible to reach the negligible bias for the case of dishonest majority, which may be reached in practice with the Sybil attacks.

However, we note that the bias in \citet{blum1983coin} \newstuff{(and its modification above)} appears only in the case when an attacker learns the result earlier than other peers and \newstuff{forces the protocol to be repeated}. If we are using MPRNG to choose a peer that to be checked for cheating, we may ban all peers that aborted the procedure and restart from scratch without them, therefore eliminating the bias problem.

\begin{figure}[tb]
    \vskip 0.1in
    \centering
    \includegraphics[width=0.9\textwidth]{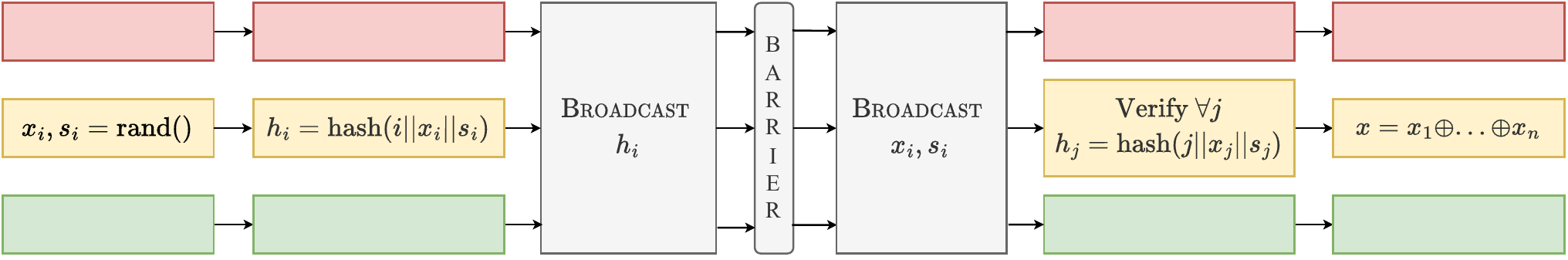}
    \caption{A scheme of MPRNG based on \newstuff{the generalization of} \citet{blum1983coin}. Here, $||$ denotes concatenation, \newstuff{$\oplus$ denotes bitwise XOR}, $h(x)$ is a common cryptographic hash function. The hashed values include the peer identifier $i$ to protect from the replay attacks and a large random string $s_i$ to resist the dictionary attacks.}
    \label{fig:mprng}
    \vskip -0.2in
\end{figure}

\section{Synchronization points and computation overhead of BTARD-SGD}\label{appendix:complexity_overhead}

\newstuff{\paragraph{Synchronization points.} An important aspect of BTARD performance is synchronization. The naive implementation of Algorithm~\ref{alg:btarsgd_outline} would have many global synchronization ``barriers'' per step: one for aggregating gradients, another for choosing a random direction $z$, yet another for electing validators, etc. These frequent synchronizations could undermine the practical training performance of BTARD in high-latency networks, such as when training over the Internet. 

Fortunately, it is possible to reduce the number of synchronizations by bundling them together. For instance, peers use a single MPRNG round for sampling $z$ and for electing validators. Furthermore, this MPRNG round and subsequent checks can be done in background, while a peer accumulates gradients for the next step. The only restriction is that this ``shared'' MPRNG round must be performed after all peers declare their checksums for that round.

With these optimizations, BTARD-SGD requires only two points of synchronization per round. The first one occurs right before gradient aggregation, and the second one is in a background task that performs verifications. Finally, there is a non-regular need for synchronization when one peer accuses another of being Byzantine. However, as we elaborated earlier, each accusation will result in at least one Byzantine being banned. Therefore, this additional cost will occur only a limited number of times over the training run.
}
\vspace{-4px}

\paragraph{Computation overhead.} In terms of computation, \algname{BTARD-SGD} introduces two main overheads: from validators and \algname{CenteredClip} respectively. As we have shown empirically in Section~\ref{sect:experiments} and Appendix~\ref{appendix:additional_experiments}, both \algname{BTARD-SGD} and \algname{BTARD-Clipped-SGD} can withstand even attacks with 7 out of 16 peers being Byzantine using only 1--2 validators randomly chosen from 16 peers. As such, the computation overhead for validation is no more than $1/8$ of the total compute.

As for the \algname{CenteredClip}, our algorithm executes the same amount of computation as the original \algname{CenteredClip}~\citep{karimireddy2020learning}, except that now the extra load is distributed evenly across all peers. We provide an empirical evaluation of such overhead in Appendix~\ref{appendix:compute_overhead_eval}.

\newstuff{Finally, we note that generating a shared vector $z$ from a scalar seed $r^t$ (as defined in Algorithm~\ref{alg:btarsgd_outline}) has a negligible cost and can be done with any standard pseudo-random number generator. For instance, generating $z$ for ALBERT-large (the setup from Section~\ref{sect:experiments_albert}) takes $30 \pm 1.2$ ms on the same T4 GPU that we use in our experiments.}

\section{Overview of attack vectors}\label{appendix:attack_types}

In Section~\ref{sect:method_analysis}, we have outlined the four types of Byzantine attacks that can affect \algname{BTARD-SGD}. Here, we analyze each of these types in detail and provide a list of attacks that fit these types.

\vspace{-4px}
\paragraph{Gradient attacks.} This attack vector encompasses all attacks where Byzantine peers replace their true gradients with something else, but otherwise act normally. With this attack, $b$ Byzantine peers can collectively shift the outputs of \algname{CenteredClip} by up to $\tau \cdot b / n$ in any chosen direction. However, since Byzantine peers will need to commit hash of their incorrect gradients, \textit{every honest validator} can accuse one of these peers with probability $b/n$ .

\vspace{-4px}
\paragraph{Aggregation attacks.} A similar, but opposite attack type can be attempted when a Byzantine peer performs gradient aggregation. Instead of honestly computing \algname{CenteredClip}, an attacker may modify the returned vector to incorporate the same kinds of changes as in gradient attacks (see above). This time, the maximum difference that can be applied through such attacks is larger, but it only affects $b/n$ of vector coordinates that are aggregated by Byzantines.

Done naively, such attacks can be detected and banned by the gradient checksum (see L15-17 in Algorithm~\ref{alg:btarsgd_outline}).
In order to ensure that the above check passes, Byzantines can misreport their $s^j_i$ in such a way that $\sum_i s^j_i {=} 0$. However, since actual $s^j_i$ depend only on $g_i^k$ and $\hat g^k$, these values can be verified by the chosen validators, and, in case of mismatch, reported via \textsc{Accuse}. We rigorously prove this in Appendix~\ref{appendix:detecting_protocol_violations}.

Furthermore, if an honest validator finds that a certain peer has broadcast incorrect $s^j_i$, the validator can simultaneously accuse the corresponding Byzantine aggregator $j$ that \textit{should have} notified about the incorrect $s^j_i$ (see L12-14 in Algorithm~\ref{alg:btarsgd_outline}).

\vspace{-4px}
\paragraph{Reputation abuse.} Since \algname{BTARD-SGD} provides means by which benign participants can ban Byzantine attackers, it is important to ensure that the same means cannot be exploited by Byzantine peers to eliminate benign ones or otherwise abuse the system. There are three potential attack vectors that fit this description:

\begin{itemize}[leftmargin=*]
    \item Falsely accusing a benign peer,
    \item Persistently calling the \textsc{Accuse} procedure to slow down training,
    \item Automatically approving gradients without actual validation,
\end{itemize}

In \algname{BTARD-SGD}, we protect against slander (issues 1. and 2.) by the design of \textsc{Accuse} protocol, by which a peer that initiates false allegations will itself be banned. As such, Byzantines can only invoke \textsc{Accuse} protocol a limited number of times before they are all permanently banned.

In turn, the attack vector (3.) is more effective: if one Byzantine was chosen as validator for another Byzantine, they can automatically report successful validation without negative consequences for either of them. However, since all validators are chosen through MPRNG, an attacker has no way of predicting whether its validator will be benign or Byzantine. Thus, any malicious activity will always have a chance of being caught by an honest validator.

\vspace{-4px}
\paragraph{Protocol violations.} Finally, a Byzantine attacker can deviate from the protocol prescribed by \algname{BTARD-SGD} in simpler ways ways, for instance:
\begin{enumerate}[leftmargin=*]
    \item Not committing the hash of its gradient when required by~\ref{alg:detailed_bytar},
    \item Not sending data to a particular peer when required (or sending data twice),
    \item Deliberately broadcasting a hash that mismatches the subsequently sent data,
    \item Sending metadata (e.g. gradient norm) that is inconsistent with previously sent gradient part,
    \item Sending $s_i$ that is inconsistent with previously sent gradient,
    \item Not validating when chosen as validator, validating when \textbf{not} chosen, or validating a different peer than was chosen by \algname{BTARD-SGD}.

\end{enumerate}

For protocol deviations that are visible to all benign participants, such as in (1.) or (6.), benign peers can ban the offender instantaneously. However, this is not the case for attacks such as (2.), where the deviation is only visible to one or few peers.

As described earlier in Section~\ref{sect:method_analysis}, we address this issue with a special procedure that allows any peer to ban any other peer at the cost of also being banned. Thus, if an attacker sends inconsistent gradients, norms or inner products to only one benign peer, that peer can still get the attacker banned even though it wouldn't be able to call \textsc{Accuse}.

Protecting from attacks 3, 4 and 5 from the above list also relies on this mutual elimination procedure. Specifically, if an attacker sends provably incorrect data to a benign peer, that peer will immediately trigger the mutual elimination procedure. The only exception to this rule is if one Byzantine peer sends incorrect data to another Byzantine peer: this behavior is neither punishable nor, in itself, harmful. In turn, the mutuality of this elimination procedure prevents potential misuse by Byzantines: if an attacker decides to ban someone through this procedure, that attacker will also be banned.
 
\section{Detailed algorithm description}\label{appendix:detailed_algs}
In this section, we provide more formal versions of the \algname{BTARD} (Alg.~\ref{alg:detailed_bytar}) and \algname{BTARD-SGD} (Alg.~\ref{alg:BTARD_SGD}) algorithms, as well as auxiliary subroutines and further details. We describe our approach in a bottom-up manner.

\subsection{Basic building blocks}

We begin with a glossary of basic functions used in the algorithms:
\begin{itemize}[leftmargin=*]
    \item $\textbf{broadcast } m$ --- broadcast the message $m$ to all other peers using GossipSub~\citep{vyzovitis2020gossipsub} and receive for the respective messages of other peers. $m$ should be signed by the sender's private key~\citep{rivest1978method} before sending. A receiver should ignore messages with an invalid signature and ban a peer in case of receiving two contradicting messages signed by it (e.g., two different hashes for the same iteration and the same stage of the algorithm).
    \item $\textsc{Split}(v, n)$ --- split vector $v$ of size $d$ into $n$ parts. The first $d \mod n$ parts are of size $\lceil d / n \rceil$ and the remaining parts have size $\lfloor d / n \rfloor$.
    \item $\textsc{Merge}(v_1, \dots, v_n)$ --- concatenate vectors $v_1, \dots, v_n$ into one.
    \item $\textsc{Ban}(\text{peer}_j)$ --- add peer $j$ to a local blocklist, ignore any subsequent messages from that peer, and continue training without it. Note that the honest peers do not need to explicitly coordinate on their decisions to ban someone, because these decisions are made using the broadcasted data only.
    \item $\textsc{CheckComputations}(j)$ or $\textsc{ValidatePeer}$ --- run $\textsc{ComputeGradients}(x^t, \xi_j^t)$ and compare against the $c_j, h_j^\text{*}, s_j^\text{*}$ broadcasted by that peer. If there is mismatch, $\textsc{Accuse}$.
\end{itemize}

\subsection{CenteredClip and verification of its results}\label{appendix:centered_clip}

An important building block of \algname{BTARD} is \algname{CenteredClip} -- a robust aggregation rule proposed in \citet{karimireddy2020learning}. Unlike a number of other aggregation rules as coordinate-wise median, Krum, geometric median, \algname{CenteredClip} is provably robust against Byzantine attacks (see Theorem~III from \citet{karimireddy2020learning} and Lemma~\ref{lem:centered_clip_guarantee_fixed}).

Let $\cG$ be the set of good peers, $\cB$ be the set of Byzantine workers, and, for simplicity, let $[n] = \cG \sqcup \cB$, $|\cB| = \delta n \le \delta_0 n < \nicefrac{n}{2}$. Assume that we have $n$ random vectors $x_1,\ldots,x_n$, such that $\forall i,j \in \cG$
\begin{equation*}
    \EE[x_i] = \EE[x_j] = x,\quad \EE[\|x_i - x_j\|^2] \le \sigma^2,
\end{equation*}
and for all $i \in \cB$ vectors $x_i$ can be arbitrary. \algname{CenteredClip} works as follows: it is an iterative procedure generating a sequence $\{v_l\}_{l\ge 0}$ satisfying
\begin{equation}
    v^{l+1} = v^l + \frac{1}{n}\sum\limits_{i=1}^n(x_i - v^l)\min\left\{1, \frac{\tau_l}{\|x_i - v^l\|}\right\}, \tag{CenteredClip}\label{eq:CenteredClip}
\end{equation}
where
\begin{equation}
    \tau_l = 4\sqrt{\frac{(1-\delta)\left(\nicefrac{B_l^2}{3} + \sigma^2\right)}{\sqrt{3}\delta}},\quad B_{l+1}^2 = 6.45\delta B_l^2 + 5\sigma^2.  \label{eq:tau_CenteredClip}
\end{equation}

\newstuff{Intuitively, \algname{CenteredClip} behaves like the mean for all points within the sphere of radius $\tau$ and like the median for ``outliers''. In turn, choosing different values of $\tau$ allows one to smoothly interpolate between the mean ($\tau \rightarrow \inf$) and the geometric median ($\tau\rightarrow 0$) aggregation rules.
}

The goal of this procedure is natural: find good enough approximation $\widehat{x}$ of $\overline{x} = \frac{1}{|\cG|}\sum_{i\in \cG} x_i$. \citet{karimireddy2020learning} show\footnote{In fact, \citet{karimireddy2020learning} derive this result for two-staged version of \algname{CenteredClip}. One can derive similar result for the original \algname{CenteredClip} under the assumption that for all $i,j\in\cG$ we have $\EE[\|x_i - x_j\|^4] \le \sigma^4$.} that, for $\delta \le 0.1$, the sequence $\{v_l\}_{l\ge 0}$ generated by \algname{CenteredClip} satisfies
\begin{equation}
    \EE[\|v^l - \overline{x}\|^2] \le (9.7\delta)^l3\EE[\|v_0 - \overline{x}\|^2] + 4000\delta\sigma^2.\label{eq:kndskcnkdnsvnsbdc}
\end{equation}
Moreover, \citet{karimireddy2020learning} prove that for all possible aggregation rules producing $\widehat{x}$ and given $\delta_0$, $\sigma$ there exists such set of vectors $x_1,\ldots,x_n$ and such a partition $[n] = \cG \sqcup \cB$ that
\begin{equation*}
    \EE[\|\widehat{x} - \overline{x}\|^2] = \Omega(\delta \sigma^2).
\end{equation*}
Therefore, \algname{CenteredClip} can be seen as an optimal aggregation rule neglecting numerical constants. The usage of \algname{CenteredClip} helps the good peer $i$ to produce a good enough approximation of the ideal average of the $i$-th parts of stochastic gradients among good peers in \algname{BTARD}.

Moreover, since $\delta \le 0.1$ we have that $6.45\delta \le 0.645$ implying that $B_l^2 \to B^2 \sim \sigma^2$ when $l \to \infty$, and $\tau_l \to \tau \sim \sqrt{\nicefrac{\sigma^2}{\delta}}$. These limits can be easily computed from \eqref{eq:tau_CenteredClip}. Next, for $l \to \infty$ \ref{eq:CenteredClip} converges to the solution of the following equation:
\begin{equation}
    \sum\limits_{i=1}^n (x_i - v)\min\left\{1, \frac{\tau}{\|x_i - v\|}\right\} = 0. \label{eq:CenteredClip_equation}
\end{equation}
In other words, \ref{eq:CenteredClip} for large enough $l$ approximates the fixed-point iteration process of solving \eqref{eq:CenteredClip_equation}. This property plays a key role in \textbf{Verification 2} of \algname{BTARD}. 


\subsection{Protocols for banning Byzantine peers}\label{appendix:accuse_and_eliminate}

\textsc{Accuse} and \textsc{Eliminate} are the two protocols by which peers ban Byzantine attackers from training. The \textsc{Accuse} protocol is only invoked if there the malicious activity of the target peer can be proven to others. We detail the exact mechanism in Algorithm~\ref{alg:detailed_accuse}, which is a formal version of Algorithm~\ref{alg:accuse} from Section~\ref{sect:method_aggregation}.

In contrast, \textsc{Eliminate} is a mechanism that allows any peer $i$ to ban any other peer $j$ from training without proof --- but at the cost of peer $i$ also being banned. We have described this protocol earlier as a countermeasure for protocol violations (see Appendix~\ref{appendix:attack_types}).

\newstuff{Both $\textsc{Accuse}(i, j)$ and $\textsc{Eliminate}(i, j)$ imply that peer $i$ uses the broadcast channel to declare its intent to ban peer $j$. Since the broadcast channel does not guarantee the order of receiving these messages, peers should collect all of them during a training step and process them at the end of the step in some specific order (e.g. sorted by $(\text{type}, \text{public\_key}_i, \text{public\_key}_j)$, where $\text{type} \in \{\textsc{Accuse}, \textsc{Eliminate}\}$ and $\textsc{Accuse} < \textsc{Eliminate}$). If processing one of the messages results in banning peer $p$, further messages involving $p$ are \textit{ignored} regardless of the $p$'s role.

This way, it is impossible for a Byzantine to eliminate more than one honest peer along with itself. Peers reach consensus since their decisions on banning someone are based solely on the messages from the broadcast channel (sorted in the common order) and the calculations with identical results.}

\begin{algorithm}[H]
  \caption{\algname{Accuse}(i, j), the formal version of Algorithm~\ref{alg:accuse}}
  \label{alg:detailed_accuse}
\let\oldtextsc\textsc
\renewcommand{\textsc}[1]{\oldtextsc{\scalefont{0.9}#1}}
\begin{algorithmic}[1]
  \Require accuser $i$, target $j$, peer count $n$, all values exchanged in Algorithm~\ref{alg:detailed_bytar}
  \State Recalculate $g_j^k = \textsc{ComputeGradients}(x^k, \xi_j^k)$
  \State Split $g_i$ into $n$ parts: $g_i = (g_i(1)^\top,\ldots, g_i(n)^\top)^\top$, $g_i(j) \in \R^{d_j}$ for all $j\in [n]$
  \State
  
  \For{$l = 1\dots n$}
  
  \If{$\text{hash}(g_j^k) \neq c_j^k \textbf{ or } \text{hash}(g_j^k(l)) \neq h_j^l$}
    \State $\textsc{Ban}(\text{peer}_j)$ \quad\text{// For gradient attack}
  \EndIf
  \State
  
  \State $\Delta_l^j {=} ( g_l(j) - \widehat g(j))\cdot\min\left\{1, \frac{\tau}{\|g_l(j) - \widehat g(j) \|_2}\right\}$

  \If{ $\|g_j(l) - \widehat g(l)\|_2 \neq \text{norm}_{jl} \textbf{ or } \langle \Delta_l^j, z_j \rangle \neq s_l^j \textbf{ or } \sum_{l=1}^n s_l^j \neq 0$}
    \State $\textsc{Ban}(\text{peer}_j)$ \quad\text{// For aggregation attack}
    \For{$o = 1, \dots, n$}
       \If{peer $o$ approved $\text{norm}_{jo}$ or $s_j^o$}
         \State $\textsc{Ban}(\text{peer}_o)$ \quad\text{// for covering up the $j$-th peer's aggregation attack}
       \EndIf
    \EndFor
  \EndIf
  \EndFor

\end{algorithmic}
\end{algorithm}

\subsection{ButterflyClip}

Algorithm~\ref{alg:detailed_butterflyclip} provides details on peer-to-peer communication conducted during a BTARD aggregation step. It was outlined earlier in Algorithm~\ref{alg:butterfly_clip} from Section~\ref{sect:method_aggregation}. For simplicity, we assume (here and below) that workers run each line in a synchronous manner (e.g. wait for all peers to broadcast $\text{hash}(g_i)$ before communicating the actual gradients). In practice, this restriction can be lifted in favor of asynchronous steps with several explicit synchronization barriers, but that would further complicate the pseudo-code.

\begin{algorithm}[h]
    \caption{\textsc{ButterflyClip} for peer $i$, the formal version of Algorithm~\ref{alg:butterfly_clip}}
   \label{alg:detailed_butterflyclip}
\begin{algorithmic}[1]
  \Require rank $i$, gradients $g_i \in \mathbb{R}^d$ 
   \State Split $g_i$ into $n$ parts: $g_i = (g_i(1)^\top,\ldots, g_i(n)^\top)^\top$, $g_i(j) \in \R^{d_j}$ for all $j\in [n]$
   \State
   \For{j = 1, \dots, n}
     \State \textbf{broadcast} $c_i(j) = \text{hash}(g_i(j))$
   \EndFor
   \State Send $g_i(j)$ to peer $j$ for all $j \neq i$
   \State Receive $g_j(i)$ from peer $j$ for all $j\neq i$
   \For{$j=1,\dots,n$}
     \If{$\text{hash}(g_j(i)) \neq c_j(i)$} 
       \State $\textsc{Eliminate}(i, j) \quad \text{// Signed with $\text{peer}_i$ private key}$
   \EndIf\EndFor
   \State
   \State $\widehat g(i) = \text{\algname{CenteredClip}}(g_1(i), g_2(i), \ldots, g_n(i))$
   \State
   \State \textbf{broadcast} $\widehat c(i) = \text{hash}(\widehat g(i))$
   \State Send $\widehat g(i)$ to each worker
   \State Receive $\widehat g(j)$ for all $j\neq i$ from other workers
   \For{$j=1,\dots,n$}
     \If{$\text{hash}(\widehat g(j)) \neq \widehat c(j)$} 
       \State $\textsc{Eliminate}(i, j) \quad \text{// Signed with $\text{peer}_i$ private key}$
   \EndIf\EndFor
   \State
   \State\Return $\textsc{Merge}(\widehat g(1), \dots, \widehat g(n))$
     


\end{algorithmic}
\end{algorithm}

\subsection{Byzantine-tolerant All-Reduce and its verification procedures}\label{appendix:detecting_protocol_violations}

Algorithm~\ref{alg:detailed_bytar} defines a single gradient aggregation step with additional verification procedures needed to reduce the negative influence of Byzantine peers. We explain the motivation for each of these procedures below.

\begin{algorithm}[h]
      \caption{\textbf{B}yzantine-\textbf{T}olerant \textbf{A}ll-\textbf{R}e\textbf{d}uce (\algname{BTARD})}
  \label{alg:detailed_bytar}
\begin{algorithmic}[1]
   \Require number of workers $n$, gradient vectors on the workers $g_1, g_2,\ldots,g_n \in\R^d$, $d> n$, $\Delta_{\max} > 0$ -- parameter for Verification 3
   \For{workers $i = 1,\ldots,n$ in parallel}
   \State $\widehat g = \textsc{ButterflyClip}(i, g_i)$ \quad // Described in Algorithm~\ref{alg:detailed_butterflyclip}
   \State
   \State \textbf{Send metadata for verification:}
   \State Generate $r$ via \textsc{MPRNG}
   \State $z = \textsc{GetRandomVector}(r)$
   \For{$j \in 1, ..., n$}
     \State $\Delta_i^j {=} ( g_i(j) - \widehat g(j))\cdot\min\left\{1, \frac{\tau}{\|g_i(j) - \widehat g(j) \|_2}\right\}$
     \State \textbf{broadcast} $s^j_i = \langle z[j], \Delta_i^j \rangle$
     \State \textbf{broadcast} $\text{norm}_{ij} = \|g_i(j) - \widehat g(j)\|_2$
     \For{$l=1,\dots,n$}
        \State $w_{lj} = \min\left\{1, \frac{\tau}{\text{norm}_{lj}}\right\} $
     \EndFor
   \EndFor
   \State
     
   \For{$j=1,\dots,n$}
     \State \textbf{Verification 1:} 
     \If{$\text{norm}_{ji} \neq \|g_j(i) - \widehat g(i)\|_2$}
       \State \textbf{broadcast} $\text{norm}_{ji}$ does not mach $c_j(i)$
       \quad // All recipients should run $\textsc{Accuse}(i, j)$ (Algorithm~\ref{alg:detailed_accuse})
     \EndIf
     \State
     \State \textbf{Verification 2:} 
          \State $\text{// Peer } i \text{ knows }\Delta_j^i\text{ from CenteredClip}$
     \If{$ s_j^i \neq \langle z^k[j], \Delta_j^i \rangle$}
        \State \textbf{broadcast} $s_i^j$ does not match $c_j(i)$
       \quad\quad // All recipients should run $\textsc{Accuse}(i, j)$ (Algorithm~\ref{alg:detailed_accuse})
     \EndIf
     \If{$\sum_i^n s_i^j \neq 0$}
         \State // Peer $j$ lied that all $s_{\cdot}^j$ are correct
         \State \textbf{broadcast} $\widehat g(j)$ is wrong
       \qquad\qquad\qquad // All recipients should run $\textsc{Accuse}(i, j)$ (Algorithm~\ref{alg:detailed_accuse})
     \EndIf
     \State
     \State \textbf{Verification 3:} 
     \State \textbf{broadcast} $\text{check}_{ij} = \left[ \|g_i(j) - \widehat g(j)\|_2 > \Delta_{\max} \right]$
     \If{ $\sum_l \text{check}_{lj} > \frac{n}{2}$}
       \State $\textsc{CheckAveraging}(j)$
     \EndIf
   \EndFor
   \State\Return $\widehat g$
\EndFor

\end{algorithmic}
\end{algorithm}

\paragraph{Verifications 1 and 2.} While good peers always run \algname{CenteredClip}, Byzantine peers can arbitrary violate the protocol meaning that they can send an arbitrary vector instead of sending the result of \algname{CenteredClip}. \textbf{Verification 1} and \textbf{2} are needed to prevent such violations and make it possible to identify them during the check of computations.

First of all, both verifications are split into $2$ rounds in order to let the aggregators of the corresponding part accuse those peers who send inconsistent norms or inner products. Next, in theory, we assume that all good peers find exactly the solution of \algname{CenteredClip} equaition \eqref{eq:CenteredClip_equation}. Therefore, it is possible to compute the weights from \eqref{eq:CenteredClip_equation} for each worker $i$ and each component $j$ knowing only a norm of the difference of corresponding vectors, i.e., one can compute $\min\{1,\frac{\tau}{\|g_i(j) - \widehat g(i)\|}\}$ by $\|g_i(j) - \widehat g(i)\|$. That is, if Byzantine peer $i$ sends $\texttt{norm}_{ij} \neq \|g_i(j) - \widehat{g}(j)\|$, it will be either revealed by $j$-th worker if $j \in \cG$ or it will be revealed with some probability during the subsequent checks of computations.

However, \textbf{Verification 1} is insufficient to prevent malicious behavior: at iteration $k$ Byzantine peer can send $g_i^k(j)$ such that $\|g_i^k(j) - \widehat{g}^k(j)\| = \|\nabla_{(j)}f(x^k,\xi_{i,k}) - \widehat{g}^k(j)\|$. If $j \in \cB$, then it can be the case that $i$-th worker commits the hash of $\nabla_{(j)}f(x^k,\xi_{i,k})$ and the check of gradient computation will not identify the violation of the protocol. That is why, \textbf{Verification 2} is required.

\textsc{GetRandomVector} is a function that generates a random unit vector $z$ in the space of model parameters. This vector is based on a random seed $r$ obtained from MPRNG.

The goal of \textbf{Verification 2}, is to check that \algname{CenteredClip} equation \eqref{eq:CenteredClip_equation} holds for the received vector. The idea is simple: if
\begin{equation}
    \sum\limits_{l=1}^n(g_l(i) - \widehat g(i))\min\left\{1,\frac{\tau}{\|g_l(i) - \widehat g(i)\|}\right\} = 0, \label{eq:CC_equation_i}
\end{equation}
then for any $z_i$ of an appropriate dimension
\begin{equation}
    \sum\limits_{l=1}^n\langle g_l(i) - \widehat g(i), z_i\rangle\min\left\{1,\frac{\tau}{\|g_l(i) - \widehat g(i)\|}\right\} = 0. \label{eq:CC_equation_i_z}
\end{equation}
Since $z_i$ in \algname{BTARD} is generated from the uniform distribution on the unit Euclidean sphere, we have
\begin{equation}
    \PP\left\{\eqref{eq:CC_equation_i} \text{ does not hold}\quad \&\quad \eqref{eq:CC_equation_i_z} \text{ holds}\right\} = 0. \label{eq:prob_of_wrong_inner_prod}
\end{equation}
However, it is impossible to verify \eqref{eq:CC_equation_i_z} explicitly for workers $j\neq i$. Therefore, in the algorithm, good workers check
\begin{equation}
    \sum\limits_{l=1}^n s_l^i = 0,\quad \text{where } s_l^i = \begin{cases}\langle g_l(i) - \widehat g(i), z_i\rangle\min\left\{1,\frac{\tau}{\|g_l(i) - \widehat g(i)\|}\right\}, & \text{if } l\in \cG,\\
    *,& \text{if } l\in\cB.\end{cases} \label{eq:sum_s_l_i_check}
\end{equation}
Unfortunately, Byzantine peers can send arbitrary $s_l^i$. This can lead to the situations when \eqref{eq:sum_s_l_i_check} holds while \eqref{eq:CC_equation_i_z} and, as a consequence, \eqref{eq:CC_equation_i} do not. Below, we rigorously show that all possible violations of the protocol that are not detected by verifications of \algname{BTARD} can be detected by the auxiliary check of computations with some probability.

\paragraph{Verification 3.} This is an additional verification that serves to limit the potential scope of \textit{aggregation attacks} (as described in Appendix~\ref{appendix:attack_types}). If the result of CenteredClip landed far from too many benign participants, BTARD will verify it by re-running the same aggregation across all peers. While this procedure is costly, our analysis proves that it is has a very small probability of triggering unless some of the peers perform aggregation attacks. In the latter case, verifying the gradient accumulation will root out such attacks and ban the corresponding peers.

\paragraph{Check of computations.} As we mentioned earlier, it is possible to violate the protocol without being detected by the verifications of \algname{BTARD}. Therefore, extra checks of computations are required. In particular, after each aggregation in \algname{BTARD-SGD} $2m$ workers are selected uniformly at random: $m$ workers check the computations at the previous step of other $m$ workers. That is, each Byzantine peer is checked at iteration $k$ with probability $\sim \nicefrac{m}{n}$ by some good worker (see the proof of Thm.~\ref{thm:BTARD_SGD_known_byz_non_cvx}). Consider an arbitrary Byzantine peer $j$ and all possible violations of the protocol at iteration $k$ that are not detected by verifications of \algname{BTARD}.

First of all, we notice that if $c_j(i) \neq \text{hash}(\nabla_{(i)} f(x^k, \xi_{j,k}))$, then it will be detected during the check of computations with some probability\footnote{Here and below, this means that the attack/violation will be detected iff a non-Byzantine peer is chosen to validate the perpetrator.}. Moreover, if $i\in \cG$, then $j$-th worker has to send $c_j(i) = \text{hash}(g_j(i))$ to avoid ban.

Therefore, the only non-trivial case is when $i \in \cB$ as well. In this case, $j$-th worker can commit $c_j(i) = \text{hash}(\nabla_{(i)} f(x^k, \xi_{j,k}))$ since it is meaningless for $i$-th worker to accuse $j$-th one. Since $\texttt{norm}_{ij}, s_i^j$ and $\widehat{g}(i)$ are known for all $i$ and $j$, $j$-th worker has to broadcast $\texttt{norm}_{ji} = \|\nabla_{(i)} f(x^k,\xi_{j,k}) - \widehat{g}(i)\|$ and $s_j^i = \langle \nabla_{(i)} f(x^k,\xi_{j,k}) - \widehat g(i), z_i\rangle\min\left\{1, \frac{\tau}{\|\nabla_{(i)} f(x^k,\xi_{j,k}) - \widehat g(i) \|}\right\}$ to avoid the ban during the check of the computations. Therefore, regardless to the choice $g_{j}(i)$, to pass \textbf{Verification 2} $i$-th worker should send such $\widehat{g}(i)$ that
\begin{equation*}
    \sum\limits_{l\in \cG \cup \{j\}}\langle \nabla_{(i)} f(x^k, \xi_{l,k}) - \widehat g(i), z_i\rangle\min\left\{1,\frac{\tau}{\|\nabla_{(i)} f(x^k, \xi_{l,k}) - \widehat g(i)\|}\right\} + \sum\limits_{l \in \cB\setminus\{j\}} s_l^i = 0.
\end{equation*}
In this case, the behavior of the $j$-th worker along $i$-th component is equivalent to the behavior of the good one. It means, that to avoid ban during the check of computations, each Byzantine worker $l$ should broadcast $\texttt{norm}_{li} = \|\nabla_{(i)} f(x^k,\xi_{l,k}) - \widehat{g}(i)\|$ and $s_l^i = \langle \nabla_{(i)} f(x^k,\xi_{l,k}) - \widehat g(i), z_i\rangle\min\left\{1, \frac{\tau}{\|\nabla_{(i)} f(x^k,\xi_{l,k}) - \widehat g(i) \|}\right\}$ implying that $i$-th worker should send such $\widehat{g}(i)$ that
\begin{equation*}
    \sum\limits_{l=1}^n\langle \nabla_{(i)} f(x^k, \xi_{l,k}) - \widehat g(i), z_i\rangle\min\left\{1,\frac{\tau}{\|\nabla_{(i)} f(x^k, \xi_{l,k}) - \widehat g(i)\|}\right\} = 0.
\end{equation*}
In view of \eqref{eq:prob_of_wrong_inner_prod}, it implies that
\begin{equation*}
    \widehat{g}(i) = \text{\algname{CenteredClip}}(\nabla_{(i)} f(x^k, \xi_{1,k}), \nabla_{(i)} f(x^k, \xi_{2,k}), \ldots, \nabla_{(i)} f(x^k, \xi_{2,k})),
\end{equation*}
i.e., there are no violations of the protocol along the $i$-th component.

\subsection{BTARD-SGD training loop}

Finally, Algorithm~\ref{alg:BTARD_SGD} combines all procedures above into a training loop for secure decentralized SGD. Algorithms~\ref{alg:detailed_bytar}--\ref{alg:BTARD_SGD} represent a formal version of Algorithm~\ref{alg:btarsgd_outline} from Section~\ref{sect:method_aggregation}.

\begin{algorithm}[h]
   \caption{\algname{BTARD-SGD}, the formal version of Algorithm~\ref{alg:btarsgd_outline}}
   \label{alg:BTARD_SGD}
\begin{algorithmic}[1]
   \Require $x^0$ -- starting point, $\gamma$ -- stepsize, $K$ -- number of iterations, $\{s_{i,k}\}_{i,k=0,0}^{n,K-1}$ -- seeds for batches computations
   \State $C_{0} = \text{Banned}_{-1} = \varnothing$
   \For{$k = 0,1,\ldots,K-1$}
    \State Worker $i$ computes $g_i^k = \begin{cases}\nabla f(x^k,\xi_{i,k}),& \text{if } i\in \cG_k\setminus \cC_k,\\ *,& \text{if } i\in \cB_k\setminus \cC_k,\end{cases}$, where $\xi_{i,k}$ is generated via seed $s_{i,k}$ available to every worker
    \State
    \State $\left( \widehat{g}^k, \text{public{\_}info}_k \right) = \textsc{BTARD}(g_{i_1^k}^k,g_{i_1^k}^k,\ldots,g_{i_{a_k}^k}^k)$, where $\{i_1^k,\ldots, i_{a_k}^k\} = (\cG_k\cup \cB_k)\setminus \cC_k$
    \State \text{// \textsc{BTARD} is described in Algorithm~\ref{alg:detailed_bytar}}
    \State
    \State Choose $2m$ workers $c_1^{k+1},\ldots,c_m^{k+1}, u_1^{k+1},\ldots, u_m^{k+1}$ uniformly at random without replacement, $\cC_{k+1} = \{c_1^{k+1},\ldots,c_m^{k+1}\}$, $\cU_{k+1} = \{u_1^{k+1},\ldots, u_m^{k+1}\}$
    \State $\text{Banned}_k = \text{\algname{CheckComputations}}(\cC_{k+1}, \cU_{k+1}, \text{public{\_}info}_k)$
    \State $x^{k+1} = \text{proj}_Q(x^k - \gamma \widehat g^k) := \argmin_{x\in Q}\|x - (x^k - \gamma \widehat g^k)\|$
    \State $\cG_{k+1} = \cG_k\setminus \text{Banned}_{k-1}$
    \State $\cB_{k+1} = \cB_k\setminus \text{Banned}_{k-1}$
   \EndFor
\end{algorithmic}
\end{algorithm}

\clearpage

\section{Convergence analysis: missing proofs and extra details}\label{appendix:extra_analysis}

\subsection{Preliminaries}
For convenience, we provide the classical definitions and facts on smooth and strongly convex functions below.
\begin{definition}[$L$-smoothness]\label{def:L_smoothness}
    We say that function $f: Q \to \R$, $Q\subseteq \R^d$ is $L$-smooth if it is differentiable and
    \begin{equation}
        \forall x,y\in Q\quad \|\nabla f(x) - \nabla f(y)\| \le L\|x-y\|.\label{eq:L_smoothness}
    \end{equation}
\end{definition}

One can show \citep{nesterov2003introductory} that $L$-smoothness implies
\begin{equation}
    \forall x,y\in Q\quad f(y) \le f(x) + \langle \nabla f(x), y-x \rangle + \frac{L}{2}\|y-x\|^2, \label{eq:L_smoothness_cor}
\end{equation}
\begin{equation}
    \forall x\in Q\quad \|\nabla f(x)\|^2 \le 2L\left(f(x) -f_*\right), \label{eq:L_smoothness_cor_2}
\end{equation}
where $f_*$ is a uniform lower bound for $f$.

\begin{definition}[$\mu$-strong convexity]\label{def:mu_strong_convexity}
    Differentiable function $f: Q \to \R$, $Q\subseteq \R^d$ is called $\mu$-strongly convex if
    \begin{equation}
        \forall x,y\in Q\quad f(y) \ge f(x) + \langle\nabla f(x), y-x\rangle + \frac{\mu}{2}\|y-x\|^2.\label{eq:mu_strong_convexity}
    \end{equation}
\end{definition}

\subsection{Impossibility of Byzantine-tolerant learning in heterogeneous case}\label{sec:imposssibility_hetero}

Several papers on Byzantine-tolerant optimization consider non-homogeneous setup, when good workers have different local functions \citep{wu2020federated, he2020byzantine}. Formally, it means that instead of solving
\begin{equation}
    \min_{x\in Q\subseteq \R^d}\left\{f(x) := \EE_{\xi \sim \cD}\left[f(x,\xi)\right]\right\}, \label{eq:main_problem_appendix}
\end{equation}
where good peers sample stochastic gradients from the full dataset (i.e., they can sample $\xi$ from $\cD$), the following problem is considered:
\begin{equation}
    \min_{x\in Q\subseteq \R^d}\left\{f(x) := \frac{1}{|\cG|}\sum\limits_{i\in \cG}f_i(x)\right\}, \label{eq:hetero_problem_appendix}
\end{equation}
where $f_i(x) = \EE_{\xi_i \sim \cD_i}\left[f(x,\xi_i)\right]$ and there exists $\zeta \ge 0$ such that for all $x\in Q$
\begin{equation}
    \frac{1}{|\cG|}\sum\limits_{i\in \cG}\|\nabla f_i(x) - \nabla f(x)\|^2 \le \zeta^2. \label{eq:bounded_heterogeneity}
\end{equation}

However, under $\zeta$-bounded heterogeneity assumption \eqref{eq:bounded_heterogeneity} it is impossible in general to solve \eqref{eq:hetero_problem_appendix} with any predefined accuracy in the presence of Byzantine peers \citep{he2020byzantine}. Moreover, this is true even when trusted Parameter-Server is available.

\begin{theorem}[Theorem III from \citep{he2020byzantine}]
    For any optimization method $\texttt{Alg}$ there exist $n$ functions $f_1(x),\ldots, f_n(x)$ such that at least $(1-\delta)n$ of them are good (corresponding workers belong to $\cG$), $1$-smooth, $\mu$-strongly convex and satisfy \eqref{eq:bounded_heterogeneity} such that the output $\widehat x$ of $\texttt{Alg}$ given the access to these $n$ functions has an error at least
    \begin{equation}
        \EE\left[f(\widehat{x}) - \min\limits_{x\in \R^d} f(x)\right] \ge \Omega\left(\frac{\delta\zeta^2}{\mu}\right)\quad \text{and} \quad \EE\left[\|\nabla f(\widehat{x})\|^2\right] \geq \Omega\left(\delta\zeta^2\right), \label{eq:hetero_case_lower_bound}
    \end{equation}
    where the expectation is taken w.r.t.\ the randomness of $\texttt{Alg}$.
\end{theorem}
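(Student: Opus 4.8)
The plan is to prove this as an information-theoretic impossibility result by a two-instance indistinguishability argument: I will exhibit a single family of $n$ quadratic functions together with two admissible designations of a $(1-\delta)n$-element ``good'' subset such that a Byzantine adversary can make the two situations produce identical transcripts for $\texttt{Alg}$, while the two corresponding aggregate objectives have minimizers at distance $R=\Theta(\sqrt{\delta}\,\zeta/\mu)$. Since $\texttt{Alg}$ cannot distinguish the two situations, its output $\widehat x$ is literally the same random variable in both, and it cannot be close to both minimizers.

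Concretely, assume $\delta n\in\mathbb N$ (round otherwise) and $\mu\le 1$ (needed for $1$-smoothness of the quadratics), and split $[n]$ into blocks $P, M$ of size $\delta n$ and $Z$ of size $(1-2\delta)n>0$. Assign $f_i(x)=\tfrac{\mu}{2}\|x-p\,e_1\|^2$ for $i\in P$, $f_i(x)=\tfrac{\mu}{2}\|x+p\,e_1\|^2$ for $i\in M$, and $f_i(x)=\tfrac{\mu}{2}\|x\|^2$ for $i\in Z$, where $e_1$ is a coordinate direction and $p>0$ is a scale fixed below; all these are $\mu$-strongly convex and (since $\mu\le1$) $1$-smooth on $\R^d$ for any $d\ge1$. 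Instance $\mathcal I_1$: good set $\cG_1=P\cup Z$, Byzantines $=M$. Instance $\mathcal I_2$: good set $\cG_2=M\cup Z$, Byzantines $=P$. In each instance I instruct the Byzantine workers to report precisely the gradient of the quadratic they were assigned; then every worker $i$ returns $\nabla f_i(x)$ at every query $x$ in both instances, so by induction on the rounds the whole adaptive (and possibly randomized) interaction, and hence $\widehat x$, coincides as a random variable across $\mathcal I_1$ and $\mathcal I_2$. Both good sets have size $(1-\delta)n$.

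Next I would fix $p$ to saturate the heterogeneity budget. A direct computation gives, in both instances, $\tfrac{1}{|\cG_j|}\sum_{i\in\cG_j}\|\nabla f_i(x)-\nabla f^{(j)}(x)\|^2=\tfrac{\mu^2p^2\,\delta(1-2\delta)}{(1-\delta)^2}$ with $f^{(j)}:=\tfrac{1}{|\cG_j|}\sum_{i\in\cG_j}f_i$, so choosing $p^2=\tfrac{\zeta^2(1-\delta)^2}{\mu^2\delta(1-2\delta)}$ makes \eqref{eq:bounded_heterogeneity} hold with equality. The aggregate minimizers are $x_1^\star=\tfrac{\delta p}{1-\delta}e_1$ and $x_2^\star=-\tfrac{\delta p}{1-\delta}e_1$, whence $R^2:=\|x_1^\star-x_2^\star\|^2=\tfrac{4\delta^2p^2}{(1-\delta)^2}=\tfrac{4\delta\zeta^2}{\mu^2(1-2\delta)}$. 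Since $\widehat x$ is the same variable in both instances, the median identity $\|\widehat x-x_1^\star\|^2+\|\widehat x-x_2^\star\|^2\ge\tfrac12 R^2$ holds pointwise, so after taking expectations $\max_{j}\EE[\|\widehat x-x_j^\star\|^2]\ge\tfrac14 R^2=\tfrac{\delta\zeta^2}{\mu^2(1-2\delta)}$. Picking the instance $j$ attaining the maximum, $\mu$-strong convexity gives $\EE[f^{(j)}(\widehat x)-\min f^{(j)}]\ge\tfrac{\mu}{2}\EE[\|\widehat x-x_j^\star\|^2]\ge\tfrac{\delta\zeta^2}{2\mu(1-2\delta)}=\Omega(\delta\zeta^2/\mu)$, and since $\nabla f^{(j)}(y)=\mu(y-x_j^\star)$ for these quadratics, $\EE[\|\nabla f^{(j)}(\widehat x)\|^2]=\mu^2\EE[\|\widehat x-x_j^\star\|^2]\ge\tfrac{\delta\zeta^2}{1-2\delta}=\Omega(\delta\zeta^2)$; this is exactly \eqref{eq:hetero_case_lower_bound}.

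The main obstacle — the only step that is not bookkeeping — is making the separation $R$ scale like $\sqrt\delta$ rather than like $\delta$ or like a constant. The distance between the two aggregate minimizers is linear in the perturbation scale $p$ and proportional to $\delta$, because only a $\delta$-fraction of the good workers carries the perturbation in each instance; but the heterogeneity functional is dominated by that same $\delta$-fraction and therefore caps $p$ at $\Theta(\zeta/(\mu\sqrt\delta))$, and the product gives $R=\Theta(\sqrt\delta\,\zeta/\mu)$, which is what produces a single power of $\delta$ in the final bound. A secondary point requiring care is the indistinguishability argument for adaptive, randomized $\texttt{Alg}$: one must verify that instructing the Byzantines to replay the honest responses of the ``other'' instance makes the two transcripts coincide as random variables, which holds here simply because each worker's response is a deterministic function of the queried point.
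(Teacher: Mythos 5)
Your proof is correct: the two-instance indistinguishability construction with mirrored quadratics, the exact heterogeneity computation fixing $p$, and the parallelogram-law separation of the two aggregate minimizers all check out, and together they formalize precisely the intuition the paper sketches for this result (the paper itself only cites Theorem III of \citet{he2020byzantine} and gives no proof, and the cited source's argument is of the same indistinguishability type). The only cosmetic caveats are the implicit assumptions $\mu\le 1$, $\delta<1/2$, and $\delta n$ integral, which are harmless and consistent with the paper's setting.
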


The intuition behind this negative result is as following: since the only assumption on the similarity of ``good'' functions is \eqref{eq:bounded_heterogeneity}, Byzantine peers can shift the gradients by a vector with a norm $\sim \zeta$ without being detected. In this case, it is impossible to distinguish good peers from Byzantines but the solution of \eqref{eq:hetero_problem_appendix} depends on which workers are good and which are bad. Therefore, the best one can hope for is the convergence to some neighborhood of the solution.

The lower bounds from \eqref{eq:hetero_case_lower_bound} are proportional to $\delta \zeta^2$ and cannot be made arbitrary small for given $\delta$ and $\zeta^2$. It means that the convergence to any predefined accuracy of the solution is impossible to achieve when local loss functions are $\zeta$-heterogeneous. In this sense, Byzantine-tolerant learning is impossible in the heterogeneous case. Moreover, in some practical applications (e.g., in Federated Learning \citep{FedLearningOriginal}), $\zeta$ from \eqref{eq:bounded_heterogeneity} can be large implying that one cannot achieve reasonable accuracy of the solution when $\delta$ is not too small (e.g., $\delta \ge 0.01$). Finally, strong convexity parameter $\mu$ is typically much smaller than $1$ (assuming that the smoothness parameter is $1$). In these cases, $\nicefrac{\delta\zeta^2}{\mu}$ can be too large and, as a result, all methods are not converging at all.



\subsection{Convergence guarantees for BTARD-SGD}\label{sec:BTARD_SGD_appendix}

\subsubsection{On Assumptions~\ref{as:bounded_var}~and~\ref{as:quadratically_bounded_tails}}\label{sec:on_assumptions}

First of all, Assumption~\ref{as:bounded_var} holds whenever standard uniformly bounded variance (UBV) assumption is satisfied. Indeed, if $\EE_{\xi\sim \cD}[\|\nabla f(x,\xi) - \nabla f(x)\|^2] \le \widehat{\sigma}^2$, then $\EE_{\xi\sim \cD}[(\nabla_i f(x,\xi) - \nabla_i f(x))^2] \le \widehat{\sigma}^2$ for all $i = 1,\ldots, d$, since $\|\nabla f(x,\xi) - \nabla f(x)\|^2 = \sum_{i=1}^d (\nabla_i f(x,\xi) - \nabla_i f(x))^2$. This implies that Assumption~\ref{as:bounded_var} holds with $\sigma^2 \leq d\widehat{\sigma}^2$. However, $\sigma^2$ can be significantly smaller than $d\widehat{\sigma}^2$. For example, if the noise in stochastic gradients is isotropic, e.g., Gaussian, then
\begin{equation*}
    \EE_{\xi\sim \cD}[(\nabla_1 f(x,\xi) - \nabla_1 f(x))^2] = \ldots = \EE_{\xi\sim \cD}[(\nabla_d f(x,\xi) - \nabla_d f(x))^2],
\end{equation*}
implying that
\begin{equation*}
    \EE_{\xi\sim \cD}[(\nabla_i f(x,\xi) - \nabla_i f(x))^2] = \frac{1}{d}\EE_{\xi\sim \cD}[(\nabla f(x,\xi) - \nabla f(x))^2] \le \frac{\widehat{\sigma}^2}{d}
\end{equation*}
for all $i=1,\ldots, d$. Therefore, in this case, Asssumption~\ref{as:bounded_var} holds with $\sigma^2 = \widehat{\sigma}^2$.

Next, it is possible to relax Assumption~\ref{as:bounded_var} to the classical UBV assumption. Indeed, in our proofs, we use Assumption~\ref{as:bounded_var} to bound the variance in the blocks of the stochastic gradients, where the blocks of components are chosen for workers to execute \algname{BTARD}. If these blocks are chosen uniformly at random, i.e., the vector is split into several parts of the given sizes uniformly at random, then it is enough to have
\begin{equation}
    \EE\left[\|\nabla f_{[S]}(x,\xi) - \nabla_{[S]} f(x)\|^2\right] \leq \frac{s \sigma^2}{d} \label{eq:sbhcjdscbsjdbjs}
\end{equation}
for a random subset $S$ of $\{1,\ldots,d\}$ such that $|S| = s$, where expectation is taken w.r.t.\ $\xi$ and $S$. To derive inequality \eqref{eq:sbhcjdscbsjdbjs} from UBV assumption  $\EE_{\xi\sim \cD}[\|\nabla f(x,\xi) - \nabla f(x)\|^2] \le \widehat{\sigma}^2$ we use tower property of the expectation:
\begin{eqnarray*}
   \EE\left[\|\nabla f_{[S]}(x,\xi) - \nabla_{[S]} f(x)\|^2\right] &=& \EE_{\xi\sim \cD}\left[\EE_{S}\left[\|\nabla f_{[S]}(x,\xi) - \nabla_{[S]} f(x)\|^2\right]\right]\\
   &=& \EE_{\xi\sim \cD}\left[\sum\limits_{i=1}^d\PP\{i\in S\} (\nabla_i f(x,\xi) - \nabla_i f(x))^2\right]\\
   &=& \frac{s}{d}\EE_{\xi\sim \cD}\left[\sum\limits_{i=1}^d (\nabla_i f(x,\xi) - \nabla_i f(x))^2\right]\\
   &=& \frac{s}{d}\EE_{\xi\sim \cD}\left[\|\nabla f(x,\xi) - \nabla f(x)\|^2\right] \le \frac{s\widehat{\sigma}^2}{d},
\end{eqnarray*}
i.e., \eqref{eq:sbhcjdscbsjdbjs} holds for $\sigma^2 = \widehat{\sigma}^2$.

Finally, as we show in Lemmas~\ref{lem:BTARD_for_bad_peer_known_num_byz}~and~\ref{lem:quality_of_agg_unknown_num_byz}, under As.~\ref{as:quadratically_bounded_tails} \textbf{Verification 3} at \algname{BTARD} leads to extra checking of computations with probability $\sim \nicefrac{1}{n}$ at each iteration when all workers honestly follow the protocol and under a proper choice of $\Delta_{\max}$. Therefore, extra computations either appear due to malicious manipulations of Byzantine peers, and lead eventually to the ban for the Byzantine peers who deviate from the protocol, or, when all workers honestly follow the protocol, only once per $n$ iterations on average. There are a number of important machine learning tasks, such as training ResNet-50 on Imagenet \citep{zhang2020why} and many others image classification problems, where the noise in the stochastic gradient has much ``lighter'' (sub-Gaussian) tails. That is, As.~\ref{as:quadratically_bounded_tails} is reasonable for a large class of practically important problems. Moreover, in Appendix~\ref{appendix:BTARD_Clipped_SGD_appendix}, we also provide an analysis of \algname{BTARD-Clipped-SGD} and \algname{Restarted-BTARD-Clipped-SGD} without any assumptions on the tails of the stochastic gradients distribution.

\subsubsection{Quality of the aggregation}
The quality of the aggregation at each iteration of \algname{BTARD-SGD} significantly affects the rate of the method. That is, properties of $\widetilde{g}^k$ are highly important for the convergence of \algname{BTARD-SGD}. This aggregator is obtained via \algname{BTARD} that requires to know a tight estimate of the total number of Byzantine workers violating the protocol at iteration $k$ -- clipping parameter $\tau$ depends on this quantity. Therefore, it is natural to start with relatively simple setup when the number of Byzantine workers violating the protocol is known at each iteration.

Before we formulate the first result we introduce some useful notations. Let $n_k$ be the total number of peers at iteration $k$, $b_k$ be the total number of Byzantine peers at iteration $k$, $\widehat{b}^k$ be the total number of Byzantine peers violating the protocol at iteration $k$, and $\delta_k = \frac{b_k}{n_k}$, $\widehat{\delta}_k = \frac{\widehat{b}_k}{n_k-m}$. In view of new notation, we start with the ideal situation when $\widehat{b}_k$ is known for each worker at each iteration $k$. First of all, it is needed to to estimate the quality of the aggregation for good workers. 

\begin{lemma}[Theorem~IV from \citet{karimireddy2020learning}]\label{lem:centered_clip_guarantee_fixed}
    Let  As.~\ref{as:bounded_var} hold, $\delta \le 0.1(n-m)$, and $i\in \cG_k\setminus \cC_k$. Assume that $\widehat{b}_k$ is known for each worker at iteration $k$ and $\delta = \widehat{\delta}_k$ is used to compute clipping parameter $\tau_l$ for \ref{eq:CenteredClip}. If the total number of iterations $T$ of \ref{eq:CenteredClip} satisfies $T \ge \log_{9.7\delta}\frac{\delta\sigma^2}{3\EE[\|v^0 - \overline{g}^k\|^2]}$, then
    \begin{equation}
        \EE\left[\|\widehat{g}^k(i) - \overline{g}^k(i)\|^2\mid x^k\right] \le 4001\widehat{\delta}_k\frac{\sigma^2}{n_k-m}, \label{eq:BTARD_for_good_peer_known_num_byz}
    \end{equation}
    where $\overline{g}^k(i) = \frac{1}{|\cG_k\setminus \cC_k|}\sum\limits_{j\in \cG_k\setminus \cC_k} g_j^k(i)$.
\end{lemma}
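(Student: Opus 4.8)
Since the statement is explicitly billed as Theorem~IV of \citet{karimireddy2020learning} applied to \algname{BTARD}, my plan is to match parameters and invoke the fixed-point \algname{CenteredClip} bound already recorded in \eqref{eq:kndskcnkdnsvnsbdc}. Fix $x^k$ and condition on it together with the identities of the active workers $(\cG_k\cup\cB_k)\setminus\cC_k$ and of the checkers $\cC_k$. Worker $i$ runs \eqref{eq:CenteredClip} on the $i$-th blocks $g_1^k(i),\dots,g_{n_k-m}^k(i)$ it receives from the active workers. Among these, the blocks coming from $j\in\cG_k\setminus\cC_k$ — and also those from Byzantines that do not violate the protocol at step $k$ — equal $\nabla_{[S_i]}f(x^k,\xi_{j,k})$ with conditional mean $\nabla_{(i)}f(x^k)$, while the remaining $\widehat b_k$ blocks are arbitrary. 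Hence the effective fraction of corrupted inputs seen by \algname{CenteredClip} is exactly $\widehat\delta_k=\widehat b_k/(n_k-m)\le 0.1$, which is precisely why it is legitimate to plug $\delta=\widehat\delta_k$ into the threshold rule \eqref{eq:tau_CenteredClip}.

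Next I would read off the per-block variance from As.~\ref{as:bounded_var}. Writing $S_i$ for the index set of the $i$-th partition and $d_i=|S_i|\le\lceil d/(n_k-m)\rceil$, inequality \eqref{eq:uniformly_bounded_var} (applicable as long as $d_i\ge s_0$, which holds because $d>n$ and $d$ is large) gives, for each honest block,
\begin{equation*}
\EE\!\left[\bigl\|g_j^k(i)-\overline g^k(i)\bigr\|^2\,\big|\,x^k\right]\ \le\ \EE\!\left[\bigl\|\nabla_{[S_i]}f(x^k,\xi)-\nabla_{[S_i]}f(x^k)\bigr\|^2\right]\ \le\ \frac{d_i\sigma^2}{d}\ \le\ \frac{\sigma^2}{n_k-m},
\end{equation*}
where the first step uses that the honest blocks are conditionally i.i.d.\ with mean $\nabla_{(i)}f(x^k)$ and $\overline g^k(i)$ is their empirical mean, and the last step is exact when $(n_k-m)\mid d$ and otherwise holds up to a lower-order term from the ceiling in \textsc{Split}. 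Applying \eqref{eq:kndskcnkdnsvnsbdc} to the $i$-th block with $\delta\leftarrow\widehat\delta_k$, its ``$\sigma^2$'' $\leftarrow\sigma^2/(n_k-m)$, and $v^0$ the starting iterate of the inner loop, yields after $T$ iterations
\begin{equation*}
\EE\!\left[\bigl\|\widehat g^k(i)-\overline g^k(i)\bigr\|^2\,\big|\,x^k\right]\ \le\ (9.7\widehat\delta_k)^T\,3\,\EE\!\left[\bigl\|v^0-\overline g^k(i)\bigr\|^2\,\big|\,x^k\right]+4000\,\widehat\delta_k\frac{\sigma^2}{n_k-m},
\end{equation*}
and the assumed lower bound on $T$ is exactly what forces the transient term to be at most $\widehat\delta_k\sigma^2/(n_k-m)$; summing the two terms gives \eqref{eq:BTARD_for_good_peer_known_num_byz}.

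I expect the real difficulty to be lining up the hypotheses of the cited guarantee with the present setting, in three places. First, \eqref{eq:kndskcnkdnsvnsbdc} is stated by \citet{karimireddy2020learning} for the two-stage \algname{CenteredClip}, whereas \algname{BTARD} must use the single fixed-point iteration \eqref{eq:CenteredClip}--\eqref{eq:CenteredClip_equation} so that \textbf{Verification 2} can check one equation; as flagged in the footnote after \eqref{eq:kndskcnkdnsvnsbdc}, reproducing the bound for the one-stage iteration needs the fourth-moment control $\EE[\|g_l^k(i)-g_{l'}^k(i)\|^4]\le\sigma^4$ per block, which has to be added as a standing regularity condition (automatic for isotropic noise). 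Second, \eqref{eq:kndskcnkdnsvnsbdc} is phrased with the pairwise proxy $\EE\|x_l-x_{l'}\|^2\le\sigma^2$, so some care with the numerical constant is needed when substituting the one-sided proxy $\EE\|g_j^k(i)-\overline g^k(i)\|^2\le\sigma^2/(n_k-m)$ — this is the source of the $4000\!\to\!4001$ rounding and the exact split of the transient term. Third, and most delicate, $\overline g^k(i)$ as defined is the mean over the non-Byzantine non-checker peers, while the robust-aggregation guarantee naturally controls the deviation from the mean over \emph{all} step-$k$ protocol-following inputs; the two coincide when every surviving Byzantine attacks at step $k$ (i.e.\ $\widehat b_k=b_k$), which is the regime one should restrict to (equivalently, relabel $\cG_k$ as the set of step-$k$ honest senders). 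The remaining items — conditioning on the random draw of $\cC_k$, the ceiling in \textsc{Split}, and checking $d_i\ge s_0$ — are routine bookkeeping.
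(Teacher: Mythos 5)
Your proposal follows exactly the route the paper takes: its entire proof is the one-line remark that the bound ``follows directly from \eqref{eq:kndskcnkdnsvnsbdc}'', i.e.\ plugging $\delta=\widehat\delta_k$ and the per-block variance $\sigma^2/(n_k-m)$ from As.~\ref{as:bounded_var} into the \algname{CenteredClip} guarantee, with the condition on $T$ absorbing the transient term into the extra $+1$ of the constant $4001$. Your additional caveats (two-stage vs.\ fixed-point iteration, pairwise vs.\ one-sided variance proxy, non-attacking Byzantines counting as honest inputs) are reasonable bookkeeping that the paper leaves implicit, but they do not change the argument.
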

\begin{proof}
    The proof follows directly from \eqref{eq:kndskcnkdnsvnsbdc}.
\end{proof}

Unlike the good peers, Byzantine workers can cooperate and shift the result of \algname{CenteredClip} in the components they aggregate without being revealed at \textbf{Verification 2} of \algname{BTARD}. However, they cannot produce an arbitrary large shifts due to \textbf{Verification 3}. The next lemma estimates the maximal possible magnitude of a shift together with probability of triggering \algname{CheckAveraging} at iteration $k$ for at least one worker.

\begin{lemma}\label{lem:BTARD_for_bad_peer_known_num_byz}
    Let As.~\ref{as:bounded_var}~and~\ref{as:quadratically_bounded_tails} hold, $b \le 0.1(n-m)$, and $i\in \cB_k\setminus \cC_k$. Assume that $\widehat{b}_k$ is known for each worker at iteration $k$, $\Delta_{\max}^k = \frac{(1 + \sqrt{3})\sqrt{2}\sigma}{\sqrt{n_k-m}}$ and $\delta = \widehat{\delta}_k$ is used to compute clipping parameter $\tau_l$ for \ref{eq:CenteredClip}. If the total number of iterations $T$ of \ref{eq:CenteredClip} satisfies $T \ge \log_{9.7\delta}\frac{\delta\sigma^2}{3\EE[\|v^0 - \overline{g}^k\|^2]}$ and $\text{\algname{CheckAveraging}}(i)$ is not triggered, then
     \begin{equation}
        \EE\left[\|\widehat{g}^k(i) - \overline{g}^k(i)\|^2\mid x^k\right] \le \frac{4\left((1 + \sqrt{3})^2 + 3\right)\sigma^2}{n_k-m}, \label{eq:BTARD_for_bad_peer_known_num_byz}
    \end{equation}
    where $\overline{g}^k(i) = \frac{1}{|\cG_k\setminus \cC_k|}\sum\limits_{j\in \cG_k\setminus \cC_k} g_j^k(i)$. Moreover, if $\widehat{b}_k = 0$ and $n_k - m \ge 170$, then $\widehat{g}^k(i) = \overline{g}^k(i)$ and 
    \begin{equation}
        \PP\left\{\text{\algname{CheckAveraging} } \text{is triggered for } \ge 1 \text{ peer}\mid x^k\right\} \le \frac{149}{49(n_k-m)}. \label{eq:probability_of_verif_3_triggering}
    \end{equation}
\end{lemma}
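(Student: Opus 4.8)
The statement splits into two essentially independent claims: the first (the $O(\sigma^2/(n_k-m))$ bound on the shift when $\textsc{CheckAveraging}(i)$ does not fire) is a deterministic geometric argument powered by \textbf{Verification~3}, and the second (the $O(1/(n_k-m))$ false--positive probability when nobody deviates) is a concentration argument powered by Assumption~\ref{as:quadratically_bounded_tails}. I describe each in turn, then point to where the work is.

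\textbf{First part.} Honest peers report $\text{check}_{li}$ truthfully, so ``$\textsc{CheckAveraging}(i)$ not triggered'' forces $\#\{l\in(\cG_k\cup\cB_k)\setminus\cC_k:\|g_l^k(i)-\widehat{g}^k(i)\|>\Delta_{\max}^k\}\le \tfrac12(n_k-m)$, hence $|\mathcal H|\ge |\cG_k\setminus\cC_k|-\tfrac12(n_k-m)$ where $\mathcal H=\{l\in\cG_k\setminus\cC_k:\|g_l^k(i)-\widehat{g}^k(i)\|\le\Delta_{\max}^k\}$. Since the number of active Byzantine peers at step $k$ does not exceed $0.1(n_k-m)$ (this is where $b\le 0.1(n-m)$ together with the fact that bans never increase the Byzantine fraction enters; cf.\ Appendix~\ref{appendix:accuse_and_eliminate} and Lemma~\ref{lem:centered_clip_guarantee_fixed}), we get $|\mathcal H|\ge 0.4(n_k-m)\ge 1$. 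Picking $j_0\in\mathcal H$ minimizing $\|g_{j_0}^k(i)-\overline{g}^k(i)\|$ over $\mathcal H$ and using that a minimum is at most an average,
$$\|g_{j_0}^k(i)-\overline{g}^k(i)\|^2\le \frac1{|\mathcal H|}\sum_{l\in\mathcal H}\|g_l^k(i)-\overline{g}^k(i)\|^2\le \frac1{|\mathcal H|}\sum_{l\in\cG_k\setminus\cC_k}\|g_l^k(i)-\overline{g}^k(i)\|^2 .$$
Then the triangle inequality and $(a+b)^2\le 2a^2+2b^2$ give
$$\|\widehat{g}^k(i)-\overline{g}^k(i)\|^2\le 2(\Delta_{\max}^k)^2+\frac{2}{|\mathcal H|}\sum_{l\in\cG_k\setminus\cC_k}\|g_l^k(i)-\overline{g}^k(i)\|^2\le 2(\Delta_{\max}^k)^2+\frac{5}{n_k-m}\sum_{l\in\cG_k\setminus\cC_k}\|g_l^k(i)-\overline{g}^k(i)\|^2 .$$
Taking $\EE[\,\cdot\mid x^k]$, using that $\overline{g}^k(i)$ is the empirical mean of the honest blocks (so it minimizes the sum of squared deviations) and then Assumption~\ref{as:bounded_var} blockwise with $d_i/d\le 1/(n_k-m)$, one gets $\EE[\sum_{l\in\cG_k\setminus\cC_k}\|g_l^k(i)-\overline{g}^k(i)\|^2\mid x^k]\le |\cG_k\setminus\cC_k|\cdot \tfrac{d_i\sigma^2}{d}\le \sigma^2$. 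Since $2(\Delta_{\max}^k)^2=\tfrac{4(1+\sqrt3)^2\sigma^2}{n_k-m}$, this yields $\EE[\|\widehat{g}^k(i)-\overline{g}^k(i)\|^2\mid x^k]\le \tfrac{(4(1+\sqrt3)^2+5)\sigma^2}{n_k-m}$, comfortably inside the claimed $\tfrac{4((1+\sqrt3)^2+3)\sigma^2}{n_k-m}$; the slack absorbs the rounding of block sizes.

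\textbf{Second part.} If $\widehat{b}_k=0$, every aggregator honestly runs \algname{CenteredClip}; since $\widehat{\delta}_k=0$ we have $\tau_l=\infty$ by~\eqref{eq:tau_CenteredClip}, so \algname{CenteredClip} returns the empirical mean of its inputs in one step. Hence $\widehat{g}^k(j)=\overline{g}^k(j)$ for every $j$ (in particular for $i$), and, all blocks being honest, $\widehat{g}^k(j)$ is the average of the $n_k-m$ independent blocks $\nabla_{(j)}f(x^k,\xi_{l,k})$. Consequently $\textsc{CheckAveraging}(j)$ fires only if strictly more than $\tfrac12(n_k-m)$ of those blocks lie at distance $>\Delta_{\max}^k$ from their own average. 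Write $E_j:=\|\overline{g}^k(j)-\nabla_{(j)}f(x^k)\|$; applying Assumption~\ref{as:quadratically_bounded_tails} to the $(n_k-m)$-sample average bounds $\PP\{E_j>\tfrac12\Delta_{\max}^k\mid x^k\}$ by $O(1/(n_k-m)^2)$. On the complementary event, $\|g_l^k(j)-\overline{g}^k(j)\|>\Delta_{\max}^k$ implies $\|g_l^k(j)-\nabla_{(j)}f(x^k)\|>\tfrac12\Delta_{\max}^k$, and these events are i.i.d.\ over $l$ with probability $p$ where, by Assumption~\ref{as:quadratically_bounded_tails}, $(\tfrac12\Delta_{\max}^k)^2=\tfrac{(1+\sqrt3)^2\sigma^2}{2(n_k-m)}$ corresponds to a threshold parameter $t>\sqrt2$, whence $p<t^{-2}<\tfrac12$. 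A Hoeffding bound then makes the event that their count exceeds $\tfrac12(n_k-m)$ exponentially small in $n_k-m$. Adding the two contributions and union-bounding over the at most $n_k-m$ targets $j$ gives a bound of the shape $O(1/(n_k-m))+(n_k-m)e^{-\Theta(n_k-m)}$; substituting $n_k-m\ge 170$ collapses this to $\tfrac{149}{49(n_k-m)}$, the exact fraction being whatever the explicit estimates produce.

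\textbf{Where the difficulty is.} In the first part the Byzantine aggregator picks $\widehat{g}^k(i)$ adversarially after seeing all honest blocks, so Assumption~\ref{as:bounded_var} cannot be invoked for a ``chosen'' subset of honest peers; the fix is that \textbf{Verification~3} guarantees a \emph{deterministically} large honest subset $\mathcal H$ near $\widehat{g}^k(i)$, and \emph{any} such subset contains a block within a constant multiple of the root-mean-square deviation of the full honest cloud, so no union bound over subsets is needed. In the second part the subtle point is obtaining the $1/(n_k-m)$ rate rather than a useless $\Theta(1)$ one: a plain Markov bound on $\sum_l\text{check}_{lj}$ (or on the empirical variance) only yields a constant, so one must use the independence of the minibatch seeds $\xi_{l,k}$ to upgrade Markov to Hoeffding, after first peeling off the rare event that the reference mean $\overline{g}^k(j)$ is itself atypical.
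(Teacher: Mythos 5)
Your proof is correct and follows essentially the same route as the paper's: \textsc{Verification 3} forces at least $0.4(n_k-m)$ honest blocks to lie within $\Delta_{\max}^k$ of $\widehat{g}^k(i)$, which together with the triangle inequality and the blockwise variance bound of As.~\ref{as:bounded_var} yields the first estimate, while the second part combines the tail bound of As.~\ref{as:quadratically_bounded_tails} for the averaged and the individual blocks with a Hoeffding bound on the number of far blocks and a union bound over the $n_k-m$ components. Your minor variations (using the single closest honest block via the min-$\le$-average trick instead of averaging the close subset, and splitting the threshold at $\tfrac12\Delta_{\max}^k$ rather than at $\sigma/\sqrt{n_k-m}$ and $\sqrt{3}\sigma/\sqrt{n_k-m}$) only change the numerical constants, which remain within the bounds \eqref{eq:BTARD_for_bad_peer_known_num_byz} and \eqref{eq:probability_of_verif_3_triggering}.
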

\begin{proof}
    If $\text{\algname{CheckAveraging}}(i)$ is not triggered at iteration $k$, then for $r_k \ge \frac{n_k-m}{2}$ good workers $i_1,i_2,\ldots,i_{r_k} \in \cG_k\setminus \cC_k$ we have $\|g_{i_{j}}^k(i) - \widehat{g}^k(i)\| \le \Delta_{\max}^k$ (otherwise there exist at least $\frac{n_k - m}{2}$ good workers reporting that the norm is larger than $\Delta_{\max}^k$). Due to this and $|\cG_k \setminus \cC_k| \leq n_k - m$ we have
    \begin{eqnarray}
        \EE\left[\left\|\widehat{g}^k(i) - \overline{g}^k(i)\right\|^2\mid x^k\right] &\le& 2\EE\left[\left\|\widehat{g}^k(i) - \frac{1}{r_k}\sum\limits_{j=1}^{r_k}g_{i_j}^k(i)\right\|^2\mid x^k\right] + 2\EE\left[\left\|\frac{1}{r_k}\sum\limits_{j=1}^{r_k}g_{i_j}^k(i) - \overline{g}^k(i)\right\|^2\mid x^k\right]\notag \\
        &\le& 2\EE\left[\frac{1}{r_k}\sum\limits_{j=1}^{r_k}\|\widehat{g}^k(i) - g_{i_j}^k(i)\|^2\right] + 2\EE\left[\frac{1}{r_k}\sum\limits_{j=1}^{r_k}\left\|g_{i_j}^k(i) - \overline{g}^k(i)\right\|^2\mid x^k\right] \notag\\
        &\le& 2(\Delta_{\max}^k)^2 + 4\EE\left[\frac{1}{r_k}\sum\limits_{j=1}^{r_k}\left\|g_{i_j}^k(i) - \nabla_{(i)} f(x^k)\right\|^2\mid x^k\right]  + 4\EE\left[\left\|\overline{g}^k(i) - \nabla_{(i)} f(x^k)\right\|^2\mid x^k\right] \notag\\
        &\le& \frac{4\left((1 + \sqrt{3})^2 + 1\right)\sigma^2}{n_k-m} + 4\EE\left[\frac{2}{n_k - m}\sum\limits_{j\in \cG_k\setminus \cC_k}\left\|g_{j}^k(i) - \nabla_{(i)} f(x^k)\right\|^2\mid x^k\right] \notag\\
        &\leq& \frac{4\left((1 + \sqrt{3})^2 + 1\right)\sigma^2}{n_k-m} + 8\EE\left[\frac{1}{|\cG_k\setminus \cC_k|}\sum\limits_{j\in \cG_k\setminus \cC_k}\left\|g_{j}^k(i) - \nabla_{(i)} f(x^k)\right\|^2\mid x^k\right] \notag\\
        &\leq& \frac{4\left((1 + \sqrt{3})^2 + 3\right)\sigma^2}{n_k-m}, \label{eq:nvjsdfniviirnei}
    \end{eqnarray}
    where we use $\nabla_{(i)} f(x^k) = \EE[g_{i_j}^k\mid x^k]$. Finally, let us estimate the probability of triggering \algname{CheckAveraging} when all workers follow the protocol. In this case, $\widehat{g}^k(i) = \overline{g}^k(i)$. Next, due to As.~\ref{as:quadratically_bounded_tails} and $b \le 0.1(n-m)$ we have
    \begin{equation*}
        \PP\left\{\|\overline{g}^k(i) - \nabla_{(i)} f(x^k)\|_2 > \sqrt{\frac{\sigma^2}{n_k-m}}\mid x^k\right\} \le \frac{1}{|\cG_k\setminus \cC_k|^2} \le \frac{100}{49(n_k-m)^2}
    \end{equation*}
    and for all $j \in \cG_k\setminus \cC_k$
    \begin{equation*}
        \PP\left\{\|g_j^k(i) - \nabla_{(i)} f(x^k)\|_2 > \sqrt{\frac{3\sigma^2}{n_k-m}}\mid x^k\right\} \le \frac{1}{9}.
    \end{equation*}
    Consider the independent random variables $\eta_j$, $j\in \cG_k\setminus\cC_k$, where
    \begin{equation*}
        \eta_j = \begin{cases}1,& \text{if }  \|g_j^k(i) - \nabla_{(i)} f(x^k)\|_2 \le \sqrt{\frac{3\sigma^2}{n_k-m}},\\ 0,& \text{otherwise,}\end{cases}
    \end{equation*}
    where $x^k$ is fixed. Then, $\eta_j$ is a Bernoulli random variable with parameter of ``success'' $q \ge \nicefrac{8}{9}$. Applying Hoeffding's inequality we get that
    \begin{eqnarray*}
        \PP\left\{\sum\limits_{j\in \cG_k\setminus \cC_k} \eta_j \le \frac{n_k-m}{2}\mid x^k\right\} &\le& \exp\left(-2(n_k-m)\left(q - \frac{n_k-m}{2|\cG_k\setminus\cC_k|}\right)^2\right)\\
        &\le& \exp\left(-2(n_k-m)\left(\frac{8}{9} - \frac{n-m}{1.4(n-m)}\right)^2\right)\\
        &=& \exp\left(-\frac{242(n_k-m)}{3969}\right).
    \end{eqnarray*}
    Since for all $j\in \cG_k\setminus \cC_k$ we have $\|\overline{g}^k(i) - g_j^k(i)\|_2 \le \|\overline{g}^k(i) - \nabla_{(i)}f(x^k)\|_2 + \|\nabla_{(i)}f(x^k) - g_j^k(i)\|_2$ the obtained bounds imply that \algname{CheckAveraging} is triggered for at least one worker at iteration $k$ with probability not greater than
    \begin{equation*}
        \frac{100}{49(n_k-m)} + (n_k-m)\exp\left(-\frac{242(n_k-m)}{3969}\right) \le \frac{149}{49(n_k-m)},
    \end{equation*}
    where we use that $\exp\left(-\frac{242x}{3969}\right) \le \frac{1}{x^2}$ for all $x \ge 170$.
\end{proof}

We notice that Byzantine peers can trigger \algname{CheckAveraging} by violating the protocol. However, each Byzantine is checked at iteration $k$ with probability $p \sim \nicefrac{m}{n}$ (see Thm.~\ref{thm:BTARD_SGD_known_byz_non_cvx}). Therefore, Byzantine workers can trigger only $\cO\left(\nicefrac{bn}{m}\right)$ extra rounds of communications and computations on average via triggering \algname{CheckAveraging}. In contrast, when there are no Byzantine workers or all workers follow the protocol \algname{CheckAveraging} is triggered only once per $\cO(n-m)$ iterations that is a negligible communication an computation overhead when $n$ is large.

Combining two previous lemmas we get the following result.
\begin{lemma}\label{lem:quality_of_agg_known_num_byz}
    Let As.~\ref{as:bounded_var} hold and $b \le 0.1(n-m)$. Assume that $\widehat{b}_k$ is known for each worker at iteration $k$, $\Delta_{\max}^k = \frac{(1 + \sqrt{3})\sqrt{2}\sigma}{\sqrt{n_k-m}}$ and $\delta = \widehat{\delta}_k$ is used to compute clipping parameter $\tau_l$ for \ref{eq:CenteredClip}. If the total number of iterations $T$ of \ref{eq:CenteredClip} satisfies $T \ge \log_{9.7\delta}\frac{\delta\sigma^2}{3\EE[\|v^0 - \overline{g}^k\|^2]}$ and $\text{\algname{CheckAveraging}}$ is not triggered for any worker, then
     \begin{equation}
        \EE\left[\|\widehat{g}^k - \overline{g}^k\|^2\mid x^k\right] \le C\widehat{\delta}_k\sigma^2, \label{eq:BTARD_overall_peer_known_num_byz}
    \end{equation}
    \begin{equation}
        \EE\left[\|\widehat{g}^k\|^2\mid x^k\right] \le  2C\widehat{\delta}_k\sigma^2 + 2\|\nabla f(x^k)\|^2 + \frac{2\sigma^2}{n-2b-m}, \label{eq:BTARD_second_moment_known_num_byz}
    \end{equation}
    where $\overline{g}^k = \frac{1}{|\cG_k\setminus \cC_k|}\sum\limits_{j\in \cG_k\setminus \cC_k} g_j^k$ and $C = 4001 + 4\left((1 + \sqrt{3})^2 + 3\right)$. 
\end{lemma}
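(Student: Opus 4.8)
The plan is to derive Lemma~\ref{lem:quality_of_agg_known_num_byz} as a direct consequence of Lemmas~\ref{lem:centered_clip_guarantee_fixed}~and~\ref{lem:BTARD_for_bad_peer_known_num_byz} by splitting the aggregated vector $\widehat g^k$ into the blocks aggregated by good peers and the blocks aggregated by Byzantine peers, bounding each block, and summing. First I would write $\|\widehat g^k - \overline g^k\|^2 = \sum_{i\in\cG_k\setminus\cC_k}\|\widehat g^k(i) - \overline g^k(i)\|^2 + \sum_{i\in\cB_k\setminus\cC_k}\|\widehat g^k(i) - \overline g^k(i)\|^2$, where the indexing is by the \emph{aggregator} of each block. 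For a block aggregated by a good peer $i$, Lemma~\ref{lem:centered_clip_guarantee_fixed} gives $\EE[\|\widehat g^k(i) - \overline g^k(i)\|^2\mid x^k] \le 4001\,\widehat\delta_k\sigma^2/(n_k-m)$. For a block aggregated by a Byzantine peer $i$ for which \algname{CheckAveraging} is \emph{not} triggered (which, by assumption, holds for all workers), Lemma~\ref{lem:BTARD_for_bad_peer_known_num_byz} gives $\EE[\|\widehat g^k(i) - \overline g^k(i)\|^2\mid x^k] \le 4((1+\sqrt3)^2+3)\sigma^2/(n_k-m)$. There are at most $n_k - m$ good-aggregator blocks and at most $\widehat b_k \le b$ Byzantine-aggregator blocks (in fact $\widehat b_k$, the number of Byzantines violating the protocol; honest Byzantines aggregate correctly). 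Summing over all blocks yields $\EE[\|\widehat g^k - \overline g^k\|^2\mid x^k] \le 4001\widehat\delta_k\sigma^2 + \widehat b_k \cdot 4((1+\sqrt3)^2+3)\sigma^2/(n_k-m) = \bigl(4001 + 4((1+\sqrt3)^2+3)\bigr)\widehat\delta_k\sigma^2 = C\widehat\delta_k\sigma^2$, using $\widehat\delta_k = \widehat b_k/(n_k-m)$. This establishes \eqref{eq:BTARD_overall_peer_known_num_byz}.

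For \eqref{eq:BTARD_second_moment_known_num_byz}, I would start from the decomposition $\widehat g^k = \overline g^k + (\widehat g^k - \overline g^k)$ and apply the inequality $\|a+b\|^2 \le 2\|a\|^2 + 2\|b\|^2$, giving $\EE[\|\widehat g^k\|^2\mid x^k] \le 2\EE[\|\overline g^k\|^2\mid x^k] + 2\EE[\|\widehat g^k - \overline g^k\|^2\mid x^k] \le 2\EE[\|\overline g^k\|^2\mid x^k] + 2C\widehat\delta_k\sigma^2$ by the first part. It then remains to bound $\EE[\|\overline g^k\|^2\mid x^k]$, where $\overline g^k = \frac{1}{|\cG_k\setminus\cC_k|}\sum_{j\in\cG_k\setminus\cC_k} g_j^k$ is an average of $|\cG_k\setminus\cC_k| \ge n - b - m$ unbiased i.i.d.\ stochastic gradients conditioned on $x^k$. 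Writing $\overline g^k = \nabla f(x^k) + (\overline g^k - \nabla f(x^k))$ and using $\|a+b\|^2 \le 2\|a\|^2 + 2\|b\|^2$ once more, together with the fact that the variance of the average of $|\cG_k\setminus\cC_k|$ independent unbiased estimators is the single-sample variance divided by $|\cG_k\setminus\cC_k|$ (bounded by $\sigma^2$ under Assumption~\ref{as:bounded_var} applied to the full vector, i.e.\ $s = d$), gives $\EE[\|\overline g^k\|^2\mid x^k] \le 2\|\nabla f(x^k)\|^2 + 2\sigma^2/|\cG_k\setminus\cC_k| \le 2\|\nabla f(x^k)\|^2 + 2\sigma^2/(n-2b-m)$ after noting $|\cG_k\setminus\cC_k| \ge n_k - b_k - m \ge n - 2b - m$ (the last step absorbing possible earlier bans). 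Plugging this in and matching constants yields exactly \eqref{eq:BTARD_second_moment_known_num_byz}. Actually I should be slightly careful: the claimed bound has $2\|\nabla f(x^k)\|^2$ and $2\sigma^2/(n-2b-m)$ without the extra factor of $2$ from the second $\|a+b\|^2$ split — so in practice one keeps the split $\widehat g^k = \overline g^k + (\widehat g^k-\overline g^k)$ but then directly estimates $\EE[\|\overline g^k\|^2\mid x^k] = \|\nabla f(x^k)\|^2 + \EE[\|\overline g^k - \nabla f(x^k)\|^2\mid x^k] \le \|\nabla f(x^k)\|^2 + \sigma^2/(n-2b-m)$ using unbiasedness to kill the cross term, and then the factor $2$ in front comes only from the outer split, producing the stated form.

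The main obstacle, and the step requiring the most care, is the bookkeeping of which blocks are aggregated by good versus Byzantine peers and getting the counting right: the number of ``bad'' blocks contributing the larger per-block bound is the number of \emph{protocol-violating} Byzantines $\widehat b_k$, not the total $b_k$, since honest-behaving Byzantines produce the same \algname{CenteredClip} output as good peers (this is precisely what Verifications 1--3 and the check-of-computations enforce, as argued in Appendix~\ref{appendix:detecting_protocol_violations}); conflating $\widehat b_k$ with $b_k$ would change the final constant. A secondary subtlety is ensuring the conditioning on $x^k$ is handled consistently — all the per-block expectations from Lemmas~\ref{lem:centered_clip_guarantee_fixed}~and~\ref{lem:BTARD_for_bad_peer_known_num_byz} are already conditional on $x^k$, so the summation and the $\|a+b\|^2$ manipulations go through verbatim under the conditional expectation. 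Everything else is a routine application of $\|a+b\|^2 \le 2\|a\|^2+2\|b\|^2$, unbiasedness of the good peers' gradients, and Assumption~\ref{as:bounded_var} with $s = d$.
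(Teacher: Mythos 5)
Your proposal is correct and follows essentially the same route as the paper's proof: the same block-wise decomposition of $\|\widehat g^k-\overline g^k\|^2$ by aggregator, with Lemma~\ref{lem:centered_clip_guarantee_fixed} for honestly aggregated blocks and Lemma~\ref{lem:BTARD_for_bad_peer_known_num_byz} for the (at most $\widehat b_k$) Byzantine-aggregated blocks, yielding $C\widehat\delta_k\sigma^2$, and then the outer $2\|a\|^2+2\|b\|^2$ split combined with the bias–variance identity $\EE[\|\overline g^k\|^2\mid x^k]=\|\nabla f(x^k)\|^2+\EE[\|\overline g^k-\nabla f(x^k)\|^2\mid x^k]\le\|\nabla f(x^k)\|^2+\sigma^2/|\cG_k\setminus\cC_k|$ with $|\cG_k\setminus\cC_k|\ge n-2b-m$. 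Your self-correction about avoiding the extra factor of $2$ via unbiasedness is exactly how the paper obtains the stated constants.
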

\begin{proof}
    We have
    \begin{eqnarray*}
        \EE\left[\|\widehat{g}^k - \overline{g}^k\|^2\mid x^k\right] &=& \sum\limits_{i\in \cG_k\setminus \cC_k}\EE\left[\|\widehat{g}^k(i) - \overline{g}^k(i)\|^2\mid x^k\right] + \sum\limits_{i\in \cB_k\setminus \cC_k}\EE\left[\|\widehat{g}^k(i) - \overline{g}^k(i)\|^2\mid x^k\right]\\
        &\overset{\eqref{eq:BTARD_for_good_peer_known_num_byz},\eqref{eq:BTARD_for_bad_peer_known_num_byz}}{\le}& (1-\widehat{\delta}_k)(n_k-m)\cdot4001\widehat{\delta}_k\frac{\sigma^2}{n_k-m} + \widehat{\delta}_k(n_k - m)\cdot \frac{4\left((1 + \sqrt{3})^2 + 3\right)\sigma^2}{n_k-m}\\
        &=& C\widehat{\delta}_k\sigma^2.
    \end{eqnarray*}
    Next, using the independence of $g_j^k$ for $j\in \cG_k \setminus \cC_k$ and fixed $x^k$ we derive
    \begin{eqnarray*}
        \EE\left[\|\widehat{g}^k\|^2\mid x^k\right] &\le& 2\EE\left[\|\widehat{g}^k - \overline{g}^k\|^2\mid x^k\right] + 2\EE\left[\|\overline{g}^k\|^2\mid x^k\right]\\
        &\overset{\eqref{eq:BTARD_overall_peer_known_num_byz}}{\le}& 2C\widehat{\delta}_k\sigma^2 + 2\|\nabla f(x^k)\|^2 + 2\EE\left[\|\overline{g}^k - \nabla f(x^k)\|^2\mid x^k\right]\\
        &\le& 2C\widehat{\delta}_k\sigma^2 + 2\|\nabla f(x^k)\|^2 + \frac{2\sigma^2}{|\cG_k\setminus \cC_k|}\\
        &\le&  2C\widehat{\delta}_k\sigma^2 + 2\|\nabla f(x^k)\|^2 + \frac{2\sigma^2}{n-2b-m}.
    \end{eqnarray*}
\end{proof}
In view of the definition of $(\delta,c)$-robust aggregator from \citet{karimireddy2020learning}, the result of \algname{BTARD} at iteration $k$ is $(\widehat{\delta_k},C)$-robust. However, we derive this property under assumption that $\widehat{b}_k$ is known to all workers at each iteration $k$, which is impractical.

When $\widehat{b}_k$ is unknown the situation changes dramatically: in general, good peers can only know some upper bound for the fraction of Byzantine peers at iteration $k$. Unfortunately, if used without bans, this is not enough to converge to any accuracy of the solution since \algname{BTARD-SGD} is a permutation-invariant algorithm in terms of \citet{karimireddy2020learning}. Therefore, in this case, we always use \algname{CenteredClip} with $\tau_l = \infty$ for all $l\ge 0$, i.e., good peers compute an exact average. In this settings, even $1$ Byzantine worker can significantly shift the average in all parts of the vector. The next lemma quantifies the negative effect of Byzantine workers in this case.

\begin{lemma}\label{lem:quality_of_agg_unknown_num_byz}
    Let As.~\ref{as:bounded_var}~and~\ref{as:quadratically_bounded_tails} hold, $b \le 0.1(n-m)$, $m \le \nicefrac{(n-2b)}{2}$. Assume that $\Delta_{\max}^k = \frac{(1 + \sqrt{3})\sqrt{2}\sigma}{\sqrt{n_k-m}}$ and $\delta = 0$ is used to compute clipping parameter $\tau_l$ for \ref{eq:CenteredClip}. If $\text{\algname{CheckAveraging}}$ is not triggered for any worker, then
     \begin{equation}
        \EE\left[\|\widehat{g}^k - \overline{g}^k\|^2\mid x^k\right] \le C\sigma^2\indicator_{k, v}, \label{eq:BTARD_overall_peer_unknown_num_byz}
    \end{equation}
    \begin{equation}
        \EE\left[\|\widehat{g}^k\|^2\mid x^k\right] \le  2C\sigma^2\indicator_{k, v} + 2\|\nabla f(x^k)\|^2 + \frac{2\sigma^2}{n-2b-m}, \label{eq:BTARD_second_moment_unknown_num_byz}
    \end{equation}
    where $\overline{g}^k = \frac{1}{|\cG_k\setminus \cC_k|}\sum\limits_{j\in \cG_k\setminus \cC_k} g_j^k$, $C = 4\left((1 + \sqrt{3})^2 + 3\right)$, and $\indicator_{k, v}$ is an indicator function of the event that at least $1$ Byzantine peer violates the protocol at iteration $k$. Moreover, if $\widehat{b}_k = 0$ and $n_k - m \ge 170$, then $\widehat{g}^k(i) = \overline{g}^k(i)$ and 
    \begin{equation}
        \PP\left\{\text{\algname{CheckAveraging} } \text{is triggered for } \ge 1 \text{ peer}\mid x^k\right\} \le \frac{149}{49(n_k-m)}. \label{eq:probability_of_verif_3_triggering_1}
    \end{equation}
\end{lemma}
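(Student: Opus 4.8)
The plan is to follow the template of the proofs of Lemmas~\ref{lem:centered_clip_guarantee_fixed}--\ref{lem:quality_of_agg_known_num_byz}, using the fact that here $\delta=0$ is substituted into \eqref{eq:tau_CenteredClip}, so $\tau_l\equiv\infty$ and \algname{CenteredClip} returns the \emph{exact arithmetic mean} of the vectors fed to it. I would first dispose of the event $\indicator_{k,v}=0$ (no Byzantine violates the protocol at iteration $k$; this holds in particular whenever $\widehat b_k=0$): then every active peer submits its genuine stochastic gradient and every aggregator---honest or not---returns the unweighted average, hence $\widehat g^k(i)=\overline g^k(i)$ for every block $i$, exactly as in the corresponding step of Lemma~\ref{lem:BTARD_for_bad_peer_known_num_byz}. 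This simultaneously yields the $\indicator_{k,v}=0$ case of \eqref{eq:BTARD_overall_peer_unknown_num_byz} and the identity $\widehat g^k(i)=\overline g^k(i)$ in the last part of the statement.

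Next I would treat the event $\indicator_{k,v}=1$ under the hypothesis that \textsc{CheckAveraging} is not triggered for any worker. The point to notice is that, since $\tau_l\equiv\infty$ there is no in-block clipping, so---unlike in Lemma~\ref{lem:BTARD_for_bad_peer_known_num_byz}---even a block $i\in\cG_k\setminus\cC_k$ aggregated by an honest peer can be displaced from $\overline g^k(i)$ by Byzantine gradient attacks on that block. However, \textbf{Verification 3} bounds this displacement uniformly over all blocks in exactly the way used for the Byzantine-aggregated blocks in Lemma~\ref{lem:BTARD_for_bad_peer_known_num_byz}: if $\textsc{CheckAveraging}(i)$ is not triggered, then at least $(n_k-m)/2$ active peers $l$ report $\|g_l^k(i)-\widehat g^k(i)\|_2\le\Delta_{\max}^k$, and since $b\le 0.1(n-m)$ at least $(n_k-m)/2-b$ of them are good peers whose reports are truthful. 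Plugging this into the Young/Jensen chain leading to \eqref{eq:nvjsdfniviirnei}---together with $\EE[g_j^k(i)\mid x^k]=\nabla_{(i)}f(x^k)$, the variance bound of As.~\ref{as:bounded_var}, and $|\cG_k\setminus\cC_k|\le n_k-m$---would give, for \emph{every} block $i$, $\EE[\|\widehat g^k(i)-\overline g^k(i)\|^2\mid x^k]\le 4((1+\sqrt3)^2+3)\sigma^2/(n_k-m)$. Summing over the at most $n_k-m$ blocks then produces \eqref{eq:BTARD_overall_peer_unknown_num_byz} with $C=4((1+\sqrt3)^2+3)$; note there is no extra $4001\widehat\delta_k\sigma^2$ term this time, because the honest blocks are no longer governed by the \algname{CenteredClip} contraction \eqref{eq:kndskcnkdnsvnsbdc} but by the same Verification-3 estimate. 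Then \eqref{eq:BTARD_second_moment_unknown_num_byz} follows from $\|\widehat g^k\|^2\le 2\|\widehat g^k-\overline g^k\|^2+2\|\overline g^k\|^2$, $\EE[\|\overline g^k\|^2\mid x^k]=\|\nabla f(x^k)\|^2+\EE[\|\overline g^k-\nabla f(x^k)\|^2\mid x^k]$, the variance bound $\EE[\|\overline g^k-\nabla f(x^k)\|^2\mid x^k]\le\sigma^2/|\cG_k\setminus\cC_k|$, and the accuse/eliminate invariant $|\cG_k\setminus\cC_k|\ge n-2b-m$ (positive by $m\le(n-2b)/2$).

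For the probability estimate \eqref{eq:probability_of_verif_3_triggering_1} I would repeat the computation from the proof of Lemma~\ref{lem:BTARD_for_bad_peer_known_num_byz} almost verbatim: when $\widehat b_k=0$ we have $\widehat g^k(i)=\overline g^k(i)$, so by the triangle inequality peer $l$ reports ``close'' for block $i$ whenever both $\|\overline g^k(i)-\nabla_{(i)}f(x^k)\|_2\le\sigma/\sqrt{n_k-m}$ and $\|g_l^k(i)-\nabla_{(i)}f(x^k)\|_2\le\sqrt3\,\sigma/\sqrt{n_k-m}$ (then $\|g_l^k(i)-\widehat g^k(i)\|_2\le(1+\sqrt3)\sigma/\sqrt{n_k-m}<\Delta_{\max}^k$); As.~\ref{as:quadratically_bounded_tails} bounds the first bad event by $|\cG_k\setminus\cC_k|^{-2}\le 100/(49(n_k-m)^2)$ and each second bad event by $1/9$, Hoeffding's inequality then bounds the chance that fewer than $(n_k-m)/2$ peers are ``close'' by $\exp(-242(n_k-m)/3969)$, and a union bound over the $\le n_k-m$ blocks together with $e^{-242x/3969}\le x^{-2}$ for $x\ge 170$ yields $\le 149/(49(n_k-m))$. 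I expect the main obstacle to be the bookkeeping in the $\indicator_{k,v}=1$ case: one has to argue rigorously that ``\textsc{CheckAveraging} not triggered'' really supplies, for every block, a bounded-distance \emph{majority of honest} peers (a handful of the $\ge(n_k-m)/2$ peers reporting ``close'' may be Byzantines who lied, but $b\le 0.1(n-m)$ leaves a comfortable surplus), and that the per-block bound for honestly-aggregated blocks must now come from Verification 3 rather than from the \algname{CenteredClip} contraction, since $\tau_l\equiv\infty$ gives no in-block robustness; everything else is a direct transcription of Lemmas~\ref{lem:BTARD_for_bad_peer_known_num_byz}~and~\ref{lem:quality_of_agg_known_num_byz}.
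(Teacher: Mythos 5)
Your proposal follows essentially the same route as the paper's proof: when no Byzantine violates the protocol the aggregation is exact (giving the indicator factor and the last claim), when a violation occurs but \textsc{CheckAveraging} is not triggered every block — honest- or Byzantine-aggregated alike, since $\tau_l\equiv\infty$ removes the \algname{CenteredClip} contraction — is bounded by the same Verification-3 estimate as in \eqref{eq:nvjsdfniviirnei}, yielding $C\sigma^2/(n_k-m)$ per block and $C\sigma^2\indicator_{k,v}$ after summing, with the second-moment bound and the Hoeffding-based probability estimate \eqref{eq:probability_of_verif_3_triggering_1} transcribed from Lemmas~\ref{lem:BTARD_for_bad_peer_known_num_byz}--\ref{lem:quality_of_agg_known_num_byz}. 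Your extra bookkeeping about lying Byzantines among the ``close'' reporters (leaving $\ge (n_k-m)/2-b$ truthful good peers) is a harmless refinement of the paper's slightly looser counting and does not change the argument.
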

\begin{proof}
    If $\text{\algname{CheckAveraging}}$ is not triggered for any worker, then similarly to \eqref{eq:nvjsdfniviirnei} one can derive $\EE\left[\|\widehat{g}^k(i) - \overline{g}^k(i)\|^2\mid x^k \right] \le \frac{C\sigma^2}{n_k - m}\indicator_{k, v}$ for all $i\in (\cG_k \cup \cB_k) \setminus \cC_k$, since when $\indicator_{k, v} = 0$ we have  $\widehat{g}^k(i) = \overline{g}^k(i)$ for all $i\in (\cG_k \cup \cB_k) \setminus \cC_k$. This implies
    \begin{eqnarray*}
        \EE\left[\|\widehat{g}^k - \overline{g}^k\|^2\mid x^k\right] &=& \sum\limits_{i\in (\cG_k \cup \cB_k) \setminus \cC_k}\EE\left[\|\widehat{g}^k(i) - \overline{g}^k(i)\|^2\mid x^k\right]\\
        &\le& (n_k - m)\cdot \frac{4\left((1 + \sqrt{3})^2 + 3\right)\sigma^2}{n_k-m}\indicator_{k, v} \le C\sigma^2\indicator_{k, v}.
    \end{eqnarray*}
     Next, using the independence of $g_j^k$ for $j\in \cG_k \setminus \cC_k$ and fixed $x^k$ we derive
    \begin{eqnarray*}
        \EE\left[\|\widehat{g}^k\|^2\mid x^k\right] &\le& 2\EE\left[\|\widehat{g}^k - \overline{g}^k\|^2\mid x^k\right] + 2\EE\left[\|\overline{g}^k\|^2\mid x^k\right]\\
        &\overset{\eqref{eq:BTARD_overall_peer_unknown_num_byz}}{\le}& 2C\sigma^2\indicator_{k, v} + 2\|\nabla f(x^k)\|^2 + 2\EE\left[\|\overline{g}^k - \nabla f(x^k)\|^2\mid x^k\right]\\
        &\le& 2C\sigma^2\indicator_{k, v} + 2\|\nabla f(x^k)\|^2 + \frac{2\sigma^2}{|\cG_k\setminus \cC_k|}\\
        &\le&  2C\sigma^2\indicator_{k, v} + 2\|\nabla f(x^k)\|^2 + \frac{2\sigma^2}{n-2b-m}.
    \end{eqnarray*}
    The proof of the final part of the lemma is identical to the proof of the same result from Lemma~\ref{lem:BTARD_for_bad_peer_known_num_byz}.
\end{proof}

\subsubsection{Non-convex case}\label{appendix:non_convex_analysis_BTARD_SGD}
In this section, we provide the complete statements and the full proofs of the convergence results for \algname{BTARD-SGD} when the objective function $f$ is smooth, but can be non-convex. We start with the case when the number of attacking Byzantine workers is known at each iteration.
\begin{theorem}\label{thm:BTARD_SGD_known_byz_non_cvx}
    Let As.~\ref{as:bounded_var}~and~As.~\ref{as:quadratically_bounded_tails} hold, $Q = \R^d$, and $f$ be $L$-smooth (see Def.~\ref{def:L_smoothness}) and uniformly lower bounded by $f_*$. Moreover, assume that $b \le 0.1(n-m)$, $m \le \nicefrac{(n-2b)}{2}$, and the exact number of attacking Byzantine peers is known to all good peers at each iteration. Next, assume that 
    \begin{equation}
        \gamma = \min\left\{\frac{1}{4L}, \sqrt{\frac{\Delta_0n}{L\sigma^2 K}}\right\},\quad \Delta_{\max}^k = \frac{(1 + \sqrt{3})\sqrt{2}\sigma}{\sqrt{n_k-m}}, \label{eq:choice_of_parameters_BTARD_SGD_non_cvx}
    \end{equation}
    where $\Delta_0 = f(x^0) - f_*$ and $\Delta_{\max}^k$ is the parameter for verification $3$ at iteration $k$ of \algname{BTARD-SGD}. Then, we have $\EE[\|\nabla f(\overline{x}^{K})\|^2] \le \varepsilon^2$ after $K$ iterations of \algname{BTARD-SGD}, where
    \begin{equation}
        K = \cO\left(\frac{L\Delta_0}{\varepsilon^2} + \frac{L\Delta_0\sigma^2}{n\varepsilon^4} + \frac{n\delta\sigma^2 }{m\varepsilon^2}\right) \label{eq:BTARD_SGD_known_byz_non_cvx}
    \end{equation}
    and $\overline{x}^K$ is picked uniformly at random from $\{x^0,x^1,\ldots,x^{K-1}\}$.
\end{theorem}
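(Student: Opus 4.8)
The plan is to analyze \algname{BTARD-SGD} as a perturbed version of parallel \algname{SGD}. At iteration $k$ I would write $x^{k+1} = x^k - \gamma\widehat g^k$ with $\widehat g^k = \overline g^k + (\widehat g^k - \overline g^k)$, where $\overline g^k = \frac{1}{|\cG_k\setminus\cC_k|}\sum_{j\in\cG_k\setminus\cC_k}g_j^k$ is an unbiased stochastic gradient of $f(x^k)$ with variance at most $\nicefrac{\sigma^2}{(n-2b-m)}$ by As.~\ref{as:bounded_var}, and $\widehat g^k - \overline g^k$ is the shift introduced by Byzantine peers. Conditioning on the event that \algname{CheckAveraging} is not triggered at iteration $k$, Lemma~\ref{lem:quality_of_agg_known_num_byz} controls this shift: $\EE[\|\widehat g^k - \overline g^k\|^2\mid x^k] \le C\widehat\delta_k\sigma^2$ and $\EE[\|\widehat g^k\|^2\mid x^k] \le 2C\widehat\delta_k\sigma^2 + 2\|\nabla f(x^k)\|^2 + \nicefrac{2\sigma^2}{(n-2b-m)}$. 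The complementary event (\algname{CheckAveraging} actually firing) is handled by the last part of Lemma~\ref{lem:BTARD_for_bad_peer_known_num_byz}: absent protocol violations it has probability $\cO(\nicefrac1n)$ and in that case $\widehat g^k=\overline g^k$, while if it fires because some Byzantine deviated, the triggered check bans that peer, so its effect is absorbed into the bookkeeping of the total number of deviations below.

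First I would apply the descent inequality \eqref{eq:L_smoothness_cor}, $f(x^{k+1}) \le f(x^k) - \gamma\langle\nabla f(x^k),\widehat g^k\rangle + \frac{L\gamma^2}{2}\|\widehat g^k\|^2$, take conditional expectation, and use unbiasedness of $\overline g^k$ to get $\EE[\langle\nabla f(x^k),\widehat g^k\rangle\mid x^k] = \|\nabla f(x^k)\|^2 + \langle\nabla f(x^k),\EE[\widehat g^k - \overline g^k\mid x^k]\rangle \ge \|\nabla f(x^k)\|^2 - \|\nabla f(x^k)\|\sqrt{C\widehat\delta_k}\,\sigma$, where the bias is bounded via Jensen by the second-moment estimate. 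Young's inequality turns the cross term into $-\tfrac12\|\nabla f(x^k)\|^2 - \tfrac12 C\widehat\delta_k\sigma^2$; combining with the second-moment bound and using $\gamma\le\nicefrac1{(4L)}$ to absorb the $L\gamma^2\|\nabla f(x^k)\|^2$ term yields $\EE[f(x^{k+1})\mid x^k] \le f(x^k) - \tfrac{\gamma}{4}\|\nabla f(x^k)\|^2 + \gamma C\widehat\delta_k\sigma^2 + \tfrac{L\gamma^2\sigma^2}{n-2b-m}$. Taking full expectation, telescoping over $k=0,\dots,K-1$, using $f(x^K)\ge f_*$ and dividing by $\nicefrac{\gamma K}{4}$ gives
\[
\EE\big[\|\nabla f(\overline x^K)\|^2\big] \le \frac{4\Delta_0}{\gamma K} + \frac{4C\sigma^2}{K}\sum_{k=0}^{K-1}\EE[\widehat\delta_k] + \frac{4L\gamma\sigma^2}{n-2b-m}.
\]

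The key remaining step, and the main obstacle, is bounding $\sum_{k=0}^{K-1}\EE[\widehat\delta_k]$. For this I would invoke the detection analysis of Appendix~\ref{appendix:detecting_protocol_violations}: any iteration at which a given Byzantine peer deviates so as to shift the aggregate (i.e.\ contributes to $\widehat b_k$) exposes it to the check of computations, which picks $m$ targets uniformly at random and assigns them honest validators with probability bounded below by a constant, since $b\le 0.1(n-m)$ and $m\le\nicefrac{(n-2b)}{2}$ force the good peers to be a constant majority. Hence each deviation is punished by a permanent ban with probability $\Omega(\nicefrac mn)$, so the expected number of deviations of any single Byzantine before its ban is $\cO(\nicefrac nm)$; summing over the at most $b=\delta n$ Byzantines gives $\sum_k\EE[\widehat b_k] = \cO(\nicefrac{\delta n^2}{m})$ and, since $n_k-m$ stays a constant fraction of $n$, $\sum_k\EE[\widehat\delta_k] = \cO(\nicefrac{\delta n}{m})$. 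Plugging this in, together with the two regimes of $\gamma = \min\{\nicefrac1{(4L)},\sqrt{\nicefrac{\Delta_0 n}{(L\sigma^2K)}}\}$ and $n-2b-m=\Theta(n)$, produces $\EE[\|\nabla f(\overline x^K)\|^2] = \cO\big(\tfrac{L\Delta_0}{K} + \sqrt{\tfrac{L\Delta_0\sigma^2}{nK}} + \tfrac{n\delta\sigma^2}{mK}\big)$, and requiring each of the three terms to be at most $\varepsilon^2$ gives the iteration count \eqref{eq:BTARD_SGD_known_byz_non_cvx}. The delicate points will be making the ``$\cO(\nicefrac nm)$ deviations in expectation'' argument rigorous with an adaptive Byzantine strategy (the right filtration and a stopping-time argument over successive ``safe'' iterations), and cleanly handling the rare \algname{CheckAveraging} triggers and the finitely many iterations at which a ban occurs so that the telescoped descent bound is not disturbed.
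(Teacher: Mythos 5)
Your proposal is correct and follows essentially the same route as the paper's proof: the smoothness descent inequality with the decomposition $\widehat g^k = \overline g^k + (\widehat g^k - \overline g^k)$, the bounds of Lemma~\ref{lem:quality_of_agg_known_num_byz} for the bias and second moment, Young's inequality on the cross term, telescoping, and the detection-probability argument $p_k \ge \nicefrac{m}{n}$ showing each Byzantine deviates only $\cO(\nicefrac{n}{m})$ times in expectation, which yields $\sum_k \EE[\widehat\delta_k] = \cO(\nicefrac{n\delta}{m})$ and the stated complexity. The ``delicate points'' you flag (adaptive Byzantine strategies and rare \algname{CheckAveraging} triggers) are treated at the same level of informality in the paper itself, so nothing is missing relative to its argument.
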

\begin{proof}
    From $L$-smoothness of $f$ we have
    \begin{eqnarray*}
        f(x^{k+1}) &\overset{\eqref{eq:L_smoothness_cor}}{\le}& f(x^k) + \langle \nabla f(x^k), x^{k+1} - x^k \rangle + \frac{L}{2}\|x^{k+1} - x^k\|^2\\
        &=& f(x^k) - \gamma \langle\nabla f(x^k), \widehat{g}^k \rangle + \frac{L\gamma^2}{2}\|\widehat{g}^k\|^2.
    \end{eqnarray*}
    Taking the conditional expectation $\EE[\cdot\mid x^k]$ from the both sides of the previous inequality we obtain
    \begin{eqnarray*}
        \EE\left[f(x^{k+1})\mid x^k\right] &\le& f(x^k) -\gamma\|\nabla f(x^k)\|^2 - \gamma\left\langle \nabla f(x^k), \EE\left[\widehat{g}^k - \overline{g}^k\mid x^k\right] \right\rangle\\
        &&\quad + \frac{L\gamma^2}{2}\EE\left[\|\widehat{g}^k\|^2\mid x^k\right]\\
        &\overset{\eqref{eq:BTARD_second_moment_known_num_byz}}{\le}& f(x^k) - \frac{\gamma}{2}\|\nabla f(x^k)\|^2 + \frac{\gamma}{2}\left\|\EE\left[\widehat{g}^k - \overline{g}^k\mid x^k\right]\right\|^2\\
        &&+ CL\gamma^2 \widehat{\delta}_k\sigma^2 + L\gamma^2\|\nabla f(x^k)\| + \frac{L\gamma^2\sigma^2}{n-2b-m}\\
        &\le& f(x^k) - \frac{\gamma}{2}\left(1 - 2L\gamma\right)\|\nabla f(x^k)\|^2 +\frac{\gamma}{2}\EE\left[\|\widehat{g}^k - \overline{g}^k\|^2\mid x^k\right]\\
        &&+ CL\gamma^2 \widehat{\delta}_k\sigma^2 + \frac{L\gamma^2\sigma^2}{n-2b-m}.
    \end{eqnarray*}
    Since $\gamma \le \frac{1}{4L}$ we continue our derivations as
    \begin{eqnarray*}
        \EE\left[f(x^{k+1})\mid x^k\right] &\le& f(x^k) - \frac{\gamma}{4}\|\nabla f(x^k)\|^2 + \gamma C\sigma^2(1+L\gamma)\widehat{\delta}_k + \frac{L\gamma^2\sigma^2}{n-2b-m}\\
        &\le& f(x^k) - \frac{\gamma}{4}\|\nabla f(x^k)\|^2 + 2\gamma C\sigma^2\widehat{\delta}_k + \frac{L\gamma^2\sigma^2}{n-2b-m}.
    \end{eqnarray*}
    Taking the full expectation from the both sides of the obtained inequality and summing up the results for $k=0,1,\ldots,K-1$ we get
    \begin{eqnarray*}
        \frac{1}{K}\sum\limits_{k=0}^{K-1}\EE\left[\|\nabla f(x^k)\|^2\right] &\le& \frac{4}{\gamma K}\sum\limits_{k=0}^{K-1}\EE\left[f(x^k) - f(x^{k+1})\right] + \frac{8C\sigma^2}{K} \EE\left[\sum\limits_{k=0}^{K-1}\widehat{\delta}_k\right] + \frac{4L\gamma\sigma^2}{n-2b-m}\\
        &=& \frac{4\left(f(x^0)-\EE[f(x^{K})]\right)}{\gamma K} + \frac{8C\sigma^2}{K} \EE\left[\sum\limits_{k=0}^{K-1}\frac{\widehat{b}_k}{n_k-m}\right] + \frac{4L\gamma\sigma^2}{n-2b-m} \\
        &\le& \frac{4(f(x^0)-f_*)}{\gamma K} + \frac{8C\sigma^2}{K(n-2b-m)} \EE\left[\sum\limits_{k=0}^{K-1}\widehat{b}_k\right] + \frac{4L\gamma\sigma^2}{n-2b-m}.
    \end{eqnarray*}
    If a Byzantine peer deviates from the protocol at iteration $k$, it will be detected with some probability $p_k$ during the next iteration. One can lower bound this probability as
    \begin{equation*}
        p_k \ge m\cdot \frac{|\cG_k|}{n_k}\cdot \frac{1}{n_k} = \frac{m(1-\delta_k)}{n_k} \ge \frac{m}{n}.
    \end{equation*}
    Therefore, each individual Byzantine worker can violate the protocol no more than $\nicefrac{1}{p}$ times on average implying that
    \begin{eqnarray*}
        \frac{1}{K}\sum\limits_{k=0}^{K-1}\EE\left[\|\nabla f(x^k)\|^2\right] &\le& \frac{4(f(x^0)-f_*)}{\gamma K} + \frac{8Cnb\sigma^2}{Km(n-2b-m)} + \frac{4L\gamma\sigma^2}{n-2b-m}\\
        &\le& \frac{4(f(x^0)-f_*)}{\gamma K} + \frac{16Cnb\sigma^2}{Km(n-2b)} + \frac{8L\gamma\sigma^2}{n-2b}\\
        &\le& \frac{4(f(x^0)-f_*)}{\gamma K} + \frac{160Cn\delta\sigma^2}{7Km} + \frac{80L\gamma\sigma^2}{7n}.
    \end{eqnarray*}
    Since  $\overline{x}^K$ is picked uniformly at random from $\{x^0,x^1,\ldots,x^{K-1}\}$ we have
    \begin{equation*}
        \EE\left[\|\nabla f(\overline{x}^K)\|^2\right] \le  \frac{4(f(x^0)-f_*)}{\gamma K} + \frac{160Cn\delta\sigma^2}{7Km} + \frac{80L\gamma\sigma^2}{7n}.
    \end{equation*}
    Using the stepsize rule
    \begin{equation*}
        \gamma = \min\left\{\frac{1}{4L}, \sqrt{\frac{\Delta_0n}{L\sigma^2 K}}\right\}
    \end{equation*}
    we derive
    \begin{equation*}
        \EE\left[\|\nabla f(\overline{x}^K)\|^2\right] = \cO\left(\frac{L\Delta_0}{K} + \frac{\sqrt{L\Delta_0}\sigma}{\sqrt{n K}} + \frac{n\delta\sigma^2}{mK}\right)
    \end{equation*}
    meaning that after
    \begin{equation*}
        K = \cO\left(\frac{L\Delta_0}{\varepsilon^2} + \frac{L\Delta_0\sigma^2}{n\varepsilon^4} + \frac{n\delta\sigma^2 }{m\varepsilon^2}\right)
    \end{equation*}
    iterations \algname{BTARD-SGD} guarantees $\EE\left[\|\nabla f(\overline{x}^K)\|^2\right] \le \varepsilon^2$.
\end{proof}

In the main part of the paper, we notice that the rate of \algname{BTARD-SGD} in the presence of bad workers is asymptotically the same as for \algname{SGD} without Byzantine peers when $\varepsilon$ is sufficiently small\footnote{This is true for convex and strongly convex cases as well.}. This phenomenon has a clear intuition. When the target accuracy $\varepsilon$ is small, the stepsize $\gamma$ is also needed to be small enough. However, as we show in Lemmas~\ref{lem:quality_of_agg_known_num_byz}~and~\ref{lem:quality_of_agg_unknown_num_byz}, Byzantine workers can produce only a bounded shift independent of the stepsize. Moreover, they can violate the protocol at only $\sim \nicefrac{n}{m}$ iterations on average. Therefore, the overall impact of Byzantine workers on the convergence of \algname{BTARD-SGD} decreases when the stepsize $\gamma$ decreases.

Next, we derive the result without assuming that $\widehat b^k$ is known to all peers at each iteration.
\begin{theorem}\label{thm:BTARD_SGD_unknown_byz_non_cvx}
    Let As.~\ref{as:bounded_var}~and~\ref{as:quadratically_bounded_tails} hold, $Q = \R^d$, and $f$ be $L$-smooth (see Def.~\ref{def:L_smoothness}) and uniformly lower bounded by $f_*$. Moreover, assume that $b \le 0.1(n-m)$, $m \le \nicefrac{(n-2b)}{2}$, and $\delta = 0$ is used to compute clipping parameter $\tau_l$ for \ref{eq:CenteredClip}. Next, assume that 
    \begin{equation}
        \gamma = \min\left\{\frac{1}{4L}, \sqrt{\frac{\Delta_0n}{L\sigma^2 K}}\right\},\quad \Delta_{\max}^k = \frac{(1 + \sqrt{3})\sqrt{2}\sigma}{\sqrt{n_k-m}}, \label{eq:choice_of_parameters_BTARD_SGD_non_cvx_unknown_byz}
    \end{equation}
    where $\Delta_0 = f(x^0) - f_*$ and $\Delta_{\max}^k$ is the parameter for verification $3$ at iteration $k$ of \algname{BTARD-SGD}. Then, we have $\EE[\|\nabla f(\overline{x}^{K})\|^2] \le \varepsilon^2$ after $K$ iterations of \algname{BTARD-SGD}, where
    \begin{equation}
        K = \cO\left(\frac{L\Delta_0}{\varepsilon^2} + \frac{L\Delta_0\sigma^2}{n\varepsilon^4} + \frac{nb\sigma^2 }{m\varepsilon^2}\right) \label{eq:BTARD_SGD_unknown_byz_non_cvx}
    \end{equation}
    and $\overline{x}^K$ is picked uniformly at random from $\{x^0,x^1,\ldots,x^{K-1}\}$.
\end{theorem}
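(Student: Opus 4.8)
The plan is to follow the proof of Theorem~\ref{thm:BTARD_SGD_known_byz_non_cvx} almost verbatim, with one substitution: since $\widehat b_k$ is no longer known, we cannot invoke Lemma~\ref{lem:quality_of_agg_known_num_byz}, so we use Lemma~\ref{lem:quality_of_agg_unknown_num_byz} instead. Because clipping is now run with $\delta = 0$ (equivalently $\tau_l = \infty$), good peers compute the exact average, so the aggregation error satisfies $\EE[\|\widehat g^k - \overline g^k\|^2\mid x^k] \le C\sigma^2\indicator_{k,v}$ — a full $\sigma^2$ on every iteration where at least one Byzantine deviates, rather than the $\widehat\delta_k\sigma^2$ of the known-$\widehat b_k$ case that shrinks with the fraction of attackers. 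This is the sole source of the weaker Byzantine term in \eqref{eq:BTARD_SGD_unknown_byz_non_cvx}.

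Concretely, I would start from $L$-smoothness of $f$ applied to the update $x^{k+1} = x^k - \gamma\widehat g^k$, take $\EE[\,\cdot\mid x^k]$, split $\widehat g^k = \overline g^k + (\widehat g^k - \overline g^k)$ using $\EE[\overline g^k\mid x^k] = \nabla f(x^k)$, apply Young's inequality to the cross term $-\gamma\langle\nabla f(x^k),\EE[\widehat g^k - \overline g^k\mid x^k]\rangle$, and then insert the second-moment bound \eqref{eq:BTARD_second_moment_unknown_num_byz} together with the bias bound $\|\EE[\widehat g^k - \overline g^k\mid x^k]\|^2 \le \EE[\|\widehat g^k - \overline g^k\|^2\mid x^k] \le C\sigma^2\indicator_{k,v}$ (Jensen). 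Using $\gamma\le\tfrac{1}{4L}$ this yields the one-step inequality
\begin{equation*}
\EE[f(x^{k+1})\mid x^k] \le f(x^k) - \tfrac{\gamma}{4}\|\nabla f(x^k)\|^2 + 2\gamma C\sigma^2\indicator_{k,v} + \frac{L\gamma^2\sigma^2}{n-2b-m},
\end{equation*}
which is exactly the one-step inequality from Theorem~\ref{thm:BTARD_SGD_known_byz_non_cvx} with $\indicator_{k,v}$ replacing $\widehat\delta_k$. Taking full expectation, summing over $k=0,\dots,K-1$, telescoping, and using $\overline x^K$ uniform over the iterates gives
\begin{equation*}
\EE[\|\nabla f(\overline x^K)\|^2] \le \frac{4\Delta_0}{\gamma K} + \frac{8C\sigma^2}{K}\EE\Big[\textstyle\sum_{k=0}^{K-1}\indicator_{k,v}\Big] + \frac{4L\gamma\sigma^2}{n-2b-m}.
\end{equation*}

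The one genuinely different step is bounding $\EE[\sum_k\indicator_{k,v}]$. Here I would use $\indicator_{k,v}\le\widehat b_k$ and reuse the detection-probability estimate from the proof of Theorem~\ref{thm:BTARD_SGD_known_byz_non_cvx}: any Byzantine deviating at step $k$ is caught at the next check with probability at least $m/n$, so each of the at most $b$ Byzantines deviates at most $\sim n/m$ times in expectation, whence $\EE[\sum_k\indicator_{k,v}] \le \EE[\sum_k\widehat b_k] \le bn/m$. In the known-$\widehat b_k$ case the per-iteration error carried the extra factor $1/(n_k-m)$, which turned this into a bound of order $bn/(m(n-2b-m))$ and hence the term $n\delta/m$; its absence here is precisely what produces the $n^2\delta/m = bn/m$ scaling. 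Substituting this bound, using $n-2b-m\ge(n-2b)/2$ and $b\le0.1(n-m)$ to absorb constants, and finally choosing $\gamma = \min\{\tfrac{1}{4L},\sqrt{\Delta_0 n/(L\sigma^2K)}\}$ gives $\EE[\|\nabla f(\overline x^K)\|^2] = \cO(L\Delta_0/K + \sqrt{L\Delta_0}\,\sigma/\sqrt{nK} + bn\sigma^2/(mK))$, from which \eqref{eq:BTARD_SGD_unknown_byz_non_cvx} follows by solving for $K$.

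Since most of this is routine bookkeeping inherited from Theorem~\ref{thm:BTARD_SGD_known_byz_non_cvx}, the only point that needs care is that Lemma~\ref{lem:quality_of_agg_unknown_num_byz} is conditional on \textsc{CheckAveraging} not being triggered, and Byzantines can deliberately trigger it. I would handle this exactly as in the discussion after Lemma~\ref{lem:BTARD_for_bad_peer_known_num_byz}: each forced trigger is itself a protocol violation caught with probability $\ge m/n$, so Byzantines add only $\cO(bn/m)$ extra communication/computation rounds in expectation, and when all workers follow the protocol \textsc{CheckAveraging} fires at most once per $\Theta(n-m)$ iterations — a negligible overhead that does not affect the iteration complexity. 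I do not anticipate any substantial obstacle beyond correctly citing these conditional events; the analysis is strictly parallel to the known-$\widehat b_k$ theorem.
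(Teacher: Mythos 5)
Your proposal is correct and mirrors the paper's own proof essentially step for step: the same one-step smoothness inequality with $\indicator_{k,v}$ replacing $\widehat{\delta}_k$ via Lemma~\ref{lem:quality_of_agg_unknown_num_byz}, the same bound $\EE[\sum_k \indicator_{k,v}] \le \nicefrac{bn}{m}$ from the detection probability $p_k \ge \nicefrac{m}{n}$, and the same stepsize choice yielding \eqref{eq:BTARD_SGD_unknown_byz_non_cvx}. Your closing remark on \textsc{CheckAveraging} matches how the paper treats it (in the lemmas and the surrounding discussion rather than in the theorem's proof itself), so no gap remains.
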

\begin{proof}
    The proof is almost identical to the proof of Theorem~\ref{thm:BTARD_SGD_known_byz_non_cvx}. Following the same steps and using \eqref{eq:BTARD_overall_peer_unknown_num_byz} and \eqref{eq:BTARD_second_moment_unknown_num_byz} instead of \eqref{eq:BTARD_overall_peer_known_num_byz} and \eqref{eq:BTARD_second_moment_known_num_byz} respectively we obtain the same sequence of inequalities up to the following change: instead of $\widehat{\delta}_k$ we should use $\indicator_{k,v}$. Therefore, we have
    \begin{eqnarray*}
        \frac{1}{K}\sum\limits_{k=0}^{K-1}\EE\left[\|\nabla f(x^k)\|^2\right] &\le& \frac{4(f(x^0)-f_*)}{\gamma K} + \frac{8C\sigma^2}{K} \EE\left[\sum\limits_{k=0}^{K-1}\indicator_{k,v}\right] + \frac{4L\gamma\sigma^2}{n-2b-m}.
    \end{eqnarray*}
     If a Byzantine peer deviates from the protocol at iteration $k$, it will be detected with some probability $p_k$ during the next iteration. One can lower bound this probability as
    \begin{equation*}
        p_k \ge m\cdot \frac{|\cG_k|}{n_k}\cdot \frac{1}{n_k} = \frac{m(1-\delta_k)}{n_k} \ge \frac{m}{n}.
    \end{equation*}
    That is, each individual Byzantine worker can violate the protocol no more than $\nicefrac{1}{p}$ times on average. However, even one Byzantine peer can create a shift of the order $\Delta_{\max}^k$ at each part of the resulting vector. Therefore, all Byzantine peers can violate the protocol no more than $\nicefrac{b}{p}$ times on average implying that
    \begin{eqnarray*}
        \frac{1}{K}\sum\limits_{k=0}^{K-1}\EE\left[\|\nabla f(x^k)\|^2\right] &\le& \frac{4(f(x^0)-f_*)}{\gamma K} + \frac{8Cnb\sigma^2}{Km} + \frac{4L\gamma\sigma^2}{n-2b-m}\\
        &\le& \frac{4(f(x^0)-f_*)}{\gamma K} + \frac{8Cnb\sigma^2}{Km} + \frac{8L\gamma\sigma^2}{n-2b}\\
        &\le& \frac{4(f(x^0)-f_*)}{\gamma K} + \frac{8Cnb\sigma^2}{Km} + \frac{80L\gamma\sigma^2}{7n}.
    \end{eqnarray*}
    Since  $\overline{x}^K$ is picked uniformly at random from $\{x^0,x^1,\ldots,x^{K-1}\}$ we have
    \begin{equation*}
        \EE\left[\|\nabla f(\overline{x}^K)\|^2\right] \le  \frac{4(f(x^0)-f_*)}{\gamma K} + \frac{8Cnb\sigma^2}{Km} + \frac{80L\gamma\sigma^2}{7n}.
    \end{equation*}
    Using the stepsize rule
    \begin{equation*}
        \gamma = \min\left\{\frac{1}{4L}, \sqrt{\frac{\Delta_0n}{L\sigma^2 K}}\right\}
    \end{equation*}
    we derive
    \begin{equation*}
        \EE\left[\|\nabla f(\overline{x}^K)\|^2\right] = \cO\left(\frac{L\Delta_0}{K} + \frac{\sqrt{L\Delta_0}\sigma}{\sqrt{n K}} + \frac{nb\sigma^2}{mK}\right)
    \end{equation*}
    meaning that after
    \begin{equation*}
        K = \cO\left(\frac{L\Delta_0}{\varepsilon^2} + \frac{L\Delta_0\sigma^2}{n\varepsilon^4} + \frac{nb\sigma^2 }{m\varepsilon^2}\right)
    \end{equation*}
    iterations \algname{BTARD-SGD} guarantees $\EE\left[\|\nabla f(\overline{x}^K)\|^2\right] \le \varepsilon^2$.
\end{proof}

As we notice in the main part of the paper, the third term of the obtained complexity result is significantly worse than in \eqref{eq:BTARD_SGD_known_byz_non_cvx}: it is proportional to $b$ instead of $\delta = \nicefrac{b}{n}$. However, \eqref{eq:BTARD_SGD_unknown_byz_non_cvx} is derived without assuming that $\widehat{b}_k$ is known for all workers at each iteration. Moreover, as in \eqref{eq:BTARD_SGD_known_byz_non_cvx}, the third term in \eqref{eq:BTARD_SGD_unknown_byz_non_cvx} has better dependence on $\varepsilon$ than the second term implying that for small enough $\varepsilon$ the rate of \algname{BTARD-SGD} in the presence of bad workers without assuming that $\widehat{b}_k$ is known at each iteration is asymptotically the same as for \algname{SGD} without Byzantine peers\footnote{This is true for convex and strongly convex cases as well.}.

\subsubsection{Convex case}\label{appendix:convex_analysis_BTARD_SGD}
In this section, we provide the complete statements and the full proofs of the convergence results for \algname{BTARD-SGD} when the objective function $f$ is smooth and convex. We start with the case when the number of attacking Byzantine workers is known at each iteration.
\begin{theorem}\label{thm:BTARD_SGD_known_byz_cvx}
    Let As.~\ref{as:bounded_var}~and~\ref{as:quadratically_bounded_tails} hold, $Q = \R^d$, $f$ be $L$-smooth (see Def.~\ref{def:L_smoothness}), convex, and $x^*$ be some optimum of $f$. Moreover, assume that $b \le 0.1(n-m)$, $m \le \nicefrac{(n-2b)}{2}$, and the exact number of attacking Byzantine peers is known to all good peers at each iteration. Next, assume that 
    \begin{equation}
        \gamma = \min\left\{\frac{1}{4L}, \sqrt{\frac{7nR_0^2}{120\sigma^2 K}}, \sqrt{\frac{m^2 R_0^2}{1440 C\sigma^2 n^2\delta}}\right\},\quad \Delta_{\max}^k = \frac{(1 + \sqrt{3})\sqrt{2}\sigma}{\sqrt{n_k-m}}, \label{eq:choice_of_parameters_BTARD_SGD_cvx}
    \end{equation}
    where $R_0 \ge \|x^0 - x^*\|$ and $\Delta_{\max}^k$ is the parameter for verification $3$ at iteration $k$ of \algname{BTARD-SGD}. Then, we have $\EE[f(\overline{x}^{K}) -f(x^*)]\le \varepsilon$ after $K$ iterations of \algname{BTARD-SGD}, where
    \begin{equation}
        K = \cO\left(\frac{LR_0^2}{\varepsilon} + \frac{\sigma^2 R_0^2}{n\varepsilon^2} + \frac{n\sqrt{\delta}\sigma R_0 }{m\varepsilon}\right) \label{eq:BTARD_SGD_known_byz_cvx}
    \end{equation}
    and $\overline{x}^K = \frac{1}{K}\sum_{k=0}^{K-1}$.
\end{theorem}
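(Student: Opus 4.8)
\textbf{Proof plan for Theorem~\ref{thm:BTARD_SGD_known_byz_cvx}.}

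The plan is to mirror the standard one-step analysis of projected SGD, treating \algname{BTARD-SGD} as parallel SGD with a bounded bias term. First I would start from the non-expansiveness of the Euclidean projection onto $Q = \R^d$ (which here is trivial, but keeping it in view makes the argument portable): $\|x^{k+1} - x^*\|^2 = \|x^k - \gamma \widehat g^k - x^*\|^2$. Expanding and taking the conditional expectation $\EE[\cdot\mid x^k]$, I would use $\EE[\overline g^k \mid x^k] = \nabla f(x^k)$ and write $\widehat g^k = \overline g^k + (\widehat g^k - \overline g^k)$, so the cross term splits into $-2\gamma\langle \nabla f(x^k), x^k - x^*\rangle$ plus an error term $-2\gamma\langle \EE[\widehat g^k - \overline g^k\mid x^k], x^k - x^*\rangle$, which I bound via Cauchy--Schwarz and Young's inequality by something like $\tfrac{\gamma}{2L}\EE[\|\widehat g^k - \overline g^k\|^2\mid x^k] + 2\gamma L\|x^k - x^*\|^2$ — wait, that reintroduces an unwanted $\|x^k-x^*\|^2$; instead I would keep $\langle \nabla f(x^k), x^k-x^*\rangle \ge f(x^k) - f(x^*)$ by convexity and absorb the bias using $\langle a, b\rangle \le \tfrac{1}{2\rho}\|a\|^2 + \tfrac{\rho}{2}\|b\|^2$ with $\rho$ chosen so that the $\|x^k-x^*\|^2$ coefficient is controlled — or, more cleanly, bound the bias against $f(x^k)-f(x^*)$ using smoothness via $\|\nabla f(x^k)\|^2 \le 2L(f(x^k)-f(x^*))$ after noting $\langle \nabla f(x^k), x^k - x^*\rangle$ can alternatively be lower-bounded. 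The cleanest route: use $\langle \EE[\widehat g^k - \overline g^k], x^k - x^* \rangle \le \gamma^{-1}\cdot\tfrac{1}{2}\EE\|\widehat g^k - \overline g^k\|^2 \cdot \gamma + \ldots$; I will tune constants so that, after also invoking \eqref{eq:L_smoothness_cor} to handle $\EE\|\widehat g^k\|^2$ via \eqref{eq:BTARD_second_moment_known_num_byz} and using $\gamma \le \tfrac{1}{4L}$, the recursion becomes
\begin{equation*}
    \EE[\|x^{k+1}-x^*\|^2] \le \EE[\|x^k - x^*\|^2] - \gamma\,\EE[f(x^k)-f(x^*)] + \gamma^2\tfrac{2\sigma^2}{n-2b-m} + (\text{terms}\propto \gamma \widehat\delta_k\sigma^2 + \gamma^2 C\widehat\delta_k\sigma^2).
\end{equation*}

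Next I would telescope from $k=0$ to $K-1$, divide by $\gamma K$, and use Jensen's inequality with $\overline x^K = \tfrac1K\sum_{k=0}^{K-1} x^k$ to get $\EE[f(\overline x^K) - f(x^*)] \le \tfrac{R_0^2}{\gamma K} + \tfrac{2\gamma\sigma^2}{n-2b-m} + \tfrac{C'\sigma^2}{K}\EE[\sum_k \widehat\delta_k]$ plus a lower-order $\gamma$-weighted copy of the last sum. The crucial combinatorial input, exactly as in the non-convex proof (Theorem~\ref{thm:BTARD_SGD_known_byz_non_cvx}), is that a Byzantine peer that deviates at iteration $k$ is caught at the next iteration with probability $p_k \ge m/n$, hence $\EE[\sum_{k=0}^{K-1}\widehat b_k] \le bn/m$, so $\EE[\sum_k \widehat\delta_k] = \EE[\sum_k \widehat b_k/(n_k-m)] \le \tfrac{1}{n-2b-m}\cdot\tfrac{bn}{m} = \cO(n\delta/m)$ after using $m \le (n-2b)/2$ and $b \le 0.1(n-m)$ to control the denominator. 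That gives $\EE[f(\overline x^K)-f(x^*)] = \cO\!\left(\tfrac{R_0^2}{\gamma K} + \tfrac{\gamma\sigma^2}{n} + \tfrac{C\sigma^2 n\delta}{mK}\right)$.

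Finally I would plug in the stepsize $\gamma = \min\{\tfrac{1}{4L}, \sqrt{7nR_0^2/(120\sigma^2 K)}, \sqrt{m^2R_0^2/(1440C\sigma^2 n^2\delta)}\}$. The first term of the min makes $R_0^2/(\gamma K)$ contribute $\cO(LR_0^2/K)$; the second balances $R_0^2/(\gamma K)$ against $\gamma\sigma^2/n$ to yield $\cO(\sigma R_0/\sqrt{nK})$; and the third is tailored so that the $n\sqrt\delta$ term appears — specifically, with $\gamma \le \sqrt{m^2R_0^2/(1440C\sigma^2 n^2\delta)}$ we get $R_0^2/(\gamma K) \ge \sqrt{1440 C}\,\sigma R_0 n\sqrt\delta/(mK)\cdot K = \cO(\sigma R_0 n\sqrt\delta/m)$ contribution — note that the raw third term $\tfrac{C\sigma^2 n\delta}{mK}$ is dominated by this once $K$ is in the stated regime. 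Solving $\varepsilon = \cO(LR_0^2/K + \sigma R_0/\sqrt{nK} + \sigma R_0 n\sqrt\delta/(mK))$ for $K$ gives the three additive terms of \eqref{eq:BTARD_SGD_known_byz_cvx}. The main obstacle — and the only genuinely delicate point — is the constant-chasing in the one-step inequality: one must absorb the bias $\langle \EE[\widehat g^k-\overline g^k], x^k-x^*\rangle$ without generating an uncontrolled $\|x^k-x^*\|^2$ coefficient (which, unlike in the strongly convex case, cannot be killed by a contraction factor), so the Young's-inequality split has to be done against $\gamma^{-1}\|\widehat g^k - \overline g^k\|^2$ rather than against the distance term, and one must verify that the resulting extra $\gamma\,\widehat\delta_k\sigma^2$-type contributions are genuinely lower-order relative to the $\gamma^2$ ones under $\gamma \le 1/(4L)$. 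The rest is the same bookkeeping as in Theorem~\ref{thm:BTARD_SGD_known_byz_non_cvx}, with $f(x^k)-f(x^*)$ playing the role that $\|\nabla f(x^k)\|^2$ played there.
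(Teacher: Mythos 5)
Your skeleton matches the paper's proof in most respects: the one-step expansion of $\|x^{k+1}-x^*\|^2$, convexity plus $L$-smoothness with $\gamma\le\nicefrac{1}{4L}$, the aggregation bounds \eqref{eq:BTARD_overall_peer_known_num_byz}--\eqref{eq:BTARD_second_moment_known_num_byz}, the detection probability $p_k\ge\nicefrac{m}{n}$ giving $\EE[\sum_k\widehat{b}_k]\le\nicefrac{nb}{m}$, and the telescoping/Jensen/stepsize bookkeeping are all exactly what the paper does. However, there is a genuine gap at precisely the point you yourself flag as delicate: how the bias term $-2\gamma\,\EE[\langle x^k-x^*,\widehat{g}^k-\overline{g}^k\rangle\mid x^k]$ is handled. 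Your proposed recursion claims this term can be absorbed into contributions of the form $\gamma\widehat{\delta}_k\sigma^2+\gamma^2C\widehat{\delta}_k\sigma^2$ by doing the Young split ``against $\gamma^{-1}\|\widehat{g}^k-\overline{g}^k\|^2$ rather than against the distance term.'' This cannot work as stated: any Young/Cauchy--Schwarz treatment of $\langle\text{bias},x^k-x^*\rangle$ necessarily produces either a $\|x^k-x^*\|$ factor or a $\|x^k-x^*\|^2$ term. With a balance parameter of order one, the leftover $\gamma\|x^k-x^*\|^2$ per step does not telescope; after summing and dividing by $\gamma K$ it contributes $\tfrac{1}{K}\sum_k\EE\|x^k-x^*\|^2$, a non-vanishing $\cO(R_0^2)$ that destroys convergence. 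With a small balance parameter $\alpha\sim\nicefrac{\varepsilon}{R_0^2}$, one can salvage convergence but the Byzantine term becomes $\cO\bigl(\nicefrac{n\delta\sigma^2R_0^2}{m\varepsilon^2}\bigr)$, which loses both the $\sqrt{\delta}$ dependence and the $\varepsilon^{-1}$ scaling of the claimed bound \eqref{eq:BTARD_SGD_known_byz_cvx}. In short, the bias treatment you borrowed from the non-convex proof (where the inner product is with $\nabla f(x^k)$ and can be absorbed into $\tfrac{\gamma}{2}\|\nabla f(x^k)\|^2$) does not transfer to the convex case.

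The missing idea in the paper's proof is an induction showing $\EE\|x^k-x^*\|^2\le 2R_0^2$ for all $k\le K$. The paper keeps the bias term via Cauchy--Schwarz as $2\gamma\sqrt{C}\sigma\|x^k-x^*\|\sqrt{\widehat{\delta}_k}$, sums, bounds $\EE[\|x^k-x^*\|\sqrt{\widehat{b}_k}]\le\sqrt{\EE[R_k^2]\,\EE[\widehat{b}_k]}$, and then proves by induction (inequalities \eqref{eq:technical_bound_cvx_known_BTARD_SGD_1}--\eqref{eq:technical_bound_cvx_known_BTARD_SGD_2}) that the accumulated right-hand side stays below $2R_0^2$; this is exactly what the third element of the stepsize, $\gamma\le\sqrt{\nicefrac{m^2R_0^2}{1440C\sigma^2n^2\delta}}$, is designed to guarantee, together with the bound $\sum_l\sqrt{\EE[\widehat{b}_l]}\lesssim\nicefrac{n\sqrt{b}}{m}$. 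Only with $\EE[R_k^2]\le 2R_0^2$ in hand does the bias contribution become $\cO\bigl(\nicefrac{\gamma n\sqrt{\delta}\sigma R_0}{m}\bigr)$, giving $\EE[f(\overline{x}^K)-f(x^*)]\le\nicefrac{2R_0^2}{\gamma K}$ and hence the $\nicefrac{n\sqrt{\delta}\sigma R_0}{m\varepsilon}$ term. Relatedly, your reading of the third stepsize constraint (that it directly ``makes the $n\sqrt{\delta}$ term appear'' through $\nicefrac{R_0^2}{\gamma K}$) is not how it functions: its role is to make the induction close, i.e., to keep the iterates in a ball of radius $\sqrt{2}R_0$ in expectation despite the Byzantine shifts; the $n\sqrt{\delta}$ term then emerges from the Cauchy--Schwarz-plus-induction estimate of the bias, not from the stepsize cap alone.
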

\begin{proof}
    Lemma~\ref{lem:quality_of_agg_known_num_byz} implies
    \begin{eqnarray*}
        \EE\left[\|x^{k+1} - x^*\|^2\mid x^k\right] &=& \EE\left[\|x^{k} - x^* - \gamma \widehat{g}^k\|^2\mid x^k\right]\\
        &=& \|x^k-x^*\|^2 -2\gamma\EE\left[\langle x^k-x^*, \widehat{g}^k \rangle\mid x^k\right] + \gamma^2\EE\left[\|\widehat{g}^k\|^2\mid x^k\right]\\
        &\overset{\eqref{eq:BTARD_second_moment_known_num_byz}}{\le}& \|x^k - x^*\|^2 -2\gamma\langle x^k-x^*, \nabla f(x^k) \rangle + 2\gamma^2\|\nabla f(x^k)\|^2\\
        &&\quad - 2\gamma\EE\left[\langle x^k-x^*, \widehat{g}^k - \overline{g}^k \rangle\mid x^k\right] + 2\gamma^2 C\widehat{\delta}_k\sigma^2 + \frac{2\gamma^2\sigma^2}{n-2b-m}.
    \end{eqnarray*}
    Next, we use convexity (see \eqref{eq:mu_strong_convexity} with $\mu = 0$) and $L$-smoothness of $f$:
    \begin{eqnarray*}
        \EE\left[\|x^{k+1} - x^*\|^2\mid x^k\right] &\overset{\eqref{eq:L_smoothness_cor_2},\eqref{eq:mu_strong_convexity}}{\le}& \|x^k - x^*\|^2 -2\gamma\left(1 - 2L\gamma\right)\left(f(x^k) - f(x^*)\right)\\
        &&\quad - 2\gamma\EE\left[\langle x^k-x^*, \widehat{g}^k - \overline{g}^k \rangle\mid x^k\right] + 2\gamma^2 C\sigma^2\frac{\widehat b_k}{n_k-m} + \frac{2\gamma^2\sigma^2}{n-2b-m}.
    \end{eqnarray*}
    To estimate the inner product in the right-hand side we apply Cauchy-Schwarz inequality:
    \begin{eqnarray*}
        - 2\gamma\EE\left[\langle x^k-x^*, \widehat{g}^k - \overline{g}^k \rangle\mid x^k\right] &\le& 2\gamma\|x^k - x^*\|\EE\left[\|\widehat{g}^k - \overline{g}^k\|\mid x^k\right]\\
        &\le& 2\gamma\|x^k - x^*\| \sqrt{\EE\left[\|\widehat{g}^k - \overline{g}^k\|^2\mid x^k\right]}\\
        &\overset{\eqref{eq:BTARD_overall_peer_known_num_byz}}{\le}& 2\gamma\sqrt{C}\sigma \|x^k - x^*\|\sqrt{\widehat{\delta}_k} \le \frac{2\gamma\sqrt{C}\sigma}{\sqrt{n_k-m}} \|x^k - x^*\|\sqrt{\widehat{b}_k}\\
        &\le& \frac{2\gamma\sqrt{C}\sigma}{\sqrt{n-2b-m}} \|x^k - x^*\|\sqrt{\widehat{b}_k}.
    \end{eqnarray*}
    Putting all together and using $b \le 0.1(n-m)$, $m \le \nicefrac{(n-2b)}{2}$, $\gamma \le \nicefrac{1}{4L}$, $n_k-m \ge n-2b-m$, we obtain
    \begin{eqnarray*}
        \EE\left[\|x^{k+1} - x^*\|^2\mid x^k\right] &\le& \|x^k - x^*\|^2 -\gamma\left(f(x^k) - f(x^*)\right)\\
        &&\quad + \frac{4\gamma\sqrt{5C}\sigma}{\sqrt{n}} \|x^k - x^*\|\sqrt{\widehat{b}_k} + \frac{40\gamma^2 C\sigma^2}{7n}\widehat b_k + \frac{40\gamma^2\sigma^2}{7n}.
    \end{eqnarray*}
    Taking the full expectation from the both sides of the above inequality and summing up the results for $k = 0,1,\ldots,K-1$ we derive
    \begin{eqnarray*}
        \frac{\gamma}{K}\sum\limits_{k=0}^{K-1}\EE[f(x^k)-f(x^*)] &\le& \frac{1}{ K}\sum\limits_{k=0}^{K-1}\left(\EE\left[\|x^k - x^*\|^2\right] - \EE\left[\|x^{k+1} - x^*\|^2\right]\right) + \frac{40\gamma^2\sigma^2}{7n}\\
        &&\quad +\frac{4\gamma\sqrt{5C}\sigma}{\sqrt{n}K} \sum\limits_{k=0}^{K-1}\EE\left[\|x^k - x^*\|\sqrt{\widehat{b}_k}\right] + \frac{40\gamma^2 C\sigma^2}{7nK}\sum\limits_{k=0}^{K-1}\EE[\widehat b_k]\\
        &\le& \frac{\|x^0-x^*\|^2 - \EE[\|x^{K}-x^*\|^2]}{K} + \frac{40\gamma^2\sigma^2}{7n}\\
        &&\quad +\frac{4\gamma\sqrt{5C}\sigma}{\sqrt{n}K} \sum\limits_{k=0}^{K-1}\sqrt{\EE\left[\|x^k - x^*\|^2\right]\EE[\widehat{b}_k]} + \frac{40\gamma^2 C\sigma^2}{7nK}\sum\limits_{k=0}^{K-1}\EE[\widehat b_k].
    \end{eqnarray*}
    From Jensen's inequality we have $f(\overline{x}^K) \le \frac{1}{K}\sum_{k=0}^{K-1}f(x^k)$, where $\overline{x}^K = \frac{1}{K}\sum_{k=0}^{K-1} x^k$. Using this and new notation $R_k = \|x^k - x^*\|$, $k> 0$, $R_0 \ge \|x^0 - x^*\|$ we get
    \begin{eqnarray}
        0 \le \gamma\EE\left[f(\overline{x}^K) - f(x^*)\right] &\le& \frac{R_0^2 - \EE[R_K^2]}{K} + \frac{40\gamma^2\sigma^2}{7n}\notag\\
        &&\quad +\frac{4\gamma\sqrt{5C}\sigma}{\sqrt{n}K} \sum\limits_{k=0}^{K-1}\sqrt{\EE\left[R_k^2\right]\EE[\widehat{b}_k]} + \frac{40\gamma^2 C\sigma^2}{7nK}\sum\limits_{k=0}^{K-1}\EE[\widehat b_k]\label{eq:technical_bound_cvx_known_BTARD_SGD_0}
    \end{eqnarray}
    implying (after changing the indices) that
    \begin{equation}
        \EE[R_k^2] \le R_0^2 + \frac{40\gamma^2\sigma^2 k}{7n} +\frac{4\gamma\sqrt{5C}\sigma}{\sqrt{n}} \sum\limits_{l=0}^{k-1}\sqrt{\EE\left[R_l^2\right]\EE[\widehat{b}_l]} + \frac{40\gamma^2 C\sigma^2}{7n}\sum\limits_{l=0}^{k-1}\EE[\widehat b_l] \label{eq:technical_bound_cvx_known_BTARD_SGD_1}
    \end{equation}
    holds for all $k\ge 0$. In the remaining part of the proof we derive by induction that \begin{equation}
        R_0^2 + \frac{40\gamma^2\sigma^2 k}{7n} +\frac{4\gamma\sqrt{5C}\sigma}{\sqrt{n}} \sum\limits_{l=0}^{k-1}\sqrt{\EE\left[R_l^2\right]\EE[\widehat{b}_l]} + \frac{40\gamma^2 C\sigma^2}{7n}\sum\limits_{l=0}^{k-1}\EE[\widehat b_l] \le 2R_0^2 \label{eq:technical_bound_cvx_known_BTARD_SGD_2}
    \end{equation}
    for all $k=0,\ldots,K$. For $k=0$ this inequality trivially holds. Next, assume that it holds for all $k=0,1,\ldots, T-1$, $T \le K-1$. Let us show that it holds for $k = T$ as well. From \eqref{eq:technical_bound_cvx_known_BTARD_SGD_1} and \eqref{eq:technical_bound_cvx_known_BTARD_SGD_2} we have that $\EE[R_k^2]\le 2R_0^2$ for all $k=0,1,\ldots, T-1$. Therefore,
    \begin{eqnarray*}
        \EE[R_T^2] &\le& R_0^2 + \frac{40\gamma^2\sigma^2 T}{7n} +\frac{4\gamma\sqrt{5C}\sigma}{\sqrt{n}} \sum\limits_{l=0}^{T-1}\sqrt{\EE\left[R_l^2\right]\EE[\widehat{b}_l]} + \frac{40\gamma^2 C\sigma^2}{7n}\sum\limits_{l=0}^{T-1}\EE[\widehat b_l]\\
        &\le& R_0^2 + \frac{40\gamma^2\sigma^2 T}{7n} +\frac{4\gamma\sqrt{10C}\sigma R_0}{\sqrt{n}} \sum\limits_{l=0}^{T-1}\sqrt{\EE[\widehat{b}_l]} + \frac{40\gamma^2 C\sigma^2}{7n}\sum\limits_{l=0}^{T-1}\EE[\widehat b_l].
    \end{eqnarray*}
    If a Byzantine peer deviates from the protocol at iteration $k$, it will be detected with some probability $p_k$ during the next iteration. One can lower bound this probability as
    \begin{equation*}
        p_k \ge m\cdot \frac{|\cG_k|}{n_k}\cdot \frac{1}{n_k} = \frac{m(1-\delta_k)}{n_k} \ge \frac{m}{n}.
    \end{equation*}
    Therefore, each individual Byzantine worker can violate the protocol no more than $\nicefrac{1}{p}$ times on average implying that
    \begin{eqnarray*}
        \EE[R_T^2] &\le& R_0^2 + \frac{40\gamma^2\sigma^2 T}{7n} +\frac{4n\gamma\sqrt{10Cb}\sigma R_0}{m\sqrt{n}} + \frac{40\gamma^2 C\sigma^2nb}{7nm}\\
        &=& R_0^2 + \frac{40\gamma^2\sigma^2 T}{7n} +\frac{4n\gamma\sqrt{10C\delta}\sigma R_0}{m} + \frac{40\gamma^2 C\sigma^2n\delta}{7m}.
    \end{eqnarray*}
    Taking
    \begin{equation*}
        \gamma = \min\left\{\frac{1}{4L},\sqrt{\frac{7nR_0^2}{120\sigma^2 K}}, \sqrt{\frac{m^2 R_0^2}{1440 C\sigma^2 n^2\delta}}\right\}
    \end{equation*}
    we ensure that
    \begin{equation*}
        \frac{40\gamma^2\sigma^2 T}{7n} +\frac{4n\gamma\sqrt{10C\delta}\sigma R_0}{m} + \frac{40\gamma^2 C\sigma^2n\delta}{7m} \le \frac{R_0^2}{3} + \frac{R_0^2}{3} + \frac{R_0^2}{3} = R_0^2, 
    \end{equation*}
    and, as a result, we get $\EE[R_T^2] \le 2R_0^2$. Therefore, \eqref{eq:technical_bound_cvx_known_BTARD_SGD_2} holds for all $k=0,1,\ldots,K$. Together with \eqref{eq:technical_bound_cvx_known_BTARD_SGD_0} it implies
    \begin{equation*}
        \EE\left[f(\overline{x}^K) - f(x^*)\right] \le \frac{2R_0^2}{\gamma K}.
    \end{equation*}
    Next, from our stepsize rule \eqref{eq:choice_of_parameters_BTARD_SGD_cvx} it follows that
    \begin{equation*}
        \EE\left[f(\overline{x}^K) - f(x^*)\right] = \cO\left(\frac{LR_0^2}{K} + \frac{\sigma R_0}{\sqrt{n K}} + \frac{n\sqrt{\delta}\sigma R_0}{mK}\right)
    \end{equation*}
    meaning that after
    \begin{equation*}
        K = \cO\left(\frac{LR_0^2}{\varepsilon} + \frac{\sigma^2 R_0^2}{n\varepsilon^2} + \frac{n\sqrt{\delta}\sigma R_0 }{m\varepsilon}\right)
    \end{equation*}
    iterations \algname{BTARD-SGD} guarantees $\EE[f(\overline{x}^{K}) -f(x^*)]\le \varepsilon$.
\end{proof}

In the convex case, similar observations hold as in the non-convex case. Next, we derive the result without assuming that $\widehat b^k$ is known to all peers at each iteration.

\begin{theorem}\label{thm:BTARD_SGD_unknown_byz_cvx}
    Let As.~\ref{as:bounded_var}~and~\ref{as:quadratically_bounded_tails} hold, $Q = \R^d$, $f$ be $L$-smooth (see Def.~\ref{def:L_smoothness}), convex, and $x^*$ be some optimum of $f$. Moreover, assume that $b \le 0.1(n-m)$, $m \le \nicefrac{(n-2b)}{2}$, and $\delta = 0$ is used to compute clipping parameter $\tau_l$ for \ref{eq:CenteredClip}. Next, assume that 
    \begin{equation}
        \gamma = \min\left\{\frac{1}{4L},\sqrt{\frac{7nR_0^2}{120\sigma^2 K}}, \sqrt{\frac{m^2 R_0^2}{72 C\sigma^2 n^2b^2}}\right\},\quad \Delta_{\max}^k = \frac{(1 + \sqrt{3})\sqrt{2}\sigma}{\sqrt{n_k-m}}, \label{eq:choice_of_parameters_BTARD_SGD_cvx_unknown}
    \end{equation}
    where $R_0 \ge \|x^0 - x^*\|$ and $\Delta_{\max}^k$ is the parameter for verification $3$ at iteration $k$ of \algname{BTARD-SGD}. Then, we have $\EE[f(\overline{x}^{K}) -f(x^*)]\le \varepsilon$ after $K$ iterations of \algname{BTARD-SGD}, where
    \begin{equation}
        K = \cO\left(\frac{LR_0^2}{\varepsilon} + \frac{\sigma^2 R_0^2}{n\varepsilon^2} + \frac{nb\sigma R_0 }{m\varepsilon}\right) \label{eq:BTARD_SGD_unknown_byz_cvx}
    \end{equation}
    and $\overline{x}^K = \frac{1}{K}\sum_{k=0}^{K-1}$.
\end{theorem}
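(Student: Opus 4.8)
The proof of Theorem~\ref{thm:BTARD_SGD_unknown_byz_cvx} follows the same template as Theorem~\ref{thm:BTARD_SGD_known_byz_cvx}, with the only structural change being that the ``known $\widehat{b}_k$'' bounds from Lemma~\ref{lem:quality_of_agg_known_num_byz} are replaced by the ``unknown $\widehat{b}_k$'' bounds from Lemma~\ref{lem:quality_of_agg_unknown_num_byz}, exactly as Theorem~\ref{thm:BTARD_SGD_unknown_byz_non_cvx} did in the non-convex case. The plan is as follows. First I would write the standard one-step descent identity for projected SGD with $Q=\R^d$: $\EE[\|x^{k+1}-x^*\|^2\mid x^k] = \|x^k-x^*\|^2 - 2\gamma\EE[\langle x^k-x^*,\widehat g^k\rangle\mid x^k] + \gamma^2\EE[\|\widehat g^k\|^2\mid x^k]$. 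Then I decompose $\widehat g^k = \overline g^k + (\widehat g^k - \overline g^k)$, use unbiasedness of $\overline g^k$ to extract the true gradient term $-2\gamma\langle x^k-x^*,\nabla f(x^k)\rangle$, bound $\|\nabla f(x^k)\|^2 \le 2L(f(x^k)-f(x^*))$ via \eqref{eq:L_smoothness_cor_2}, apply convexity \eqref{eq:mu_strong_convexity} with $\mu=0$, and plug in \eqref{eq:BTARD_second_moment_unknown_num_byz} for $\EE[\|\widehat g^k\|^2\mid x^k]$. The cross term $-2\gamma\EE[\langle x^k-x^*,\widehat g^k-\overline g^k\rangle\mid x^k]$ is handled by Cauchy--Schwarz together with \eqref{eq:BTARD_overall_peer_unknown_num_byz}, which gives a bound of the form $\tfrac{2\gamma\sqrt C\sigma}{\sqrt{n-2b-m}}\|x^k-x^*\|\,\sqrt{\indicator_{k,v}}$; note $\sqrt{\indicator_{k,v}}=\indicator_{k,v}$ since it is an indicator, so after using $\gamma\le\nicefrac1{4L}$, $b\le0.1(n-m)$, $m\le\nicefrac{(n-2b)}{2}$ we arrive at a recursion
\begin{equation*}
\EE[\|x^{k+1}-x^*\|^2\mid x^k] \le \|x^k-x^*\|^2 - \gamma(f(x^k)-f(x^*)) + c_1\gamma\sigma\|x^k-x^*\|\indicator_{k,v} + c_2\gamma^2\sigma^2\indicator_{k,v} + c_3\frac{\gamma^2\sigma^2}{n}
\end{equation*}
for absolute constants $c_1,c_2,c_3$.

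Next I would take full expectation, telescope over $k=0,\dots,K-1$, apply Jensen's inequality to get $f(\overline x^K)-f(x^*)\le\tfrac1K\sum_k(f(x^k)-f(x^*))$ with $\overline x^K=\tfrac1K\sum_k x^k$, and obtain an inequality of the shape
\begin{equation*}
\gamma\EE[f(\overline x^K)-f(x^*)] \le \frac{R_0^2-\EE[R_K^2]}{K} + \frac{c_3\gamma^2\sigma^2}{n} + \frac{c_1\gamma\sigma}{K}\sum_{k=0}^{K-1}\sqrt{\EE[R_k^2]\EE[\indicator_{k,v}]} + \frac{c_2\gamma^2\sigma^2}{K}\sum_{k=0}^{K-1}\EE[\indicator_{k,v}],
\end{equation*}
where I have used $\EE[R_k\indicator_{k,v}]\le\sqrt{\EE[R_k^2]\EE[\indicator_{k,v}]}$ by Cauchy--Schwarz. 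The key quantitative input is the bound on $\sum_k\EE[\indicator_{k,v}]$: by the detection-probability argument already used in the proof of Theorem~\ref{thm:BTARD_SGD_unknown_byz_non_cvx}, a Byzantine worker that deviates is caught with probability $p_k\ge\nicefrac mn$ on the next iteration, and since a \emph{single} deviating Byzantine can shift all blocks (so one counts $\indicator_{k,v}$, not a per-peer fraction), the \emph{total} number of violating iterations satisfies $\sum_k\EE[\indicator_{k,v}]\le \nicefrac bp = \nicefrac{nb}{m}$ in expectation. This is the one place where the analysis genuinely differs from Theorem~\ref{thm:BTARD_SGD_known_byz_cvx} (where one had $\sum_k\EE[\widehat b_k]\le\nicefrac{nb}{m}$ but the per-iteration shift was scaled by $\sqrt{\widehat\delta_k}$, producing a $\sqrt\delta$ rather than $b$).

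Then I would close the argument by the same induction on $k$ that appears in Theorem~\ref{thm:BTARD_SGD_known_byz_cvx}: assuming $\EE[R_l^2]\le 2R_0^2$ for $l<T$, substitute into the recursion, use $\sum_l\EE[\indicator_{l,v}]\le\nicefrac{nb}{m}$ and $\sum_l\sqrt{\EE[\indicator_{l,v}]}\le\nicefrac{nb}{m}$ (the latter since $\indicator_{l,v}\in\{0,1\}$ so $\sqrt{\EE[\indicator_{l,v}]}\le\EE[\indicator_{l,v}]\cdot$ -- more carefully, $\sum_l\sqrt{\EE[\indicator_{l,v}]}$ is maximized subject to $\sum_l\EE[\indicator_{l,v}]\le\nicefrac{nb}{m}$ by concentrating mass, but since each term is at most $1$ we can bound $\sqrt{p}\le p$ only when $p\ge 1$; the correct route, matching the paper, is to keep the $\sqrt{\EE[R_l^2]\EE[\indicator_{l,v}]}\le\sqrt{2R_0^2}\sqrt{\EE[\indicator_{l,v}]}$ and bound $\sum_l\sqrt{\EE[\indicator_{l,v}]}\le\sqrt{K\sum_l\EE[\indicator_{l,v}]}$ by Cauchy--Schwarz -- but this introduces a $\sqrt K$; instead, following the known-case proof one simply writes $\sum_l\sqrt{\EE[\widehat b_l]}$ and bounds it by the total count, giving the $nb/m$ behavior). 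The cleanest and paper-consistent choice is to mirror exactly the known-$\widehat b_k$ derivation with $\widehat b_l$ replaced by $\indicator_{l,v}$ and the clipping-induced $\tfrac1{n-m}$ factor dropped (since $\tau=\infty$ now), which turns the third term's $\sqrt{\delta}=\sqrt{b/n}$ into $b$ times $n/m$, i.e.\ $nb/m$. With the stated stepsize $\gamma = \min\{\tfrac1{4L}, \sqrt{\tfrac{7nR_0^2}{120\sigma^2 K}}, \sqrt{\tfrac{m^2R_0^2}{72C\sigma^2 n^2 b^2}}\}$ each of the three error contributions is at most $\tfrac{R_0^2}{3}$, so $\EE[R_T^2]\le 2R_0^2$ closes the induction, yielding $\EE[f(\overline x^K)-f(x^*)]\le\nicefrac{2R_0^2}{\gamma K}$, and unrolling the three-way minimum gives $K=\cO(\tfrac{LR_0^2}{\varepsilon}+\tfrac{\sigma^2R_0^2}{n\varepsilon^2}+\tfrac{nb\sigma R_0}{m\varepsilon})$ as claimed in \eqref{eq:BTARD_SGD_unknown_byz_cvx}.

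The main obstacle is getting the $\sum_k\EE[\indicator_{k,v}]$ bookkeeping exactly right: one must argue that even though a lone Byzantine can corrupt every coordinate block at a given step (so the per-step damage is not attenuated by $\delta$), the \emph{number} of steps on which any Byzantine can deviate is still controlled by the $\nicefrac mn$ detection probability, giving the total count $\nicefrac{nb}{m}$ rather than $\nicefrac{n}{m}$; everything else is a routine transcription of the Theorem~\ref{thm:BTARD_SGD_known_byz_cvx} proof with the substitutions above, and the induction is completely analogous.
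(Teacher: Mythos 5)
Your proposal is correct and follows essentially the same route as the paper: it replaces the known-$\widehat b_k$ aggregation bounds by \eqref{eq:BTARD_overall_peer_unknown_num_byz}--\eqref{eq:BTARD_second_moment_unknown_num_byz} (i.e.\ substitutes $\indicator_{k,v}$ for $\widehat\delta_k$), runs the same descent/Cauchy--Schwarz/Jensen telescoping and the same induction $\EE[R_k^2]\le 2R_0^2$, and controls the number of corrupted iterations via the detection probability $p_k\ge\nicefrac{m}{n}$, yielding $\sum_k\EE[\indicator_{k,v}]\le\nicefrac{nb}{m}$ and hence the $\nicefrac{nb\sigma R_0}{m\varepsilon}$ term. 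The step you hesitate over is handled in the paper exactly as you finally settle it: the term $\sum_{l}\sqrt{\EE[R_l^2]\,\EE[\indicator_{l,v}]}$ is split as $\sqrt{2}R_0\sum_l\sqrt{\EE[\indicator_{l,v}]}$ and the latter sum is bounded by the same total count $\nicefrac{nb}{m}$, with no $\sqrt{K}$ factor introduced.
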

\begin{proof}
    The proof is almost identical to the proof of Theorem~\ref{thm:BTARD_SGD_known_byz_cvx}. Following the same steps and using \eqref{eq:BTARD_overall_peer_unknown_num_byz} and \eqref{eq:BTARD_second_moment_unknown_num_byz} instead of \eqref{eq:BTARD_overall_peer_known_num_byz} and \eqref{eq:BTARD_second_moment_known_num_byz} respectively we obtain the same sequence of inequalities up to the following change: instead of $\widehat{\delta}_k$ we should use $\indicator_{k,v}$. Therefore, we have
    \begin{eqnarray*}
        \EE\left[\|x^{k+1} - x^*\|^2\mid x^k\right] &\le& \|x^k - x^*\|^2 -2\gamma\left(1 - 2L\gamma\right)\left(f(x^k) - f(x^*)\right)\\
        &&\quad - 2\gamma\EE\left[\langle x^k-x^*, \widehat{g}^k - \overline{g}^k \rangle\mid x^k\right] + 2\gamma^2 C\sigma^2\indicator_{k,v} + \frac{2\gamma^2\sigma^2}{n-2b-m},
    \end{eqnarray*}
    \begin{eqnarray*}
        - 2\gamma\EE\left[\langle x^k-x^*, \widehat{g}^k - \overline{g}^k \rangle\mid x^k\right] &\le& 2\gamma\sqrt{C}\sigma \|x^k - x^*\|\indicator_{k,v},
    \end{eqnarray*}
    that result in
    \begin{eqnarray*}
        \EE\left[\|x^{k+1} - x^*\|^2\mid x^k\right] &\le& \|x^k - x^*\|^2 -\gamma\left(f(x^k) - f(x^*)\right)\\
        &&\quad + 2\gamma\sqrt{C}\sigma \|x^k - x^*\|\indicator_{k,v} + 2\gamma^2 C\sigma^2\indicator_{k,v} + \frac{40\gamma^2\sigma^2}{7n}.
    \end{eqnarray*}
    Taking the full expectation from the both sides of the above inequality and summing up the results for $k = 0,1,\ldots,K-1$ we derive
    \begin{eqnarray*}
        \frac{\gamma}{K}\sum\limits_{k=0}^{K-1}\EE[f(x^k)-f(x^*)] &\le& \frac{1}{ K}\sum\limits_{k=0}^{K-1}\left(\EE\left[\|x^k - x^*\|^2\right] - \EE\left[\|x^{k+1} - x^*\|^2\right]\right) + \frac{40\gamma^2\sigma^2}{7n}\\
        &&\quad +\frac{2\gamma\sqrt{C}\sigma}{K} \sum\limits_{k=0}^{K-1}\EE\left[\|x^k - x^*\|\indicator_{k,v}\right] + \frac{2\gamma^2 C\sigma^2}{K}\sum\limits_{k=0}^{K-1}\EE[\indicator_{k,v}]\\
        &\le& \frac{\|x^0-x^*\|^2 - \EE[\|x^{K}-x^*\|^2]}{K} + \frac{40\gamma^2\sigma^2}{7n}\\
        &&\quad +\frac{2\gamma\sqrt{C}\sigma}{K} \sum\limits_{k=0}^{K-1}\sqrt{\EE\left[\|x^k - x^*\|^2\right]\EE[\indicator_{k,v}]} + \frac{2\gamma^2 C\sigma^2}{K}\sum\limits_{k=0}^{K-1}\EE[\indicator_{k,v}].
    \end{eqnarray*}
    From Jensen's inequality we have $f(\overline{x}^K) \le \frac{1}{K}\sum_{k=0}^{K-1}f(x^k)$, where $\overline{x}^K = \frac{1}{K}\sum_{k=0}^{K-1} x^k$. Using this and new notation $R_k = \|x^k - x^*\|$, $k\ge 0$ we get
    \begin{eqnarray}
        0 \le \gamma\EE\left[f(\overline{x}^K) - f(x^*)\right] &\le& \frac{R_0^2 - \EE[R_K^2]}{K} + \frac{40\gamma^2\sigma^2}{7n}\notag\\
        &&\quad +\frac{2\gamma\sqrt{C}\sigma}{K} \sum\limits_{k=0}^{K-1}\sqrt{\EE\left[R_k^2\right]\EE[\indicator_{k,v}]} + \frac{2\gamma^2 C\sigma^2}{K}\sum\limits_{k=0}^{K-1}\EE[\indicator_{k,v}]\label{eq:technical_bound_cvx_unknown_BTARD_SGD_0}
    \end{eqnarray}
    implying (after changing the indices) that
    \begin{equation}
        \EE[R_k^2] \le R_0^2 + \frac{40\gamma^2\sigma^2 k}{7n} +2\gamma\sqrt{C}\sigma \sum\limits_{l=0}^{k-1}\sqrt{\EE\left[R_l^2\right]\EE[\indicator_{l,v}]} + 2\gamma^2 C\sigma^2\sum\limits_{l=0}^{k-1}\EE[\indicator_{l,v}] \label{eq:technical_bound_cvx_unknown_BTARD_SGD_1}
    \end{equation}
    holds for all $k\ge 0$. In the remaining part of the proof we derive by induction that \begin{equation}
        R_0^2 + \frac{40\gamma^2\sigma^2 k}{7n} +2\gamma\sqrt{C}\sigma \sum\limits_{l=0}^{k-1}\sqrt{\EE\left[R_l^2\right]\EE[\indicator_{l,v}]} + 2\gamma^2 C\sigma^2\sum\limits_{l=0}^{k-1}\EE[\indicator_{l,v}] \le 2R_0^2 \label{eq:technical_bound_cvx_unknown_BTARD_SGD_2}
    \end{equation}
    for all $k=0,\ldots,K$. For $k=0$ this inequality trivially holds. Next, assume that it holds for all $k=0,1,\ldots, T-1$, $T \le K-1$. Let us show that it holds for $k = T$ as well. From \eqref{eq:technical_bound_cvx_unknown_BTARD_SGD_1} and \eqref{eq:technical_bound_cvx_unknown_BTARD_SGD_2} we have that $\EE[R_k^2]\le 2R_0^2$ for all $k=0,1,\ldots, T-1$. Therefore,
    \begin{eqnarray*}
        \EE[R_T^2] &\le& R_0^2 + \frac{40\gamma^2\sigma^2 k}{7n} +2\gamma\sqrt{C}\sigma \sum\limits_{l=0}^{T-1}\sqrt{\EE\left[R_l^2\right]\EE[\indicator_{l,v}]} + 2\gamma^2 C\sigma^2\sum\limits_{l=0}^{T-1}\EE[\indicator_{l,v}]\\
        &\le& R_0^2 + \frac{40\gamma^2\sigma^2 k}{7n} +2\gamma\sqrt{2C}\sigma R_0 \sum\limits_{l=0}^{T-1}\sqrt{\EE[\indicator_{l,v}]} + 2\gamma^2 C\sigma^2\sum\limits_{l=0}^{T-1}\EE[\indicator_{l,v}].
    \end{eqnarray*}
    If a Byzantine peer deviates from the protocol at iteration $k$, it will be detected with some probability $p_k$ during the next iteration. One can lower bound this probability as
    \begin{equation*}
        p_k \ge m\cdot \frac{|\cG_k|}{n_k}\cdot \frac{1}{n_k} = \frac{m(1-\delta_k)}{n_k} \ge \frac{m}{n}.
    \end{equation*}
     That is, each individual Byzantine worker can violate the protocol no more than $\nicefrac{1}{p}$ times on average. However, even one Byzantine peer can create a shift of the order $\Delta_{\max}^k$ at each part of the resulting vector. Therefore, all Byzantine peers can violate the protocol no more than $\nicefrac{b}{p}$ times on average implying that
    \begin{eqnarray*}
        \EE[R_T^2] &\le& R_0^2 + \frac{40\gamma^2\sigma^2 T}{7n} +\frac{2\gamma nb\sqrt{2C}\sigma R_0}{m} + \frac{2\gamma^2 nb C\sigma^2}{m}.
    \end{eqnarray*}
    Taking
    \begin{equation*}
        \gamma = \min\left\{\frac{1}{4L},\sqrt{\frac{7nR_0^2}{120\sigma^2 K}}, \sqrt{\frac{m^2 R_0^2}{72 C\sigma^2 n^2b^2}}\right\}
    \end{equation*}
    we ensure that
    \begin{equation*}
        \frac{40\gamma^2\sigma^2 T}{7n} +\frac{2\gamma nb\sqrt{2C}\sigma R_0}{m} + \frac{2\gamma^2 nb C\sigma^2}{m} \le \frac{R_0^2}{3} + \frac{R_0^2}{3} + \frac{R_0^2}{3} = R_0^2, 
    \end{equation*}
    and, as a result, we get $\EE[R_T^2] \le 2R_0^2$. Therefore, \eqref{eq:technical_bound_cvx_unknown_BTARD_SGD_2} holds for all $k=0,1,\ldots,K$. Together with \eqref{eq:technical_bound_cvx_unknown_BTARD_SGD_0} it implies
    \begin{equation*}
        \EE\left[f(\overline{x}^K) - f(x^*)\right] \le \frac{2R_0^2}{\gamma K}.
    \end{equation*}
    Next, from our stepsize rule \eqref{eq:choice_of_parameters_BTARD_SGD_cvx_unknown} it follows that
    \begin{equation*}
        \EE\left[f(\overline{x}^K) - f(x^*)\right] = \cO\left(\frac{LR_0^2}{K} + \frac{\sigma R_0}{\sqrt{n K}} + \frac{nb\sigma R_0}{mK}\right)
    \end{equation*}
    meaning that after
    \begin{equation*}
        K = \cO\left(\frac{LR_0^2}{\varepsilon} + \frac{\sigma^2 R_0^2}{n\varepsilon^2} + \frac{nb\sigma R_0 }{m\varepsilon}\right)
    \end{equation*}
    iterations \algname{BTARD-SGD} guarantees $\EE[f(\overline{x}^{K}) -f(x^*)]\le \varepsilon$.
\end{proof}

\subsubsection{Strongly convex case: Restarted-BTARD-SGD}\label{appendix:str_convex_analysis_BTARD_SGD}
In this section, we provide the complete statements and the full proofs of the convergence results for the restarted version of \algname{BTARD-SGD} (\algname{Restarted-BTARD-SGD}, Alg.~\ref{alg:restarted_BTARD_SGD}) when the objective function $f$ is smooth and strongly convex.
\begin{algorithm}[H]
   \caption{\algname{Restarted-BTARD-SGD}}
   \label{alg:restarted_BTARD_SGD}
\begin{algorithmic}[1]
   \Require $x^0$ -- starting point, $r$ -- number of restarts, $\{\gamma_t\}_{t=1}^r$ -- stepsizes for \algname{BTARD-SGD}, $\{K_t\}_{t=1}^r$ -- number of iterations for \algname{BTARD-SGD}, $\{s_{i,k,t}\}_{i,k,t=0,0,0}^{n,K-1,r}$ -- seeds for batches computations
   \State $\widehat{x}^{0} = x^0$
   \For{$t = 1,2,\ldots,r$}
    \State Run \algname{BTARD-SGD} (Alg.~\ref{alg:BTARD_SGD}) for $K_t$ iterations with stepsize $\gamma_t$, starting point $\widehat{x}^{t-1}$, and seeds for batches computations $\{s_{i,k,t}\}_{i,k=0,0}^{n,K-1}$. Define $\widehat{x}^{t}$ as $\widehat{x}^{t} = \frac{1}{K_t}\sum\limits_{k=0}^{K_t}x^{k,t}$, where $x^{0,t}, x^{1,t}, \ldots, x^{K_t,t}$ are the iterates produced by \algname{BTARD-SGD}.
   \EndFor
   \Ensure $\widehat x^r$
\end{algorithmic}
\end{algorithm}
 We start with the case when the number of attacking Byzantine workers is known at each iteration.

\begin{theorem}\label{thm:BTARD_SGD_known_byz_str_cvx}
    Let As.~\ref{as:bounded_var}~and~\ref{as:quadratically_bounded_tails} hold, $Q = \R^d$, $f$ be $L$-smooth (see Def.~\ref{def:L_smoothness}), $\mu$-strongly convex (see Def.~\ref{def:mu_strong_convexity}), and $x^*$ be some optimum of $f$. Moreover, assume that $b \le 0.1(n-m)$, $m \le \nicefrac{(n-2b)}{2}$, and the exact number of attacking Byzantine peers is known to all good peers at each iteration. Next, assume that 
    \begin{equation}
        \gamma_t = \min\left\{\frac{1}{4L}, \sqrt{\frac{7nR_0^2}{120\cdot 2^{t}\sigma^2 K_t}}, \sqrt{\frac{m^2 R_0^2}{1440\cdot 2^{t} C\sigma^2 n^2\delta}}\right\},\quad \Delta_{\max}^{k,t} = \frac{(1 + \sqrt{3})\sqrt{2}\sigma}{\sqrt{n_k^t-m}}, \label{eq:choice_of_parameters_BTARD_SGD_str_cvx}
    \end{equation}
    \begin{equation}
        K_t = \left\lceil\max\left\{\frac{16 L}{\mu}, \frac{32\sigma^2 2^t}{\mu^2 R_0^2}, \frac{48\sqrt{10C}n\sqrt{\delta}\sigma 2^{\frac{t}{2}}}{m\mu R_0}\right\} \right\rceil,\quad r = \left\lceil\log_2\frac{\mu R_0^2}{\varepsilon} \right\rceil - 1 \label{eq:choice_of_parameters_BTARD_SGD_str_cvx_2}
    \end{equation}
    where $R_0 \ge \|x^0 - x^*\|$, $\Delta_{\max}^{k,t}$ is the parameter for verification $3$ at iteration $k$ of \algname{BTARD-SGD} during the $t$-th restart, $n_k^t$ is the total number of workers at iteration $k$ of $t$-th restart. Then, we have $\EE[f(\widehat{x}^{r}) -f(x^*)]\le \varepsilon$ after $r$ restarts of \algname{BTARD-SGD} and the total number of executed iterations of \algname{BTARD-SGD} is
    \begin{equation}
        \sum\limits_{t=1}^rK_t = \cO\left(\frac{L}{\mu}\log\frac{\mu R_0^2}{\varepsilon} + \frac{\sigma^2 }{n\mu\varepsilon} + \frac{n\sqrt{\delta}\sigma }{m\sqrt{\mu\varepsilon}}\right). \label{eq:BTARD_SGD_known_byz_str_cvx}
    \end{equation}
\end{theorem}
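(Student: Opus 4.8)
The plan is to obtain Theorem~\ref{thm:BTARD_SGD_known_byz_str_cvx} as a corollary of the convex-case result (Theorem~\ref{thm:BTARD_SGD_known_byz_cvx}) by the standard restart technique, exploiting strong convexity only through the inequality $f(\widehat{x}) - f(x^*) \ge \frac{\mu}{2}\|\widehat{x} - x^*\|^2$ that follows from \eqref{eq:mu_strong_convexity}. First I would set up an induction over the restart index $t$ with the hypothesis $\EE[\|\widehat{x}^{t} - x^*\|^2] \le R_0^2 / 2^t$ (equivalently $\EE[\|\widehat{x}^t - x^*\|^2] \le R_t^2$ with $R_t^2 = R_0^2 2^{-t}$). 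The base case $t=0$ is immediate from $R_0 \ge \|x^0 - x^*\|$.

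For the inductive step, assume $\EE[\|\widehat{x}^{t-1} - x^*\|^2] \le R_{t-1}^2$. The $t$-th restart runs \algname{BTARD-SGD} for $K_t$ steps from $\widehat{x}^{t-1}$ with stepsize $\gamma_t$. Applying the bound $\EE[f(\overline{x}^K) - f(x^*)] \le 2R_0^2/(\gamma K)$ established inside the proof of Theorem~\ref{thm:BTARD_SGD_known_byz_cvx} — but with $R_0$ replaced by $R_{t-1}$ and with the stepsize $\gamma_t$ chosen exactly so that the three auxiliary terms are each at most $R_{t-1}^2/3$ (which is what the form of $\gamma_t$ in \eqref{eq:choice_of_parameters_BTARD_SGD_str_cvx} encodes, since the relevant quantities scale like $2^t$) — we get $\EE[f(\widehat{x}^t) - f(x^*)] \le 2R_{t-1}^2/(\gamma_t K_t)$. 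By $\mu$-strong convexity, $\EE[\|\widehat{x}^t - x^*\|^2] \le \frac{2}{\mu}\EE[f(\widehat{x}^t) - f(x^*)] \le \frac{4 R_{t-1}^2}{\mu \gamma_t K_t}$. It then suffices to check that the choice of $K_t$ in \eqref{eq:choice_of_parameters_BTARD_SGD_str_cvx_2} makes $\mu \gamma_t K_t \ge 8$, so that $\EE[\|\widehat{x}^t - x^*\|^2] \le R_{t-1}^2/2 = R_t^2$. Since $\gamma_t$ is the minimum of three terms, $K_t$ must dominate three reciprocal quantities; matching each of them to the corresponding term in $\gamma_t^{-1}$ gives precisely the $\max$ of three expressions appearing in the definition of $K_t$. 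This closes the induction, and after $r = \lceil \log_2(\mu R_0^2/\varepsilon)\rceil - 1$ restarts we have $\EE[\|\widehat{x}^r - x^*\|^2] \le R_0^2 2^{-r}$, hence $\EE[f(\widehat{x}^r) - f(x^*)] \le \frac{L}{2}\EE[\|\widehat{x}^r - x^*\|^2] \le \varepsilon$ (or one runs the restarts one step further and uses the function-value bound directly).

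Finally I would bound the total iteration count $\sum_{t=1}^r K_t$. Writing $K_t = \cO\!\left(\frac{L}{\mu} + \frac{\sigma^2 2^t}{\mu^2 R_0^2} + \frac{n\sqrt{\delta}\sigma 2^{t/2}}{m\mu R_0}\right)$, the first term contributes $\cO(rL/\mu) = \cO\!\left(\frac{L}{\mu}\log\frac{\mu R_0^2}{\varepsilon}\right)$; the geometric sums $\sum_t 2^t$ and $\sum_t 2^{t/2}$ are dominated by their last terms, which are $\Theta(2^r) = \Theta(\mu R_0^2/\varepsilon)$ and $\Theta(2^{r/2}) = \Theta(\sqrt{\mu R_0^2/\varepsilon})$ respectively, giving $\cO\!\left(\frac{\sigma^2}{n\mu\varepsilon}\right)$ (after accounting for the $n$ hidden in the constant $C$-free form of the second term) and $\cO\!\left(\frac{n\sqrt{\delta}\sigma}{m\sqrt{\mu\varepsilon}}\right)$; summing yields \eqref{eq:BTARD_SGD_known_byz_str_cvx}. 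The main obstacle I anticipate is purely bookkeeping: one must verify that the three-way $\min$ defining $\gamma_t$ and the three-way $\max$ defining $K_t$ are consistently calibrated so that \emph{simultaneously} (i) the three error terms in the convex bound are each $\le R_{t-1}^2/3$ and (ii) $\mu\gamma_t K_t \ge 8$, across all $t$; there is also a minor subtlety that the constants $b$, $n$, $m$ only decrease over restarts (peers may be banned), so $\delta$ can only decrease and all the per-restart bounds remain valid with the original $\delta$, which should be noted but causes no real difficulty.
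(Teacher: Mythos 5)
Your proposal follows essentially the same route as the paper: restart the convex-case Theorem~\ref{thm:BTARD_SGD_known_byz_cvx}, use $\mu$-strong convexity to convert the per-restart function-value bound $\EE[f(\widehat{x}^t)-f(x^*)]\le 2R_{t-1}^2/(\gamma_t K_t)\le \mu R_{t-1}^2/4$ into the halving $\EE[\|\widehat{x}^t-x^*\|^2]\le R_{t-1}^2/2$, and sum the geometric series in $K_t$ so that the last restart dominates the $\varepsilon$-dependent terms. One correction to your concluding step: the $L$-smoothness conversion $\EE[f(\widehat{x}^r)-f(x^*)]\le \frac{L}{2}\EE[\|\widehat{x}^r-x^*\|^2]$ only yields $\cO\bigl(\frac{L}{\mu}\varepsilon\bigr)$, which is too weak; the paper instead carries the function-value bound through the induction, so that at the last restart $\EE[f(\widehat{x}^r)-f(x^*)]\le \mu R_0^2/2^{r+1}\le\varepsilon$ directly — exactly the alternative you mention parenthetically, and no extra restart is needed for it.
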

\begin{proof}
    Theorem~\ref{thm:BTARD_SGD_known_byz_cvx} implies that \algname{BTARD-SGD} with 
    \begin{equation*}
        \gamma = \min\left\{\frac{1}{4L}, \sqrt{\frac{7nR_0^2}{120\sigma^2 K}}, \sqrt{\frac{m^2 R_0^2}{1440 C\sigma^2 n^2\delta}}\right\}
    \end{equation*}
    guarantees
    \begin{equation*}
        \EE\left[f(\overline{x}^K) - f(x^*)\right] \le \frac{2R_0^2}{\gamma K}
    \end{equation*}
    after $K$ iterations. Therefore, after the first restart we have
    \begin{equation}
        \EE[f(\widehat{x}^1) - f(x^*)] \le \frac{2R_0^2}{\gamma_1 K_1} \le \frac{\mu R_0^2}{4}. \notag
    \end{equation}
    From $\mu$-strong convexity of $f$ and $\nabla f(x^*) = 0$ we have
    \begin{equation*}
        \frac{\mu}{2}\|\widehat x^1 - x^*\|^2 \le f(\widehat x^1) - f(x^*) \Longrightarrow \EE[\|\widehat x^1 - x^*\|^2] \le \frac{R_0^2}{2}.
    \end{equation*}
    Next, assume that we have $\EE[f(\widehat{x}^{t}) - f(x^*)] \le \frac{\mu R_0^2}{2^{t+1}}$, $\EE[\|\widehat x^t - x^*\|^2] \le \frac{R_0^2}{2^t}$ for some $t \le r-1$. Then, Theorem~\ref{thm:BTARD_SGD_known_byz_cvx} implies that
    \begin{equation*}
        \EE[f(\widehat{x}^{t+1}) - f(x^*)\mid x^t] \le \frac{2\|\widehat{x}^t - x^*\|^2}{\gamma_t K_t}.
    \end{equation*}
    Taking the full expectation from the both sides of previous inequality we get
    \begin{equation*}
        \EE[f(\widehat{x}^{t+1}) - f(x^*)] \le \frac{2\EE[\|\widehat{x}^t - x^*\|^2]}{\gamma_t K_t} \le \frac{2R_0^2}{2^t\gamma_t K_t} \le \frac{\mu R_0^2}{2^{t+2}}.
    \end{equation*}
    From $\mu$-strong convexity of $f$ and $\nabla f(x^*) = 0$ we have
    \begin{equation*}
        \frac{\mu}{2}\|\widehat x^{t+1} - x^*\|^2 \le f(\widehat x^{t+1}) - f(x^*) \Longrightarrow \EE[\|\widehat x^{t+1} - x^*\|^2] \le \frac{R_0^2}{2^{t+1}}.
    \end{equation*}
    Therefore, by mathematical induction we have that for all $t=1,\ldots,r$
    \begin{equation*}
        \EE[f(\widehat{x}^t) - f(x^*)] \le \frac{\mu R_0^2}{2^{t+1}}, \quad \EE\left[\|\widehat x^t - x^*\|^2\right] \le \frac{R_0^2}{2^t}.
    \end{equation*}
    Then, after $r = \left\lceil\log_2\frac{\mu R_0^2}{\varepsilon} \right\rceil - 1$ restarts of \algname{BTARD-SGD} we have $\EE[f(\widehat{x}^r) - f(x^*)] \le \varepsilon$. The total number of iterations executed by \algname{BTARD-SGD} is
    \begin{eqnarray*}
        \sum\limits_{t=1}^r K_t &=& \cO\left(\sum\limits_{t=1}^r\max\left\{\frac{ L}{\mu}, \frac{\sigma^2 2^t}{\mu^2 R_0^2}, \frac{n\sqrt{\delta}\sigma 2^{\frac{t}{2}}}{m\mu R_0}\right\}\right)\\
        &=& \cO\left(\frac{L}{\mu}r + \frac{\sigma^2 2^r}{\mu^2 R_0^2} + \frac{n\sqrt{\delta}\sigma 2^{\frac{r}{2}}}{m\mu R_0}\right)\\
        &=& \cO\left(\frac{L}{\mu}\log\frac{\mu R_0^2}{\varepsilon} + \frac{\sigma^2}{\mu^2 R_0^2}\cdot \frac{\mu R_0^2}{\varepsilon} + \frac{n\sqrt{\delta}\sigma }{m\mu R_0}\cdot \sqrt{\frac{\mu R_0^2}{\varepsilon}}\right)\\
        &=& \cO\left(\frac{L}{\mu}\log\frac{\mu R_0^2}{\varepsilon} + \frac{\sigma^2 }{n\mu\varepsilon} + \frac{n\sqrt{\delta}\sigma }{m\sqrt{\mu\varepsilon}}\right).
    \end{eqnarray*}
\end{proof}

In the strongly convex case, similar observations hold as in the non-convex case. Next, we derive the result without assuming that $\widehat b^k$ is known to all peers at each iteration.

\begin{theorem}\label{thm:BTARD_SGD_unknown_byz_str_cvx}
    Let As.~\ref{as:bounded_var}~and~\ref{as:quadratically_bounded_tails} hold, $Q = \R^d$, $f$ be $L$-smooth (see Def.~\ref{def:L_smoothness}), $\mu$-strongly convex (see Def.~\ref{def:mu_strong_convexity}), and $x^*$ be some optimum of $f$. Moreover, assume that $b \le 0.1(n-m)$, $m \le \nicefrac{(n-2b)}{2}$, and $\delta = 0$ is used to compute clipping parameter $\tau_l$ for \ref{eq:CenteredClip}. Next, assume that 
    \begin{equation}
        \gamma_t = \min\left\{\frac{1}{4L}, \sqrt{\frac{7nR_0^2}{120\cdot 2^t\sigma^2 K_t}}, \sqrt{\frac{m^2 R_0^2}{72\cdot 2^t C\sigma^2 n^2b^2}}\right\},\quad \Delta_{\max}^{k,t} = \frac{(1 + \sqrt{3})\sqrt{2}\sigma}{\sqrt{n_k^t-m}}, \label{eq:choice_of_parameters_BTARD_SGD_str_cvx_unknown}
    \end{equation}
    \begin{equation}
        K_t = \left\lceil\max\left\{\frac{16 L}{\mu}, \frac{32\sigma^2 2^t}{\mu^2 R_0^2}, \frac{24\sqrt{2C}nb\sigma 2^{\frac{t}{2}}}{m\mu R_0}\right\} \right\rceil,\quad r = \left\lceil\log_2\frac{\mu R_0^2}{\varepsilon} \right\rceil - 1 \label{eq:choice_of_parameters_BTARD_SGD_str_cvx_unknown_2}
    \end{equation}
    where $R_0 \ge \|x^0 - x^*\|$, $\Delta_{\max}^{k,t}$ is the parameter for verification $3$ at iteration $k$ of \algname{BTARD-SGD} during the $t$-th restart, $n_k^t$ is the total number of workers at iteration $k$ of $t$-th restart. Then, we have $\EE[f(\widehat{x}^{r}) -f(x^*)]\le \varepsilon$ after $r$ restarts of \algname{BTARD-SGD} and the total number of executed iterations of \algname{BTARD-SGD} is
    \begin{equation}
        \sum\limits_{t=1}^rK_t = \cO\left(\frac{L}{\mu}\log\frac{\mu R_0^2}{\varepsilon} + \frac{\sigma^2 }{n\mu\varepsilon} + \frac{nb\sigma }{m\sqrt{\mu\varepsilon}}\right). \label{eq:BTARD_SGD_unknown_byz_str_cvx}
    \end{equation}
\end{theorem}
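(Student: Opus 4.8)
The plan is to mirror the restart argument used in the proof of Theorem~\ref{thm:BTARD_SGD_known_byz_str_cvx}, but substituting the convex-case guarantee from Theorem~\ref{thm:BTARD_SGD_unknown_byz_cvx} (the unknown-$\widehat b_k$ variant) in place of Theorem~\ref{thm:BTARD_SGD_known_byz_cvx}. The key structural fact is that Theorem~\ref{thm:BTARD_SGD_unknown_byz_cvx} gives, for a single run of \algname{BTARD-SGD} with stepsize $\gamma$ and starting radius $R_0 \ge \|x^0 - x^*\|$, the bound $\EE[f(\overline{x}^K) - f(x^*)] \le \nicefrac{2R_0^2}{(\gamma K)}$, where the hidden stepsize is the three-way minimum appearing in \eqref{eq:choice_of_parameters_BTARD_SGD_cvx_unknown}. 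So the restart analysis carries over verbatim once we rescale the radius geometrically.

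First I would check the base case: after the first restart, plugging the choice of $\gamma_1, K_1$ from \eqref{eq:choice_of_parameters_BTARD_SGD_str_cvx_unknown}--\eqref{eq:choice_of_parameters_BTARD_SGD_str_cvx_unknown_2} into $\EE[f(\widehat{x}^1) - f(x^*)] \le \nicefrac{2R_0^2}{(\gamma_1 K_1)}$ yields $\EE[f(\widehat{x}^1) - f(x^*)] \le \nicefrac{\mu R_0^2}{4}$; the three terms in $\nicefrac{1}{(\gamma_1 K_1)}$ each contribute at most $\nicefrac{\mu R_0^2}{4}$ divided by the three lower bounds in $K_1$, which is exactly what the constants $16, 32, 24\sqrt{2C}$ are tuned for. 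Then $\mu$-strong convexity together with $\nabla f(x^*) = 0$ gives $\tfrac{\mu}{2}\EE[\|\widehat{x}^1 - x^*\|^2] \le \EE[f(\widehat{x}^1) - f(x^*)]$, hence $\EE[\|\widehat{x}^1 - x^*\|^2] \le \nicefrac{R_0^2}{2}$.

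Next I would run the induction: assuming $\EE[f(\widehat{x}^t) - f(x^*)] \le \nicefrac{\mu R_0^2}{2^{t+1}}$ and $\EE[\|\widehat{x}^t - x^*\|^2] \le \nicefrac{R_0^2}{2^t}$, apply Theorem~\ref{thm:BTARD_SGD_unknown_byz_cvx} conditionally on $\widehat{x}^t$ (so $R_0$ is replaced by $\|\widehat{x}^t - x^*\|$) to get $\EE[f(\widehat{x}^{t+1}) - f(x^*) \mid \widehat{x}^t] \le \nicefrac{2\|\widehat{x}^t - x^*\|^2}{(\gamma_t K_t)}$; taking full expectation and using the inductive radius bound together with the $2^t$-scaling built into $\gamma_t$ and $K_t$ gives $\EE[f(\widehat{x}^{t+1}) - f(x^*)] \le \nicefrac{\mu R_0^2}{2^{t+2}}$, and strong convexity again upgrades this to $\EE[\|\widehat{x}^{t+1} - x^*\|^2] \le \nicefrac{R_0^2}{2^{t+1}}$. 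After $r = \lceil \log_2(\nicefrac{\mu R_0^2}{\varepsilon}) \rceil - 1$ restarts this yields $\EE[f(\widehat{x}^r) - f(x^*)] \le \varepsilon$. Finally I would bound the total iteration count $\sum_{t=1}^r K_t$ by the geometric sum: the $\nicefrac{L}{\mu}$ term contributes $\nicefrac{L}{\mu}\cdot r = \cO(\tfrac{L}{\mu}\log\tfrac{\mu R_0^2}{\varepsilon})$, the $2^t$ term sums to $\cO(2^r) = \cO(\nicefrac{\mu R_0^2}{\varepsilon})$ which after the $\nicefrac{\sigma^2}{\mu^2 R_0^2}$ prefactor becomes $\cO(\nicefrac{\sigma^2}{(n\mu\varepsilon)})$ (absorbing $n$ via $b \le \nicefrac{n}{10}$ and $m \le \nicefrac{n}{2}$ as in the known case), and the $2^{t/2}$ term sums to $\cO(2^{r/2}) = \cO(\sqrt{\nicefrac{\mu R_0^2}{\varepsilon}})$ giving $\cO(\nicefrac{nb\sigma}{(m\sqrt{\mu\varepsilon})})$, which is \eqref{eq:BTARD_SGD_unknown_byz_str_cvx}.

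The routine part is the constant-chasing in the base case and induction step; the only place demanding care is verifying that each of the three lower bounds imposed on $K_t$ in \eqref{eq:choice_of_parameters_BTARD_SGD_str_cvx_unknown_2} is exactly strong enough to kill the corresponding term of $\nicefrac{2\,(R_0^2/2^t)}{(\gamma_t K_t)}$ down to $\nicefrac{\mu R_0^2}{2^{t+2}}$ — in particular the cross term, which here scales with $b$ rather than $\sqrt{\delta}$, so the $\sqrt{2C}\,nb$ factor (instead of $\sqrt{10C}\,n\sqrt{\delta}$ in the known case) must propagate correctly through both the per-restart contraction and the final geometric sum. No new idea beyond the known-case proof is needed; the substitution of Theorem~\ref{thm:BTARD_SGD_unknown_byz_cvx} for Theorem~\ref{thm:BTARD_SGD_known_byz_cvx} is the whole content of the argument.
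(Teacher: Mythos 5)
Your proposal is correct and follows essentially the same route as the paper's proof: it invokes Theorem~\ref{thm:BTARD_SGD_unknown_byz_cvx} to get $\EE[f(\overline{x}^{K_t})-f(x^*)]\le \nicefrac{2R_0^2}{(\gamma_t K_t)}$ per stage, runs the same base-case/induction argument with the strong-convexity upgrade $\tfrac{\mu}{2}\EE[\|\widehat{x}^t-x^*\|^2]\le\EE[f(\widehat{x}^t)-f(x^*)]$, and bounds $\sum_t K_t$ by the same geometric sums. The constant-tuning and the replacement of the $\sqrt{\delta}$-scaling by the $b$-scaling in the third term are handled exactly as in the paper, so no gap.
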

\begin{proof}
    Theorem~\ref{thm:BTARD_SGD_unknown_byz_cvx} implies that \algname{BTARD-SGD} with 
    \begin{equation*}
        \gamma = \min\left\{\frac{1}{4L}, \sqrt{\frac{7nR_0^2}{120\sigma^2 K}}, \sqrt{\frac{m^2 R_0^2}{72 C\sigma^2 n^2b^2}}\right\}
    \end{equation*}
    guarantees
    \begin{equation*}
        \EE\left[f(\overline{x}^K) - f(x^*)\right] \le \frac{2R_0^2}{\gamma K}
    \end{equation*}
    after $K$ iterations. Therefore, after the first restart we have
    \begin{equation}
        \EE[f(\widehat{x}^1) - f(x^*)] \le \frac{2R_0^2}{\gamma_1 K_1} \le \frac{\mu R_0^2}{4}. \notag
    \end{equation}
    From $\mu$-strong convexity of $f$ and $\nabla f(x^*) = 0$ we have
    \begin{equation*}
        \frac{\mu}{2}\|\widehat x^1 - x^*\|^2 \le f(\widehat x^1) - f(x^*) \Longrightarrow \EE[\|\widehat x^1 - x^*\|^2] \le \frac{R_0^2}{2}.
    \end{equation*}
    Next, assume that we have $\EE[f(\widehat{x}^{t}) - f(x^*)] \le \frac{\mu R_0^2}{2^{t+1}}$, $\EE[\|\widehat x^t - x^*\|^2] \le \frac{R_0^2}{2^t}$ for some $t \le r-1$. Then, Theorem~\ref{thm:BTARD_SGD_unknown_byz_cvx} implies that
    \begin{equation*}
        \EE[f(\widehat{x}^{t+1}) - f(x^*)\mid x^t] \le \frac{2\|\widehat{x}^t - x^*\|^2}{\gamma_t K_t}.
    \end{equation*}
    Taking the full expectation from the both sides of previous inequality we get
    \begin{equation*}
        \EE[f(\widehat{x}^{t+1}) - f(x^*)] \le \frac{2\EE[\|\widehat{x}^t - x^*\|^2]}{\gamma_t K_t} \le \frac{2R_0^2}{2^t\gamma_t K_t} \le \frac{\mu R_0^2}{2^{t+2}}.
    \end{equation*}
    From $\mu$-strong convexity of $f$ and $\nabla f(x^*) = 0$ we have
    \begin{equation*}
        \frac{\mu}{2}\|\widehat x^{t+1} - x^*\|^2 \le f(\widehat x^{t+1}) - f(x^*) \Longrightarrow \EE[\|\widehat x^{t+1} - x^*\|^2] \le \frac{R_0^2}{2^{t+1}}.
    \end{equation*}
    Therefore, by mathematical induction we have that for all $t=1,\ldots,r$
    \begin{equation*}
        \EE[f(\widehat{x}^t) - f(x^*)] \le \frac{\mu R_0^2}{2^{t+1}}, \quad \EE\left[\|\widehat x^t - x^*\|^2\right] \le \frac{R_0^2}{2^t}.
    \end{equation*}
    Then, after $r = \left\lceil\log_2\frac{\mu R_0^2}{\varepsilon} \right\rceil - 1$ restarts of \algname{BTARD-SGD} we have $\EE[f(\widehat{x}^r) - f(x^*)] \le \varepsilon$. The total number of iterations executed by \algname{BTARD-SGD} is
    \begin{eqnarray*}
        \sum\limits_{t=1}^r K_t &=& \cO\left(\sum\limits_{t=1}^r\max\left\{\frac{ L}{\mu}, \frac{\sigma^2 2^t}{\mu^2 R_0^2}, \frac{nb\sigma 2^{\frac{t}{2}}}{m\mu R_0}\right\}\right)\\
        &=& \cO\left(\frac{L}{\mu}r + \frac{\sigma^2 2^r}{\mu^2 R_0^2} + \frac{nb\sigma 2^{\frac{r}{2}}}{m\mu R_0}\right)\\
        &=& \cO\left(\frac{L}{\mu}\log\frac{\mu R_0^2}{\varepsilon} + \frac{\sigma^2}{\mu^2 R_0^2}\cdot \frac{\mu R_0^2}{\varepsilon} + \frac{nb\sigma }{m\mu R_0}\cdot \sqrt{\frac{\mu R_0^2}{\varepsilon}}\right)\\
        &=& \cO\left(\frac{L}{\mu}\log\frac{\mu R_0^2}{\varepsilon} + \frac{\sigma^2 }{n\mu\varepsilon} + \frac{nb\sigma }{m\sqrt{\mu\varepsilon}}\right).
    \end{eqnarray*}
\end{proof}

\subsection{Convergence guarantees for BTARD-Clipped-SGD}\label{appendix:BTARD_Clipped_SGD_appendix}

The results for \algname{BTARD-SGD} and \algname{Restarted-BTARD-SGD} rely on As.~\ref{as:quadratically_bounded_tails} that the stochastic gradients have not too heavy tails, i.e., sub-quadratically decreasing tails. The main reason why it is needed in the analysis is to prevent too often extra computations because of \textbf{Verification 3} from \algname{BTARD} when all workers honestly follow the protocol. However, in many important NLP tasks such as BERT training \citep{zhang2020why}, the noise in the stochastic gradient has such a heavy noise that As.~\ref{as:quadratically_bounded_tails} becomes unnatural.

\begin{algorithm}[H]
   \caption{\algname{BTARD-Clipped-SGD}}
   \label{alg:BTARD_Clipped_SGD}
\begin{algorithmic}[1]
   \Require $x^0$ -- starting point, $\gamma$ -- stepsize, $K$ -- number of iterations, $\{s_{i,k}\}_{i,k=0,0}^{n,K-1}$ -- seeds for batches computations, $\{\lambda_k\}_{k=0}^{K-1}$ -- gradient clipping parameter
   \State $C_{0} = \text{Banned}_{-1} = \varnothing$
   \For{$k = 0,1,\ldots,K-1$}
    \State Worker $i$ computes $\widetilde{g}_i^k = \begin{cases}\min\left\{1, \frac{\lambda_k}{\|\nabla f(x^k,\xi_{i,k})\|}\right\}\nabla f(x^k,\xi_{i,k}),& \text{if } i\in \cG_k\setminus \cC_k,\\ *,& \text{if } i\in \cB_k\setminus \cC_k,\end{cases}$, where $\xi_{i,k}$ is generated via seed $s_{i,k}$ available to every worker
    \State
    \State $ \left( \widehat{g}^k, \text{public{\_}info}_k \right) = \text{\algname{BTARD}}(\widetilde{g}_{i_1^k}^k,g_{i_1^k}^k,\ldots,\widetilde{g}_{i_{a_k}^k}^k)$, where $\{i_1^k,\ldots, i_{a_k}^k\} = (\cG_k\cup \cB_k)\setminus \cC_k$
    \State
    \State Choose $2m$ workers $c_1^{k+1},\ldots,c_m^{k+1}, u_1^{k+1},\ldots, u_m^{k+1}$ uniformly at random without replacement, $\cC_{k+1} = \{c_1^{k+1},\ldots,c_m^{k+1}\}$, $\cU_{k+1} = \{u_1^{k+1},\ldots, u_m^{k+1}\}$
    \State $\text{Banned}_k = \text{\algname{CheckComputations}}(\cC_{k+1}, \cU_{k+1}, \text{public{\_}info}_k)$
    \State $x^{k+1} = \text{proj}_Q(x^k - \gamma \widehat g^k) := \argmin_{x\in Q}\|x - (x^k - \gamma \widehat g^k)\|$
    \State $\cG_{k+1} = \cG_k\setminus \text{Banned}_{k-1}$
    \State $\cB_{k+1} = \cB_k\setminus \text{Banned}_{k-1}$
   \EndFor
\end{algorithmic}
\end{algorithm}

To handle the problems with heavy-tailed noise distributions we consider \algname{BTARD-Clipped-SGD} (see Algorithm~\ref{alg:BTARD_Clipped_SGD}) applied to solve \eqref{eq:main_problem} such that $Q$ is bounded. Essentially, this algorithm coincides with \algname{BTARD-SGD} up to the following change: all good peers $i\in G_k\setminus C_k$ use clipped stochastic gradients $\widetilde{g}_i^k = (\widetilde{g}_i^k(1)^\top,\ldots,\widetilde{g}_i^k(n_k-m)^\top)^\top$, where $\widetilde{g}_i^k(l) = \min\left\{1, \frac{\lambda_k}{\|g_i^k(l)\|}\right\}g_i^k(l)$, $l=1,\ldots,n_k-m$, and $g_i^k$ is the stochastic gradient. Next, we introduce the following assumption.

\begin{assumption}\label{as:bounded_alpha_moment}
    There exist such constant $G > 0$, $s_0 \in [d]$, and $\alpha \in (1,2]$ that for any set of indices $S = (i_1,\ldots,i_d)$, $1\le i_1< i_2 <\ldots < i_s \le d$, $s \ge s_0$ and arbitrary $x\in Q$ stochastic gradient $\nabla f(x,\xi)$ satisfy
    \begin{equation}
        \EE[\nabla f(x,\xi)] = \nabla f(x),\quad \EE\left[\left\|\nabla_{[S]} f(x,\xi)\right\|^\alpha\right] \le \left(\frac{\sqrt{s}G}{\sqrt{d}}\right)^\alpha, \label{eq:uniformly_bounded_alpha_moment}
    \end{equation}
where $\nabla_{[S]} f(x,\xi)$ is defined in As.~\ref{as:bounded_var}.
\end{assumption}
This is a modified version of the assumption used in \citet{zhang2020why}. When $\alpha < 2$ the variance of the stochastic gradient can be unbounded. One can show that in such a regime vanilla \algname{SGD} can diverge \citep{zhang2020why}. 

Under As.~\ref{as:bounded_alpha_moment} we derive the convergence results for convex and strongly convex problems.

\subsubsection{Quality of the aggregation}
Since now we have As.~\ref{as:bounded_alpha_moment} instead of As.~\ref{as:bounded_var}~and~\ref{as:quadratically_bounded_tails} it is needed to derive new guarantees for the quality of the aggregation. We start with the following useful lemma about the properties of clipped stochastic gradeints.

\begin{lemma}[See also Lemma~9 from \citet{zhang2020why}]
    Let As.~\ref{as:bounded_alpha_moment} holds and $i,j \in\cG_k\setminus \cC_k$. Then, for all $l = 1,2,\ldots, n_k-m$ we have
    \begin{eqnarray}
        \sqrt{\EE\left[\|\widetilde{g}_i^k(l) - \widetilde{g}_j^k(l)\|^4\mid x^k\right]} &\le& 4\lambda_{k}^{\frac{4-\alpha}{2}}\left(\frac{G}{\sqrt{n_k - m}}\right)^{\frac{\alpha}{2}}, \label{eq:clipping_distortion}\\
        \EE\left[\|\overline{g}^k(l)\|^2\mid x^k\right] &\le& \frac{G^\alpha \lambda_k^{2-\alpha}}{(n_k - m)^{\frac{\alpha}{2}}}, \label{eq:clipping_second_moment}\\
        \left\|\EE[\overline{g}^k(l)\mid x^k] - \nabla_{(l)} f(x^k)\right\|^2 &\le& \frac{G^{2\alpha}}{(n_k-m)^{\alpha}\lambda_k^{2(\alpha-1)}}, \label{eq:clipping_bias}
    \end{eqnarray}
    where $\overline{g}^k(l) = \frac{1}{|\cG_k\setminus \cC_k|}\sum\limits_{i\in\cG_k\setminus \cC_k}\widetilde{g}_i^k(l)$ for all $l=1,\ldots,n_k-m$.
\end{lemma}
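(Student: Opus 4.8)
This lemma is essentially Lemma 9 from \citet{zhang2020why} adapted to the block structure used in \algname{BTARD}, so the plan is to reduce each of the three claimed bounds to the scalar estimates in that reference applied with the effective moment bound $\|\nabla_{[S]} f(x,\xi)\|^\alpha \le (\sqrt{s}G/\sqrt{d})^\alpha$ from As.~\ref{as:bounded_alpha_moment}. The key observation is that the $l$-th block $g_i^k(l)$ has dimension $d_l \approx d/(n_k-m)$, so when we restrict As.~\ref{as:bounded_alpha_moment} to the index set $S$ corresponding to that block we get $\EE[\|g_i^k(l)\|^\alpha \mid x^k] \le (G\sqrt{d_l}/\sqrt{d})^\alpha \approx (G/\sqrt{n_k-m})^\alpha$. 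Write $\widehat{G}_l := G\sqrt{d_l/d}$ for the effective moment constant of the $l$-th block; then each of \eqref{eq:clipping_distortion}--\eqref{eq:clipping_bias} is exactly the block-wise version of the corresponding scalar inequality with $G$ replaced by $\widehat{G}_l \le G/\sqrt{n_k-m}$ (using $d_l \le \lceil d/(n_k-m)\rceil$ and $d > n$).

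First I would establish \eqref{eq:clipping_bias}, the bias bound. Since $\EE[\nabla_{(l)} f(x^k,\xi)\mid x^k] = \nabla_{(l)} f(x^k)$ and $\widetilde g_i^k(l) = \min\{1,\lambda_k/\|g_i^k(l)\|\} g_i^k(l)$, the clipping bias for a single good worker is $\EE[\widetilde g_i^k(l)\mid x^k] - \nabla_{(l)} f(x^k) = -\EE[(1 - \min\{1,\lambda_k/\|g_i^k(l)\|\}) g_i^k(l)\mid x^k]$, whose norm is bounded by $\EE[\|g_i^k(l)\|\,\indicator\{\|g_i^k(l)\| > \lambda_k\}\mid x^k]$. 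Using Markov's inequality in the form $\indicator\{\|g_i^k(l)\| > \lambda_k\} \le \|g_i^k(l)\|^{\alpha-1}/\lambda_k^{\alpha-1}$, this is at most $\EE[\|g_i^k(l)\|^\alpha\mid x^k]/\lambda_k^{\alpha-1} \le \widehat G_l^\alpha/\lambda_k^{\alpha-1}$. The same bound holds for $\overline g^k(l)$, the average over $|\cG_k\setminus\cC_k|$ good workers, by Jensen/convexity of $\|\cdot\|$, giving $\|\EE[\overline g^k(l)\mid x^k] - \nabla_{(l)} f(x^k)\| \le \widehat G_l^\alpha/\lambda_k^{\alpha-1} \le G^\alpha/((n_k-m)^{\alpha/2}\lambda_k^{\alpha-1})$; squaring yields \eqref{eq:clipping_bias}. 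For \eqref{eq:clipping_second_moment}, I would bound $\EE[\|\widetilde g_i^k(l)\|^2\mid x^k] = \EE[\min\{\|g_i^k(l)\|^2, \lambda_k^2\}\mid x^k] \le \lambda_k^{2-\alpha}\EE[\|g_i^k(l)\|^\alpha\mid x^k] \le \lambda_k^{2-\alpha}\widehat G_l^\alpha$ (interpolating between the trivial bound $\|\widetilde g_i^k(l)\| \le \lambda_k$ and the moment bound), then again pass to the average $\overline g^k(l)$ using independence of the good workers' gradients for fixed $x^k$, which only decreases the second moment (the bias term being nonnegative here would actually need the bound on the second moment of the average, which is dominated by the per-worker second moment since $\EE\|\tfrac1N\sum Y_i\|^2 \le \tfrac1N\sum\EE\|Y_i\|^2$ after centering, plus the squared mean which is itself $\le \lambda_k^{2-\alpha}\widehat G_l^\alpha$); this gives \eqref{eq:clipping_second_moment}.

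For the fourth-moment distortion bound \eqref{eq:clipping_distortion}, I would use that $\|\widetilde g_i^k(l) - \widetilde g_j^k(l)\| \le \|\widetilde g_i^k(l) - \nabla_{(l)} f(x^k)\| + \|\widetilde g_j^k(l) - \nabla_{(l)} f(x^k)\|$ is not quite the right route since the clipped gradient need not concentrate around $\nabla_{(l)} f(x^k)$; instead, following \citet{zhang2020why} Lemma 9, I would split $\widetilde g_i^k(l) - \widetilde g_j^k(l)$ using the fact that both are bounded by $\lambda_k$ in norm, so $\|\widetilde g_i^k(l) - \widetilde g_j^k(l)\|^4 \le (2\lambda_k)^{4-\alpha}\|\widetilde g_i^k(l) - \widetilde g_j^k(l)\|^\alpha$, and then bound $\EE[\|\widetilde g_i^k(l) - \widetilde g_j^k(l)\|^\alpha\mid x^k]$ using the contraction property of clipping (clipping is $1$-Lipschitz toward the origin composed with the triangle inequality) together with $\EE[\|g_i^k(l)\|^\alpha\mid x^k], \EE[\|g_j^k(l)\|^\alpha\mid x^k] \le \widehat G_l^\alpha$; after taking square roots and collecting the numerical constants this gives the stated $4\lambda_k^{(4-\alpha)/2}(G/\sqrt{n_k-m})^{\alpha/2}$. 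The main obstacle I anticipate is getting the constant and exponent bookkeeping exactly right in \eqref{eq:clipping_distortion}: the interplay between the $\alpha$-th moment assumption, the $4$-th moment we want, and the clipping threshold $\lambda_k$ requires carefully choosing where to interpolate (using $\|v\| \le \lambda_k$ to trade a factor $\|v\|^{4-\alpha}$ for $\lambda_k^{4-\alpha}$, then applying the moment bound to the remaining $\|v\|^\alpha$), and matching the reference's normalization where $G$ must be rescaled to $\widehat G_l = G\sqrt{d_l/d} \le G/\sqrt{n_k-m}$ for the block. Everything else is a routine translation of the scalar lemma from \citet{zhang2020why}.
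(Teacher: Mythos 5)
Your proposal matches the paper's proof essentially step for step: each bound comes from interpolating against the clipping threshold ($\|\widetilde g\|^{4-\alpha}\le(2\lambda_k)^{4-\alpha}$, resp.\ $\|\widetilde g\|^{2-\alpha}\le\lambda_k^{2-\alpha}$) and then invoking As.~\ref{as:bounded_alpha_moment} on the block of dimension $\approx d/(n_k-m)$, the bias bound uses exactly the same clipping-event/Markov argument, and the passage to $\overline g^k(l)$ is the same Jensen step (your inequality $\EE\|\tfrac1N\sum Y_i\|^2\le\tfrac1N\sum\EE\|Y_i\|^2$ is just convexity of $\|\cdot\|^2$ and needs no centering or independence, which is how the paper uses it). The constant bookkeeping you flag works out to the stated factor $4$ exactly as in the paper.
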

\begin{proof}
    First of all, we derive
    \begin{eqnarray*}
        \EE\left[\|\widetilde{g}_i^k(l) - \widetilde{g}_j^k(l)\|^4\mid x^k\right] &=& \EE\left[\|\widetilde{g}_i^k(l) - \widetilde{g}_j^k(l)\|^\alpha \|\widetilde{g}_i^k(l) - \widetilde{g}_j^k(l)\|^{4-\alpha}\mid x^k\right]\\
        &\le& 8\lambda_k^{4-\alpha}\EE\left[\|\nabla_{(l)}f(x^k,\xi_{i,k})\|^\alpha + \|\nabla_{(l)}f(x^k,\xi_{j,k})\|^{\alpha}\mid x^k\right]\\
        &\overset{\eqref{eq:uniformly_bounded_alpha_moment}}{\le}& 16\lambda_k^{4-\alpha}\left(\frac{G}{\sqrt{n_k-m}}\right)^\alpha
    \end{eqnarray*}
    implying \eqref{eq:clipping_distortion}. Next, for all $i \in \cG_k\setminus \cC_k$ we have
    \begin{eqnarray*}
        \EE\left[\|\widetilde{g}_i^k(l)\|^2\mid x^k\right] &=& \EE\left[\|\widetilde{g}_i^k(l)\|^\alpha \|\widetilde{g}_i^k(l)\|^{2-\alpha}\mid x^k\right] \le \lambda_k^{2-\alpha} \EE\left[\|\nabla_{(l)}f(x^k,\xi_{i,k})\|^\alpha \mid x^k\right]\\
        &\overset{\eqref{eq:uniformly_bounded_alpha_moment}}{\le}& \frac{G^\alpha \lambda_k^{2-\alpha}}{(n_k - m)^{\frac{\alpha}{2}}}
    \end{eqnarray*}
    implying
    \begin{equation*}
        \EE\left[\|\overline{g}^k(l)\|^2\mid x^k\right] \le \frac{1}{|\cG_k\setminus\cC_k|}\sum\limits_{i\in \cG_k\setminus\cC_k}\EE\left[\|\widetilde{g}_{i}^k(l)\|^2\mid x^k\right] \le \frac{G^\alpha \lambda_k^{2-\alpha}}{(n_k - m)^{\frac{\alpha}{2}}}.
    \end{equation*}
    Finally, for all $i \in \cG_k\setminus \cC_k$ we derive
    \begin{eqnarray*}
        \left\|\EE[\widetilde{g}_i^k(l)\mid x^k] - \nabla_{(l)}f(x^k)\right\| &=& \left\|\EE[\widetilde{g}_i^k(l) - \nabla_{(l)} f(x^k,\xi_{i,k})\mid x^k]\right\|\\
        &\le& \EE\left[\left\|\widetilde{g}_i^k(l) - \nabla_{(l)} f(x^k,\xi_{i,k})\right\| \mid x^k\right] \\
        &=& \EE\left[\left\|\widetilde{g}_i^k(l) - \nabla_{(l)} f(x^k,\xi_{i,k})\right\|\indicator_{\{\|\nabla_{(l)} f(x^k,\xi_{i,k})\| \ge \lambda_k\}} \mid x^k\right]\\
        &\le& \EE\left[\left\|\nabla_{(l)} f(x^k,\xi_{i,k})\right\|\indicator_{\{\|\nabla_{(l)} f(x^k,\xi_{i,k})\| \ge \lambda_k\}} \mid x^k\right]\\
        &\le& \frac{\EE\left[\left\|\nabla_{(l)} f(x^k,\xi_{i,k})\right\|^\alpha\indicator_{\{\|\nabla_{(l)} f(x^k,\xi_{i,k})\| \ge \lambda_k\}} \mid x^k\right]}{\lambda_k^{\alpha-1}}\\
        &\overset{\eqref{eq:uniformly_bounded_alpha_moment}}{\le}& \frac{G^\alpha}{(n_k-m)^{\frac{\alpha}{2}}\lambda_k^{\alpha-1}}
    \end{eqnarray*}
    implying
    \begin{eqnarray*}
        \left\|\EE[\overline{g}^k(l)\mid x^k] - \nabla_{(l)} f(x^k)\right\|^2 &\le& \frac{1}{|\cG_k\setminus\cC_k|}\sum\limits_{i\in \cG_k\setminus\cC_k}\EE\left[\|\widetilde{g}_{i}^k(l)- \nabla_{(l)} f(x^k)\|^2\mid x^k\right]\\
        &\le& \frac{G^{2\alpha}}{(n_k-m)^{\alpha}\lambda_k^{2(\alpha-1)}}.
    \end{eqnarray*}
\end{proof}

Next, we derive the guarantees for the quality of the aggregation in the case when the number of Byzantine peers violating the protocol $\widehat b_k$ is known at each iteration.
\begin{lemma}\label{lem:quality_of_agg_known_num_byz_clipped_sgd}
    Let As.~\ref{as:bounded_alpha_moment} hold and $b \le 0.15(n-m)$. Assume that $\widehat{b}_k$ is known for each worker at iteration $k$, $\Delta_{\max}^k = 2\lambda_k = \frac{2\lambda}{\sqrt{n_k-m}}$ and $\delta = \widehat{\delta}_k$ is used to compute clipping parameter $\tau_l$ for \ref{eq:CenteredClip}. If the total number of iterations $T$ of \ref{eq:CenteredClip} satisfies $T \ge \log_{0.94}\frac{2\delta\sigma^2}{\EE[\|v^0 - \overline{g}^k\|^2]}$ and $\text{\algname{CheckAveraging}}$ is not triggered for any worker, then
     \begin{equation}
        \EE\left[\|\widehat{g}^k - \overline{g}^k\|^2\mid x^k\right] \le \widehat{\delta}_k(C_1\lambda^{\frac{4-\alpha}{2}}G^{\frac{\alpha}{2}} + C_2\lambda^2), \label{eq:BTARD_overall_peer_known_num_byz_clipped_SGD}
    \end{equation}
    \begin{equation}
        \EE\left[\|\widehat{g}^k\|^2\mid x^k\right] \le  2\widehat{\delta}_k(C_1\lambda^{\frac{4-\alpha}{2}}G^{\frac{\alpha}{2}} + C_2\lambda^2) + 2G^\alpha \lambda^{2-\alpha}, \label{eq:BTARD_second_moment_known_num_byz_clipped_SGD}
    \end{equation}
    where $\overline{g}^k = \frac{1}{|\cG_k\setminus \cC_k|}\sum\limits_{j\in \cG_k\setminus \cC_k} g_j^k$, $C_1 = 384$, and $C_2 = 4$. 
\end{lemma}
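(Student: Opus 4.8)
The plan is to reuse the block-wise decomposition from the proof of Lemma~\ref{lem:quality_of_agg_known_num_byz}, namely
\begin{equation*}
\EE\left[\|\widehat g^k - \overline g^k\|^2 \mid x^k\right] = \sum_{i \in \cG_k\setminus\cC_k}\EE\left[\|\widehat g^k(i) - \overline g^k(i)\|^2 \mid x^k\right] + \sum_{i \in \cB_k\setminus\cC_k}\EE\left[\|\widehat g^k(i) - \overline g^k(i)\|^2 \mid x^k\right],
\end{equation*}
and to bound the two sums as in Lemmas~\ref{lem:centered_clip_guarantee_fixed}~and~\ref{lem:BTARD_for_bad_peer_known_num_byz} respectively. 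The only structural change is that the per-coordinate variance quantity $\nicefrac{\sigma^2}{(n_k-m)}$ of As.~\ref{as:bounded_var} is replaced everywhere by the clipping-distortion quantity from \eqref{eq:clipping_distortion}; substituting $\lambda_k = \nicefrac{\lambda}{\sqrt{n_k-m}}$, that quantity equals $4\lambda^{\frac{4-\alpha}{2}}G^{\frac{\alpha}{2}}/(n_k-m)$, i.e.\ it plays exactly the role of "$\sigma^2/(n_k-m)$" with "$\sigma^2$" $= 4\lambda^{\frac{4-\alpha}{2}}G^{\frac{\alpha}{2}}$, and this scaling is what makes the accumulated bounds independent of $n_k-m$.

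For a good aggregator $i \in \cG_k\setminus\cC_k$, I would invoke the \algname{CenteredClip} fixed-point guarantee (cf.\ \eqref{eq:kndskcnkdnsvnsbdc} and Lemma~\ref{lem:centered_clip_guarantee_fixed}), feeding it the fourth-moment distortion bound $\sqrt{\EE[\|\widetilde g_j^k(i) - \widetilde g_{j'}^k(i)\|^4\mid x^k]} \le 4\lambda_k^{\frac{4-\alpha}{2}}(\nicefrac{G}{\sqrt{n_k-m}})^{\frac{\alpha}{2}}$ of \eqref{eq:clipping_distortion} in place of the inter-worker second moment --- this is precisely the input the finite-$\tau_l$ \algname{CenteredClip} analysis requires, which is why \eqref{eq:clipping_distortion} is stated in fourth-moment form. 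With $T$ chosen as in the statement, the geometric term of \eqref{eq:kndskcnkdnsvnsbdc} is dominated by the additive $\Theta(\widehat\delta_k\cdot(\text{distortion}))$ term, so the per-good-block error is $\cO(\widehat\delta_k\lambda^{\frac{4-\alpha}{2}}G^{\frac{\alpha}{2}}/(n_k-m))$, and summing over the $\le n_k-m$ good blocks contributes the term $C_1\widehat\delta_k\lambda^{\frac{4-\alpha}{2}}G^{\frac{\alpha}{2}}$.

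For a Byzantine aggregator $i \in \cB_k\setminus\cC_k$, I would copy the \textbf{Verification 3} argument of Lemma~\ref{lem:BTARD_for_bad_peer_known_num_byz}: since \algname{CheckAveraging}$(i)$ is not triggered, at least $\lceil\nicefrac{(n_k-m)}{2}\rceil$ good workers report $\|\widetilde g_{i_j}^k(i) - \widehat g^k(i)\| \le \Delta_{\max}^k = 2\lambda_k$, and combining this with the triangle inequality and the deterministic bounds $\|\widetilde g_l^k(i)\| \le \lambda_k$, $\|\overline g^k(i)\| \le \lambda_k$ (clipped vectors and their average) gives a per-Byzantine-block error $\cO(\lambda_k^2) = \cO(\lambda^2/(n_k-m))$, whereas non-attacking Byzantine blocks satisfy $\widehat g^k(i) = \overline g^k(i)$ and contribute nothing; summing over the $\widehat b_k = \widehat\delta_k(n_k-m)$ attacking blocks gives the term $C_2\widehat\delta_k\lambda^2$, establishing \eqref{eq:BTARD_overall_peer_known_num_byz_clipped_SGD}. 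Then \eqref{eq:BTARD_second_moment_known_num_byz_clipped_SGD} follows from $\|\widehat g^k\|^2 \le 2\|\widehat g^k - \overline g^k\|^2 + 2\|\overline g^k\|^2$: bound the first term by \eqref{eq:BTARD_overall_peer_known_num_byz_clipped_SGD} and the second, blockwise, by \eqref{eq:clipping_second_moment}, since $\EE[\|\overline g^k\|^2\mid x^k] = \sum_l\EE[\|\overline g^k(l)\|^2\mid x^k] \le (n_k-m)\cdot \nicefrac{G^\alpha\lambda_k^{2-\alpha}}{(n_k-m)^{\nicefrac{\alpha}{2}}} = G^\alpha\lambda^{2-\alpha}$ (using $\lambda_k = \nicefrac{\lambda}{\sqrt{n_k-m}}$); note that, unlike in Lemma~\ref{lem:quality_of_agg_known_num_byz}, no independence of the $\widetilde g_j^k$ is needed, because \eqref{eq:clipping_second_moment} already controls $\EE[\|\overline g^k(l)\|^2]$ via Jensen's inequality.

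The step I expect to be the main obstacle is the constant bookkeeping behind $C_1 = 384$ (and $C_2 = 4$): one has to rerun the \algname{CenteredClip} contraction in the slightly wider regime $b \le 0.15(n-m)$ --- this is the origin of the $\log_{0.94}$ iteration count in the hypothesis rather than $\log_{9.7\delta}$ --- and then verify that, after substituting the fourth-moment distortion bound and summing the $\cO(1/(n_k-m))$ per-block errors over all $n_k-m$ aggregators, the accumulated numerical constants are the claimed ones. The qualitative form of the bounds --- linearity in $\widehat\delta_k$, the $\lambda^{\frac{4-\alpha}{2}}G^{\frac{\alpha}{2}}$ and $\lambda^2$ scalings, the additive $G^\alpha\lambda^{2-\alpha}$ in the second-moment bound, and the absence of any $n_k-m$ dependence --- is forced by $\lambda_k = \nicefrac{\lambda}{\sqrt{n_k-m}}$ and is routine.
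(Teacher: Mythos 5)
Your proposal matches the paper's proof essentially step for step: the same blockwise split into honestly aggregated and attacked parts, the \algname{CenteredClip} guarantee fed with the fourth-moment distortion bound \eqref{eq:clipping_distortion} for the good blocks (giving the $C_1$ term after summing the $\nicefrac{1}{(n_k-m)}$-scaled per-block errors), the $\Delta_{\max}^k$-based bound for the attacked blocks, and $\|\widehat g^k\|^2\le 2\|\widehat g^k-\overline g^k\|^2+2\|\overline g^k\|^2$ combined with \eqref{eq:clipping_second_moment} for \eqref{eq:BTARD_second_moment_known_num_byz_clipped_SGD}. The only real divergence is in the attacked blocks: the paper bounds $\|\widehat g^k(i)-\overline g^k(i)\|$ directly by $\Delta_{\max}^k=2\lambda_k$ (every good worker, and hence their average $\overline g^k(i)$, lies within $\Delta_{\max}^k$ of $\widehat g^k(i)$ under the simplified per-peer form of \textbf{Verification 3} the paper describes immediately after the lemma), which is exactly what yields $C_2=4$, whereas your three-term triangle inequality through a single nearby good worker plus $\|\widetilde g_l^k(i)\|\le\lambda_k$ and $\|\overline g^k(i)\|\le\lambda_k$ only gives $4\lambda_k$ per block, i.e.\ a constant of $16$ — so to recover the stated $C_2$ you need the stronger consequence of the verification rather than the majority-vote reading.
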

\begin{proof}
    Consider the $i$-th part of $\widehat{g}^k$, i.e., consider $\widehat{g}^k(i)$. If $i \in\cG_{k}\setminus \cC_k$, then, in view of \eqref{eq:clipping_distortion}, we can directly apply Lemma~\ref{lem:centered_clip_guarantee_fixed} and get
    \begin{equation*}
        \EE\left[\|\widehat{g}^k(i) - \overline{g}^k(i)\|^2\mid x^k\right] \le 384\widehat{\delta}_k\lambda_{k}^{\frac{4-\alpha}{2}}\frac{G^{\frac{\alpha}{2}}}{(n_k-m)^{\frac{\alpha}{4}}} = \frac{384\widehat{\delta}_k\lambda^{\frac{4-\alpha}{2}}G^{\frac{\alpha}{2}}}{n_k-m}.
    \end{equation*}
    Next, if $i \in\cB_{k}\setminus \cC_k$, then 
    \begin{equation*}
        \EE\left[\|\widehat{g}^k(i) - \overline{g}^k(i)\|^2\mid x^k\right] \le (\Delta_{\max}^k)^2 = 4\lambda_k^2 = \frac{4\lambda^2}{n_k-m}.
    \end{equation*}
    Putting all together, we derive
    \begin{eqnarray*}
        \EE\left[\|\widehat{g}^k - \overline{g}^k\|^2\mid x^k\right] &=& \sum\limits_{i\in \cG_k\setminus \cC_k} \EE\left[\|\widehat{g}^k(i) - \overline{g}^k(i)\|^2\mid x^k\right] + \sum\limits_{i\in \cB_k\setminus \cC_k} \EE\left[\|\widehat{g}^k(i) - \overline{g}^k(i)\|^2\mid x^k\right]\\
        &\le& (1-\widehat{\delta}_k)(n_k-m)\cdot \frac{384\widehat{\delta}_k\lambda^{\frac{4-\alpha}{2}}G^{\frac{\alpha}{2}}}{n_k-m} + \widehat{\delta}_k(n_k - m)\cdot \frac{4\lambda^2}{n_k-m}\\
        &\le& \widehat{\delta}_k(C_1\lambda^{\frac{4-\alpha}{2}}G^{\frac{\alpha}{2}} + C_2\lambda^2).
    \end{eqnarray*}
    Using \eqref{eq:clipping_second_moment} we obtain
    \begin{eqnarray*}
        \EE\left[\|\widehat{g}^k\|^2\mid x^k\right] &\le& 2\EE\left[\|\widehat{g}^k - \overline{g}^k\|^2\mid x^k\right] + 2\EE\left[\|\overline{g}^k\|^2\mid x^k\right]\\
        &\overset{\eqref{eq:BTARD_overall_peer_known_num_byz_clipped_SGD}}{\le}& 2\widehat{\delta}_k(C_1\lambda^{\frac{4-\alpha}{2}}G^{\frac{\alpha}{2}} + C_2\lambda^2) + 2\sum\limits_{i\in (\cG_k\cup\cB_k)\setminus \cC_k}\frac{G^\alpha \lambda_k^{2-\alpha}}{(n_k - m)^{\frac{\alpha}{2}}}\\
        &=& 2\widehat{\delta}_k(C_1\lambda^{\frac{4-\alpha}{2}}G^{\frac{\alpha}{2}} + C_2\lambda^2) + 2G^\alpha \lambda^{2-\alpha}.
    \end{eqnarray*}
\end{proof}

We notice that \textbf{Verification 3} can be simplified in the following way: if at least on good peer $i$ notices that $\|\widetilde{g}_i^k(j) - \widehat{g}^k(j)\| > \Delta_{\max}^k = 2\lambda_k$, then peer $i$ should accuse $j$-th peer and both are removed from the training process. In this scenario, there is no sense for Byzantine workers in triggering to deviate significantly from the clipped stochastic gradients of the good peers.

As for \algname{BTARD-SGD}, when $\widehat{b}_k$ is unknown we always use \algname{CenteredClip} with $\tau_l = \infty$ for all $l\ge 0$, i.e., good peers compute an exact average. In this settings, even $1$ Byzantine worker can significantly shift the average in all parts of the vector. The next lemma quantifies the negative effect of Byzantine workers in this case.

\begin{lemma}\label{lem:quality_of_agg_unknown_num_byz_clipped_sgd}
    Let As.~\ref{as:bounded_alpha_moment} hold and $b \le 0.15(n-m)$. Assume that $\widehat{b}_k$ is known for each worker at iteration $k$, $\Delta_{\max}^k = 2\lambda_k = \frac{2\lambda}{\sqrt{n_k-m}}$ and $\delta = \widehat{\delta}_k$ is used to compute clipping parameter $\tau_l$ for \ref{eq:CenteredClip}. If the total number of iterations $T$ of \ref{eq:CenteredClip} satisfies $T \ge \log_{0.94}\frac{2\delta\sigma^2}{\EE[\|v^0 - \overline{g}^k\|^2]}$ and $\text{\algname{CheckAveraging}}$ is not triggered for any worker, then
     \begin{equation}
        \EE\left[\|\widehat{g}^k - \overline{g}^k\|^2\mid x^k\right] \le C_2\lambda^2\indicator_{k,v}, \label{eq:BTARD_overall_peer_unknown_num_byz_clipped_SGD}
    \end{equation}
    \begin{equation}
        \EE\left[\|\widehat{g}^k\|^2\mid x^k\right] \le  2C_2\lambda^2\indicator_{k, v} + 2G^\alpha \lambda^{2-\alpha}, \label{eq:BTARD_second_moment_unknown_num_byz_clipped_SGD}
    \end{equation}
    where $\overline{g}^k = \frac{1}{|\cG_k\setminus \cC_k|}\sum\limits_{j\in \cG_k\setminus \cC_k} g_j^k$, $C_2 = 4$, and $\indicator_{k, v}$ is an indicator function of the event that at least $1$ Byzantine peer violates the protocol at iteration $k$. 
\end{lemma}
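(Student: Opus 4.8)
The plan is to mirror the proof of Lemma~\ref{lem:quality_of_agg_unknown_num_byz}, replacing the variance estimates coming from Assumption~\ref{as:bounded_var} by the clipped-gradient bound \eqref{eq:clipping_second_moment}, and exploiting the fact that every honest block gradient $\widetilde{g}_j^k(l)$ now has Euclidean norm at most $\lambda_k = \lambda/\sqrt{n_k-m}$. Since $\widehat{b}_k$ is not available, good peers run \ref{eq:CenteredClip} with $\tau_l = \infty$, i.e.\ they compute the exact block average; in particular, whenever no Byzantine violates the protocol at iteration $k$ ($\indicator_{k,v} = 0$), the check-of-computations argument of Appendix~\ref{appendix:detecting_protocol_violations} forces $\widehat{g}^k(i) = \overline{g}^k(i)$ for every block $i\in(\cG_k\cup\cB_k)\setminus\cC_k$, exactly as in Lemma~\ref{lem:quality_of_agg_unknown_num_byz}, so the $\indicator_{k,v}$-dependent terms of \eqref{eq:BTARD_overall_peer_unknown_num_byz_clipped_SGD}--\eqref{eq:BTARD_second_moment_unknown_num_byz_clipped_SGD} collapse to $0$ in that case.

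First I would decompose $\|\widehat{g}^k - \overline{g}^k\|^2 = \sum_{i}\|\widehat{g}^k(i) - \overline{g}^k(i)\|^2$ over the $n_k-m$ blocks and bound each summand for the remaining case $\indicator_{k,v}=1$. The key point is that if $\algname{CheckAveraging}$ is not triggered for any worker, then, by the simplified \textbf{Verification~3} described after Lemma~\ref{lem:quality_of_agg_known_num_byz_clipped_sgd}, no honest peer $j\in\cG_k\setminus\cC_k$ observed $\|\widetilde{g}_j^k(i) - \widehat{g}^k(i)\| > \Delta_{\max}^k = 2\lambda_k$; hence $\|\widetilde{g}_j^k(i) - \widehat{g}^k(i)\| \le 2\lambda_k$ for every such $j$ and every block $i$. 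Averaging over $j\in\cG_k\setminus\cC_k$ and using the triangle inequality,
\[
\|\widehat{g}^k(i) - \overline{g}^k(i)\| = \Bigl\|\frac{1}{|\cG_k\setminus\cC_k|}\sum_{j\in\cG_k\setminus\cC_k}\bigl(\widehat{g}^k(i) - \widetilde{g}_j^k(i)\bigr)\Bigr\| \le 2\lambda_k,
\]
so $\|\widehat{g}^k(i) - \overline{g}^k(i)\|^2 \le 4\lambda_k^2 = 4\lambda^2/(n_k-m)$. Summing over the $n_k-m$ blocks and folding in the $\indicator_{k,v}=0$ case gives $\EE[\|\widehat{g}^k - \overline{g}^k\|^2\mid x^k] \le 4\lambda^2\indicator_{k,v} = C_2\lambda^2\indicator_{k,v}$, which is \eqref{eq:BTARD_overall_peer_unknown_num_byz_clipped_SGD}.

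For the second-moment bound \eqref{eq:BTARD_second_moment_unknown_num_byz_clipped_SGD} I would use $\|\widehat{g}^k\|^2 \le 2\|\widehat{g}^k - \overline{g}^k\|^2 + 2\|\overline{g}^k\|^2$, take $\EE[\,\cdot\mid x^k]$, plug \eqref{eq:BTARD_overall_peer_unknown_num_byz_clipped_SGD} into the first term, and bound the second term blockwise with \eqref{eq:clipping_second_moment}: since $\lambda_k = \lambda/\sqrt{n_k-m}$ one has $G^\alpha\lambda_k^{2-\alpha}/(n_k-m)^{\alpha/2} = G^\alpha\lambda^{2-\alpha}/(n_k-m)$, so $\EE[\|\overline{g}^k\|^2\mid x^k] = \sum_{l=1}^{n_k-m}\EE[\|\overline{g}^k(l)\|^2\mid x^k] \le G^\alpha\lambda^{2-\alpha}$, and the claim follows.

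The step I expect to be the main obstacle is the rigorous justification that, when a protocol-violating Byzantine aggregator is present, ``$\algname{CheckAveraging}$ not triggered'' really does entail the uniform proximity $\|\widetilde{g}_j^k(i) - \widehat{g}^k(i)\| \le 2\lambda_k$ for all honest $j$ and all blocks $i$ — this requires the simplified \textbf{Verification~3} to be correctly wired into the \textsc{Accuse}/\textsc{Eliminate} machinery of Appendix~\ref{appendix:detecting_protocol_violations}. Interestingly, this is the one place where the \algname{BTARD-Clipped-SGD} analysis is \emph{simpler} than the \algname{BTARD-SGD} one: the a priori bound $\|\widetilde{g}_j^k(l)\|\le\lambda_k$ on honest gradients makes the concentration/Hoeffding argument of Lemma~\ref{lem:BTARD_for_bad_peer_known_num_byz} unnecessary. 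The rest is a verbatim repetition of the computations in the proof of Lemma~\ref{lem:quality_of_agg_unknown_num_byz}.
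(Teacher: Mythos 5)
Your proposal is correct and follows essentially the same route as the paper: the same blockwise decomposition with the per-block bound $(\Delta_{\max}^k)^2\indicator_{k,v}=4\lambda_k^2\indicator_{k,v}$ summed over the $n_k-m$ blocks, followed by $\|\widehat g^k\|^2\le 2\|\widehat g^k-\overline g^k\|^2+2\|\overline g^k\|^2$ and \eqref{eq:clipping_second_moment} for the second moment. The only difference is that you explicitly justify the per-block bound (via the simplified \textbf{Verification 3} and averaging $\|\widetilde g_j^k(i)-\widehat g^k(i)\|\le 2\lambda_k$ over honest $j$), a step the paper asserts without detail but which is consistent with its remark following Lemma~\ref{lem:quality_of_agg_known_num_byz_clipped_sgd}.
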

\begin{proof}
    For all $i\in (\cG_k\cup\cB_k)\setminus \cC_k$ we have
    \begin{equation*}
        \EE\left[\|\widehat{g}^k(i) - \overline{g}^k(i)\|^2\mid x^k\right] \le (\Delta_{\max}^k)^2\indicator_{k, v} = 4\lambda_k^2\indicator_{k, v} = \frac{4\lambda^2}{n_k-m}\indicator_{k, v}
    \end{equation*}
    implying
    \begin{eqnarray*}
        \EE\left[\|\widehat{g}^k - \overline{g}^k\|^2\mid x^k\right] &=& \sum\limits_{i\in (\cG_k\cup\cB_k)\setminus \cC_k} \EE\left[\|\widehat{g}^k(i) - \overline{g}^k(i)\|^2\mid x^k\right] \\
        &\le& (n_k - m)\cdot \frac{4\lambda^2}{n_k-m}\indicator_{k, v} = C_2\lambda^2\indicator_{k, v}.
    \end{eqnarray*}
    Using \eqref{eq:clipping_second_moment} we obtain
    \begin{eqnarray*}
        \EE\left[\|\widehat{g}^k\|^2\mid x^k\right] &\le& 2\EE\left[\|\widehat{g}^k - \overline{g}^k\|^2\mid x^k\right] + 2\EE\left[\|\overline{g}^k\|^2\mid x^k\right]\\
        &\overset{\eqref{eq:BTARD_overall_peer_known_num_byz_clipped_SGD}}{\le}& 2C_2\lambda^2\indicator_{k, v} + 2\sum\limits_{i\in (\cG_k\cup\cB_k)\setminus \cC_k}\frac{G^\alpha \lambda_k^{2-\alpha}}{(n_k - m)^{\frac{\alpha}{2}}}= 2C_2\lambda^2\indicator_{k, v} + 2G^\alpha \lambda^{2-\alpha}.
    \end{eqnarray*}
\end{proof}

\subsubsection{Convex case}
In this section, we provide the complete statements and the full proofs of the convergence results for \algname{BTARD-Clipped-SGD} when the objective function $f$ is smooth and convex. We start with the case when the number of Byzantine peers violating the protocol $\widehat b_k$ is known at each iteration.

\begin{theorem}\label{thm:BTARD_Clipped_SGD_known_byz_cvx}
    Let As.~\ref{as:bounded_alpha_moment} hold, $Q$ is bounded, $f$ be convex, $x^*$ be some optimum of $f$, and $\nabla f(x^*) = 0$. Moreover, assume that $b \le 0.15(n-m)$, $m \le \nicefrac{(n-2b)}{2}$, and the exact number of attacking Byzantine peers is known to all good peers at each iteration. Next, assume that 
    \begin{equation}
        \gamma = \min\left\{\frac{R_0}{\sqrt{6}GK^{\frac{1}{\alpha}}}, \frac{m R_0}{12Gn\sqrt{10\delta(C_1 K^{\frac{4-\alpha}{2\alpha}} + C_2 K^{\frac{2}{\alpha}})}}\right\},\quad \Delta_{\max}^k = 2\lambda_k = \frac{2\lambda}{\sqrt{n_k-m}}, \label{eq:choice_of_parameters_BTARD_Clipped_SGD_cvx}
    \end{equation}
    \begin{equation}
        \lambda = GK^{\frac{1}{\alpha}}, \label{eq:choice_of_parameters_BTARD_Clipped_SGD_cvx_2}
    \end{equation}
    where $R_0 \ge \|x^0 - x^*\|$ and $\Delta_{\max}^k$ is the parameter for verification $3$ at iteration $k$ of \algname{BTARD-Clipped-SGD}. Then, we have $\EE[f(\overline{x}^{K}) -f(x^*)]\le \varepsilon$ after $K$ iterations of \algname{BTARD-Clipped-SGD}, where
    \begin{equation}
        K = \cO\left(\left(\frac{GR_0}{\varepsilon}\right)^{\frac{\alpha}{\alpha-1}} + \left(\frac{n\sqrt{\delta}GR_0}{m\varepsilon}\right)^{\frac{\alpha}{\alpha-1}}\right) \label{eq:BTARD_Clipped_SGD_known_byz_cvx}
    \end{equation}
    and $\overline{x}^K = \frac{1}{K}\sum_{k=0}^{K-1}$.
\end{theorem}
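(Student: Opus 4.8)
\medskip

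The plan is to treat \algname{BTARD-Clipped-SGD} as projected SGD with a biased, bounded-second-moment update whose bias and variance are controlled by the clipping parameter $\lambda_k = \lambda/\sqrt{n_k-m}$, and whose Byzantine-induced perturbation $\widehat g^k - \overline g^k$ is bounded by Lemma~\ref{lem:quality_of_agg_known_num_byz_clipped_sgd} and active only a finite number of times in expectation. First I would write the standard one-step inequality for projected SGD on a convex set: using nonexpansiveness of $\text{proj}_Q$,
\begin{equation*}
\EE\left[\|x^{k+1}-x^*\|^2 \mid x^k\right] \le \|x^k-x^*\|^2 - 2\gamma\left\langle x^k-x^*, \EE[\widehat g^k\mid x^k]\right\rangle + \gamma^2\EE\left[\|\widehat g^k\|^2\mid x^k\right].
\end{equation*}
Then I would split $\EE[\widehat g^k\mid x^k] = \EE[\overline g^k\mid x^k] + \EE[\widehat g^k - \overline g^k\mid x^k]$. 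The first part I handle with convexity plus the clipping bias bound \eqref{eq:clipping_bias}: $\langle x^k - x^*, \nabla f(x^k)\rangle \ge f(x^k)-f(x^*)$ (using $\nabla f(x^*)=0$ and convexity), and the bias term contributes a term of order $\gamma R_k \cdot \sqrt{n_k-m}\cdot G^\alpha/((n_k-m)^{\alpha/2}\lambda_k^{\alpha-1})$, i.e.\ $\gamma R_k G^\alpha/\lambda^{\alpha-1}$ after summing over the $n_k-m$ blocks (note the per-block bias norm squared is $G^{2\alpha}/((n_k-m)^\alpha\lambda_k^{2(\alpha-1)})$, so the full-vector bias norm is $\asymp G^\alpha/((n_k-m)^{(\alpha-1)/2}\lambda^{\alpha-1})$; I would track the exact power of $n_k-m$ carefully here). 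The Byzantine part I bound by Cauchy--Schwarz as $2\gamma R_k\sqrt{\EE[\|\widehat g^k-\overline g^k\|^2\mid x^k]} \le 2\gamma R_k\sqrt{\widehat\delta_k(C_1\lambda^{(4-\alpha)/2}G^{\alpha/2}+C_2\lambda^2)}$ via \eqref{eq:BTARD_overall_peer_known_num_byz_clipped_SGD}. The second-moment term $\gamma^2\EE[\|\widehat g^k\|^2\mid x^k]$ is bounded by \eqref{eq:BTARD_second_moment_known_num_byz_clipped_SGD}, giving $\gamma^2(2\widehat\delta_k(C_1\lambda^{(4-\alpha)/2}G^{\alpha/2}+C_2\lambda^2) + 2G^\alpha\lambda^{2-\alpha})$.

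\medskip

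Next I would telescope over $k=0,\dots,K-1$, take full expectations, apply Jensen's inequality to get $f(\overline x^K)-f(x^*)\le \frac1K\sum_k(f(x^k)-f(x^*))$, and bound $\sum_k\sqrt{\EE[R_k^2]\EE[\widehat b_k]}$ via Cauchy--Schwarz exactly as in the proof of Theorem~\ref{thm:BTARD_SGD_known_byz_cvx}. The crucial structural fact is that a Byzantine peer that deviates is caught with probability $\ge m/n$ at the next iteration, so $\EE[\sum_{k=0}^{K-1}\widehat b_k] \le \cO(nb/m)$ and $\sum_k\sqrt{\EE[\widehat b_k]} \le \cO(n\sqrt{b}/m) = \cO(n\sqrt{\delta n}/m)$; wait---more precisely $\sum_k\EE[\widehat\delta_k] = \frac{1}{n_k-m}\EE[\sum_k\widehat b_k] \le \cO(\delta)$ and $\sum_k\sqrt{\EE[\widehat\delta_k]}$ requires the same induction-on-$\EE[R_k^2]$ bootstrap used in Theorem~\ref{thm:BTARD_SGD_known_byz_cvx}. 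So I would run the identical induction: assume $\EE[R_k^2]\le 2R_0^2$ for all $k<T$, substitute into the telescoped bound, use the detection-probability argument to get $\sum_{l=0}^{T-1}\sqrt{\EE[\widehat\delta_l]}\le \cO(n\sqrt{\delta}/m)$ (this is where $\sqrt{\delta}$ rather than $\delta$ enters, because we sum square roots of a quantity whose \emph{sum} is $\cO(\delta)$, using $\sum\sqrt{a_l}\le\sqrt{(\#\text{terms})\sum a_l}$ with the number of violating iterations being $\cO(n/m)$ in expectation), and choose $\gamma$ and $K_t$ so each of the three error contributions is at most $R_0^2/3$. This yields $\EE[f(\overline x^K)-f(x^*)]\le 2R_0^2/(\gamma K)$.

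\medskip

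Finally I would plug in $\lambda = GK^{1/\alpha}$ and the stepsize from \eqref{eq:choice_of_parameters_BTARD_Clipped_SGD_cvx}. With this $\lambda$, the clipping-bias term $\gamma G^\alpha/\lambda^{\alpha-1} = \gamma G/K^{(\alpha-1)/\alpha}$ and the clipping-variance term $\gamma^2 G^\alpha\lambda^{2-\alpha} = \gamma^2 G^2 K^{(2-\alpha)/\alpha}$ are exactly balanced against the descent; substituting $\gamma\asymp R_0/(GK^{1/\alpha})$ into $2R_0^2/(\gamma K)$ gives the rate $\cO(GR_0 K^{1/\alpha}/K) = \cO(GR_0/K^{(\alpha-1)/\alpha})$ from the first term, and substituting the second branch of the $\min$ gives the $\cO\big((n\sqrt\delta GR_0/(m\varepsilon))^{\alpha/(\alpha-1)}\big)$ contribution; inverting $\varepsilon = \cO(GR_0/K^{(\alpha-1)/\alpha})$ produces \eqref{eq:BTARD_Clipped_SGD_known_byz_cvx}. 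The main obstacle I anticipate is getting the exponent bookkeeping right: the clipped gradient's bias, second moment, and cross-term all scale with different fractional powers of $\lambda$ and of $n_k-m$, and one must verify that the single choice $\lambda = GK^{1/\alpha}$ simultaneously makes the bias small enough (it decays like $K^{-(\alpha-1)/\alpha}$, matching the target $\varepsilon$) while keeping the variance term $\gamma^2 G^2 K^{(2-\alpha)/\alpha}$ summable---this is the delicate clipped-SGD tuning from \citet{zhang2020why}, now additionally entangled with the $\sqrt{\delta}$-scaled Byzantine perturbation, so the three-way split of $R_0^2$ in the induction step must be done with care.
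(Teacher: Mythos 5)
Your proposal follows essentially the same route as the paper's proof: the same projected-SGD one-step recursion with convexity, the aggregation-quality bounds of Lemma~\ref{lem:quality_of_agg_known_num_byz_clipped_sgd} for $\EE[\|\widehat g^k-\overline g^k\|^2\mid x^k]$ and $\EE[\|\widehat g^k\|^2\mid x^k]$, Cauchy--Schwarz on the Byzantine shift, the detection-probability $\ge m/n$ argument limiting the expected number of violations, the induction keeping $\EE[R_k^2]\le 2R_0^2$ via a three-way split of $R_0^2$, and the tuning $\lambda=GK^{1/\alpha}$ with the stepsize of \eqref{eq:choice_of_parameters_BTARD_Clipped_SGD_cvx}. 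The only notable difference is that you explicitly retain the clipping-bias term $\EE[\overline g^k\mid x^k]-\nabla f(x^k)$, bounded via \eqref{eq:clipping_bias} to order $\gamma R_k G^{\alpha}/\lambda^{\alpha-1}=\gamma R_k G K^{-(\alpha-1)/\alpha}$, which the paper's displayed inequality chain silently drops; as you correctly observe, this term only perturbs the constants in the induction and the final bound, not the rate \eqref{eq:BTARD_Clipped_SGD_known_byz_cvx}.
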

\begin{proof}
    Non-expansiveness of the projection operator and convexity of $f$ imply
    \begin{eqnarray*}
        \|x^{k+1} - x^*\|^2 &=&\left\|\text{proj}_{Q}(x^k - \gamma \widehat{g}^k) - \text{proj}_{Q}(x^*)\right\|^2\\
        &\le& \|x^k - x^* - \gamma \widehat{g}^k\|^2\\
        &=& \|x^k - x^*\|^2 -2\gamma \langle x^k - x^*, \widehat{g}^k \rangle + \gamma^2\|\widehat{g}^k\|^2\\
        &=& \|x^k - x^*\|^2 -2\gamma \langle x^k - x^*, \nabla f(x^k) \rangle -2\gamma \langle x^k - x^*, \widehat{g}^k -\nabla f(x^k) \rangle + \gamma^2\|\widehat{g}^k\|^2\\
        &\le& \|x^k - x^*\|^2 -2\gamma\left(f(x^k) - f(x^*)\right) -2\gamma \langle x^k - x^*, \widehat{g}^k -\nabla f(x^k) \rangle + \gamma^2\|\widehat{g}^k\|^2.
    \end{eqnarray*}
    Taking conditional expectation $\EE[\cdot\mid x^k]$ from the both sides of previous inequality we derive
    \begin{eqnarray*}
        \EE\left[\|x^{k+1} - x^*\|^2\mid x^k\right] &\le& \|x^k - x^*\|^2 -2\gamma\left(f(x^k) - f(x^*)\right) \\
        &&\quad -2\gamma \EE\left[\langle x^k - x^*, \widehat{g}^k -\nabla f(x^k) \rangle\mid x^k\right] + \gamma^2\EE\left[\|\widehat{g}^k\|^2\mid x^k\right]\\
        &\overset{\eqref{eq:BTARD_second_moment_known_num_byz_clipped_SGD}}{\le}& \|x^k - x^*\|^2 -2\gamma\left(f(x^k) - f(x^*)\right) + 2\gamma^2G^\alpha \lambda^{2-\alpha}\\
        &&\quad -2\gamma\left\langle x^k - x^*, \EE\left[\widehat{g}^k - \overline{g}^k\mid x^k\right] \right\rangle + 2\gamma^2\widehat{\delta}_k(C_1\lambda^{\frac{4-\alpha}{2}}G^{\frac{\alpha}{2}} + C_2\lambda^2)\\
        &=& \|x^k - x^*\|^2 -2\gamma\left(f(x^k) - f(x^*)\right) + 2\gamma^2G^2 K^{\frac{2-\alpha}{\alpha}}\\
        &&\quad -2\gamma\left\langle x^k - x^*, \EE\left[\widehat{g}^k - \overline{g}^k\mid x^k\right] \right\rangle + \frac{2\gamma^2 G^2(C_1K^{\frac{4-\alpha}{2\alpha}} + C_2K^{\frac{2}{\alpha}})}{n_k-m}\widehat{b}_k.
    \end{eqnarray*}
    To estimate the inner product in the right-hand side we apply Cauchy-Schwarz inequality:
    \begin{eqnarray*}
        -2\gamma\left\langle x^k - x^*, \EE\left[\widehat{g}^k - \overline{g}^k\mid x^k\right] \right\rangle &\le& 2\gamma \|x^k-x^*\|\cdot\left\|\EE\left[\widehat{g}^k - \overline{g}^k\mid x^k\right]\right\|\\
        &\le& 2\gamma \|x^k-x^*\|\EE\left[\|\widehat{g}^k - \overline{g}^k\|\mid x^k\right]\\
        &\le& 2\gamma \|x^k-x^*\|\sqrt{\EE\left[\|\widehat{g}^k - \overline{g}^k\|^2\mid x^k\right]}\\
        &\overset{\eqref{eq:BTARD_overall_peer_known_num_byz_clipped_SGD}}{\le}& 2\gamma\|x^k-x^*\|\sqrt{\widehat{\delta}_k(C_1\lambda^{\frac{4-\alpha}{2}}G^{\frac{\alpha}{2}} + C_2\lambda^2)}\\
        &=& \frac{2\gamma G\|x^k-x^*\|\sqrt{C_1 K^{\frac{4-\alpha}{2\alpha}}+C_2K^{\frac{2}{\alpha}}}}{\sqrt{n_k-m}}\sqrt{\widehat{b}_k}\\
        &\le& \frac{2\gamma G\|x^k-x^*\|\sqrt{20(C_1 K^{\frac{4-\alpha}{2\alpha}}+C_2K^{\frac{2}{\alpha}})}}{\sqrt{7n}}\sqrt{\widehat{b}_k},
    \end{eqnarray*}
    where in the last inequality we use $b \le 0.15(n-m)$, $m \le \nicefrac{(n-2b)}{2}$, $\gamma \le \nicefrac{1}{4L}$, $n_k-m \ge n-2b-m \ge \frac{7}{20}n$. Putting all together we obtain
    \begin{eqnarray*}
        \EE\left[\|x^{k+1} - x^*\|^2\mid x^k\right] &\le& \|x^k - x^*\|^2 -2\gamma\left(f(x^k) - f(x^*)\right) + 2\gamma^2G^2 K^{\frac{2-\alpha}{\alpha}}\\
        &&\quad +\frac{2\gamma G\|x^k-x^*\|\sqrt{20(C_1 K^{\frac{4-\alpha}{2\alpha}}+C_2K^{\frac{2}{\alpha}})}}{\sqrt{7n}}\sqrt{\widehat{b}_k}\\
        &&\quad + \frac{40\gamma^2 G^2(C_1K^{\frac{4-\alpha}{2\alpha}} + C_2K^{\frac{2}{\alpha}})}{7n}\widehat{b}_k.
    \end{eqnarray*}
    Taking the full expectation from the both sides of the above inequality and summing up the results for $k = 0,1,\ldots,T-1$ we derive
    \begin{eqnarray*}
        \frac{2\gamma}{T}\sum\limits_{k=0}^{T-1}\EE[f(x^k)-f(x^*)] &\le& \frac{1}{ T}\sum\limits_{k=0}^{T-1}\left(\EE\left[\|x^k - x^*\|^2\right] - \EE\left[\|x^{k+1} - x^*\|^2\right]\right) + 2\gamma^2G^2 K^{\frac{2-\alpha}{\alpha}}\\
        &&\quad +\frac{4\gamma G\sqrt{5(C_1 K^{\frac{4-\alpha}{2\alpha}}+C_2K^{\frac{2}{\alpha}})}}{\sqrt{n}T} \sum\limits_{k=0}^{T-1}\EE\left[\|x^k - x^*\|\sqrt{\widehat{b}_k}\right]\\
        &&\quad + \frac{40\gamma^2 G^2(C_1K^{\frac{4-\alpha}{2\alpha}} + C_2K^{\frac{2}{\alpha}})}{7nT}\sum\limits_{k=0}^{T-1}\EE[\widehat b_k]\\
        &\le& \frac{\|x^0-x^*\|^2 - \EE[\|x^{K}-x^*\|^2]}{K} + 2\gamma^2G^2 K^{\frac{2-\alpha}{\alpha}}\\
        &&\quad +\frac{4\gamma G\sqrt{5(C_1 K^{\frac{4-\alpha}{2\alpha}}+C_2K^{\frac{2}{\alpha}})}}{\sqrt{n}T} \sum\limits_{k=0}^{T-1}\sqrt{\EE\left[\|x^k - x^*\|^2\right]\EE\left[\widehat{b}_k\right]}\\
        &&\quad + \frac{40\gamma^2 G^2(C_1K^{\frac{4-\alpha}{2\alpha}} + C_2K^{\frac{2}{\alpha}})}{7nT}\sum\limits_{k=0}^{T-1}\EE[\widehat b_k].
    \end{eqnarray*}
    From Jensen's inequality we have $f(\overline{x}^T) \le \frac{1}{T}\sum_{k=0}^{T-1}f(x^k)$, where $\overline{x}^T = \frac{1}{T}\sum_{k=0}^{T-1} x^k$. Using this and new notation $R_k = \|x^k - x^*\|$, $k> 0$, $R_0 \ge \|x^0 - x^*\|$ we get
    \begin{eqnarray}
        0\le 2\gamma \EE\left[f(\overline{x}^T) - f(x^*)\right] &\le& \frac{R_0^2 - \EE[R_T^2]}{T} + 2\gamma^2G^2 K^{\frac{2-\alpha}{\alpha}}\notag\\
        &&\quad +\frac{4\gamma G\sqrt{5(C_1 K^{\frac{4-\alpha}{2\alpha}}+C_2K^{\frac{2}{\alpha}})}}{\sqrt{n}T} \sum\limits_{k=0}^{T-1}\sqrt{\EE\left[R_k^2\right]\EE\left[\widehat{b}_k\right]}\notag\\
        &&\quad + \frac{40\gamma^2 G^2(C_1K^{\frac{4-\alpha}{2\alpha}} + C_2K^{\frac{2}{\alpha}})}{7nT}\sum\limits_{k=0}^{T-1}\EE[\widehat b_k] \label{eq:technical_bound_cvx_known_BTARD_Clipped_SGD_0}
    \end{eqnarray}
    implying (after changing the indices) that
    \begin{eqnarray}
        \EE[R_k^2] &\le& R_0^2 + 2\gamma^2G^2 kK^{\frac{2-\alpha}{\alpha}} +\frac{4\gamma G\sqrt{5(C_1 K^{\frac{4-\alpha}{2\alpha}}+C_2K^{\frac{2}{\alpha}})}}{\sqrt{n}} \sum\limits_{l=0}^{k-1}\sqrt{\EE\left[R_l^2\right]\EE\left[\widehat{b}_l\right]}\notag\\
        &&\quad + \frac{40\gamma^2 G^2(C_1K^{\frac{4-\alpha}{2\alpha}} + C_2K^{\frac{2}{\alpha}})}{7n}\sum\limits_{l=0}^{k-1}\EE[\widehat b_l] \label{eq:technical_bound_cvx_known_BTARD_Clipped_SGD_1}
    \end{eqnarray}
    holds for all $k\ge 0$. In the remaining part of the proof we derive by induction that
    \begin{eqnarray}
        R_0^2 + 2\gamma^2G^2 kK^{\frac{2-\alpha}{\alpha}} +\frac{4\gamma G\sqrt{5(C_1 K^{\frac{4-\alpha}{2\alpha}}+C_2K^{\frac{2}{\alpha}})}}{\sqrt{n}} \sum\limits_{l=0}^{k-1}\sqrt{\EE\left[R_l^2\right]\EE\left[\widehat{b}_l\right]} &&\notag\\
        + \frac{40\gamma^2 G^2(C_1K^{\frac{4-\alpha}{2\alpha}} + C_2K^{\frac{2}{\alpha}})}{7n}\sum\limits_{l=0}^{k-1}\EE[\widehat b_l] &\le& 2R_0^2 \label{eq:technical_bound_cvx_known_BTARD_Clipped_SGD_2}
    \end{eqnarray}
    for all $k=0,\ldots,K$. For $k=0$ this inequality trivially holds. Next, assume that it holds for all $k=0,1,\ldots, T-1$, $T \le K-1$. Let us show that it holds for $k = T$ as well. From \eqref{eq:technical_bound_cvx_known_BTARD_SGD_1} and \eqref{eq:technical_bound_cvx_known_BTARD_SGD_2} we have that $\EE[R_k^2]\le 2R_0^2$ for all $k=0,1,\ldots, T-1$. Therefore,
    \begin{eqnarray*}
        \EE[R_T^2] &\le& R_0^2 + 2\gamma^2G^2 TK^{\frac{2-\alpha}{\alpha}} +\frac{4\gamma G\sqrt{5(C_1 K^{\frac{4-\alpha}{2\alpha}}+C_2K^{\frac{2}{\alpha}})}}{\sqrt{n}} \sum\limits_{l=0}^{T-1}\sqrt{\EE\left[R_l^2\right]\EE\left[\widehat{b}_l\right]}\notag\\
        &&\quad + \frac{40\gamma^2 G^2(C_1K^{\frac{4-\alpha}{2\alpha}} + C_2K^{\frac{2}{\alpha}})}{7n}\sum\limits_{l=0}^{T-1}\EE[\widehat b_l]\\
        &\le& R_0^2 + 2\gamma^2G^2 TK^{\frac{2-\alpha}{\alpha}} +\frac{4\gamma GR_0\sqrt{10(C_1 K^{\frac{4-\alpha}{2\alpha}}+C_2K^{\frac{2}{\alpha}})}}{\sqrt{n}} \sum\limits_{l=0}^{T-1}\sqrt{\EE\left[\widehat{b}_l\right]}\notag\\
        &&\quad + \frac{40\gamma^2 G^2(C_1K^{\frac{4-\alpha}{2\alpha}} + C_2K^{\frac{2}{\alpha}})}{7n}\sum\limits_{l=0}^{T-1}\EE[\widehat b_l]
    \end{eqnarray*}
    If a Byzantine peer deviates from the protocol at iteration $k$, it will be detected with some probability $p_k$ during the next iteration. One can lower bound this probability as
    \begin{equation*}
        p_k \ge m\cdot \frac{|\cG_k|}{n_k}\cdot \frac{1}{n_k} = \frac{m(1-\delta_k)}{n_k} \ge \frac{m}{n}.
    \end{equation*}
    Therefore, each individual Byzantine worker can violate the protocol no more than $\nicefrac{1}{p}$ times on average implying that
    \begin{eqnarray*}
        \EE[R_T^2]  &\le& R_0^2 + 2\gamma^2G^2 TK^{\frac{2-\alpha}{\alpha}} +\frac{4\gamma GR_0n\sqrt{10(C_1 K^{\frac{4-\alpha}{2\alpha}}+C_2K^{\frac{2}{\alpha}})b}}{m\sqrt{n}} \notag\\
        &&\quad + \frac{40\gamma^2 G^2(C_1K^{\frac{4-\alpha}{2\alpha}} + C_2K^{\frac{2}{\alpha}})nb}{7nm}\\
        &\overset{T\le K}{\le}& R_0^2 + 2\gamma^2G^2 K^{\frac{2}{\alpha}} +\frac{4\gamma GR_0n\sqrt{10(C_1 K^{\frac{4-\alpha}{2\alpha}}+C_2K^{\frac{2}{\alpha}})\delta}}{m} \notag\\
        &&\quad + \frac{40\gamma^2 G^2(C_1K^{\frac{4-\alpha}{2\alpha}} + C_2K^{\frac{2}{\alpha}})n\delta}{7m}.
    \end{eqnarray*}
    Taking
    \begin{equation*}
        \gamma = \min\left\{\frac{R_0}{\sqrt{6}GK^{\frac{1}{\alpha}}}, \frac{m R_0}{12Gn\sqrt{10\delta(C_1 K^{\frac{4-\alpha}{2\alpha}} + C_2 K^{\frac{2}{\alpha}})}}\right\}
    \end{equation*}
    we ensure that 
    \begin{eqnarray*}
        2\gamma^2G^2 K^{\frac{2}{\alpha}} +\frac{4\gamma GR_0n\sqrt{10(C_1 K^{\frac{4-\alpha}{2\alpha}}+C_2K^{\frac{2}{\alpha}})\delta}}{m}&&\\
        + \frac{40\gamma^2 G^2(C_1K^{\frac{4-\alpha}{2\alpha}} + C_2K^{\frac{2}{\alpha}})n\delta}{7m} &\le& \frac{R_0^2}{3} + \frac{R_0^2}{3} + \frac{R_0^2}{3} = R_0^2
    \end{eqnarray*}
    and, as a result, we get $\EE[R_T^2] \le 2R_0^2$. Therefore, \eqref{eq:technical_bound_cvx_known_BTARD_Clipped_SGD_2} holds for all $k=0,1,\ldots,K$. Together with \eqref{eq:technical_bound_cvx_known_BTARD_Clipped_SGD_0} it implies
    \begin{equation*}
        \EE\left[f(\overline{x}^K) - f(x^*)\right] \le \frac{R_0^2}{\gamma K}.
    \end{equation*}
    Next, from our stepsize rule \eqref{eq:choice_of_parameters_BTARD_Clipped_SGD_cvx} it follows that
    \begin{equation*}
        \EE\left[f(\overline{x}^K) - f(x^*)\right] = \cO\left(\frac{GR_0}{K^{\frac{1-\alpha}{\alpha}}} + \frac{n\sqrt{\delta}GR_0}{mK^{\frac{1-\alpha}{\alpha}}}\right)
    \end{equation*}
    meaning that after
    \begin{equation*}
        K = \cO\left(\left(\frac{GR_0}{\varepsilon}\right)^{\frac{\alpha}{\alpha-1}} + \left(\frac{n\sqrt{\delta}GR_0}{m\varepsilon}\right)^{\frac{\alpha}{\alpha-1}}\right)
    \end{equation*}
    iterations \algname{BTARD-Clipped-SGD} guarantees $\EE[f(\overline{x}^{K}) -f(x^*)]\le \varepsilon$.
\end{proof}

If there are no Byzantine peers ($\delta = 0$), the theorem establishes new result for the convergence of \algname{Clipped-SGD} for convex objectives. In the strongly convex case, the theorem recovers the rates that are optimal in this setting as shown in \citet{zhang2020why}. Next, when the number of attacking Byzantines is known at each iteration and $\nicefrac{n\sqrt{\delta}}{m} = \cO(1)$, the complexity bound is the same as in the case when $\delta = 0$. This means that the negative impact of Byzantine workers is negligible. Finally, the derived theoretical guarantees do not benefit from the increase of the total number of peers $n$. However, the result holds even for non-smooth problems and it is known that parallelization does not help to improve the complexity bounds in such generality. Nevertheless, our results show that \algname{BTARD-Clipped-SGD} provably converges to any predefined accuracy $\varepsilon > 0$. This is a property that the majority of previous methods does not have \citep{karimireddy2020learning}.

Next, we derive the result without assuming that $\widehat b^k$ is known to all peers at each iteration.

\begin{theorem}\label{thm:BTARD_Clipped_SGD_unknown_byz_cvx}
    Let As.~\ref{as:bounded_alpha_moment} hold, $Q$ is bounded, $f$ be convex, $x^*$ be some optimum of $f$, and $\nabla f(x^*) = 0$. Moreover, assume that $b \le 0.15(n-m)$, $m \le \nicefrac{(n-2b)}{2}$, and $\delta = 0$ is used to compute clipping parameter $\tau_l$ for \ref{eq:CenteredClip}. Next, assume that 
    \begin{equation}
        \gamma = \min\left\{\frac{R_0}{\sqrt{6}GK^{\frac{1}{\alpha}}}, \frac{m R_0}{12\sqrt{2C_2}G n b K^{\frac{1}{\alpha}}}\right\},\quad \Delta_{\max}^k = 2\lambda_k = \frac{2\lambda}{\sqrt{n_k-m}}, \label{eq:choice_of_parameters_BTARD_Clipped_SGD_cvx_unknown}
    \end{equation}
    \begin{equation}
        \lambda = GK^{\frac{1}{\alpha}}, \label{eq:choice_of_parameters_BTARD_Clipped_SGD_cvx_2_unknown}
    \end{equation}
    where $R_0 \ge \|x^0 - x^*\|$ and $\Delta_{\max}^k$ is the parameter for verification $3$ at iteration $k$ of \algname{BTARD-Clipped-SGD}. Then, we have $\EE[f(\overline{x}^{K}) -f(x^*)]\le \varepsilon$ after $K$ iterations of \algname{BTARD-Clipped-SGD}, where
    \begin{equation}
        K = \cO\left(\left(\frac{GR_0}{\varepsilon}\right)^{\frac{\alpha}{\alpha-1}} + \left(\frac{nbGR_0}{m\varepsilon}\right)^{\frac{\alpha}{\alpha-1}}\right) \label{eq:BTARD_Clipped_SGD_unknown_byz_cvx}
    \end{equation}
    and $\overline{x}^K = \frac{1}{K}\sum_{k=0}^{K-1}$.
\end{theorem}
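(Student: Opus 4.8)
The plan is to follow the proof of Theorem~\ref{thm:BTARD_Clipped_SGD_known_byz_cvx} almost verbatim, with the aggregation guarantees of Lemma~\ref{lem:quality_of_agg_known_num_byz_clipped_sgd} replaced by those of Lemma~\ref{lem:quality_of_agg_unknown_num_byz_clipped_sgd}; the net effect is that every occurrence of $\widehat{\delta}_k\big(C_1\lambda^{(4-\alpha)/2}G^{\alpha/2}+C_2\lambda^2\big)$ becomes $C_2\lambda^2\indicator_{k,v}$, where $\indicator_{k,v}$ indicates that at least one Byzantine violates the protocol at iteration $k$. First I would start from non-expansiveness of $\text{proj}_Q$ and convexity of $f$, as in the known-$\widehat b_k$ proof, to get
\begin{equation*}
    \|x^{k+1}-x^*\|^2 \le \|x^k-x^*\|^2 - 2\gamma\big(f(x^k)-f(x^*)\big) - 2\gamma\big\langle x^k-x^*,\ \widehat g^k-\nabla f(x^k)\big\rangle + \gamma^2\|\widehat g^k\|^2 .
\end{equation*}
Taking $\EE[\cdot\mid x^k]$, I would bound $\EE[\|\widehat g^k\|^2\mid x^k]$ via \eqref{eq:BTARD_second_moment_unknown_num_byz_clipped_SGD} and the cross term via Cauchy--Schwarz together with \eqref{eq:BTARD_overall_peer_unknown_num_byz_clipped_SGD}, using $\sqrt{\indicator_{k,v}}=\indicator_{k,v}$; substituting $\lambda=GK^{1/\alpha}$ so that $G^{\alpha}\lambda^{2-\alpha}=G^2K^{(2-\alpha)/\alpha}$ yields a one-step inequality with the $-2\gamma(f(x^k)-f(x^*))$ descent term, a bias-free noise term $2\gamma^2 G^2K^{(2-\alpha)/\alpha}$, and two Byzantine terms proportional to $\gamma\|x^k-x^*\|\indicator_{k,v}$ and $\gamma^2\indicator_{k,v}$.

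Second, I would take full expectations, sum over $k=0,\dots,T-1$, apply Jensen's inequality to $\overline x^T=\tfrac1T\sum_{k<T}x^k$, and write $R_k=\|x^k-x^*\|$, $R_0\ge\|x^0-x^*\|$, arriving (after re-indexing) at a self-referential bound on $\EE[R_k^2]$ in terms of $\sum_{l<k}\sqrt{\EE[R_l^2]\,\EE[\indicator_{l,v}]}$ and $\sum_{l<k}\EE[\indicator_{l,v}]$. The crucial step here, exactly as in the body of the paper, is to bound the total number of protocol violations: since a deviating Byzantine is validated by an honest peer with probability $p_k\ge m/n$ at the next iteration, each individual Byzantine can deviate only $\cO(n/m)$ times in expectation, and because a single Byzantine can already induce a shift of order $\Delta_{\max}^k$ in every aggregated block, the $b$ Byzantines together trigger $\indicator_{l,v}=1$ on at most $\cO(nb/m)$ iterations on average --- this is the source of the $nb/m$ (rather than $n\sqrt\delta/m$) factor.

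Third, I would run the same induction as in Theorem~\ref{thm:BTARD_Clipped_SGD_known_byz_cvx} to prove $\EE[R_k^2]\le 2R_0^2$ for all $k\le K$: assuming the bound up to $T-1$, substituting $\EE[R_l^2]\le 2R_0^2$ and the $\cO(nb/m)$ violation count, and choosing $\gamma=\min\{R_0/(\sqrt 6\,GK^{1/\alpha}),\ mR_0/(12\sqrt{2C_2}\,GnbK^{1/\alpha})\}$ makes each of the extra contributions at most $R_0^2/3$, closing the induction. Then the accumulated inequality gives $\EE[f(\overline x^K)-f(x^*)]\le R_0^2/(\gamma K)$, and plugging in the stepsize produces $\EE[f(\overline x^K)-f(x^*)]=\cO\big(GR_0K^{-(\alpha-1)/\alpha}+nbGR_0K^{-(\alpha-1)/\alpha}/m\big)$, i.e.\ the claimed $K=\cO\big((GR_0/\varepsilon)^{\alpha/(\alpha-1)}+(nbGR_0/(m\varepsilon))^{\alpha/(\alpha-1)}\big)$.

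The main obstacle is bookkeeping rather than a new idea: one must (i) verify that with $\delta=0$ fed into \algname{CenteredClip} the aggregator is an exact mean on the honestly-computed blocks, so that the clipping-distortion term $C_1\lambda^{(4-\alpha)/2}G^{\alpha/2}$ genuinely disappears and only $C_2\lambda^2\indicator_{k,v}$ survives (this is precisely Lemma~\ref{lem:quality_of_agg_unknown_num_byz_clipped_sgd}); (ii) handle the random quantities $\indicator_{l,v}$ and $R_l$ jointly via Cauchy--Schwarz on expectations, as in the known-$\widehat b_k$ case; and (iii) tune the two-term $\min$ in $\gamma$ so that the induction hypothesis $\EE[R_k^2]\le 2R_0^2$ propagates. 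The case $b=0$ then recovers the \algname{Clipped-SGD} convergence guarantee for convex objectives, providing a useful sanity check.
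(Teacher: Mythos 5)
Your proposal matches the paper's own proof essentially step for step: it replaces Lemma~\ref{lem:quality_of_agg_known_num_byz_clipped_sgd} with Lemma~\ref{lem:quality_of_agg_unknown_num_byz_clipped_sgd} (so $\widehat{\delta}_k(\cdot)$ becomes $C_2\lambda^2\indicator_{k,v}$), runs the same projection/convexity recursion, Cauchy--Schwarz on expectations, the induction $\EE[R_k^2]\le 2R_0^2$, the $p_k\ge m/n$ detection bound giving $\cO(nb/m)$ violating iterations, and the same stepsize tuning yielding $\EE[f(\overline{x}^K)-f(x^*)]\le R_0^2/(\gamma K)$ and the stated complexity. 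No substantive differences from the paper's argument.
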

\begin{proof}
    The proof is almost identical to the proof of Theorem~\ref{thm:BTARD_Clipped_SGD_known_byz_cvx}. Following the same steps and using \eqref{eq:BTARD_overall_peer_unknown_num_byz_clipped_SGD} and \eqref{eq:BTARD_second_moment_unknown_num_byz_clipped_SGD} instead of \eqref{eq:BTARD_overall_peer_known_num_byz_clipped_SGD} and \eqref{eq:BTARD_second_moment_known_num_byz_clipped_SGD} respectively we obtain the same sequence of inequalities up to the following change: instead of $\widehat{\delta}_k$ we should use $\indicator_{k,v}$. Therefore, we have
    \begin{eqnarray*}
        \EE\left[\|x^{k+1} - x^*\|^2\mid x^k\right] &\le& \|x^k - x^*\|^2 -2\gamma\left(f(x^k) - f(x^*)\right) + 2\gamma^2G^2 K^{\frac{2-\alpha}{\alpha}}\\
        &&\quad -2\gamma\left\langle x^k - x^*, \EE\left[\widehat{g}^k - \overline{g}^k\mid x^k\right] \right\rangle + 2\gamma^2C_2G^2K^{\frac{2}{\alpha}}\indicator_{k,v}.
    \end{eqnarray*}
    \begin{eqnarray*}
        -2\gamma\left\langle x^k - x^*, \EE\left[\widehat{g}^k - \overline{g}^k\mid x^k\right] \right\rangle &\le& 2\gamma G\|x^k-x^*\|\sqrt{C_2}K^{\frac{1}{\alpha}}\indicator_{k,v},
    \end{eqnarray*}
    and
    \begin{eqnarray*}
        \EE\left[\|x^{k+1} - x^*\|^2\mid x^k\right] &\le& \|x^k - x^*\|^2 -2\gamma\left(f(x^k) - f(x^*)\right) + 2\gamma^2G^2 K^{\frac{2-\alpha}{\alpha}}\\
        &&\quad + 2\gamma G\sqrt{C_2}K^{\frac{1}{\alpha}}\|x^k-x^*\|\indicator_{k,v} + 2\gamma^2C_2G^2K^{\frac{2}{\alpha}}\indicator_{k,v}.
    \end{eqnarray*}
    Taking the full expectation from the both sides of the above inequality and summing up the results for $k = 0,1,\ldots,T-1$ we derive
    \begin{eqnarray*}
        \frac{2\gamma}{T}\sum\limits_{k=0}^{T-1}\EE[f(x^k)-f(x^*)] &\le& \frac{1}{ T}\sum\limits_{k=0}^{T-1}\left(\EE\left[\|x^k - x^*\|^2\right] - \EE\left[\|x^{k+1} - x^*\|^2\right]\right) + 2\gamma^2G^2 K^{\frac{2-\alpha}{\alpha}}\\
        &&\quad +\frac{2\gamma G\sqrt{C_2}K^{\frac{1}{\alpha}}}{T} \sum\limits_{k=0}^{T-1}\EE\left[\|x^k - x^*\|\indicator_{k,v}\right] + \frac{2\gamma^2C_2G^2K^{\frac{2}{\alpha}}}{T}\sum\limits_{k=0}^{T-1}\EE[\indicator_{k,v}]\\
        &\le& \frac{\|x^0-x^*\|^2 - \EE[\|x^{K}-x^*\|^2]}{K} + 2\gamma^2G^2 K^{\frac{2-\alpha}{\alpha}}\\
        &&\quad +\frac{2\gamma G\sqrt{C_2}K^{\frac{1}{\alpha}}}{T} \sum\limits_{k=0}^{T-1}\sqrt{\EE\left[\|x^k - x^*\|^2\right]\EE\left[\indicator_{k,v}\right]}\\
        &&\quad + \frac{2\gamma^2C_2G^2K^{\frac{2}{\alpha}}}{T}\sum\limits_{k=0}^{T-1}\EE[\indicator_{k,v}].
    \end{eqnarray*}
    From Jensen's inequality we have $f(\overline{x}^T) \le \frac{1}{T}\sum_{k=0}^{T-1}f(x^k)$, where $\overline{x}^T = \frac{1}{T}\sum_{k=0}^{T-1} x^k$. Using this and new notation $R_k = \|x^k - x^*\|$, $k> 0$, $R_0 \ge \|x^0 - x^*\|$ we get
    \begin{eqnarray}
        0\le 2\gamma \EE\left[f(\overline{x}^T) - f(x^*)\right] &\le& \frac{R_0^2 - \EE[R_T^2]}{T} + 2\gamma^2G^2 K^{\frac{2-\alpha}{\alpha}}\notag\\
        &&\quad +\frac{2\gamma G\sqrt{C_2}K^{\frac{1}{\alpha}}}{T} \sum\limits_{k=0}^{T-1}\sqrt{\EE\left[R_k^2\right]\EE\left[\indicator_{k,v}\right]}\notag\\
        &&\quad + \frac{2\gamma^2C_2G^2K^{\frac{2}{\alpha}}}{T}\sum\limits_{k=0}^{T-1}\EE[\indicator_{k,v}] \label{eq:technical_bound_cvx_unknown_BTARD_Clipped_SGD_0}
    \end{eqnarray}
    implying (after changing the indices) that
    \begin{eqnarray}
        \EE[R_k^2] &\le& R_0^2 + 2\gamma^2G^2 kK^{\frac{2-\alpha}{\alpha}} + 2\gamma G\sqrt{C_2}K^{\frac{1}{\alpha}} \sum\limits_{l=0}^{k-1}\sqrt{\EE\left[R_l^2\right]\EE\left[\indicator_{l,v}\right]}\notag\\
        &&\quad + 2\gamma^2C_2G^2K^{\frac{2}{\alpha}}\sum\limits_{l=0}^{k-1}\EE[\indicator_{l,v}] \label{eq:technical_bound_cvx_unknown_BTARD_Clipped_SGD_1}
    \end{eqnarray}
    holds for all $k\ge 0$. In the remaining part of the proof we derive by induction that
    \begin{eqnarray}
       R_0^2 + 2\gamma^2G^2 kK^{\frac{2-\alpha}{\alpha}} + 2\gamma G\sqrt{C_2}K^{\frac{1}{\alpha}} \sum\limits_{l=0}^{k-1}\sqrt{\EE\left[R_l^2\right]\EE\left[\indicator_{l,v}\right]} &&\notag\\
        + 2\gamma^2C_2G^2K^{\frac{2}{\alpha}}\sum\limits_{l=0}^{k-1}\EE[\indicator_{l,v}] &\le& 2R_0^2 \label{eq:technical_bound_cvx_unknown_BTARD_Clipped_SGD_2}
    \end{eqnarray}
    for all $k=0,\ldots,K$. For $k=0$ this inequality trivially holds. Next, assume that it holds for all $k=0,1,\ldots, T-1$, $T \le K-1$. Let us show that it holds for $k = T$ as well. From \eqref{eq:technical_bound_cvx_unknown_BTARD_SGD_1} and \eqref{eq:technical_bound_cvx_unknown_BTARD_SGD_2} we have that $\EE[R_k^2]\le 2R_0^2$ for all $k=0,1,\ldots, T-1$. Therefore,
    \begin{eqnarray*}
        \EE[R_T^2] &\le& R_0^2 + 2\gamma^2G^2 TK^{\frac{2-\alpha}{\alpha}} + 2\gamma G\sqrt{C_2}K^{\frac{1}{\alpha}} \sum\limits_{l=0}^{T-1}\sqrt{\EE\left[R_l^2\right]\EE\left[\indicator_{l,v}\right]}\notag\\
        &&\quad + 2\gamma^2C_2G^2K^{\frac{2}{\alpha}}\sum\limits_{l=0}^{T-1}\EE[\indicator_{l,v}]\\
        &\le& R_0^2 + 2\gamma^2G^2 TK^{\frac{2-\alpha}{\alpha}} + 2\gamma GR_0\sqrt{2C_2}K^{\frac{1}{\alpha}} \sum\limits_{l=0}^{T-1}\sqrt{\EE\left[\indicator_{l,v}\right]}\notag\\
        &&\quad + 2\gamma^2C_2G^2K^{\frac{2}{\alpha}}\sum\limits_{l=0}^{T-1}\EE[\indicator_{l,v}]
    \end{eqnarray*}
    If a Byzantine peer deviates from the protocol at iteration $k$, it will be detected with some probability $p_k$ during the next iteration. One can lower bound this probability as
    \begin{equation*}
        p_k \ge m\cdot \frac{|\cG_k|}{n_k}\cdot \frac{1}{n_k} = \frac{m(1-\delta_k)}{n_k} \ge \frac{m}{n}.
    \end{equation*}
     That is, each individual Byzantine worker can violate the protocol no more than $\nicefrac{1}{p}$ times on average. However, even one Byzantine peer can create a shift of the order $\Delta_{\max}^k$ at each part of the resulting vector. Therefore, all Byzantine peers can violate the protocol no more than $\nicefrac{b}{p}$ times on average implying that
    \begin{eqnarray*}
        \EE[R_T^2]  &\le& R_0^2 + 2\gamma^2G^2 TK^{\frac{2-\alpha}{\alpha}} + \frac{2\gamma GR_0\sqrt{2C_2}K^{\frac{1}{\alpha}}nb}{m} + \frac{2\gamma^2C_2G^2K^{\frac{2}{\alpha}}nb}{m}.
    \end{eqnarray*}
    Taking
    \begin{equation*}
        \gamma = \min\left\{\frac{R_0}{\sqrt{6}GK^{\frac{1}{\alpha}}}, \frac{m R_0}{12\sqrt{2C_2}G n b K^{\frac{1}{\alpha}}}\right\}
    \end{equation*}
    we ensure that 
    \begin{eqnarray*}
        2\gamma^2G^2 TK^{\frac{2-\alpha}{\alpha}} + \frac{2\gamma GR_0\sqrt{2C_2}K^{\frac{1}{\alpha}}nb}{m} + \frac{2\gamma^2C_2G^2K^{\frac{2}{\alpha}}nb}{m}&\le& \frac{R_0^2}{3} + \frac{R_0^2}{3} + \frac{R_0^2}{3} = R_0^2
    \end{eqnarray*}
    and, as a result, we get $\EE[R_T^2] \le 2R_0^2$. Therefore, \eqref{eq:technical_bound_cvx_unknown_BTARD_Clipped_SGD_2} holds for all $k=0,1,\ldots,K$. Together with \eqref{eq:technical_bound_cvx_unknown_BTARD_Clipped_SGD_0} it implies
    \begin{equation*}
        \EE\left[f(\overline{x}^K) - f(x^*)\right] \le \frac{R_0^2}{\gamma K}.
    \end{equation*}
    Next, from our stepsize rule \eqref{eq:choice_of_parameters_BTARD_Clipped_SGD_cvx_unknown} it follows that
    \begin{equation*}
        \EE\left[f(\overline{x}^K) - f(x^*)\right] = \cO\left(\frac{GR_0}{K^{\frac{1-\alpha}{\alpha}}} + \frac{nbGR_0}{mK^{\frac{1-\alpha}{\alpha}}}\right)
    \end{equation*}
    meaning that after
    \begin{equation*}
        K = \cO\left(\left(\frac{GR_0}{\varepsilon}\right)^{\frac{\alpha}{\alpha-1}} + \left(\frac{nbGR_0}{m\varepsilon}\right)^{\frac{\alpha}{\alpha-1}}\right)
    \end{equation*}
    iterations \algname{BTARD-Clipped-SGD} guarantees $\EE[f(\overline{x}^{K}) -f(x^*)]\le \varepsilon$.
\end{proof}
That is, when the number of attacking Byzantines is unknown the complexity bound becomes $\left(\nicefrac{nb}{m}\right)^{\nicefrac{\alpha}{(\alpha-1)}}$ times worse in comparison to \eqref{eq:BTARD_Clipped_SGD_known_byz_cvx}.

\subsubsection{Strongly convex case: Restarted-BTARD-Clipped-SGD}
In this section, we provide the complete statements and the full proofs of the convergence results for the restarted version of \algname{BTARD-Clipped-SGD} (\algname{Restarted-BTARD-Clipped-SGD}, Alg.~\ref{alg:restarted_BTARD_SGD}) when the objective function $f$ is smooth and strongly convex.
\begin{algorithm}[H]
   \caption{\algname{Restarted-BTARD-Clipped-SGD}}
   \label{alg:restarted_BTARD_Clipped_SGD}
\begin{algorithmic}[1]
   \Require $x^0$ -- starting point, $r$ -- number of restarts, $\{\gamma_t\}_{t=1}^r$ -- stepsizes for \algname{BTARD-Clipped-SGD}, $\{K_t\}_{t=1}^r$ -- number of iterations for \algname{BTARD-Clipped-SGD}, $\{s_{i,k,t}\}_{i,k,t=0,0,1}^{n,K-1,r}$ -- seeds for batches computations, $\{\lambda_{k,t}\}_{k,t=0,1}^{K_t, r}$ -- gradient clipping parameters
   \State $\widehat{x}^{0} = x^0$
   \For{$t = 1,2,\ldots,r$}
    \State Run \algname{BTARD-Clipped-SGD} (Alg.~\ref{alg:BTARD_Clipped_SGD}) for $K_t$ iterations with stepsize $\gamma_t$, starting point $\widehat{x}^{t-1}$, gradient clipping parameters $\{\lambda_{k,t}\}_{k=0}^{K-1}$, and seeds for batches computations $\{s_{i,k,t}\}_{i,k=0,0}^{n,K-1}$. Define $\widehat{x}^{t}$ as $\widehat{x}^{t} = \frac{1}{K_t}\sum\limits_{k=0}^{K_t}x^{k,t}$, where $x^{0,t}, x^{1,t}, \ldots, x^{K_t,t}$ are the iterates produced by \algname{BTARD-Clipped-SGD}.
   \EndFor
   \Ensure $\widehat x^r$
\end{algorithmic}
\end{algorithm}

We start with the case when the number of attacking Byzantine workers is known at each iteration.
\begin{theorem}\label{thm:BTARD_Clipped_SGD_known_byz_str_cvx}
    Let As.~\ref{as:bounded_alpha_moment} hold, $Q$ is bounded, $f$ be $\mu$-strongly convex (see Def.~\ref{def:mu_strong_convexity}), $x^*$ be some optimum of $f$, and $\nabla f(x^*) = 0$. Moreover, assume that $b \le 0.15(n-m)$, $m \le \nicefrac{(n-2b)}{2}$, and the exact number of attacking Byzantine peers is known to all good peers at each iteration. Next, assume that 
    \begin{equation}
        \gamma = \min\left\{\frac{R_0}{\sqrt{6}\cdot 2^{\frac{t}{2}}GK_t^{\frac{1}{\alpha}}}, \frac{m R_0}{12\cdot 2^{\frac{t}{2}}Gn\sqrt{10\delta(C_1 K_t^{\frac{4-\alpha}{2\alpha}} + C_2 K_t^{\frac{2}{\alpha}})}}\right\},\quad \Delta_{\max}^{k,t} = 2\lambda_{k,t} = \frac{2\lambda_t}{\sqrt{n_k^t-m}}, \label{eq:choice_of_parameters_BTARD_Clipped_SGD_str_cvx}
    \end{equation}
    \begin{equation}
        K_t = \max\left\{\left(\frac{2\sqrt{6}G\cdot 2^{\frac{t}{2}}}{\mu R_0}\right)^{\frac{\alpha}{\alpha-1}}, \left(\frac{24Gn\sqrt{10\delta(C_1+C_2)}2^{\frac{t}{2}}}{m\mu R_0}\right)^{\frac{\alpha}{\alpha-1}}\right\}, \quad \lambda_t = GK_t^{\frac{1}{\alpha}}, \label{eq:choice_of_parameters_BTARD_Clipped_SGD_str_cvx_2}
    \end{equation}
    \begin{equation}
        r = \left\lceil\log_2\frac{\mu R_0^2}{\varepsilon} \right\rceil - 1, \label{eq:choice_of_parameters_BTARD_Clipped_SGD_str_cvx_3}
    \end{equation}
    where $R_0 \ge \|x^0 - x^*\|$ and $\Delta_{\max}^{k,t}$ is the parameter for verification $3$ at iteration $k$ of \algname{BTARD-Clipped-SGD}, $n_k^t$ is the total number of workers at iteration $k$ of $t$-th restart. Then, we have $\EE[f(\widehat{x}^{r}) -f(x^*)]\le \varepsilon$ after $r$ restarts of \algname{BTARD-Clipped-SGD} and the total number of executed iterations of \algname{BTARD-Clipped-SGD} is
    \begin{equation}
        \sum\limits_{t=1}^rK_t  = \cO\left(\left(\frac{G^2}{\mu\varepsilon}\right)^{\frac{\alpha}{2(\alpha-1})} + \left(\frac{n\sqrt{\delta}}{m}\right)^{\frac{\alpha}{\alpha-1}}\left(\frac{G^2}{\mu\varepsilon}\right)^{\frac{\alpha}{2(\alpha-1)}}\right) \label{eq:BTARD_Clipped_SGD_known_byz_str_cvx}
    \end{equation}
\end{theorem}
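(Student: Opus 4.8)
The plan is to reduce the strongly convex statement to the convex case by the standard restart-and-halve technique, exactly as in the proof of Theorem~\ref{thm:BTARD_SGD_known_byz_str_cvx}. The engine is Theorem~\ref{thm:BTARD_Clipped_SGD_known_byz_cvx}: one run of \algname{BTARD-Clipped-SGD} with the stepsize and clipping parameters prescribed there returns $\overline{x}^K$ with $\EE[f(\overline{x}^K)-f(x^*)]\le R_0^2/(\gamma K)$ whenever $R_0\ge\|x^0-x^*\|$. I will invoke this bound conditionally at each restart $t$: given $\widehat{x}^{t-1}$, the $t$-th run of \algname{BTARD-Clipped-SGD} started at $\widehat{x}^{t-1}$ with $\gamma_t$, $K_t$, $\lambda_t$ from \eqref{eq:choice_of_parameters_BTARD_Clipped_SGD_str_cvx}--\eqref{eq:choice_of_parameters_BTARD_Clipped_SGD_str_cvx_2} yields $\EE[f(\widehat{x}^t)-f(x^*)\mid\widehat{x}^{t-1}]\le \|\widehat{x}^{t-1}-x^*\|^2/(\gamma_t K_t)$; as in Theorem~\ref{thm:BTARD_SGD_known_byz_str_cvx}, this is used only after taking the total expectation, so the in-expectation radius control $\EE[\|\widehat{x}^{t-1}-x^*\|^2]\le R_0^2 2^{-(t-1)}$ is what actually feeds the argument.

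The core is an induction over restarts establishing
\[
\EE[f(\widehat{x}^t)-f(x^*)]\le \frac{\mu R_0^2}{2^{t+1}},\qquad \EE[\|\widehat{x}^t-x^*\|^2]\le\frac{R_0^2}{2^t},\qquad t=1,\ldots,r.
\]
The base case uses $\|\widehat{x}^0-x^*\|=\|x^0-x^*\|\le R_0$ together with the fact that $K_1$ in \eqref{eq:choice_of_parameters_BTARD_Clipped_SGD_str_cvx_2} is taken large enough to force $R_0^2/(\gamma_1K_1)\le \mu R_0^2/4$; then $\mu$-strong convexity and $\nabla f(x^*)=0$ give $\tfrac{\mu}{2}\|\widehat{x}^1-x^*\|^2\le f(\widehat{x}^1)-f(x^*)$, supplying the distance bound. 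For the inductive step, the point is that each $\gamma_t$ carries the matching $2^{-t/2}$ factor (mirroring the shrinking design radius $R_0 2^{-(t-1)/2}$), so that $\EE[\|\widehat{x}^{t-1}-x^*\|^2]/(\gamma_tK_t)\le (R_0^2/2^{t-1})/(\gamma_tK_t)$, and the choice of $K_t$ makes this $\le \mu R_0^2/2^{t+2}$; strong convexity then halves the squared distance once more. After $r=\lceil\log_2(\mu R_0^2/\varepsilon)\rceil-1$ restarts we obtain $\EE[f(\widehat{x}^r)-f(x^*)]\le\varepsilon$.

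It remains to add up the work $\sum_{t=1}^r K_t$. From \eqref{eq:choice_of_parameters_BTARD_Clipped_SGD_str_cvx_2}, $K_t$ scales as $\bigl(2^{t/2}\bigr)^{\alpha/(\alpha-1)}$ times a factor built from $G$, $\mu R_0$ and $n\sqrt{\delta}/m$; since $\alpha/(\alpha-1)>1$ the sum is a geometric series dominated by its last term $K_r$, and substituting $2^r\asymp \mu R_0^2/\varepsilon$ collapses the $R_0$-dependence and produces the advertised $\cO\bigl((G^2/(\mu\varepsilon))^{\alpha/(2(\alpha-1))}(1+(n\sqrt{\delta}/m)^{\alpha/(\alpha-1)})\bigr)$ bound of \eqref{eq:BTARD_Clipped_SGD_known_byz_str_cvx}.

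I expect the main obstacle to be bookkeeping rather than conceptual: one must check that the three-way minimum defining $\gamma_t$ and the two-way maximum defining $K_t$ are tuned so that every restart contracts the error by exactly a factor of two, which requires carrying the $\alpha$-dependent exponents $\tfrac{4-\alpha}{2\alpha}$ and $\tfrac{2}{\alpha}$ from the clipped-aggregation bound (Lemma~\ref{lem:quality_of_agg_known_num_byz_clipped_sgd}) through the per-step inequality of Theorem~\ref{thm:BTARD_Clipped_SGD_known_byz_cvx} without slack. A secondary delicate point is the conditional invocation of the convex theorem with the random radius $\|\widehat{x}^{t-1}-x^*\|$ against algorithm parameters fixed to the deterministic design radius; this is reconciled, as in Theorem~\ref{thm:BTARD_SGD_known_byz_str_cvx}, by noting that only the expected squared radius enters once the bounds are taken in expectation.
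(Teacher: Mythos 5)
Your proposal follows essentially the same route as the paper's own proof: conditionally invoking Theorem~\ref{thm:BTARD_Clipped_SGD_known_byz_cvx} at each restart, an induction establishing $\EE[f(\widehat{x}^t)-f(x^*)]\le \mu R_0^2/2^{t+1}$ and $\EE[\|\widehat{x}^t-x^*\|^2]\le R_0^2/2^t$ via strong convexity with $\nabla f(x^*)=0$, and summing the geometric-like sequence $K_t$ dominated by its last term with $2^r\asymp \mu R_0^2/\varepsilon$. The delicate points you flag (the random radius versus the deterministic design radius, and the $\alpha$-dependent tuning of $\gamma_t$ and $K_t$) are handled in the paper exactly as you describe, so no gap remains.
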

\begin{proof}
    Theorem~\ref{thm:BTARD_Clipped_SGD_known_byz_cvx} implies that \algname{BTARD-Clipped-SGD} with 
    \begin{equation*}
        \gamma = \min\left\{\frac{R_0}{\sqrt{6}GK^{\frac{1}{\alpha}}}, \frac{m R_0}{12Gn\sqrt{10\delta(C_1 K^{\frac{4-\alpha}{2\alpha}} + C_2 K^{\frac{2}{\alpha}})}}\right\}
    \end{equation*}
    guarantees
    \begin{equation*}
        \EE\left[f(\overline{x}^K) - f(x^*)\right] \le \frac{R_0^2}{\gamma K}
    \end{equation*}
    after $K$ iterations. Therefore, after the first restart we have
    \begin{equation}
        \EE[f(\widehat{x}^1) - f(x^*)] \le \frac{R_0^2}{\gamma_1 K_1} \le \frac{\mu R_0^2}{4}. \notag
    \end{equation}
    From $\mu$-strong convexity of $f$ and $\nabla f(x^*) = 0$ we have
    \begin{equation*}
        \frac{\mu}{2}\|\widehat x^1 - x^*\|^2 \le f(\widehat x^1) - f(x^*) \Longrightarrow \EE[\|\widehat x^1 - x^*\|^2] \le \frac{R_0^2}{2}.
    \end{equation*}
    Next, assume that we have $\EE[f(\widehat{x}^{t}) - f(x^*)] \le \frac{\mu R_0^2}{2^{t+1}}$, $\EE[\|\widehat x^t - x^*\|^2] \le \frac{R_0^2}{2^t}$ for some $t \le r-1$. Then, Theorem~\ref{thm:BTARD_Clipped_SGD_known_byz_cvx} implies that
    \begin{equation*}
        \EE[f(\widehat{x}^{t+1}) - f(x^*)\mid x^t] \le \frac{\|\widehat{x}^t - x^*\|^2}{\gamma_t K_t}.
    \end{equation*}
    Taking the full expectation from the both sides of previous inequality we get
    \begin{equation*}
        \EE[f(\widehat{x}^{t+1}) - f(x^*)] \le \frac{\EE[\|\widehat{x}^t - x^*\|^2]}{\gamma_t K_t} \le \frac{R_0^2}{2^t\gamma_t K_t} \le \frac{\mu R_0^2}{2^{t+2}}.
    \end{equation*}
    From $\mu$-strong convexity of $f$ and $\nabla f(x^*) = 0$ we have
    \begin{equation*}
        \frac{\mu}{2}\|\widehat x^{t+1} - x^*\|^2 \le f(\widehat x^{t+1}) - f(x^*) \Longrightarrow \EE[\|\widehat x^{t+1} - x^*\|^2] \le \frac{R_0^2}{2^{t+1}}.
    \end{equation*}
    Therefore, by mathematical induction we have that for all $t=1,\ldots,r$
    \begin{equation*}
        \EE[f(\widehat{x}^t) - f(x^*)] \le \frac{\mu R_0^2}{2^{t+1}}, \quad \EE\left[\|\widehat x^t - x^*\|^2\right] \le \frac{R_0^2}{2^t}.
    \end{equation*}
    Then, after $r = \left\lceil\log_2\frac{\mu R_0^2}{\varepsilon} \right\rceil - 1$ restarts of \algname{BTARD-Clipped-SGD} we have $\EE[f(\widehat{x}^r) - f(x^*)] \le \varepsilon$. The total number of iterations executed by \algname{BTARD-Clipped-SGD} is
    \begin{eqnarray*}
        \sum\limits_{t=1}^r K_t &=& \cO\left(\sum\limits_{t=1}^r\max\left\{\left(\frac{G\cdot 2^{\frac{t}{2}}}{\mu R_0}\right)^{\frac{\alpha}{\alpha-1}}, \left(\frac{Gn\sqrt{\delta}2^{\frac{t}{2}}}{m\mu R_0}\right)^{\frac{\alpha}{\alpha-1}}\right\}\right)\\
        &=& \cO\left(\max\left\{\left(\frac{G}{\mu R_0}\right)^{\frac{\alpha}{\alpha-1}}\cdot 2^{\frac{r\alpha}{2(\alpha-1)}}, \left(\frac{Gn\sqrt{\delta}}{m\mu R_0}\right)^{\frac{\alpha}{\alpha-1}}\cdot 2^{\frac{r\alpha}{2(\alpha-1)}}\right\}\right)\\
        &=& \cO\left(\max\left\{\left(\frac{G}{\mu R_0}\right)^{\frac{\alpha}{\alpha-1}}\cdot \left(\frac{\mu R_0^2}{\varepsilon}\right)^{\frac{\alpha}{2(\alpha-1)}}, \left(\frac{Gn\sqrt{\delta}}{m\mu R_0}\right)^{\frac{\alpha}{\alpha-1}}\cdot \left(\frac{\mu R_0^2}{\varepsilon}\right)^{\frac{\alpha}{2(\alpha-1)}}\right\}\right)\\
        &=& \cO\left(\left(\frac{G^2}{\mu\varepsilon}\right)^{\frac{\alpha}{2(\alpha-1})} + \left(\frac{n\sqrt{\delta}}{m}\right)^{\frac{\alpha}{\alpha-1}}\left(\frac{G^2}{\mu\varepsilon}\right)^{\frac{\alpha}{2(\alpha-1)}}\right).
    \end{eqnarray*}
\end{proof}

In the strongly convex case, similar observations hold as in the convex case. Next, we derive the result without assuming that $\widehat b^k$ is known to all peers at each iteration.

\begin{theorem}\label{thm:BTARD_Clipped_SGD_unknown_byz_str_cvx}
    Let As.~\ref{as:bounded_alpha_moment} hold, $Q$ is bounded, $f$ be $\mu$-strongly convex (see Def.~\ref{def:mu_strong_convexity}), $x^*$ be some optimum of $f$, and $\nabla f(x^*) = 0$. Moreover, assume that $b \le 0.15(n-m)$, $m \le \nicefrac{(n-2b)}{2}$, and $\delta = 0$ is used to compute clipping parameter $\tau_l$ for \ref{eq:CenteredClip}. Next, assume that 
    \begin{equation}
        \gamma = \min\left\{\frac{R_0}{\sqrt{6}\cdot 2^{\frac{t}{2}}GK_t^{\frac{1}{\alpha}}}, \frac{m R_0}{12\cdot 2^{\frac{t}{2}}Gnb\sqrt{2C_2}K_t^{\frac{1}{\alpha}}}\right\},\quad \Delta_{\max}^{k,t} = 2\lambda_{k,t} = \frac{2\lambda_t}{\sqrt{n_k^t-m}}, \label{eq:choice_of_parameters_BTARD_Clipped_SGD_str_cvx_unknown}
    \end{equation}
    \begin{equation}
        K_t = \max\left\{\left(\frac{2\sqrt{6}G\cdot 2^{\frac{t}{2}}}{\mu R_0}\right)^{\frac{\alpha}{\alpha-1}}, \left(\frac{24Gnb\sqrt{2C_2}2^{\frac{t}{2}}}{m\mu R_0}\right)^{\frac{\alpha}{\alpha-1}}\right\}, \quad \lambda_t = GK_t^{\frac{1}{\alpha}}, \label{eq:choice_of_parameters_BTARD_Clipped_SGD_str_cvx_unknown_2}
    \end{equation}
    \begin{equation}
        r = \left\lceil\log_2\frac{\mu R_0^2}{\varepsilon} \right\rceil - 1, \label{eq:choice_of_parameters_BTARD_Clipped_SGD_str_cvx_unknown_3}
    \end{equation}
    where $R_0 \ge \|x^0 - x^*\|$ and $\Delta_{\max}^{k,t}$ is the parameter for verification $3$ at iteration $k$ of \algname{BTARD-Clipped-SGD}, $n_k^t$ is the total number of workers at iteration $k$ of $t$-th restart. Then, we have $\EE[f(\widehat{x}^{r}) -f(x^*)]\le \varepsilon$ after $r$ restarts of \algname{BTARD-Clipped-SGD} and the total number of executed iterations of \algname{BTARD-Clipped-SGD} is
    \begin{equation}
        \sum\limits_{t=1}^rK_t  = \cO\left(\left(\frac{G^2}{\mu\varepsilon}\right)^{\frac{\alpha}{2(\alpha-1})} + \left(\frac{nb}{m}\right)^{\frac{\alpha}{\alpha-1}}\left(\frac{G^2}{\mu\varepsilon}\right)^{\frac{\alpha}{2(\alpha-1)}}\right) \label{eq:BTARD_Clipped_SGD_unknown_byz_str_cvx}
    \end{equation}
\end{theorem}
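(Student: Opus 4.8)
The plan is to derive the strongly convex bound by the standard restarting reduction: apply Theorem~\ref{thm:BTARD_Clipped_SGD_unknown_byz_cvx} once per restart and track the geometric decay of both $\EE[f(\widehat x^t)-f(x^*)]$ and $\EE[\|\widehat x^t-x^*\|^2]$. The structure is identical to the proof of Theorem~\ref{thm:BTARD_Clipped_SGD_known_byz_str_cvx}; the only change is that every appeal to the known-$\widehat b_k$ convex rate is replaced by its unknown-$\widehat b_k$ analogue \eqref{eq:BTARD_Clipped_SGD_unknown_byz_cvx}, which swaps $\sqrt{\delta}$ for $b$ in the stepsize and hence produces the factor $\nicefrac{nb}{m}$ rather than $\nicefrac{n\sqrt\delta}{m}$ in the final complexity.

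First I would record the ingredient: under Assumption~\ref{as:bounded_alpha_moment} with $Q$ bounded, one run of \algname{BTARD-Clipped-SGD} with the stepsize of Theorem~\ref{thm:BTARD_Clipped_SGD_unknown_byz_cvx} satisfies $\EE[f(\overline x^K)-f(x^*)]\le \nicefrac{R_0^2}{(\gamma K)}$ for any $R_0\ge\|x^0-x^*\|$; crucially Assumption~\ref{as:bounded_alpha_moment} is a uniform moment bound and is therefore insensitive to which starting point or which radius the restart is currently at. Then I would run the induction $\EE[f(\widehat x^t)-f(x^*)]\le \nicefrac{\mu R_0^2}{2^{t+1}}$, $\EE[\|\widehat x^t-x^*\|^2]\le \nicefrac{R_0^2}{2^t}$, with base case $\widehat x^0=x^0$. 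For the step from $t$ to $t{+}1$, apply the convex theorem to the $(t{+}1)$-st restart: conditionally on $\widehat x^t$ it gives $\EE[f(\widehat x^{t+1})-f(x^*)\mid \widehat x^t]\le \nicefrac{\|\widehat x^t-x^*\|^2}{(\gamma_{t+1}K_{t+1})}$, so taking expectations and using the inductive hypothesis yields $\le \nicefrac{R_0^2}{(2^t\gamma_{t+1}K_{t+1})}$. The role of \eqref{eq:choice_of_parameters_BTARD_Clipped_SGD_str_cvx_unknown}--\eqref{eq:choice_of_parameters_BTARD_Clipped_SGD_str_cvx_unknown_2} is exactly to force $\gamma_{t+1}K_{t+1}\ge \nicefrac{4}{\mu}$ for whichever branch of the defining $\min$ is active, with $K_{t+1}\asymp (\nicefrac{G}{\mu R_0})^{\alpha/(\alpha-1)} 2^{(t+1)\alpha/(2(\alpha-1))}$ compensating the $2^{-(t+1)/2}$ factor in $\gamma_{t+1}$, so the bound becomes $\le \nicefrac{\mu R_0^2}{2^{t+2}}$. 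Finally $\mu$-strong convexity together with $\nabla f(x^*)=0$ gives $\EE[\|\widehat x^{t+1}-x^*\|^2]\le \nicefrac{2}{\mu}\EE[f(\widehat x^{t+1})-f(x^*)]\le \nicefrac{R_0^2}{2^{t+1}}$, closing the induction.

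To conclude, take $r=\lceil\log_2(\nicefrac{\mu R_0^2}{\varepsilon})\rceil-1$, so that $\nicefrac{\mu R_0^2}{2^{r+1}}\le\varepsilon$ and hence $\EE[f(\widehat x^r)-f(x^*)]\le\varepsilon$. The iteration count is $\sum_{t=1}^r K_t=\cO\big(\max\{(\nicefrac{G}{\mu R_0})^{\alpha/(\alpha-1)},(\nicefrac{Gnb}{m\mu R_0})^{\alpha/(\alpha-1)}\}\sum_{t=1}^r 2^{t\alpha/(2(\alpha-1))}\big)$; since the exponent of $2^t$ is positive, the geometric sum is dominated by its last term $t=r$, and substituting $2^r\asymp \nicefrac{\mu R_0^2}{\varepsilon}$ gives precisely \eqref{eq:BTARD_Clipped_SGD_unknown_byz_str_cvx}.

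The main obstacle is entirely constant-chasing: one has to check that the explicit constants in \eqref{eq:choice_of_parameters_BTARD_Clipped_SGD_str_cvx_unknown}--\eqref{eq:choice_of_parameters_BTARD_Clipped_SGD_str_cvx_unknown_2} really deliver the one-step contraction $\nicefrac{R_0^2}{(2^t\gamma_{t+1}K_{t+1})}\le \nicefrac{\mu R_0^2}{2^{t+2}}$ simultaneously at every restart and for each branch of the $\min$ inside both $\gamma_{t+1}$ and the inner convex stepsize rule, in particular that the per-restart $K_t$ is large enough to dominate the $K^{1/\alpha}$ appearing there. One also wants to confirm that the Byzantine-deviation accounting imported from the convex proof (each Byzantine violates the protocol $\lesssim \nicefrac{n}{m}$ times, all of them $\lesssim \nicefrac{nb}{m}$ times, because each deviation is caught with probability $\ge \nicefrac{m}{n}$) remains valid within each restart; since the sets $\cB_k$ only shrink, using the initial $b$ and $\delta$ throughout is a safe overestimate. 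Beyond that, the argument is a line-by-line copy of the known-$\widehat b_k$ strongly convex proof.
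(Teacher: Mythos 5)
Your proposal is correct and follows essentially the same route as the paper's own proof: a restart reduction that invokes the unknown-$\widehat b_k$ convex guarantee $\EE[f(\overline{x}^{K})-f(x^*)]\le \nicefrac{R_0^2}{\gamma K}$ once per restart, an induction maintaining $\EE[f(\widehat{x}^{t})-f(x^*)]\le \nicefrac{\mu R_0^2}{2^{t+1}}$ and $\EE[\|\widehat{x}^{t}-x^*\|^2]\le \nicefrac{R_0^2}{2^{t}}$ via $\mu$-strong convexity with $\nabla f(x^*)=0$, the same choice $r=\lceil\log_2(\nicefrac{\mu R_0^2}{\varepsilon})\rceil-1$, and the same geometric-sum evaluation dominated by the $t=r$ term with $2^{r}\asymp\nicefrac{\mu R_0^2}{\varepsilon}$. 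The points you flag for verification (the one-step contraction delivered by the constants in the parameter choices, and that the $\nicefrac{m}{n}$ detection-probability accounting carries over within each restart) are exactly the ingredients the paper relies on, so no genuinely different ideas or gaps are involved.
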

\begin{proof}
    Theorem~\ref{thm:BTARD_Clipped_SGD_unknown_byz_cvx} implies that \algname{BTARD-Clipped-SGD} with 
    \begin{equation*}
        \gamma = \min\left\{\frac{R_0}{\sqrt{6}GK^{\frac{1}{\alpha}}}, \frac{m R_0}{12\sqrt{2C_2}G n b K^{\frac{1}{\alpha}}}\right\}
    \end{equation*}
    guarantees
    \begin{equation*}
        \EE\left[f(\overline{x}^K) - f(x^*)\right] \le \frac{R_0^2}{\gamma K}
    \end{equation*}
    after $K$ iterations. Therefore, after the first restart we have
    \begin{equation}
        \EE[f(\widehat{x}^1) - f(x^*)] \le \frac{R_0^2}{\gamma_1 K_1} \le \frac{\mu R_0^2}{4}. \notag
    \end{equation}
    From $\mu$-strong convexity of $f$ and $\nabla f(x^*) = 0$ we have
    \begin{equation*}
        \frac{\mu}{2}\|\widehat x^1 - x^*\|^2 \le f(\widehat x^1) - f(x^*) \Longrightarrow \EE[\|\widehat x^1 - x^*\|^2] \le \frac{R_0^2}{2}.
    \end{equation*}
    Next, assume that we have $\EE[f(\widehat{x}^{t}) - f(x^*)] \le \frac{\mu R_0^2}{2^{t+1}}$, $\EE[\|\widehat x^t - x^*\|^2] \le \frac{R_0^2}{2^t}$ for some $t \le r-1$. Then, Theorem~\ref{thm:BTARD_Clipped_SGD_unknown_byz_cvx} implies that
    \begin{equation*}
        \EE[f(\widehat{x}^{t+1}) - f(x^*)\mid x^t] \le \frac{\|\widehat{x}^t - x^*\|^2}{\gamma_t K_t}.
    \end{equation*}
    Taking the full expectation from the both sides of previous inequality we get
    \begin{equation*}
        \EE[f(\widehat{x}^{t+1}) - f(x^*)] \le \frac{\EE[\|\widehat{x}^t - x^*\|^2]}{\gamma_t K_t} \le \frac{R_0^2}{2^t\gamma_t K_t} \le \frac{\mu R_0^2}{2^{t+2}}.
    \end{equation*}
    From $\mu$-strong convexity of $f$ and $\nabla f(x^*) = 0$ we have
    \begin{equation*}
        \frac{\mu}{2}\|\widehat x^{t+1} - x^*\|^2 \le f(\widehat x^{t+1}) - f(x^*) \Longrightarrow \EE[\|\widehat x^{t+1} - x^*\|^2] \le \frac{R_0^2}{2^{t+1}}.
    \end{equation*}
    Therefore, by mathematical induction we have that for all $t=1,\ldots,r$
    \begin{equation*}
        \EE[f(\widehat{x}^t) - f(x^*)] \le \frac{\mu R_0^2}{2^{t+1}}, \quad \EE\left[\|\widehat x^t - x^*\|^2\right] \le \frac{R_0^2}{2^t}.
    \end{equation*}
    Then, after $r = \left\lceil\log_2\frac{\mu R_0^2}{\varepsilon} \right\rceil - 1$ restarts of \algname{BTARD-Clipped-SGD} we have $\EE[f(\widehat{x}^r) - f(x^*)] \le \varepsilon$. The total number of iterations executed by \algname{BTARD-Clipped-SGD} is
    \begin{eqnarray*}
        \sum\limits_{t=1}^r K_t &=& \cO\left(\sum\limits_{t=1}^r\max\left\{\left(\frac{G\cdot 2^{\frac{t}{2}}}{\mu R_0}\right)^{\frac{\alpha}{\alpha-1}}, \left(\frac{Gn\sqrt{\delta}2^{\frac{t}{2}}}{m\mu R_0}\right)^{\frac{\alpha}{\alpha-1}}\right\}\right)\\
        &=& \cO\left(\max\left\{\left(\frac{G}{\mu R_0}\right)^{\frac{\alpha}{\alpha-1}}\cdot 2^{\frac{r\alpha}{2(\alpha-1)}}, \left(\frac{Gnb}{m\mu R_0}\right)^{\frac{\alpha}{\alpha-1}}\cdot 2^{\frac{r\alpha}{2(\alpha-1)}}\right\}\right)\\
        &=& \cO\left(\max\left\{\left(\frac{G}{\mu R_0}\right)^{\frac{\alpha}{\alpha-1}}\cdot \left(\frac{\mu R_0^2}{\varepsilon}\right)^{\frac{\alpha}{2(\alpha-1)}}, \left(\frac{Gnb}{m\mu R_0}\right)^{\frac{\alpha}{\alpha-1}}\cdot \left(\frac{\mu R_0^2}{\varepsilon}\right)^{\frac{\alpha}{2(\alpha-1)}}\right\}\right)\\
        &=& \cO\left(\left(\frac{G^2}{\mu\varepsilon}\right)^{\frac{\alpha}{2(\alpha-1})} + \left(\frac{nb}{m}\right)^{\frac{\alpha}{\alpha-1}}\left(\frac{G^2}{\mu\varepsilon}\right)^{\frac{\alpha}{2(\alpha-1)}}\right).
    \end{eqnarray*}
\end{proof}

\section{Resisting Sybil attacks}\label{appendix:reputation}

In this section, we address Byzantine-tolerant training in a setup where new participants can join or leave collaboration midway through training. This requirement arises naturally if a training run relies on volunteers or an open pool of paid participants~\citep{volunteer_dl_async,hivemind_dmoe,atre2021distributed,dedloc}. In addition to all existing concerns from Section~\ref{sect:method}, this setup allows Byzantine attackers to assume new identity each time they are blocked. Further yet, Byzantine participants can simultaneously use multiple identities in order to obtain majority in the voting procedure, which is known as Sybil attacks~\citep{sybil, sybil_nodes,sybil_attacks_dht}.

In this analysis\footnote{Note that we only provide rigorous convergence guarantees for the case of the Byzantine attacks. However, a heuristic described in this section helps with resisting the Sybil attacks in practice.}, we consider a training run where Byzantine peers collectively possess $\delta < \delta_{max}$ of all compute resources (we explore the role of $\delta_{max} < 1/2$ later in this section). Intuitively, one can think of this setting as distributed training with $n$ identical computers, $\lfloor\delta\cdot n\rfloor$ of which are controlled by Byzantines. The ``Byzantine GPUs'' can be allocated between an arbitrary number of identities. For instance, one accelerator can run full BTARD-SGD protocol for one peer or drop some of the computation and use the freed ``compute cycles'' to run computation for another participant. Theoretically, a device can run computation for an arbitrarily large number of peers, as long as it actually computes as many gradients as one benign participant does in the same time-frame.

To protect against this new attack type, we augment \algname{BTARD-SGD} with a reputation system designed to limit the impact of pseudonymous identities with the actual underlying compute. We base this system on the following assumptions:
\begin{enumerate}[leftmargin=*]
    \vspace{-1px}
    \item \textbf{Unique and optimal computations:} the gradients computed by peer $i$ at step $k$ cannot be circumvented or reused from other peers and/or previous steps.
    \vspace{-1px}
    \item \textbf{Sybil-resistance of the message propagation protocol:} the underlying message propagation protocol should be resistant to various forms of Sybil attacks: they should not harm the protocol's ability to deliver a broadcasted message to all honest peers. \citet{vyzovitis2020gossipsub} empirically show that GossipSub used in \algname{BTARD-SGD} is resistant to such attacks.
    \vspace{-1px}
    \item \textbf{Usage of digital signatures:} peers should have unique public/private key pairs, know each other's public keys, sign their messages~\citep{rivest1978method}, and ignore all received messages with invalid signatures.
    \vspace{-1px}
    \item \textbf{Existence of a cryptographic hash function:} peers should have access to a hash function such that finding a vector $x$ satisfying $\text{hash}(x)=y$ is infeasible for $\lfloor\delta\cdot n\rfloor$ compute over the entire training duration.
\end{enumerate}

We associate each participant with a public record that is used to verify that peer's legitimacy. These records can be securely stored in a Distributed Hash Table (see Appendix~\ref{appendix:dht}). When a new peer joins the network, it begins with an empty record and is therefore ``untrusted''. Untrusted peers compute gradients normally, but cannot aggregate vectors from others and cannot serve as validators. More importantly, other peers exclude untrusted gradients from aggregation, using them only for the purpose of validating those peers.

Each time a peer computes gradients $g_i^k$ over publicly known batch $\xi_i^k$, it must write $\text{hash}(g_i^k)$ to its own public record and sign it with its private key. As in the original \algname{BTARD-SGD}, some of those entries will be validated by other peers chosen by MPRNG. In turn, the chosen validators will either approve their entry or invoke \textsc{Accuse} to ban the peer.

In order to become trusted, a given peer must report consecutive gradients until it accumulates $T$ entries approved by (provably) random peers. Here, $T$ is a hyperparameter that should be large enough for the training to recover from any previous attacks and make some progress before previously banned malicious peers can earn trust again. In practice, $T$ may be chosen experimentally by observing the number of iterations it takes to improve the loss upon its pre-attack value in case of the most effective attacks, as reported in Section~\ref{sect:experiments}.

While $T$ may be application-dependent, we note that its minimal value is small in terms of the relative training time in all our experiments. $T$ corresponding to the 10\% of total training time is more than 3 times larger than the worst ``recovery time'' for both setups considered in Section~\ref{sect:experiments}, where almost a half of the peers are Byzantine. Moreover, Appendix~\ref{appendix:resnet18_extra} suggests that recovery from the worst-case attack may happen even faster in case of a smaller share of Byzantines. In that setup (with $\approx 20\%$ of peers being Byzantine), $T$~corresponding to the 1\% of training time is already enough.

Once a peer becomes trusted, it must continue reporting gradient hashes to maintain trust. Even a single missing or invalidated hash breaks the chain and results in the corresponding peer being banned. To maintain this invariant, peers chosen as a validators add the recalculated hashes into their own record instead of the skipped iteration.

To protect against dilution attacks, a cooperative training run can simultaneously consider at most $\frac{t}{2}$ ``untrusted'' peers, where $t$ is the number of currently trusted peers. All subsequent peers should wait in a queue until one of the untrusted peers becomes either trusted or banned.

\paragraph{Analysis.} Under this formalism, a Sybil attacker will attempt to maximize the number of trusted identities it can control with a limited amount of compute. In the simplest case, an attacker has exactly one GPU that can be used to either run all computations for identity or partial computation for multiple identities.

In the latter case, an attacker can honestly compute gradients for identity A with probability $p\in[0, 1]$ and for identity B with probability $1-p$. To breaking the chain, the identity that does \textbf{not} compute gradients at a given step can report arbitrary (e.g. random) entries instead of $\text{hash}(g_i^k)$.

Consider the expected number of ``trusted'' identities after enough steps for $T$ validations by \textit{honest} validators (on average, $T \cdot \frac{n}{k \cdot (1-\delta)}$ steps). Identity A becomes trusted with probability $p^T$, otherwise it is banned. Similarly, identitiy B survives with probability $(1 - p)^T$. Thus, the expected number of trusted identities after $T$ steps is $p^T + (1 - p)^T$.

For $T > 1$, this expectation is maximal iff $p \in \{0, 1\}$. Thus, if a peer needs more than one validation to become trusted, the ``optimal strategy'' for a Sybil attacker is to fully support one identity instead of spreading the resources between multiple ones. This observation can be generalized for distributing $\lfloor\delta\cdot n\rfloor$ over an $m \geq \lfloor\delta\cdot n\rfloor$ pseudonymous identities, where maximizing the expected number of trusted identities requires fully supporting any $\lfloor\delta\cdot n\rfloor$ identities and disregarding the rest (for $T>1$, as before).

\paragraph{Overhead computation.} When training without Byzantine participants, this modified version of \algname{BTARD-SGD} requires, on average, $T \cdot \frac{n}{k}$ additional gradient computations per participant at the very beginning. However, once all peers become trusted, the algorithm computes exactly the same number of gradients as regular \algname{BTARD-SGD}, effectively training at $\frac{n-k}{n}$ efficiency of AR-SGD, plus the same communication overhead.

\paragraph{Remark 1: Temporary majority.} Despite the fact that spreading 1 ``compute unit'' across multiple identities reduces the \textit{expected} number of trusted identities, it may still be useful to establish a temporary majority, albeit with a small probability. For instance, splitting one compute unit evenly among $m$ identities (each with $p = 1/m$) may result in both $m$ identities temporarily gaining trust with probability:
\begin{equation}
    P(\text{peer}_1\land\dots\land\text{peer}_m) = \prod\limits_{i=1}^m \frac{1}{m^T} = m^{-Tm}
    \label{eq:p_temp_majority}
\end{equation} 

A Sybil attacker can simply repeat this procedure on every step until it can establish a temporary majority and use this majority to harm training (e.g. ban non-malicious peers). A natural way to remedy this is to increase $T$ to such an extent that~\eqref{eq:p_temp_majority} becomes negligibly small.

\paragraph{Remark 2: Extra compute for Byzantine nodes.} Unlike benign peers, Byzantine attackers do not need to honestly validate each other. When a Byzantine peer is chosen as validator, it can approve its target without actually computing the gradients. In turn, the freed compute resources can be used to support additional Byzantine identities.

Thus, if a given training run has $n$ trusted peers and chooses $k$ validators on each step, Sybil attackers can control slightly more than $\lfloor\delta\cdot n\rfloor$ of all identities by using the free compute cycles from validation to support additional peers. Thus, the proposed reputation system requires that the total computational power $B_{max}$ available to Byzantines is less than $\frac{1}{2}$ by a (typically small) margin that depends on $n$, $k$, and $T$.

\paragraph{Remark 3: Perpetual attacks.} When training in open collaborations, one cannot ban the Byzantine peers entirely: a Byzantine attacker will always be able to assume a new identity at the cost of running honestly for $T \cdot \frac{n}{k \cdot (1-\delta)}$ gradient steps. Thus, unlike in Appendix~\ref{appendix:extra_analysis}, we cannot make \algname{BTARD-SGD} unbiased by increasing $\tau$. However, as we demonstrated in Section~\ref{sect:experiments}, the biased variant of \algname{BTARD-SGD} with constant $\tau$ can still train real-world deep learning models with the same or virtually the same learning curves as regular SGD.

\section{Secure distributed hash tables}\label{appendix:dht}

Distributed Hash Tables (DHT) are protocols that establish a decentralized key-value storage over decentralized unreliable participants~\citep{kademlia,chord,tapestry,pastry}.
To determine which DHT peers are responsible for a given key-value pair, each participant samples a unique binary identifier (ID) sampled uniformly from the space of $\mathrm{hash}$ function outputs. When ``storing a $(key,\ value)$'' on the DHT, one finds $k$ peers whose IDs are nearest to $\mathrm{hash}(key)$ and sends the data to each one of those peers. In turn, a peer that wants to read the value or a given key will also search for neighbors whose IDs are close to $\mathrm{hash}(key)$ and request the data from those peers. Thus, the data can be accessed as long as at least one o $k$ chosen peers remains active, with some DHT variants introducing additional replication protocols.

Our specific implementation is based on Kademlia~\citep{kademlia}, a popular DHT variant that determines nearest neighbors based on XOR distance function or their IDs: $d(x, y) = \mathrm{int}(x \oplus y)$. More importantly, Kademlia protocol organizes nodes in such a way that each individual peer only ``knows'' a small subset of $O(\log_2 n)$ direct neighbors, however, it is possible to navigate the neighborhood graph to find the globally nearest neighbors in $O(\log_2 N)$ network requests.

DHT protocols were originally designed for large-scale distributed systems such as BitTorrent, IPFS and several cryptocurrencies. To maintain integrity in these applications, modern DHT protocols also employ security measures that make them resistant to Byzantine and Sybil attacks~\citep{urdaneta2011survey}. 

In our specific scenario, the most sensitive DHT entries are personal records that determine whether or not a given peer is trusted. We protect these records by enforcing that every value stored in the DHT must be signed by their author's digital signature~\citep{rivest1978method}. Thus, if a malicious peer attempts to modify a record it was not supposed to, all other peers will be able to detect that and eliminate such peers from the collective.

However, digital signature are known to be vulnerable to replay attacks: every time a non-Byzantine peer stores an given key-value pair signed with its private key, a Byzantine eavesdropper can record the signed entry and replay it in future. For ordinary DHTs, this would allow an attacker to revert any key-value pair to its previous state by replaying such pre-recorded messages. 

Our algorithm protects against replay attacks by associating each key-value pair with a third value denoted as \textbf{expiration time}. Given two entries for the same key, DHT nodes will now prioritize the ones with the latest expiration time and consider it valid up to that time. Furthermore, in order to store a new entry to the DHT, a peer must now sign the entire key-value-expiration tuple. Thus, if a Byzantine peer replays a pre-recorded message, it will not be able to overwrite newer DHT entries that were signed for a more recent expiration time.

\section{Details of the ALBERT experiment setup}\label{appendix:config_albert}

In Section~\ref{sect:experiments_albert}, we pretrain ALBERT~\citep{albert} --- a self-supervised Transformer model for learning representations of language data. We deliberately choose ALBERT instead of other models like BERT~\citep{bert} due to its high communication efficiency, which is caused by layerwise weight sharing and embedding layer factorization. In particular, we focus on a communication-efficient model, because the connection speed between the workers can become a noticeable constraint when averaging gradients of models with hundreds of millions of parameters. We train ALBERT-large on sequences of 512 tokens from the WikiText-103~\citep{wikitext103} dataset. The training procedure starts from a random initialization, but the subword vocabulary~\citep{sennrich-etal-2016-neural} is the same as created by the authors of the original ALBERT models.  

This model is trained with two objectives: masked language modeling (given a sentence with several masked tokens, predict the tokens that were masked) and sentence order prediction (given two segments from the same document, determine if they were swapped).
We use LAMB optimizer~\citep{lamb} with batches that contain 4,096 examples, training with a peak learning rate equal to 0,00176 and a warmup of 5,000 gradient descent steps. In addition, we use gradient clipping with a maximum norm of 1 and weight decay regularization with the weight of 0,01.
We run distributed training on 16 cloud instances, each equipped with a single Tesla T4 GPU. Each training run takes 2--3 days, depending on the instance availability.

\section{Additional experiments}\label{appendix:additional_experiments}

\subsection{Extra evaluations on the CIFAR10 classification task}\label{appendix:resnet18_extra}

\begin{figure}[tb]
    \vskip 0.05in
    \centering
    \includegraphics[width=0.925\textwidth]{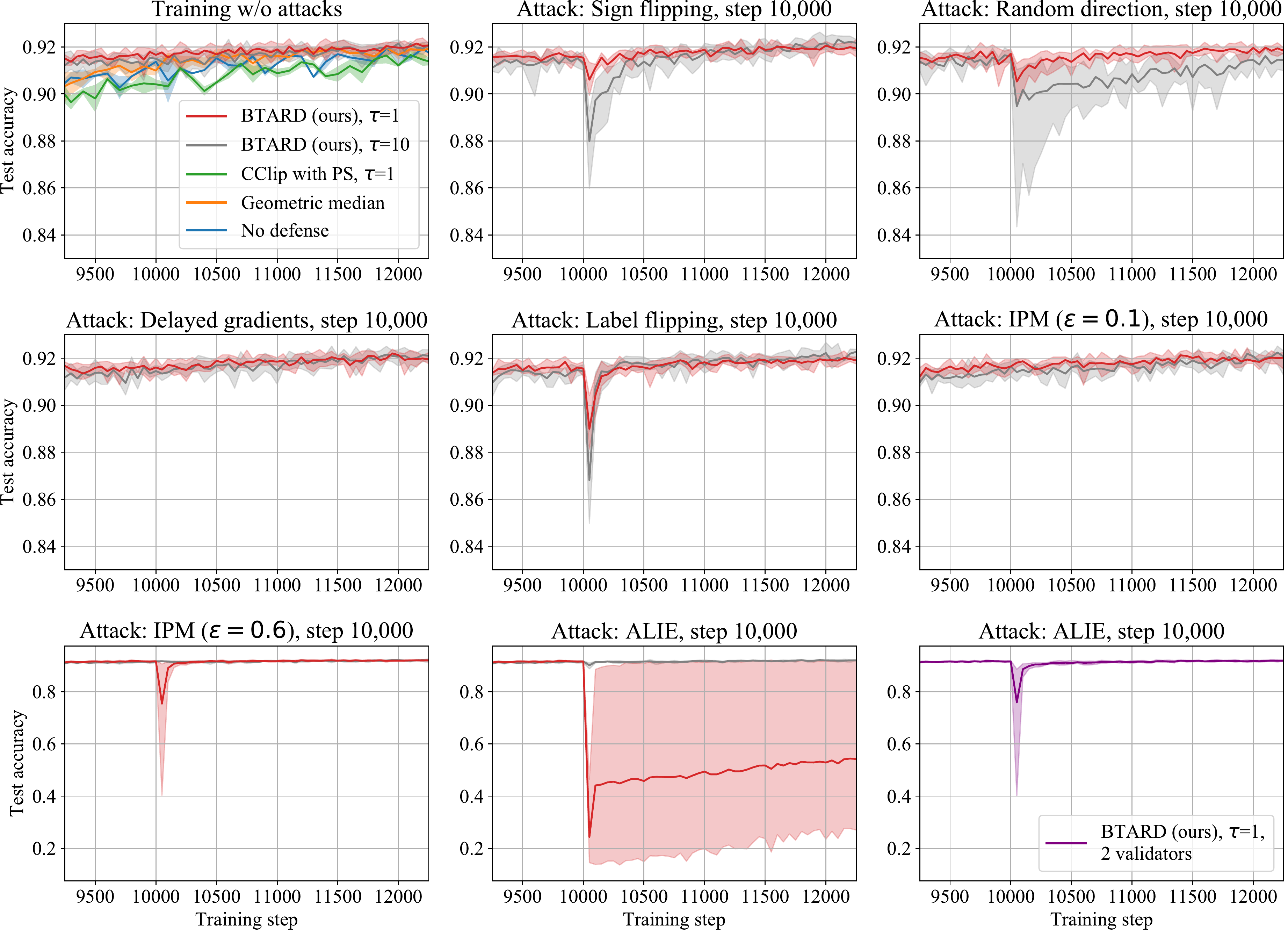}
    \vskip -0.1in
    \caption{The ResNet-18 test accuracy in the case of various attacks performed at each step starting from step $s = 10{,}000$ by 7 Byzantines.}
    \label{fig:resnet_step_10k}
    \vskip -0.15in
\end{figure}

\begin{figure}[tb]
    \vskip 0.05in
    \centering
    \includegraphics[width=0.925\textwidth]{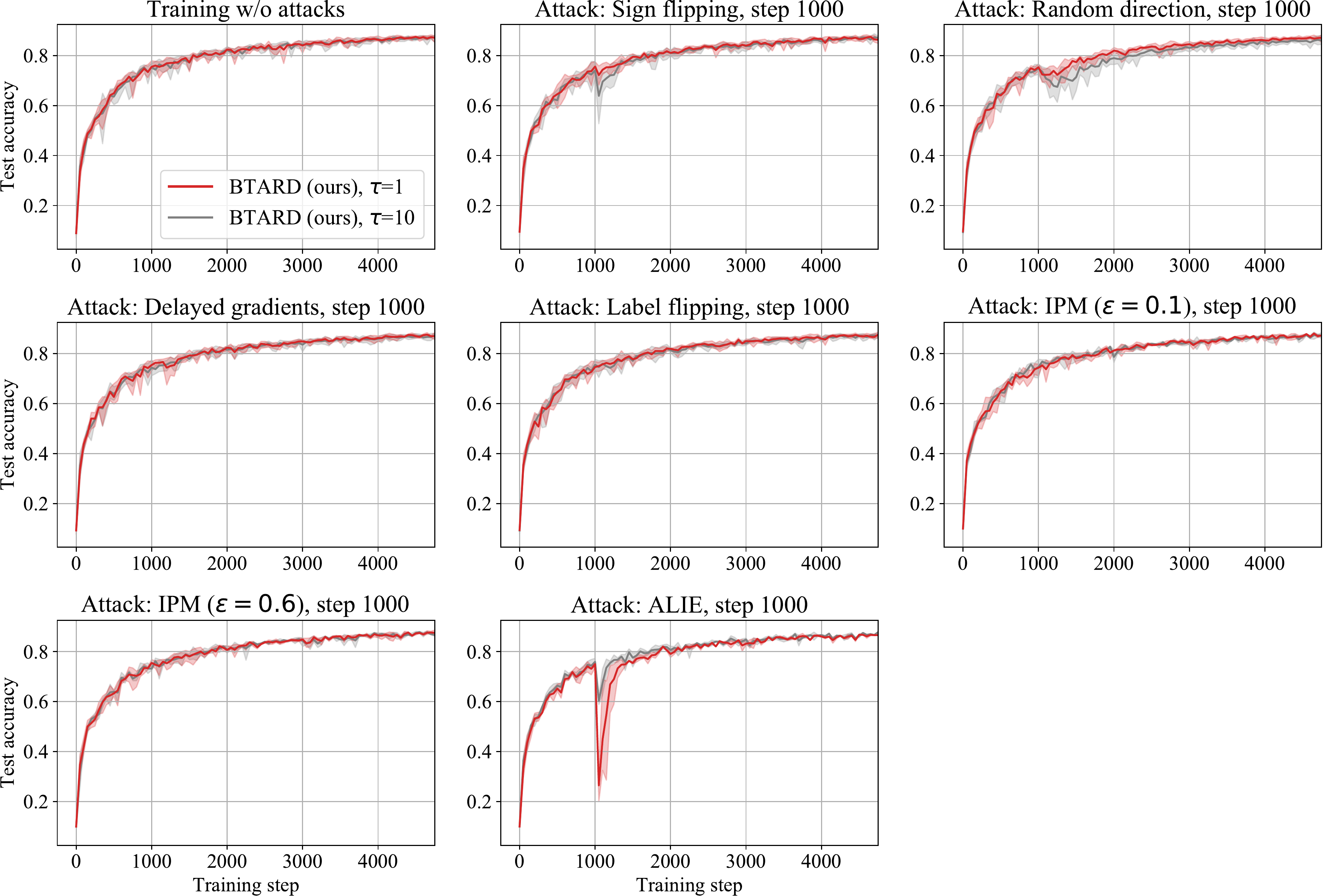}
    \vskip -0.1in
    \caption{The ResNet-18 test accuracy in the case of various attacks performed at every $T$-th step ($T = 10$) starting from step $s = 1000$ by 7~Byzantines.}
    \label{fig:resnet_every10}
    \vskip -0.15in
\end{figure}

\begin{figure}[tb]
    \vskip 0.05in
    \centering
    \includegraphics[width=0.925\textwidth]{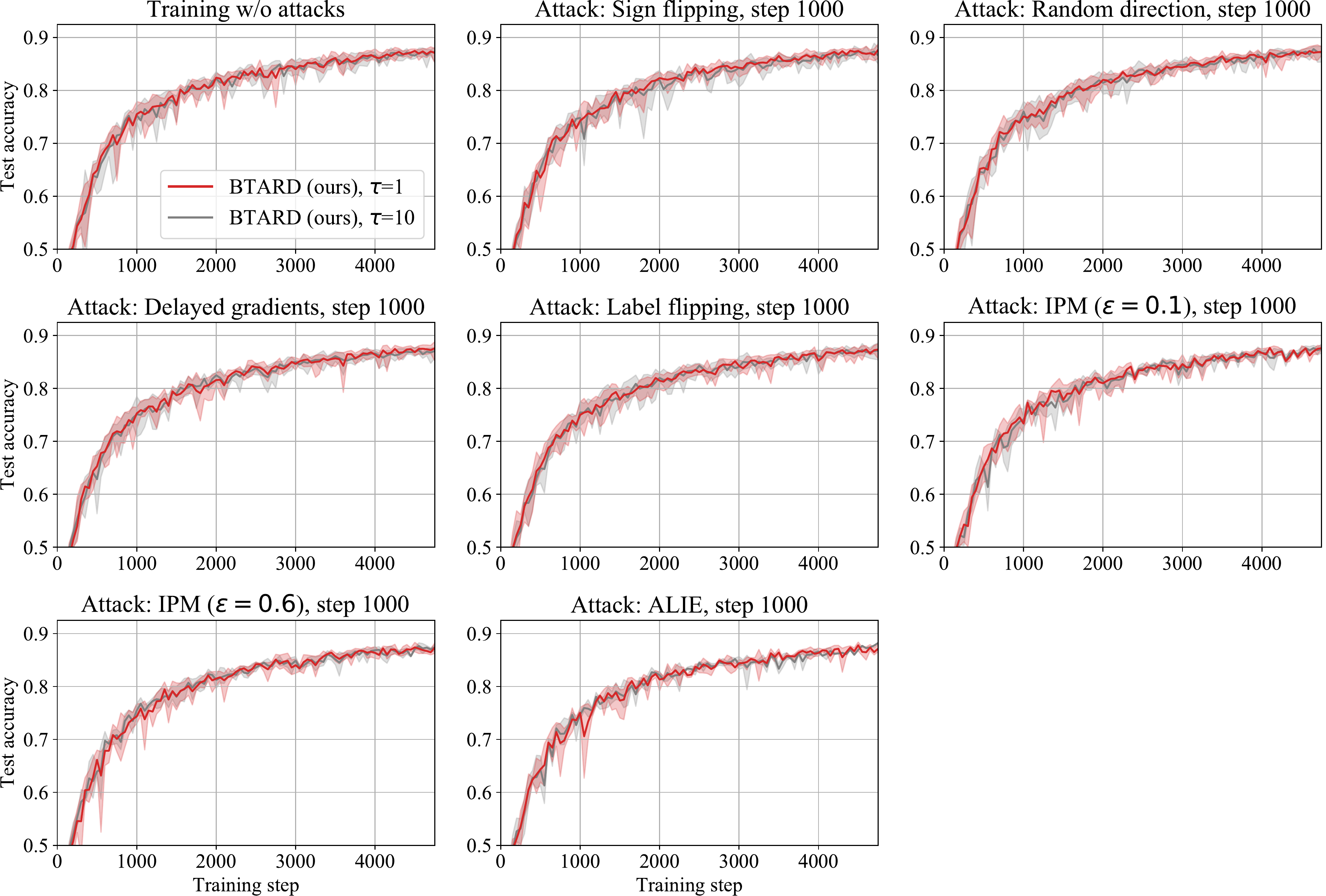}
    \vskip -0.1in
    \caption{The ResNet-18 test accuracy in the case of various attacks performed at each step starting from step $s = 1000$ by 3 Byzantines.}
    \label{fig:resnet_3byzantine}
    \vskip -0.15in
\end{figure}

\begin{figure}[tb]
    \vskip 0.05in
    \centering
    \includegraphics[height=4.5cm]{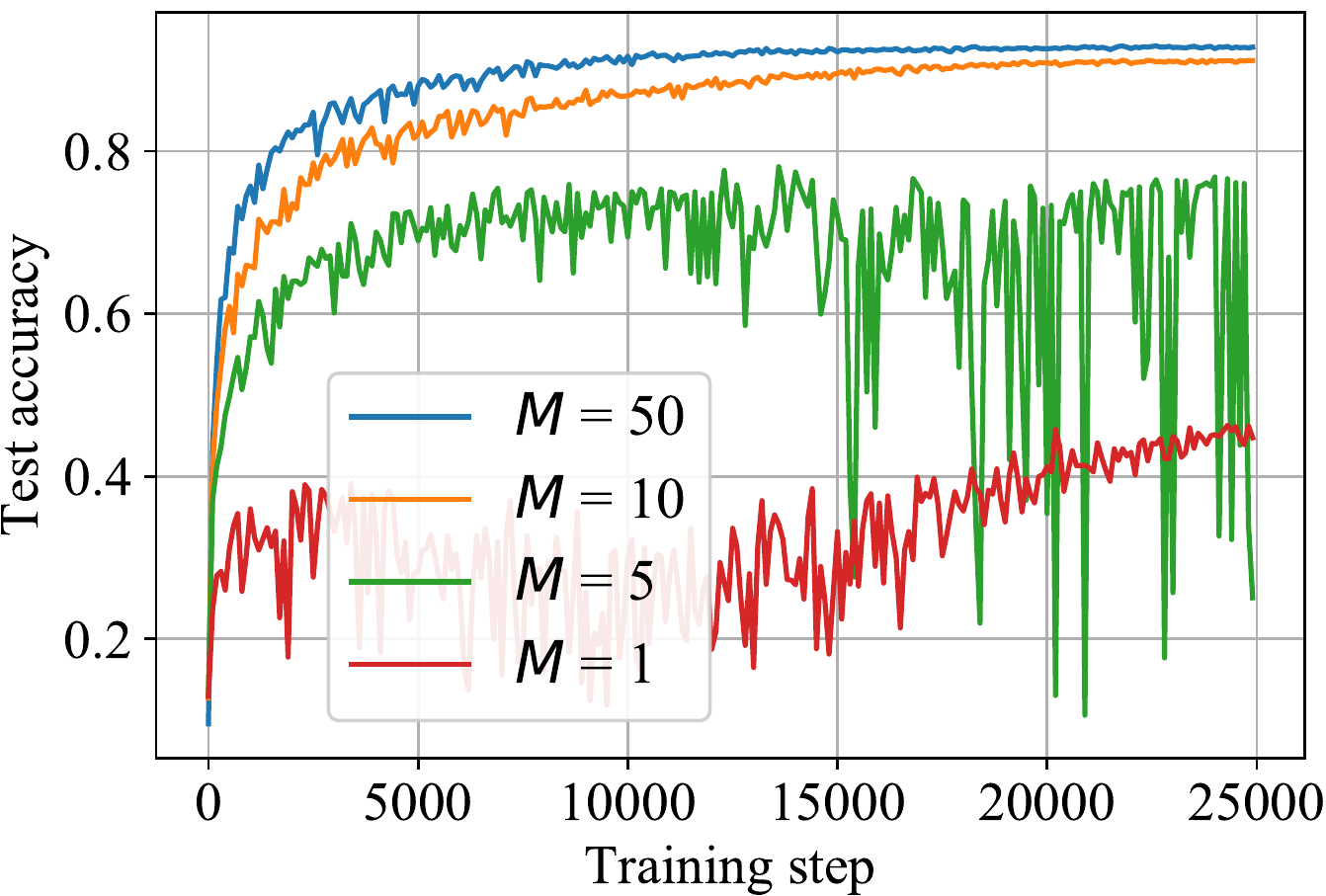}
    \vskip -0.1in
    \caption{Convergence of \algname{BTARD-SGD} with $\tau = 1$ depending on the maximal number of iterations $M$ in the CenteredClip procedure.}
    \label{fig:resnet_cclip_steps}
    \vskip -0.15in
\end{figure}

In this section, we perform several additional experiments with \algname{BTARD-SGD} used to train the ResNet-18 model to solve the CIFAR10 classification task. We start with the same configuration as used in Section~\ref{sect:experiments_resnet} and consider several changes to this setup. In this section, we omit reporting the behavior of the baselines in presence of the attacks since Figure~\ref{fig:resnet} already shows that the baselines cannot withstand most of them.

First, we evaluate our method in case of $s = 10{,}000$: we make Byzantines behave honestly prior to step 10,000, then simultaneously attack on each step until they are banned (i.e., attacks start closer to the convergence stage). The results are shown in Figure~\ref{fig:resnet_step_10k}. The behavior of BTARD turns out to be similar to the case of $s = 1000$, with the same kinds of attacks being most efficient against the stronger and weaker clipping scenarios. BTARD with the stronger clipping and 2 validators is still enough to combat all considered attacks.

Next, we explore a situation where Byzantine peers send incorrect gradients periodically, e.g. once per $T = 10$ iterations. This reduces the attack intensity but allows them to stay undetected for longer. In this setting, we consider 7 Byzantine peers and reuse all parameters from the original setup, except for the new attack period. The attacks are performed at steps $s + k \cdot T, k\in\mathbb{N}$ until the attacker is eventually banned. As expected, this setup increases the duration of each attack by a factor of $T$, but decreases the peak attack influence (see Figure~\ref{fig:resnet_every10}). In this case, even BTARD with one validator is enough to combat all kinds of attacks regardless of the clipping strength.

Next, we  consider a situation where Byzantine peers are less numerous. For this experiment, we use the same configuration as in Section~\ref{sect:experiments_resnet}, but with only 3 Byzantine peers out of 16 (just under 20\%). Figure~\ref{fig:resnet_3byzantine} demonstrates similar behavior to our original setup, but with significantly weaker magnitude across all attacks. Here, BTARD with one validator and any clipping strength is also enough to combat all kinds of attacks.

Finally, we evaluate the convergence and the final test accuracy of the less computationally intensive variants of \algname{BTARD-SGD} that limit the maximal number of iterations in the CenteredClip procedure to $M$, where $M$ varies from 1 to 50. In the setup with $\tau = 1$, we observe that $M = 50$ iterations are always enough for CenteredClip to converge with $\epsilon = 10^{-6}$ in absence of the attacks. Figure~\ref{fig:resnet_cclip_steps} demonstrates that stopping the procedure earlier harms the final test accuracy. This negative effect becomes more significant for the smaller values of $M$.

\subsection{Evaluating computation overhead in terms of wall time}\label{appendix:compute_overhead_eval}

For this analysis, we consider the ALBERT-large training setup from Section~\ref{sect:experiments_albert}. Our training ``swarm'' contains 16 peers with T4 GPUs and 1 GiB/s network bandwidth. On average over 1000 training steps, the full training step for this model takes up 28.56 seconds. Of this, approximately 23.96 seconds were used up for communication and the remaining 4.60 seconds were spent for gradient aggregation \algname{CenteredClip}.

Since MPRNG is running in the background, the only part of \textsc{BTARD} that affects the training time is Algorithm 2 (\textsc{ButterflyClip}). Thus, we measure the time complexity of this algorithm with different numbers of internal iterations. During “normal” epochs where all Byzantines remained passive, the algorithm converged in 2--3 iterations for $\tau = 0.25$ and 5--10 iterations with $\tau = 0.125$. We also noticed that this value has temporarily increased by 2--3 times while Byzantine peers were performing their attack.

{\renewcommand{\arraystretch}{1.2}
\begin{table}[tb]
    \centering
    \caption{Computation overhead of \algname{BTARD} in terms of wall time.}
    \label{tab:compute_overhead_eval}
    \vskip 0.05in
    \begin{tabular}{|c|c|c|}
    \hline
    \textbf{No. of iterations} & \textbf{Wall time (CPU), sec} & \textbf{Wall time (GPU), sec} \\
    \hline
    3  & 0.362 ± 0.003 & 0.040 ± 0.002 \\
    5  & 0.430 ± 0.002 & 0.042 ± 0.002 \\
    10 & 0.601 ± 0.003 & 0.056 ± 0.005 \\
    20 & 0.943 ± 0.002 & 0.085 ± 0.009 \\
    \hline
    \end{tabular}
    \vskip -0.1in
\end{table}

In Table~\ref{tab:compute_overhead_eval}, we report the wall time (the mean and the standard deviation over 10 runs) of our algorithm with a different number of iterations in two hardware setups: running on a 8-core VM with 3.1Ghz Intel Xeon 6148 CPU and on a single 1080 Ti GPU. Even the worst case overhead ($\tau = 0.125$, CPU) is less than the 3\% of the total step time without attacks and less than the 4\% when the attack is active. One important consideration here is that the overhead is constant with respect to the number of peers due to the scaling properties of All-Reduce. Thus, if we train with hundreds of peers, the 0.3--0.6 second overhead can eventually become significant. However, it can be easily offset by moving the \textsc{CenteredClip} execution to GPU, which at this stage is waiting for the \textsc{CenteredClip} results anyway.

\clearpage

\subsection{Experiments at a larger scale (64 machines)}\label{appendix:eval_at_scale}

\begin{figure}[tb]
    \vskip 0.2in
    \centering
    \includegraphics[height=4.5cm]{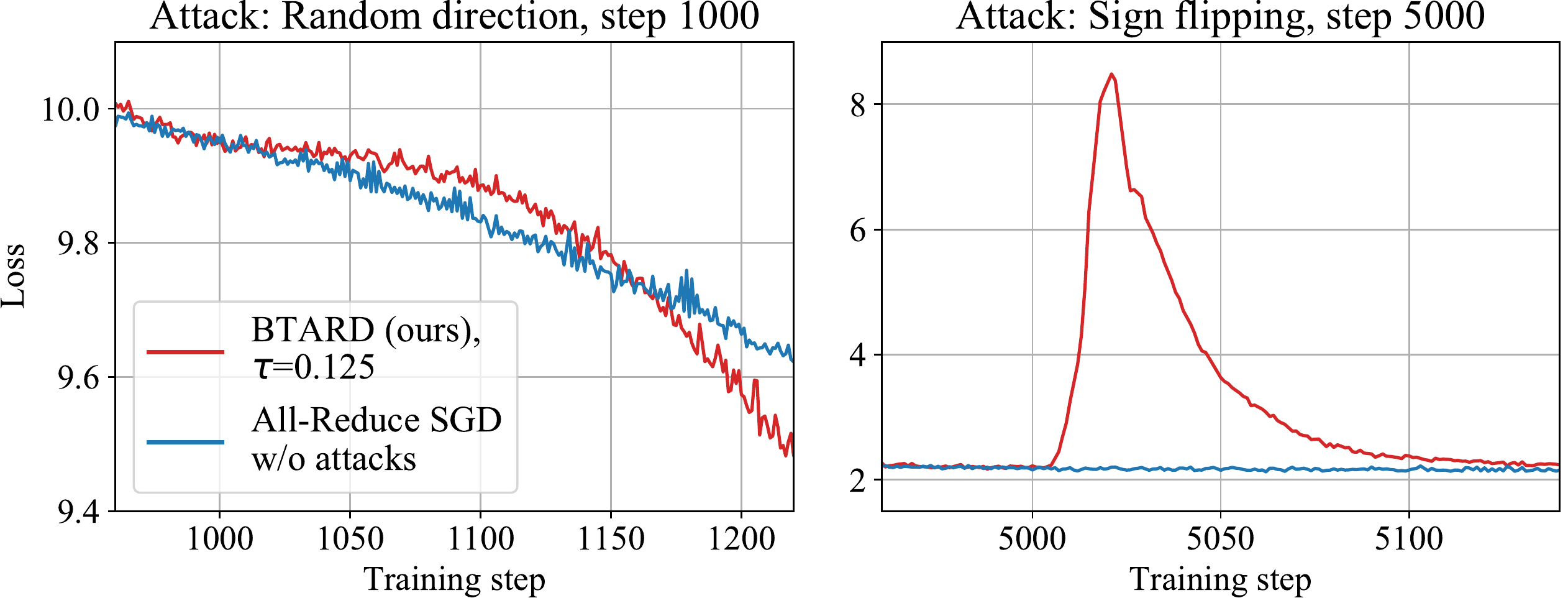}
    \vskip -0.1in
    \caption{The ALBERT-large training objective in the case of \algname{BTARD-Clipped-SGD} (with 31 out of 64 peers being Byzantine) and the standard All-Reduce SGD (without attacks).}
    \label{fig:eval_at_scale}
    \vskip -0.15in
\end{figure}

In this section, we evaluate the most effective attacks against \textsc{BTARD-SGD} in the case of a larger number of peers to ensure that our algorithm scales well. We consider the ALBERT-large training setup from Section~\ref{sect:experiments_albert} and increase the number of machines to 64 (the largest hardware setup available to us), setting up 31 of them to be Byzantine. To balance the increased number of peers, we divide the individual batch size of each peer by 4 and use 4 validators.

Due to the large computation costs, we only evaluate the two most effective strategies for Byzantines based on Figure~\ref{fig:albert}, making only one training run for each of them. We choose the random direction attack starting at step 1000 and the sign flipping attack starting at step 5000.

The results are shown in Figure~\ref{fig:eval_at_scale}. As in our previous experiments, the Byzantine peers manage to temporarily offset the training loss. As in the case with 16 peers, the sign flipping attack at step 5000 obtains the "peak" distortion approximately 20 steps into the attack, and the random direction attack at step 1000 has a longer but less intensive effect. However, \algname{BTARD-SGD} is able to quickly detect and ban the attackers, banning all 31 Byzantines in 100--150 steps and catching up with the original learning curve after approximately 150 steps (or even temporarily surpassing it). We conclude that \algname{BTARD-SGD} maintains its efficiency even at this scale.

\end{document}